\newcommand{\bP}[2][]{\Pr\ifthenelse{\isempty{#1}}{}{_{#1}}\left[#2\right]}
\newcommand{\bE}[2][]{\mathop\mathbb{E}\ifthenelse{\isempty{#1}}{}{_{#1}}\left[#2\right]}
\newcommand{\bI}[2][]{\mathop\mathbb{I}\ifthenelse{\isempty{#1}}{}{_{#1}}\left[#2\right]}
\newcommand{\Var}[2][]{\mathbf{Var}\ifthenelse{\isempty{#1}}{}{_{#1}}\left[#2\right]}
\DeclareMathOperator*{\argmax}{arg\,max}
\DeclareMathOperator*{\argmin}{arg\,min}
\title{\LARGE\bfseries A Non-Asymptotic Theory of Seminorm Lyapunov Stability: From Deterministic to Stochastic Iterative Algorithms}
\author{
	Zaiwei Chen\textsuperscript{$1$}, 
	Sheng Zhang\textsuperscript{$2$}, 
	Zhe Zhang\textsuperscript{$1$}, 
	Shaan Ul Haque\textsuperscript{$3$}, and
	Siva Theja Maguluri\textsuperscript{$3$}\\
	{\small\textsuperscript{$1$}\textit{Purdue IE.} \href{mailto:chen5252@purdue.edu}{\textit{chen5252@purdue.edu}}, \href{mailto:zhan5111@purdue.edu}{\textit{zhan5111@purdue.edu}}}\\
	{\small\textsuperscript{$2$}\textit{Amazon Web Services, Inc.,} \href{mailto:szhang405@outlook.com}{\textit{szhang405@outlook.com}}}\\
	{\small\textsuperscript{$3$}\textit{Georgia Tech ISyE.} \href{mailto:shaque49@gatech.edu}{\textit{shaque49@gatech.edu}}, \href{mailto:siva.theja@gatech.edu}{\textit{siva.theja@gatech.edu}}}
}
\date{\vspace{-0.4 in}}
\begin{document}
	\maketitle
	
	\begin{abstract}
		We study the problem of solving fixed-point equations for seminorm-contractive operators and establish foundational results on the non-asymptotic behavior of iterative algorithms in both deterministic and stochastic settings. Specifically, in the deterministic setting, we prove a fixed-point theorem for seminorm-contractive operators, showing that iterates converge geometrically to the kernel of the seminorm. In the stochastic setting, we analyze the corresponding stochastic approximation (SA) algorithm under seminorm-contractive operators and Markovian noise, providing a finite-sample analysis for various stepsize choices. 
		
		A benchmark for equation solving is linear systems of equations, where the convergence behavior of fixed-point iteration is closely tied to the stability of linear dynamical systems. In this special case, our results provide a complete characterization of system stability with respect to a seminorm, linking it to the solution of a Lyapunov equation in terms of positive semi-definite matrices. In the stochastic setting, we establish a finite-sample analysis for linear Markovian SA without requiring the Hurwitzness assumption.  
		
		Our theoretical results offer a unified framework for deriving finite-sample bounds for various reinforcement learning algorithms in the average reward setting, including TD($\lambda$) for policy evaluation (which is a special case of solving a Poisson equation) and Q-learning for control.
	\end{abstract}

	\section{Introduction}\label{sec:intro} 
	Fixed-point equations are fundamental in diverse fields such as optimization \cite{boyd2004convex}, game theory \cite{fudenberg1991game}, dynamical systems \cite{khalil2002nonlinear}, and reinforcement learning (RL) \cite{sutton2018reinforcement}, where solving the Bellman equation often lies at the core of the problem. When the operator defining the fixed-point equation is a norm-contractive mapping, the development of algorithms and the analysis of their convergence are relatively well understood. This understanding extends to both deterministic settings—leveraging the Banach fixed-point theorem \cite{banach1922operations}—and stochastic settings, through stochastic approximation methods designed for norm-contractive operators \cite{robbins1951stochastic,borkar2009stochastic,bertsekas1996neuro}.

	While norm-contractive operators provide a well-established framework for analyzing fixed-point equations, many real-world problems involve operators that are only seminorm-contractive. A representative example is RL in the average reward setting \cite{mahadevan1996average,puterman2014markov}, where the absence of a discount factor makes the associated Bellman operator contractive only with respect to a seminorm. Another common example is solving least squares problems involving positive semi-definite but not positive definite matrices, which arise in applications such as over-parameterized models \cite{li2018algorithmic} and low-rank matrix completion \cite{shah2020sample}. Even in cases where the operator is norm-contractive, solving for the exact fixed point may not always be necessary; instead, it may suffice to find an approximate fixed point with an error confined to a suitable subspace. For instance, in solving RL problems in the discounted setting, the Bellman operator is a norm-contractive mapping. However, by leveraging its seminorm-contractive property, one can achieve significantly better sample complexity guarantees \cite{devraj2021q}. Therefore, developing provably efficient algorithms for solving seminorm fixed-point equations, in both deterministic and stochastic settings, is of significant theoretical and practical importance.

	In this paper, we consider solving seminorm fixed-point equations of the form $ p(\bar{F}(x) - x) = 0 $, where $ p(\cdot) $ is an arbitrary seminorm (see Section \ref{subsec: seminorm} for the formal definition) and $ \bar{F}: \mathbb{R}^d \to \mathbb{R}^d $ is an operator that is contractive with respect to $ p(\cdot) $. In the special case where $ p(\cdot) $ is a norm, this equation reduces to the standard fixed-point equation $ \bar{F}(x) = x $. A natural approach to solve $ p(\bar{F}(x) - x) = 0 $ is through the fixed-point iteration
	\begin{align}\label{eq:FPI_intro}
		x_{k+1} = \Bar{F}(x_k).
	\end{align} 
	However, unlike the norm-contractive setting \cite{banach1922operations}, the convergence properties of such fixed-point iterations are, to the best of our knowledge, not completely understood in the literature. To make matters worse, we often lack sufficient information and/or computational power to perform the desired fixed-point iteration and must instead work with its noisy variant:
	\begin{align}\label{saeq:intro}
		x_{k+1} = x_k + \alpha_k (F(x_k, Y_k) - x_k + w_k),
	\end{align}
	where $ F(\cdot, \cdot) $ is another operator, $ \alpha_k $ is the stepsize, and $ \{Y_k\} $ and $ \{\omega_k\} $ are two stochastic processes representing noise. Here, $F(\cdot,Y_k)+w_k$ is understood as a noisy estimate of $\bar{F}(\cdot)$. Motivated by applications in average reward RL, the stochastic process $\{Y_k\}$ is a Markov chain, and the stochastic process $\{w_k\}$ is a martingale difference sequence. See Section \ref{sec:SA} for more details.  Algorithms of the form (\ref{saeq:intro}) are known as Markovian stochastic approximation (SA) algorithms, which are central to modern large-scale machine learning. As we will see in Section \ref{sec:average-reward-RL}, the update equation (\ref{saeq:intro}) captures a variety of average reward RL algorithms, such as TD($\lambda$) for policy evaluation and Q-learning for policy optimization.
	
	In this work, we present a unified theory for solving fixed-point equations with seminorm-contractive operators, starting with a fixed-point theorem that characterizes the convergence behavior of the fixed-point iteration (\ref{eq:FPI_intro}) and eventually leading to the finite-sample analysis of the SA algorithm (\ref{saeq:intro}). Moreover, in the special case where the target equation is linear, we extend the classical Lyapunov stability theorem to characterize the stability of the fixed-point iteration (which can also be viewed as a linear dynamical system) with respect to seminorms. This result is later used to analyze Markovian linear SA without requiring the Hurwitzness assumption. The non-asymptotic results in this paper are essential for evaluating algorithmic efficiency under real-world computational and time constraints. Unlike asymptotic analysis, which guarantees convergence only in the limit, non-asymptotic analysis provides explicit performance bounds that guide algorithm design and implementation.

	\subsection{Main Contributions}
	The main contributions of this paper can be summarized in three key points.
	
	\begin{itemize}
		\item \textbf{Seminorm Fixed-Point Theorem.} 
		We start by analyzing the convergence behavior of fixed-point iterations for operators that are contractive with respect to seminorms. By establishing fundamental properties of seminorms, we demonstrate that the quotient space induced by a seminorm is a Banach space and that a seminorm-contractive operator becomes norm-contractive when restricted to this quotient space. Leveraging these insights and the Banach fixed-point theorem, we prove that the fixed-point iteration for a seminorm-contractive operator converges geometrically to the kernel of the seminorm.

		A notable application of our results is the iteration scheme $ x_{k+1} = Ax_k $, where the matrix $ A $ may have a spectral radius greater than one. Understanding the convergence behavior of $ x_k $ in such discrete linear dynamical systems is crucial due to the widespread applications of linear system theory. Building on the seminorm fixed-point theorem, we establish a Lyapunov stability theorem for these linear systems, linking their stability (defined with respect to an appropriate seminorm) to the solutions of associated Lyapunov equations. For completeness, we also provide a characterization of the stability of the continuous-time linear dynamical system $ \dot{x}(t) = Ax(t) $ by the solution to its corresponding Lyapunov equation.

		\item \textbf{Markovian Stochastic Approximation.} 
		The most notable contribution of this work is the establishment of finite-sample bounds for the Markovian SA algorithm presented in Eq. (\ref{saeq:intro}), measured by the expectation of the square of the seminorm. Specifically, we show that the expected squared error decays at a rate of $\tilde{\mathcal{O}}(1/k)$ with properly chosen diminishing stepsizes, and converges geometrically fast to an asymptotic error of $\tilde{\mathcal{O}}(\alpha)$ when constant stepsizes $\alpha_k \equiv \alpha$ are used.
		
		A significant consequence of these results is the application to Markovian linear SA algorithms of the form $x_{k+1} = x_k + \alpha_k(A(Y_k)x_k + b(Y_k))$, where $A(\cdot)$ and $b(\cdot)$ are matrix-valued and vector-valued functions, respectively, and $\{Y_k\}$ is a Markov chain. Crucially, our framework does not require the expectation of $A(Y_k)$ to be Hurwitz, a common assumption in the existing literature. Despite this relaxation, the Markovian linear SA algorithm still converges, albeit in a properly defined seminorm.
		
		Methodologically, our analysis of Markovian SA under seminorm-contractive mappings is built on a Lyapunov-based approach. Inspired by \cite{chen2021lyapunov}, which studied the norm-contractive setting of the SA algorithm in Eq. (\ref{saeq:intro}), we develop a novel Lyapunov function tailored for seminorm-contractive operators. This function is constructed using the infimal convolution of an indicator function with the generalized Moreau envelope. Further details on the proof techniques can be found in Section \ref{subsec:proofsketch}.

		\item \textbf{Average Reward Reinforcement Learning.} We model popular average reward RL algorithms, such as TD($\lambda$) with linear function approximation and Q-learning, in the form of Eq. (\ref{saeq:intro}) and provide their finite-sample guarantees. These results imply a sample complexity of $\Tilde{\mathcal{O}}(\epsilon^{-2})$ for both algorithms. The average reward setting is notably more challenging than the discounted setting due to the Bellman operator being non-contractive with respect to any norm. Our analysis of SA with seminorm-contractive operators plays a key role in establishing these guarantees. 
		
		As a side note, the Bellman equation for policy evaluation in average reward RL takes the form of a Poisson equation. Therefore, our results can potentially be applied to finding solutions to general Poisson equations (which have wide applications) in both deterministic and stochastic settings.
	\end{itemize}

	\subsection{Related Literature}
	\label{subsec:literature}
	In this section, we discuss related work on seminorm contractive operators, Lyapunov stability of dynamical systems, SA, and average reward RL (focusing on value-based algorithms).
	
	\textbf{Seminorm Contractive Operators.} Norm-contractive operators have been extensively studied in the literature since the introduction of the contraction principle in \cite{banach1922operations}. They have found numerous applications, including the study of dynamical systems \cite{lohmiller1998contraction}, control system design \cite{Lohmiller2000}, and robotics \cite{Manchester2018}. These applications have motivated a series of works examining contractive operators in various settings \cite{pham2009contraction, tsukamoto2021contraction, Bullo2022}. Seminorms have been a key area of study in functional analysis \cite{conway2019course} and topological vector spaces \cite{bourbaki2013topological}. However, to the best of our knowledge, the literature lacks a comprehensive theory of seminorm-contractive operators. Notable exceptions that focus on special cases within this paradigm include the study of span seminorm contraction for the average reward Bellman operator in MDPs \cite{puterman2014markov}, the ergodic coefficient of Markov chains \cite{Bullo2023}, and network systems analysis as a seminorm contraction \cite{jafarpour2022}. The objective of this paper is to establish a unifying framework for the study of seminorm-contractive operators in full generality.

	\textbf{Lyapunov Stability.} Lyapunov stability has long been a central focus in the control theory of dynamical systems \cite{khalil2002nonlinear, khalil2009lyapunov}. One especially well-studied area within this field is the stability analysis of linear systems, which finds applications in a variety of settings and often serves as an effective approximation for more complex nonlinear systems. Specifically, the stability of linear systems has been shown to be closely linked to the existence of a unique solution to the associated Lyapunov equation \cite{lyapunov, bartels1972algorithm}. In discrete- and continuous-time settings, this translates respectively to Schur stability and Hurwitz stability of the corresponding linear operators \cite{hinrichsen2005mathematical}. However, to the best of our knowledge, the study of linear system stability within the context of seminorms, along with the corresponding formulation of Lyapunov equations, has not yet been addressed in the existing literature.
	
	\textbf{Stochastic Approximation.} SA has been a powerful tool for solving root-finding problems under noisy observations. The early literature on SA focused on establishing its asymptotic convergence \cite{robbins1951stochastic,kushner2003stochastic,borkar2009stochastic,benveniste2012adaptive}, while the more recent ones have shared the interest in providing the finite-sample guarantees. Specifically, for linear SA, finite-sample analysis has been performed in \cite{srikant2019finite,mou2020linear,lakshminarayanan2018linear}. For nonlinear SA, finite-sample bounds were provided when there is a contractive operator \cite{chen2019finite,chen2021lyapunov,qu2020finite,mou2022optimal}, or when the algorithm is an SGD variant for minimizing some objective function \cite{bottou2010large,lan2020first,beck2017first}. Most existing studies on finite-time bounds assume that the algorithm converges to a unique point. However, a broader perspective includes situations in which the SA algorithm can converge to any point in a subspace or even wander within that subspace without exhibiting clear convergence behavior. In this work, we focus on two such scenarios—namely, SA for seminorm contractive operators and SA for linear operators—and provide non-asymptotic results.

	\textbf{Value-Based Methods in Average Reward RL.} 
	In the next few paragraphs, we discuss related work on average reward RL, focusing on last-iterate convergence guarantees of value-based algorithms such as TD-learning for policy evaluation and Q-learning for policy optimization.
	
	The policy evaluation problem in RL is usually solved using TD-learning variants, such as least-squares policy evaluation (LSPE), $ n $-step TD, and TD($\lambda$). The asymptotic convergence of TD($\lambda$) (with linear function approximation) was established in \cite{tsitsiklis1999average}. The asymptotic convergence and the rate of convergence of LSPE were provided in \cite{yu2009convergence}. However, both results require that the all-ones vector does not belong to the span of the basis vectors used for linear function approximation, which is relatively restrictive (as it is not even satisfied in the tabular setting) and is not required in this work. Moreover, we focus on finite-sample guarantees rather than asymptotic convergence, thereby offering a more refined characterization of the algorithm’s convergence behavior. 
	
	Q-learning is one of the most well-known model-free algorithms for policy optimization. The first provably convergent algorithms include relative value iteration (RVI) Q-learning, stochastic shortest path (SSP) Q-learning \cite{abounadi2001learning}, and their variants \cite{gosavi2004reinforcement}. More recently, the authors of \cite{wan2020learning} introduced a variant of Q-learning that does not require a reference function (which is needed in RVI Q-learning). However, most existing results focus on establishing asymptotic convergence, whereas we provide non-asymptotic guarantees.
	
	A conference version of this paper \cite{zhang2021finite} specifically focused on average reward RL. However, in this work, we develop a general theory for seminorm contraction SA (and linear SA without the Hurwitzness assumption) and obtain results for average reward TD-learning and Q-learning as special cases.
	
	\textbf{Solving the Poisson Equation.} The Poisson equation characterizes the long-term behavior of functionals of a Markov chain \cite{meyn2012markov}. In the context of the policy evaluation problem in RL, since the policy is fixed, the Poisson equation naturally arises as the average reward Bellman equation. It has been extensively studied in Markov chain theory, where estimating its solutions serves as a subroutine in various problems, including asymptotic variance estimation \cite{agrawal2024markov} and the construction of variance-reduced unbiased estimators for the stationary expectation of a function of the Markov chain \cite{jure2018, douc2024solvingpoissonequationusing, henderson2002approximating}. In \cite{agrawal2024markov}, the authors establish finite-time mean square error bounds for the value function estimate, while \cite{jure2018, douc2024solvingpoissonequationusing, henderson2002approximating} focus on the asymptotic optimality of the proposed unbiased estimators of the stationary expectation. The results presented in this work can be leveraged to obtain finite-time bounds for iterative methods used to solve the Poisson equation.
	
	\section{Preliminary Results on the Seminorm Fixed-Point Theorem}
	\label{sec:seminorm contractive operator}
	Seminorms, as generalizations of the concept of norms, have been utilized in various areas of research, including Lasso regression in convex optimization \cite{boyd2004convex} and average reward RL \cite{puterman2014markov}. In this section, we develop preliminary results for seminorm-contractive operators, starting with several useful properties of seminorms and ultimately leading to a seminorm fixed-point theorem.

	\subsection{Seminorms}
	\label{subsec: seminorm}
	Recall that a function $\|\cdot\| : \mathbb{R}^d \to \mathbb{R}$ is a norm if it satisfies the following three properties:
	\begin{enumerate}[(1)]
		\item Triangle Inequality: $\|x + y\| \leq \|x\| + \|y\|$ for any $x,y\in\mathbb{R}^d$; 
		\item Absolute Homogeneity: $\|\alpha x\| = |\alpha| \|x\|$ for all $x\in\mathbb{R}^d$ and $\alpha\in\mathbb{R}$; 
		\item Positive Definiteness: $\|x\| \geq 0$ for all $x\in\mathbb{R}$, with $\|x\| = 0$ only if $x = 0$.
	\end{enumerate}
	Although norms are useful metrics to measure the size of vectors, sometimes we care only about certain components of a vector instead of the whole vector, which motivates the definition of seminorms.
	
	\begin{definition}
		\label{def:seminorm}
		A non-negative real-valued function $p: \mathbb{R}^d  \to \mathbb{R}$ is called a \textit{seminorm} if 
		\begin{enumerate}[(1)]
			\item  $p(\alpha x) = | \alpha | p(x)$ for all $x \in \mathbb{R}^d$ and $\alpha \in \mathbb{R}$,
			\item $p(x+y) \leq p(x) + p(y)$ for all $x, y \in \mathbb{R}^d$.
		\end{enumerate} 
	\end{definition}
	
	A seminorm can be viewed as a relaxation of a norm because the seminorm does not require the positive definiteness property, i.e., $p(x) = 0$ does not imply $x = 0$. A representative example of seminorms is the span seminorm 
	\begin{align*}
		p_\text{span}(x)= \max_{1\leq i\leq d} x_i - \min_{1\leq j\leq d} x_j,\quad \forall\,x \in \mathbb{R}^d,
	\end{align*}
	which is used in studying average reward MDPs \cite{puterman2014markov}. As another example, the seminorm $p(x)=(x^\top Px)^{1/2}$ defined by a positively semi-definite matrix $P\in\mathbb{R}^{d\times d}$ often appears as the objective function in linear regression problems.

	\vspace{3 mm}
	\noindent\textbf{Properties of Seminorms.} The following proposition summarizes several fundamental properties of seminorms, all of which will be frequently used in our study. See Appendix \ref{pf:prop:seminorm_properties} for the proof.
	
	\begin{proposition}\label{prop:seminorm_properties}
		Let $p(\cdot)$ be a seminorm on $\mathbb{R}^d$.
		\begin{enumerate}[(1)]
			\item The kernel of $p(\cdot)$, defined as $\text{ker}(p) = \{x \in \mathbb{R}^d \mid p(x)=0\}$, is a linear subspace of $\mathbb{R}^d$.
			\item There exists a norm $\| \cdot \|$ on $\mathbb{R}^d$ such that $p(x) = \min_{y \in \text{ker}(p)} \| x - y \|$ for all $x \in \mathbb{R}^d$.
			\item Let $q(\cdot)$ be another seminorm on $\mathbb{R}^d$ such that $\text{ker}(p)=\text{ker}(q)$. Then, there exist constants $C_1,C_2>0$ such that $C_1 q(x) \leq p(x)  \leq C_2 q(x)$ for all $x \in \mathbb{R}^d$.
		\end{enumerate}
	\end{proposition}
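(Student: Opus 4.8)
The plan is to treat the three parts in order, with the second being the crux and the first and third following by short arguments. For part (1), I would verify directly that $\ker(p)$ is closed under the vector-space operations. Given $x,y\in\ker(p)$, the triangle inequality yields $0 \le p(x+y) \le p(x)+p(y) = 0$, so $x+y\in\ker(p)$; absolute homogeneity gives $p(\alpha x) = |\alpha|\,p(x) = 0$, so $\alpha x\in\ker(p)$; and $p(0)=0$ shows $0\in\ker(p)$. Hence $\ker(p)$ is a linear subspace.

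For part (2), write $K=\ker(p)$ and choose any complementary subspace $V$ with $\mathbb{R}^d = K\oplus V$ (available in finite dimensions by extending a basis of $K$). The key observation is that $p|_V$ is in fact a \emph{norm} on $V$: it inherits homogeneity and the triangle inequality from $p$, and it is positive definite because $v\in V$ with $p(v)=0$ forces $v\in K\cap V=\{0\}$. Fixing any norm $\|\cdot\|_K$ on $K$ and decomposing each $x$ uniquely as $x=x_K+x_V$ with $x_K\in K$ and $x_V\in V$, I would define
\[
\|x\| = \|x_K\|_K + p(x_V).
\]
Since the projections $x\mapsto x_K$ and $x\mapsto x_V$ are linear, $\|\cdot\|$ inherits homogeneity and the triangle inequality, while $\|x\|=0$ forces $x_K=0$ and $x_V=0$ (using that $p|_V$ is a norm), so $\|\cdot\|$ is a genuine norm. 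To finish, I would compute the distance to $K$: for $y\in K$ the $V$-component of $x-y$ is still $x_V$, so $\|x-y\|=\|x_K-y\|_K+p(x_V)$, which is minimized at $y=x_K$ and gives $\min_{y\in K}\|x-y\|=p(x_V)$. It then remains to note $p(x)=p(x_V)$, which follows from two applications of the triangle inequality together with $p(x_K)=0$, closing part (2).

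For part (3), with $\ker(p)=\ker(q)=K$, I would apply part (2) to each seminorm to obtain norms $\|\cdot\|_p$ and $\|\cdot\|_q$ on $\mathbb{R}^d$ realizing $p$ and $q$ as distances to $K$. Because all norms on the finite-dimensional space $\mathbb{R}^d$ are equivalent, there exist constants $a,b>0$ with $a\|z\|_q \le \|z\|_p \le b\|z\|_q$ for every $z$; applying these inequalities to $z=x-y$ and taking the infimum over $y\in K$ on each side transfers the equivalence to the distances and yields $a\,q(x)\le p(x)\le b\,q(x)$. Alternatively, both seminorms descend to norms on the common finite-dimensional quotient $\mathbb{R}^d/K$, where norm equivalence applies directly.

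The step I expect to be the main obstacle is the construction in part (2): one must correctly isolate a complement $V$ on which $p$ is positive definite and then verify that the combined function is a bona fide norm whose distance to $K$ collapses back to $p$. The conceptual content is that $p$ behaves like a norm transverse to its kernel and is flat along it; once this is made precise everything else is routine verification.
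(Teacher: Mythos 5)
Your proof is correct, and for part (2) it takes a genuinely different route from the paper's. Parts (1) and (3) essentially coincide with the paper's arguments: part (1) is the same direct verification, and part (3) is the same reduction to equivalence of norms on a finite-dimensional space via part (2) (the paper works with explicit minimizers $y^*(x)$ and $z^*(x)$ rather than applying the pointwise inequalities to $z=x-y$ and taking infima, but the content is identical; your quotient-space alternative is also sound, and indeed the paper separately proves that $p$ is a norm on $\mathbb{R}^d/\ker(p)$). The real divergence is part (2). The paper constructs the norm via a Minkowski functional: it takes the open $p$-unit ball $U_p$, builds a bounded absorbing absolutely convex set $V = U_p \cap \left(\ker(p)^\perp + U_{\ker(p)}\right)$ satisfying $V + \ker(p) = U_p$, defines $\|x\| = \inf\{r>0 \mid x \in rV\}$, and then verifies boundedness, absorbency, absolute convexity, and the identity $p(x) = \min_{y\in\ker(p)}\|x-y\|$ through gauge-functional manipulations. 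Your construction---fix any algebraic complement $V$ with $\mathbb{R}^d = \ker(p)\oplus V$, observe that $p$ restricted to $V$ is positive definite, and set $\|x\| = \|x_K\|_K + p(x_V)$---is more elementary and makes every verification a one-line computation, since for $y\in\ker(p)$ the $V$-component of $x-y$ is unchanged, so the distance collapses to $p(x_V)=p(x)$. What the paper's route buys is a construction in the standard language of locally convex spaces (gauges of absorbing absolutely convex sets), which it can cite from the topological-vector-space literature; what yours buys is brevity and transparency, and it also makes the paper's side remark about non-uniqueness of the norm immediate, since both the complement $V$ and the auxiliary norm $\|\cdot\|_K$ on the kernel are free choices.
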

	
	Several remarks are in order. First, given a seminorm $p(\cdot)$, the set $\{x\in \mathbb{R}^d \mid p(x)=0\}$ is, in general, not a singleton. For example, consider the seminorm $p(x)=(x^\top Px)^{1/2}$, where $P\in\mathbb{R}^{d\times d}$ is a positive semi-definite matrix. Then, it is easy to see that the kernel space of $p(\cdot)$ is the kernel space of the matrix $P$, which is a linear subspace of $\mathbb{R}^d$. Proposition \ref{prop:seminorm_properties} (1) states that this is true for any seminorm.
	Second, there is a close connection between a seminorm and a norm on $\mathbb{R}^d$. Suppose that we are given a norm $\| \cdot \|$ and a linear subspace $E$, then $p(x):=\min_{y \in E} \| x - y \|$ is, by Definition \ref{def:seminorm}, a seminorm.
	Proposition \ref{prop:seminorm_properties} (2) states that the converse is also true, i.e., any seminorm $p(\cdot)$ can be equivalently written as the distance to its kernel with respect to some norm $\| \cdot \|$. Finally, given two arbitrary norms, denoted as $\|\cdot\|_a$ and $\|\cdot\|_b$, defined in a finite-dimensional Euclidean space, it is well known that they are equivalent in the sense that there exist $c_1,c_2>0$ such that $c_1\|x\|_a\leq \|x\|_b\leq c_2\|x\|_a$ for any $x\in\mathbb{R}^d$. This is, in general, not true for two arbitrary seminorms. However, Proposition \ref{prop:seminorm_properties} (3) states that we still have equivalence between seminorms as long as their kernels coincide.
	
	To further understand Proposition \ref{prop:seminorm_properties}, let us use the span seminorm as an illustrative example, which is of particular interest to us due to its relevance in average reward RL. 
	Recall that the span seminorm $p_\text{span}(\cdot)$ is defined as $p_\text{span}(x)= \max_{1\leq i\leq d} x_i - \min_{1\leq j\leq d} x_j$ for all $x \in \mathbb{R}^d$. In this case, it is easy to see that the kernel of $p_\text{span}(\cdot)$ is $\{c  e_d \mid c \in \mathbb{R}\}$, where $e_d$ is the all-ones vector in $\mathbb{R}^d$. In addition, we have $p_{\text{span}}(x) = \min_{y \in \{c  e_d \mid c \in \mathbb{R}\}} 2\| x - y\|_\infty$ for all $x \in \mathbb{R}^d$,
	which follows by observing that $\arg\min_{y \in \{c \cdot e_d \mid c \in \mathbb{R}\}} \|x - y\|_\infty=(\max_{1\leq i\leq d} x_i + \min_{1\leq j\leq d} x_j)e_d/2$.
	
	\vspace{3 mm}
	\noindent\textbf{The Quotient Space Defined by a Seminorm.} 
	Next, we introduce the quotient space associated with a seminorm and show that seminorms are actually norms when restricted to their corresponding quotient spaces. This result will be particularly useful in establishing the seminorm fixed-point theorem.
	
	For any $x, y \in \mathbb{R}^d$ with $x - y \in \text{ker}(p)$, we have $p(x) = p(y)$. This motivates the definition of the following equivalence relation $\sim$ on $\mathbb{R}^d$: $x \sim y \text{ if and only if } x - y \in \text{ker}(p)$. Given $x \in \mathbb{R}^d$, the equivalence class of $x$ under $\sim$, denoted $[x]$, is defined as $[x] = x + \text{ker}(p)$. With this notation, we define the quotient of $\mathbb{R}^d$ by $\text{ker}(p)$ as $\mathbb{R}^d / \text{ker}(p) = \{[x] \mid x \in \mathbb{R}^d\}$, which is the set of all equivalence classes induced by $\sim$ on $\mathbb{R}^d$. Alternatively, the quotient space $\mathbb{R}^d / \text{ker}(p)$ can be interpreted as the set of all affine subspaces of $\mathbb{R}^d$ that are parallel to $\text{ker}(p)$. Scalar multiplication and addition are defined on the equivalence classes by $\alpha [x] = [\alpha x]$ and $[x] + [y] = [x + y]$ for all $\alpha \in \mathbb{R}$ and $x, y \in \mathbb{R}^d$. These operations are well-defined (i.e., they do not depend on the choice of representatives), making the quotient space $\mathbb{R}^d / \text{ker}(p)$ a vector space with $\text{ker}(p)$ being its zero vector.
	
	We conclude this section with the following result, which states that the vector space $\mathbb{R}^d / \text{ker}(p)$ equipped with $p(\cdot)$ is actually a Banach space.  See Appendix \ref{ap:quotient_space} for the proof.
	
	\begin{lemma}
		\label{lemma:p is norm on quotient space}
		Let $p(\cdot)$ be a seminorm on $\mathbb{R}^d$. Then, $p(\cdot)$ is a norm on $\mathbb{R}^d / \text{ker}(p)$. Furthermore, $(\mathbb{R}^d / \text{ker}(p), ~p)$ forms a Banach space.
	\end{lemma}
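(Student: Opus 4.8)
The plan is to first make sense of $p$ as a function on the quotient space by setting $p([x]) := p(x)$, and then to verify the three norm axioms together with completeness. I would begin by confirming that the assignment $[x] \mapsto p(x)$ is well-defined, i.e., independent of the chosen representative. If $[x] = [x']$ then $x - x' \in \text{ker}(p)$, so the reverse triangle inequality (a consequence of the two properties in Definition \ref{def:seminorm}, via $p(x) \le p(x-x') + p(x')$ and its symmetric counterpart using $p(-z)=p(z)$) gives $|p(x) - p(x')| \le p(x - x') = 0$, whence $p(x) = p(x')$. With well-definedness in hand, absolute homogeneity and the triangle inequality on the quotient follow immediately from the corresponding properties of the seminorm together with the vector-space operations $\alpha[x] = [\alpha x]$ and $[x] + [y] = [x+y]$.

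The one axiom that genuinely uses the quotient construction is positive definiteness, which is precisely what failed for $p$ on $\mathbb{R}^d$. Here the zero vector of $\mathbb{R}^d/\text{ker}(p)$ is the class $[0] = \text{ker}(p)$. If $p([x]) = 0$ then $p(x) = 0$, so $x \in \text{ker}(p)$, and hence $[x] = \text{ker}(p) = [0]$; conversely $[x] = [0]$ forces $x \in \text{ker}(p)$ and thus $p([x]) = 0$. This establishes that $p$ is a genuine norm on $\mathbb{R}^d/\text{ker}(p)$.

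For the Banach property I would invoke finite-dimensionality rather than chase Cauchy sequences directly. By Proposition \ref{prop:seminorm_properties}(1), $\text{ker}(p)$ is a linear subspace of the finite-dimensional space $\mathbb{R}^d$, so the quotient $\mathbb{R}^d/\text{ker}(p)$ is itself a finite-dimensional real vector space, of dimension $d - \dim\text{ker}(p)$. Since every finite-dimensional normed vector space over $\mathbb{R}$ is complete, $(\mathbb{R}^d/\text{ker}(p),\, p)$ is a Banach space. Alternatively, one may recognize $p$ as the quotient norm induced by the norm $\|\cdot\|$ supplied by Proposition \ref{prop:seminorm_properties}(2)—because $p(x) = \min_{y \in \text{ker}(p)} \|x - y\| = \inf_{y \in \text{ker}(p)} \|x - y\|$ is exactly the quotient-norm formula—and then appeal to the standard theorem that the quotient of a Banach space by a closed subspace is Banach, noting that $\text{ker}(p)$ is closed as a finite-dimensional subspace.

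I expect no serious obstacle in this argument: the only step that requires genuine care is the well-definedness together with positive definiteness, since that is exactly where the passage from seminorm to honest norm takes place. Completeness is then essentially automatic, being inherited from the finite-dimensionality of $\mathbb{R}^d$; the quotient-norm route is offered mainly as an alternative that ties the result cleanly to Proposition \ref{prop:seminorm_properties}(2).
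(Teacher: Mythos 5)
Your proof is correct, and for the completeness half it takes a genuinely different route from the paper. The norm-axiom verification (well-definedness via the reverse triangle inequality, homogeneity, triangle inequality, and positive definiteness from the quotient construction) matches the paper's argument; the paper handles well-definedness in the main text when introducing the equivalence relation, so your explicit check is a welcome addition rather than a divergence. The difference is in proving completeness: the paper gives a hands-on Cauchy-sequence argument, extracting a subsequence with $p([x_{n_{k+1}}-x_{n_k}])<2^{-k}$ and then using Proposition \ref{prop:seminorm_properties}(2) to choose representatives $x_{n_k}-y_k$ with $y_k\in\text{ker}(p)$ that form a Cauchy sequence in the complete space $\mathbb{R}^d$, whose limit descends to the quotient. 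You instead observe that $\mathbb{R}^d/\text{ker}(p)$ is a finite-dimensional real vector space and invoke the standard fact that every finite-dimensional normed space is complete --- a one-line argument that is perfectly valid and considerably shorter. Your alternative via the quotient-norm theorem (quotient of a Banach space by a closed subspace is Banach, with $p$ recognized as the quotient norm of the $\|\cdot\|$ from Proposition \ref{prop:seminorm_properties}(2)) is also sound and is essentially what the paper's explicit construction re-proves in this special case; the paper's self-contained lifting argument buys independence from these standard theorems and would survive in settings where one does not want to cite them, while your route buys brevity and makes the structural reason for completeness transparent.
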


	\subsection{A Seminorm Fixed-Point Theorem}
	In this section, we present a fixed-point theorem that extends the Banach fixed-point theorem for norm-contractive operators to the seminorm setting. The results in this section are fundamental for establishing finite-sample guarantees for SA algorithms, as will be discussed in Section \ref{sec:SA}.
	
	The following definition formalizes the concept of seminorm-contractive operators.
	\begin{definition}
		\label{def:seminorm contraction mapping}
		An operator $T: \mathbb{R}^d \to \mathbb{R}^d$ is a contraction mapping with respect to a seminorm $p(\cdot)$ if there exists $\gamma\in [0,1)$ such that 
		\begin{align*}
			p(T(x)-T(y)) \leq \gamma p(x-y),\quad \forall\,x, y \in \mathbb{R}^d.
		\end{align*}
		Moreover, we call $x^* \in \mathbb{R}^d$ a fixed point of $T: \mathbb{R}^d \to \mathbb{R}^d$ with respect to $p(\cdot)$  if $p(T(x^*) - x^*) = 0$.
	\end{definition}
	
	Note that unlike in the norm-contractive setting, $p(T(x^*) - x^*) = 0$, in general, does not imply $T(x^*)=x^*$.
	Next, we present a natural generalization of the Banach fixed-point theorem to the seminorm setting. 
	
	\begin{theorem}
		\label{theorem:Banach fixed-point theorem}
		Let $T: \mathbb{R}^d \to \mathbb{R}^d$ be a $\gamma$-contraction mapping with respect to a seminorm $p(\cdot)$. Then, there exists $x^* \in \mathbb{R}^d$ such that $p(T(x^*)-x^*) = 0$. In addition, for any $x_0 \in \mathbb{R}^d$, the sequence $\{x_k\}$ generated by $x_{k+1} = T(x_k)$ satisfies
		\begin{align*}
			p(x_k - x^*) \leq \gamma^k p(x_0 - x^*), \quad \forall\, k \geq 0.
		\end{align*}
	\end{theorem}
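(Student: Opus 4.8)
The plan is to transfer the problem to the quotient space $\mathbb{R}^d/\ker(p)$, where Lemma \ref{lemma:p is norm on quotient space} guarantees that $p(\cdot)$ is a genuine norm and the space is Banach, so that the classical Banach fixed-point theorem applies directly. The first step is to check that the seminorm-contraction $T$ descends to a well-defined operator $\tilde{T}$ on the quotient, defined by $\tilde{T}([x]) = [T(x)]$. The key observation is that if $x \sim y$, i.e. $p(x-y)=0$, then the contraction property gives $p(T(x)-T(y)) \le \gamma p(x-y) = 0$, so $T(x) \sim T(y)$; hence the value $[T(x)]$ does not depend on the representative chosen for $[x]$, and $\tilde{T}$ is well-defined.

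Next I would verify that $\tilde{T}$ is a $\gamma$-contraction on $(\mathbb{R}^d/\ker(p),\, p)$. Using the definition of the operations on equivalence classes together with the fact that $p$ takes the same value on any representative of a class, one computes
$$p(\tilde{T}([x]) - \tilde{T}([y])) = p(T(x) - T(y)) \le \gamma\, p(x-y) = \gamma\, p([x]-[y]),$$
so $\tilde{T}$ is norm-contractive in the quotient. Since $(\mathbb{R}^d/\ker(p),\, p)$ is a Banach space, the Banach fixed-point theorem yields a unique $[x^*]$ with $\tilde{T}([x^*]) = [x^*]$. Unpacking this equality in $\mathbb{R}^d$, it says $T(x^*) - x^* \in \ker(p)$, i.e. $p(T(x^*)-x^*) = 0$, which establishes the existence claim.

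For the geometric convergence bound, I would argue directly in $\mathbb{R}^d$ rather than quoting the quotient-space rate, since it is cleaner. Given any $x^*$ with $p(T(x^*)-x^*)=0$ and the iteration $x_{k+1}=T(x_k)$, the triangle inequality and the contraction property give
$$p(x_{k+1}-x^*) \le p(T(x_k)-T(x^*)) + p(T(x^*)-x^*) \le \gamma\, p(x_k-x^*),$$
where the term $p(T(x^*)-x^*)$ vanishes. Iterating this one-step contraction from $k=0$ yields $p(x_k-x^*) \le \gamma^k\, p(x_0-x^*)$ for all $k\ge 0$, as claimed.

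I expect the main obstacle to be the first step: carefully justifying that $T$ induces a well-defined operator on the quotient space and that the quotient operations (addition, scalar multiplication, and the induced norm) interact correctly with $T$. Once $\tilde{T}$ is shown to be a well-defined contraction on a Banach space, existence follows immediately from Banach's theorem, and the convergence estimate is an elementary consequence of the contraction inequality. A minor subtlety worth flagging is that the fixed point $x^*$ is only unique up to $\ker(p)$ (the quotient fixed point is unique, but its lift to $\mathbb{R}^d$ is not), which is precisely why the statement asserts $p(T(x^*)-x^*)=0$ rather than $T(x^*)=x^*$.
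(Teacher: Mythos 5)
Your proposal is correct and follows essentially the same route as the paper: both pass to the quotient space $\mathbb{R}^d/\ker(p)$, verify that $T$ descends to a well-defined $\gamma$-contraction there, and invoke the classical Banach fixed-point theorem via Lemma \ref{lemma:p is norm on quotient space} to obtain $p(T(x^*)-x^*)=0$. Your only deviation is deriving the rate $p(x_k-x^*)\leq \gamma^k p(x_0-x^*)$ directly in $\mathbb{R}^d$ via the triangle inequality and $p(T(x^*)-x^*)=0$, whereas the paper iterates the contraction of the induced map on equivalence classes; the two arguments are equivalent and yours is, if anything, slightly more streamlined.
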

	
	The complete proof of Theorem \ref{theorem:Banach fixed-point theorem} is presented in Appendix \ref{pf:theorem:Banach fixed-point theorem}. The high-level idea is to show that $p(\cdot)$ is a norm-contractive mapping in the Banach space $(\mathbb{R}^d / \text{ker}(p), ~p)$ (cf. Lemma \ref{lemma:p is norm on quotient space}), which enables us to leverage the Banach fixed-point theorem. 
	
	The above theorem guarantees the existence of a unique affine subspace of fixed points $[x^*]=x^*+\ker(p)$.  Moreover, since $p(x_k-x^*)$ can be viewed as the distance from $x_k$ to the set $[x^*]$ with respect to some norm $\| \cdot \|$ (cf. Proposition \ref{prop:seminorm_properties}), fixed-point iteration guarantees the geometric convergence of the sequence $\{x_k\}$ to the affine subspace $[x^*]$. One should be careful in interpreting the convergence results in Theorem \ref{theorem:Banach fixed-point theorem}. Unlike the classical definition of convergence, even if $x_k$ converges to $x^*$ in some seminorm, certain components of $x_k$ may still diverge to infinity. As a clear example, consider $T: \mathbb{R}^2 \to \mathbb{R}^2$ defined as $T(x(1), x(2)) = (x(1)/2, 2x(2))$, which is a contraction mapping with respect to the seminorm $\hat{p}(x(1), x(2)) = |x(1)|$, with a contraction factor of $1/2$. In addition, the set of solutions to the seminorm fixed-point equation $\hat{p}(T(x(1), x(2)) - (x(1), x(2))) = 0$ is given by $E = \{(x(1), x(2)) \in \mathbb{R}^2 \mid x(1) = 0, x(2) \in \mathbb{R}\}$. As a result of Theorem \ref{theorem:Banach fixed-point theorem}, the sequence $\{x_k\}$ generated by $x_{k+1} = T(x_k)$ converges to $E$ geometrically fast. This implies that $x_k(1)$ converges to zero. However, the results do not imply any kind of convergence for $x_k(2)$. In fact, since $x_k(2) = 2^k x_0(2)$, unless $x_0(2) = 0$, we have $\lim_{k \to \infty} |x_k(2)| = \infty$.

	To further illustrate the use of Theorem \ref{theorem:Banach fixed-point theorem}, we have provided two more examples of using Theorem \ref{theorem:Banach fixed-point theorem} to study the convergence behavior of optimization algorithms in Appendix \ref{ap:example:optimization}. 
	
	\section{Seminorm Lyapunov Stability Theorems}\label{sec:Lyapunov}
	A notable special case of fixed-point equations is the class of linear systems of equations, which serves as a representative example in the analytical study of root-finding problems and has wide-ranging applications. In this context, fixed-point iteration can be interpreted as a discrete linear dynamical system, and Theorem \ref{theorem:Banach fixed-point theorem} provides the foundation for establishing a \textit{seminorm Lyapunov stability theorem}. Given the significance of linear systems, we devote this section to exploring their properties and implications. Furthermore, for completeness, we present Lyapunov stability theorems for both discrete-time and continuous-time linear dynamical systems.

	\subsection{Discrete-Time Linear Dynamical Systems}
	When $T(\cdot)$ is a linear operator, the fixed-point iteration (\ref{eq:FPI_intro}) takes the form  
	\begin{align}\label{algo:deterministic_linear}
		x_{k+1} = Ax_k, \quad \forall\, k \geq 0,
	\end{align}  
	where $A\in\mathbb{R}^{d\times d}$ is a real-valued matrix. Understanding the convergence behavior of $\{x_k\}$ generated by Eq. (\ref{algo:deterministic_linear}) has been a central topic in control theory. Specifically, the classical Lyapunov stability theorem \cite{khalil2002nonlinear} states that the linear dynamical system (\ref{algo:deterministic_linear}) is globally geometrically stable (equivalently, the spectral radius of $A$ is strictly less than one) if and only if, for any positive definite matrix $Q\in\mathbb{R}^{d\times d}$, there exists a unique positive definite matrix $P\in\mathbb{R}^{d\times d}$ that solves the Lyapunov equation 
	\begin{align}\label{eq:discrete Lyapunov equation}
		A^\top P A - P + Q = 0.
	\end{align}
	The Lyapunov equation is of significant importance because it provides an explicit way to verify the stability of the linear system (\ref{algo:deterministic_linear}).
	
	The next theorem can be viewed as a generalization of the classical Lyapunov stability theorem to the case where $A$ may have eigenvalues with modulus greater than one. In this case, the stability (in terms of a properly defined seminorm) of $\{x_k\}$ is also characterized by the same Lyapunov equation.
	
	\begin{theorem}\label{theorem:Seminorm GAS and Lyapunov Equation}
		Consider the sequence $\{x_k\}$ generated by Eq. (\ref{algo:deterministic_linear}), and let $E$ be a linear subspace of $\mathbb{R}^d$. Then, the following statements are equivalent:  
		\begin{enumerate}[(1)]
			\item  The subspace $E$ is invariant under $A$, i.e., $x \in E$ implies $Ax \in E$. Furthermore, $E$ contains $E_{A,\geq 1}$, where $E_{A,\geq 1}$ denotes the subspace spanned by all generalized eigenvectors of $A$ corresponding to eigenvalues with moduli greater than or equal to one.
			
			\item There exists a seminorm $p(\cdot)$ with $\ker(p) = E$ and constants $\alpha, \beta > 0$ such that $p(x_k) \leq \alpha p(x_0) e^{-\beta k}$ for all $x_0 \in \mathbb{R}^d$.
			
			\item For any seminorm $p(\cdot)$ with $\ker(p) = E$, there exist constants $\alpha, \beta > 0$ such that $p(x_k) \leq \alpha p(x_0) e^{-\beta k}$ for all $x_0 \in \mathbb{R}^d$.
			
			\item There exist matrices $P, Q \in \mathcal{S}^{d}_{+,E}$ satisfying Eq. (\ref{eq:discrete Lyapunov equation}), where $\mathcal{S}^{d}_{+,E} = \{B \succeq 0 \mid \ker(B) = E\}$ denotes the set of positive semi-definite matrices with kernel $E$.
			
			\item For any $Q \in \mathcal{S}^{d}_{+,E}$, there exists a unique $P \in \mathcal{S}^{d}_{+,E}$ satisfying Eq. (\ref{eq:discrete Lyapunov equation}).
		\end{enumerate}
		Consequently, the smallest subspace $E$ for which any of these statements hold is $E = E_{A,\geq 1}$.
	\end{theorem}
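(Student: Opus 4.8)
The plan is to route everything through the quotient space $V = \mathbb{R}^d/E$, on which $A$ descends to a genuinely norm-contractive operator, and then to close the cycle of implications $(1)\Rightarrow(3)\Rightarrow(2)\Rightarrow(1)$ together with $(3)\Rightarrow(5)\Rightarrow(4)\Rightarrow(2)$. The single unifying observation is the following: if $E$ is invariant under $A$, then $[x]\mapsto[Ax]$ is a well-defined linear operator $\bar{A}$ on $V$; writing $A$ in block upper-triangular form adapted to $E$, with diagonal blocks $A_{11}$ representing $A|_E$ and $A_{22}$ representing $\bar{A}$, the characteristic polynomial of $A$ factors as the product of those of $A_{11}$ and $A_{22}$. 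A short multiplicity count then shows that $\rho(\bar{A}) < 1$ holds if and only if every eigenvalue of $A$ of modulus at least one has its \emph{entire} generalized eigenspace contained in $E$, i.e. $E \supseteq E_{A,\geq 1}$.

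For $(1)\Rightarrow(3)$ I would fix any seminorm $p$ with $\ker(p)=E$; by Lemma~\ref{lemma:p is norm on quotient space} it descends to a norm $\bar{p}$ on $V$, and $p(x_k) = \bar{p}(\bar{A}^k[x_0])$. Since $\rho(\bar{A})<1$ by the observation above, Gelfand's formula gives $\|\bar{A}^k\|_{\mathrm{op}} \le \alpha e^{-\beta k}$ for suitable $\alpha,\beta>0$, yielding the bound. The step $(3)\Rightarrow(2)$ is immediate. For the converse $(2)\Rightarrow(1)$, taking $x_0 \in E = \ker(p)$ forces $p(Ax_0) \le \alpha p(x_0)e^{-\beta} = 0$, so $Ax_0 \in E$ and $E$ is invariant; the decay then descends to $\bar{p}(\bar{A}^k u) \le \alpha\,\bar{p}(u)e^{-\beta k}$ on $V$, which, because $\bar{p}$ is a norm, forces $\rho(\bar{A})<1$ and hence $E \supseteq E_{A,\geq 1}$ by the spectral observation.

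The link to Eq.~\eqref{eq:discrete Lyapunov equation} proceeds through the seminorm $p_P(x)=(x^\top P x)^{1/2}$, which satisfies $\ker(p_P)=\ker(P)=E$ for $P \in \mathcal{S}^d_{+,E}$. For $(4)\Rightarrow(2)$, rewriting the Lyapunov equation as $p_P(Ax)^2 = p_P(x)^2 - p_Q(x)^2$ and applying Proposition~\ref{prop:seminorm_properties}(3) (equivalence of $p_P$ and $p_Q$, since both kernels equal $E$) gives $p_P(Ax) \le \gamma\, p_P(x)$ for some $\gamma\in[0,1)$, so $A$ is a $\gamma$-contraction in $p_P$ and $p_P(x_k)\le\gamma^k p_P(x_0)$. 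For the harder direction $(3)\Rightarrow(5)$, given $Q\in\mathcal{S}^d_{+,E}$ I would set $P = \sum_{k=0}^\infty (A^\top)^k Q A^k$; applying $(3)$ to $p_Q$ gives $x^\top (A^\top)^k Q A^k x = p_Q(A^k x)^2 \le \alpha^2 p_Q(x)^2 e^{-2\beta k}$, so the series converges and $P\succeq 0$ solves Eq.~\eqref{eq:discrete Lyapunov equation} via the telescoping identity $A^\top P A = P - Q$. A direct check shows $x^\top P x = 0$ iff $p_Q(A^k x)=0$ for all $k$ iff $x\in E$ (using invariance of $E$), so $\ker(P)=E$. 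Uniqueness follows since any difference $D$ of two solutions satisfies $A^\top D A = D$ with $E\subseteq\ker(D)$; descending $D$ to a bilinear form $\bar{D}$ on $V$ gives $\bar{D}(u,v)=\bar{D}(\bar{A}^k u,\bar{A}^k v)\to 0$, whence $D=0$. Finally $(5)\Rightarrow(4)$ is immediate once one notes $\mathcal{S}^d_{+,E}$ is nonempty (e.g. the orthogonal projector onto $E^\perp$), and the concluding clause follows because $E_{A,\geq 1}$ is itself $A$-invariant and contains itself, hence satisfies $(1)$, while every $E$ satisfying $(1)$ contains it.

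I expect the main obstacle to be $(2)\Rightarrow(1)$, specifically the linear-algebra bookkeeping converting geometric decay of a single seminorm into $E \supseteq E_{A,\geq 1}$. The delicate point is that decay on the quotient yields only $\rho(\bar{A})<1$, and one must argue that the \emph{full} generalized eigenspace of each eigenvalue of modulus $\ge 1$ lies in $E$, not merely that such eigenvalues are absent from $\mathrm{spec}(\bar{A})$; the block-triangular normal form together with invertibility of $A_{22}-\lambda I$ is what makes this rigorous. The uniqueness argument in $(5)$ is a secondary subtlety, since $D$ need not be positive semi-definite and $V$ carries no canonical inner product, forcing the bilinear-form formulation rather than a transpose-based one.
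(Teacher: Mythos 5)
Your proof is correct, but it follows a genuinely different route from the paper's. Where the paper proves (1) $\Rightarrow$ (2) by an explicit Jordan-form construction --- the seminorm $p(x)=\|D'P^{-1}x\|_2$ with a $\delta$-scaled diagonal similarity taming the nilpotent part, yielding the concrete contraction factor $(1+\rho(J_{<1}))/2$, followed by an upgrade of the kernel from $E_{A,\geq 1}$ to $E$ via $p_E(x)=\min_{x'\in E}p(x-x')$ and a Weierstrass compactness bound on $\sup p/p_E$ --- you descend once and for all to the quotient $\mathbb{R}^d/E$, characterize $\rho(\bar A)<1$ by the block-triangular multiplicity count, and combine Lemma \ref{lemma:p is norm on quotient space} with Gelfand's formula; this delivers statement (3) for \emph{every} seminorm with kernel $E$ in a single stroke, whereas the paper constructs one seminorm and transfers by Proposition \ref{prop:seminorm_properties} (3). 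Likewise, your (2) $\Rightarrow$ (1) replaces the paper's induction along Jordan chains ($s_j=(A-\lambda I)s_{j+1}$, telescoping to $p(A^ks_{j+1})=|\lambda|^kp(s_{j+1})$) with invertibility of $\bar A-\lambda I$ on the quotient, which absorbs the full generalized eigenspace at once; the only point worth one explicit line is that both the spectral-radius deduction from decay of a real norm and the invertibility argument require complexifying $\mathbb{R}^d/E$, a routine step. Your series $P=\sum_{k\geq 0}(A^\top)^kQA^k$, kernel identification, and the Lyapunov-decrement computation for (4) $\Rightarrow$ (2) coincide with the paper's (the paper states (4) $\Rightarrow$ (3), but the extra transfer is again just seminorm equivalence). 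For uniqueness, the paper substitutes $Q=P_1-A^\top P_1A$ into the series and telescopes, which silently uses $(A^k)^\top P_1A^k\to 0$; your descent of the difference $D=P-P_1$ to a bilinear form on the quotient makes that vanishing explicit and, as you correctly observe, sidesteps the fact that $D$ need not be positive semi-definite --- arguably a more careful treatment than the paper's. You are also slightly more scrupulous on (5) $\Rightarrow$ (4), noting nonemptiness of $\mathcal{S}^d_{+,E}$ via the projector onto $E^\perp$, which the paper leaves implicit. In exchange for invoking spectral-radius machinery, your approach buys brevity and conceptual unity with the quotient-space lemma; the paper's explicit construction buys a concrete seminorm and quantitative contraction rate, without needing Gelfand's formula.
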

	
	To understand Theorem \ref{theorem:Seminorm GAS and Lyapunov Equation}, consider the special case where $E=\{0\}$. In this case, for Statement (1) in Theorem \ref{theorem:Seminorm GAS and Lyapunov Equation} to hold, we must have $E_{A,\geq 1}=\{0\}$, which implies that the spectral radius of $A$ is strictly less than one. Then, statements (2)–(5) of Theorem \ref{theorem:Seminorm GAS and Lyapunov Equation} reduce to the classical Lyapunov stability theorem \cite{khalil2002nonlinear}. Note that, even when $E_{A,\geq 1}=\{0\}$, one has the flexibility to choose the linear subspace $E$ to apply Theorem \ref{theorem:Seminorm GAS and Lyapunov Equation}, which is not captured by the classical Lyapunov stability theorem. More generally, Theorem \ref{theorem:Seminorm GAS and Lyapunov Equation} states that as long as the kernel space of a seminorm $p(\cdot)$ contains the ``unstable region'' of $A$, i.e., $E_{A,\geq 1}$, the discrete linear dynamical system converges geometrically in $p(\cdot)$. Moreover, this condition can be verified by finding a solution to the Lyapunov equation (\ref{eq:discrete Lyapunov equation}).

	The complete proof of Theorem \ref{theorem:Seminorm GAS and Lyapunov Equation} is provided in Appendix \ref{pf:theorem:Seminorm GAS and Lyapunov Equation}. Here, we provide a proof sketch. Our plan to prove Theorem \ref{theorem:Seminorm GAS and Lyapunov Equation} is to first prove the equivalence among Statements (2) -- (5) and then prove the equivalence between Statements (1) and (2). The equivalence among (2) -- (5) extends the proof of the classical Lyapunov stability theory to the seminorm setting. Specifically, the implication (2) $\Rightarrow$ (3) follows from the equivalence of seminorms that share the same kernel space (cf. Proposition \ref{prop:seminorm_properties}). The implication (3) $\Rightarrow$ (5) is established by constructing $P = \sum_{k=0}^\infty (A^\top)^k Q A^k$ and verifying that it is the unique solution to the Lyapunov equation (\ref{eq:discrete Lyapunov equation}). The implication (5) $\Rightarrow$ (4) is trivial, and (4) $\Rightarrow$ (2) follows by defining $p(x) = \sqrt{x^\top P x}$ and using the Lyapunov equation to verify that the discrete linear system (\ref{algo:deterministic_linear}) is contracting with respect to $p(\cdot)$. The most challenging part is proving (1) $\Leftrightarrow$ (2). Specifically, to show (1) $\Rightarrow$ (2), we use the Jordan normal form of $A$ to explicitly construct a seminorm $p(\cdot)$ with $\ker(p) = E$ such that the discrete linear system (\ref{algo:deterministic_linear}) is contracting with respect to $p(\cdot)$. To prove (2) $\Rightarrow$ (1), we use an induction argument to show that all generalized eigenvectors associated with eigenvalues of $A$ with moduli greater than or equal to one are contained in $E$. Importantly, although the proof of Theorem \ref{theorem:Seminorm GAS and Lyapunov Equation} utilizes the properties of seminorms stated in Proposition \ref{prop:seminorm_properties}, it does not follow from the general seminorm fixed-point theorem (cf. Theorem \ref{theorem:Banach fixed-point theorem}). For more details, refer to Appendix \ref{pf:theorem:Seminorm GAS and Lyapunov Equation}. 
	
	As a side note, the globally exponential stability with respect to seminorms for discrete-time linear dynamical systems \eqref{algo:deterministic_linear} has been studied in \cite[Lemma 28]{Bullo2023}. However, Theorem \ref{theorem:Seminorm GAS and Lyapunov Equation} is different from their results in that it further provides a Lyapunov equation that can be used in the stability analysis of discrete linear dynamical systems.

	\subsection{Continuous-Time Linear Dynamical Systems} Although the primary focus of this work is on the convergence behavior of discrete iterative algorithms, our results for seminorms can also be used to extend the classical Lyapunov stability theorem for continuous-time linear dynamical systems beyond the traditional Hurwitz setting. We include the result here for completeness. 
	
	Consider the ordinary differential equation (ODE):  
	\begin{align}\label{eq:ODE}  
		\dot{x}(t) = Ax(t), \quad x(0) \in \mathbb{R}^d,  
	\end{align}  
	where $A \in \mathbb{R}^{d \times d}$ is a real-valued matrix. A key question for ODE (\ref{eq:ODE}) is regarding the stability of its equilibrium points, i.e., solutions to $Ax=0$. The classical Lyapunov stability theorem states that the origin is a globally exponentially stable equilibrium point of ODE (\ref{eq:ODE}) (equivalently, the matrix $A$ is Hurwitz) if and only if, for any positive definite matrix $Q \in \mathbb{R}^{d \times d}$, there exists a unique positive definite matrix $P\in \mathbb{R}^{d \times d}$ that satisfies the Lyapunov equation: 
	\begin{align}\label{eq:Lyapunov_Continuous}  
		A^\top P + PA + Q = 0.  
	\end{align}  
	See \cite{khalil2002nonlinear,haddad2008nonlinear} for further details.
	
	In the following theorem, we extend this result to the case where $A$ is not necessarily Hurwitz. However, in this case, the stability is defined in terms of convergence in seminorms. Specifically, we say that ODE (\ref{eq:ODE}) is globally exponentially stable with respect to some seminorm $p(\cdot)$ if and only if there exist constants $\alpha,\beta>0$ such that $p(x(t))\leq \alpha p(x(0))e^{-\beta t}$ for all $t\geq 0$ and any initialization $x(0)\in\mathbb{R}^d$.
	
	\begin{theorem}\label{thm:Lyapunov_Continuous}
		Let $A \in \mathbb{R}^{d \times d}$ be a real-valued matrix, and let $E$ be a linear subspace of $\mathbb{R}^d$. Then, the following statements are equivalent.
		\begin{enumerate}[(1)]
			\item  The linear subspace $E$ is invariant under $A$, i.e., $x\in E$ $\Rightarrow$ $Ax\in E$. Moreover, $E\supseteq E_{A,\geq 0}$, where $E_{A,\geq 0}$ denotes the linear subspace spanned by all generalized eigenvectors associated with the eigenvalues of $ A $ whose real parts are greater than or equal to zero.
			\item There exists a seminorm $p(\cdot)$ with $\text{Ker}(p) = E$ such that the ODE (\ref{eq:ODE}) is globally exponentially stable with respect to $p(\cdot)$.
			
			\item For any seminorm $p(\cdot)$ with $\text{Ker}(p) = E$, the ODE (\ref{eq:ODE}) is globally exponentially stable with respect to $p(\cdot)$.
			
			\item There exists a pair of $P, Q \in \mathcal{S}_{+,E}^d$ satisfying Eq. (\ref{eq:Lyapunov_Continuous}).
			
			\item For any $Q \in \mathcal{S}_{+,E}^d$, there exists a unique $P \in \mathcal{S}_{+,E}^d$ satisfying Eq. (\ref{eq:Lyapunov_Continuous}).
		\end{enumerate}
		Consequently, the smallest subspace $E$ for which any of these statements hold is $E = E_{A,\geq 0}$.
	\end{theorem}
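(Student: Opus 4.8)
The plan is to follow the architecture of the discrete-time proof, replacing each discrete tool by its continuous-time analogue: the series $\sum_k (A^\top)^k Q A^k$ by the integral $\int_0^\infty e^{A^\top t} Q e^{At}\,dt$, the one-step contraction estimate by a Grönwall argument, and the powers $A^k$ by the flow $e^{At}$. Concretely, I would first close the cycle of implications $(2)\Rightarrow(3)\Rightarrow(5)\Rightarrow(4)\Rightarrow(2)$ among the ``analytic'' statements, and then treat the ``algebraic'' equivalence $(1)\Leftrightarrow(2)$ separately. Once all five are equivalent, the closing claim that the smallest admissible $E$ is $E_{A,\geq 0}$ is immediate, since $E_{A,\geq 0}$ is itself $A$-invariant (being a sum of generalized eigenspaces) and is forced to lie in every valid $E$ by Statement (1).

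For the cycle, $(2)\Rightarrow(3)$ is the equivalence of seminorms sharing a kernel (Proposition \ref{prop:seminorm_properties}(3)): if $q$ is any other seminorm with $\ker(q)=E$, then $q(x(t))\le C_2\,p(x(t))\le (C_2/C_1)\,\alpha\,q(x(0))e^{-\beta t}$. For $(3)\Rightarrow(5)$, given $Q\in\mathcal{S}^d_{+,E}$ I set $p_Q(x)=\sqrt{x^\top Q x}$, a seminorm with kernel $E$; applying (3) to $p_Q$ shows $x^\top e^{A^\top t}Qe^{At}x = p_Q(e^{At}x)^2$ decays exponentially, so $P:=\int_0^\infty e^{A^\top t}Qe^{At}\,dt$ converges. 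Differentiating $e^{A^\top t}Qe^{At}$ and using the fundamental theorem of calculus gives $A^\top P+PA=-Q$, and the identity $x^\top Px=\int_0^\infty p_Q(e^{At}x)^2\,dt$ yields $P\succeq 0$ with $\ker(P)=E$ (using $A$-invariance of $E$, which itself follows from (3)). Uniqueness comes from integrating $\frac{d}{dt}\big(e^{A^\top t}P' e^{At}\big)=-e^{A^\top t}Qe^{At}$ for any solution $P'$ and letting $t\to\infty$, where $e^{A^\top t}P'e^{At}\to 0$ because $\sqrt{x^\top P' x}$ is a seminorm with kernel $E$; this forces $P'=P$. The step $(5)\Rightarrow(4)$ is immediate. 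Finally $(4)\Rightarrow(2)$: with $p(x)=\sqrt{x^\top Px}$, the Lyapunov equation gives $\frac{d}{dt}p(x(t))^2 = x(t)^\top(A^\top P+PA)x(t)=-x(t)^\top Qx(t)\le -c\,p(x(t))^2$, where $c>0$ exists by Proposition \ref{prop:seminorm_properties}(3) applied to $p$ and $p_Q$, and Grönwall's inequality then yields exponential decay of $p(x(t))$.

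The heart of the argument is $(1)\Leftrightarrow(2)$. For $(1)\Rightarrow(2)$, $A$-invariance of $E$ makes the quotient map $\bar A:\mathbb{R}^d/E\to\mathbb{R}^d/E$, $\bar A[x]=[Ax]$, well defined; since $E\supseteq E_{A,\geq 0}$, every eigenvalue of $\bar A$ has strictly negative real part, so $\bar A$ is Hurwitz. Solving the classical Lyapunov equation on the quotient produces a norm $\|\cdot\|_\ast$ on $\mathbb{R}^d/E$ with $\|e^{\bar At}[x]\|_\ast\le\alpha\|[x]\|_\ast e^{-\beta t}$, and $p(x):=\|[x]\|_\ast$ is then a seminorm with $\ker(p)=E$ witnessing (2). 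For $(2)\Rightarrow(1)$, invariance follows because $x_0\in E=\ker(p)$ forces $p(x(t))\equiv 0$, hence $e^{At}x_0\in E$ and, differentiating at $t=0$, $Ax_0\in E$. The containment $E\supseteq E_{A,\geq 0}$ is proved by induction along Jordan chains: for a real generalized eigenvector chain $v_1,\dots,v_m$ at $\lambda\ge 0$, the relation $p(v_j+u)=p(v_j)$ for $u\in\ker(p)$, together with $e^{At}v_j=e^{\lambda t}\big(v_j+\sum_{i\ge 1}\frac{t^i}{i!}v_{j-i}\big)$ and the inductive hypothesis $v_1,\dots,v_{j-1}\in E$, gives $p(e^{At}v_j)=e^{\lambda t}p(v_j)$; exponential stability then forces $p(v_j)=0$, i.e. $v_j\in E$.

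I expect the main obstacle to be the $(2)\Rightarrow(1)$ direction, and within it the treatment of complex eigenvalues with nonnegative real part. There the real (block) Jordan form introduces oscillatory factors $\cos(bt),\sin(bt)$ in $e^{At}$, so the clean identity $p(e^{At}v)=e^{\lambda t}p(v)$ no longer holds and one must instead extract the dominant growth $e^{(\Re\lambda)t}$ (up to polynomial factors)—for instance by evaluating along a subsequence of times that controls the oscillation—in order to still conclude $p(v)=0$. Keeping the induction bookkeeping correct across an entire real Jordan block, while invoking only seminorm properties and exponential stability, will be the most delicate part, exactly as in the discrete-time counterpart (Theorem \ref{theorem:Seminorm GAS and Lyapunov Equation}).
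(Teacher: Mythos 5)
Your treatment of the analytic implications matches the paper's essentially verbatim: $(2)\Leftrightarrow(3)$ via Proposition \ref{prop:seminorm_properties}~(3), the integral $P=\int_0^\infty e^{A^\top t}Qe^{At}\,dt$ with finiteness, kernel identification, the fundamental-theorem-of-calculus verification and the uniqueness limit for $(2)/(3)\Rightarrow(5)$, and the Gr\"onwall argument for $(4)\Rightarrow(2)$. Where you genuinely diverge is $(1)\Leftrightarrow(2)$. For $(1)\Rightarrow(2)$ you pass to the quotient $\mathbb{R}^d/E$, observe that the induced map $\bar A$ is Hurwitz because $E\supseteq E_{A,\geq 0}$ absorbs the entire nonnegative-real-part spectrum, and pull back the classical Lyapunov norm; this is correct and arguably cleaner than what the paper does, which is to reduce everything to the discrete-time result: since $E_{A,\geq 0}=E_{e^A,\geq 1}$ (the paper's Fact \ref{Fact1} on the correspondence of eigenvalues and generalized eigenvectors between $A$ and $e^A$), Theorem \ref{theorem:Seminorm GAS and Lyapunov Equation} applied to $e^A$ yields a seminorm contracted by $e^A$, and continuous time is handled by writing $t=n_t+(t-n_t)$ and bounding $\sup_{s\in[0,1]}\|e^{As}\|$ by compactness. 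Your quotient route buys conceptual transparency and avoids any Jordan-form bookkeeping at this step; the paper's route buys reuse of an already-proven theorem.

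The one genuine gap is exactly where you predicted it: your $(2)\Rightarrow(1)$ runs the Jordan-chain induction directly in continuous time, and for complex eigenvalues with $\mathrm{Re}\,\lambda\geq 0$ the oscillatory factors destroy the identity $p(e^{At}v_j)=e^{(\mathrm{Re}\,\lambda)t}p(v_j)$; your suggested repair (extracting dominant growth along a subsequence of times) is a sketch, not an argument, and carrying it through a full real Jordan block with only seminorm axioms in hand is delicate. The paper sidesteps this entirely: from $p(x(t))\leq \alpha p(x(0))e^{-\beta t}$, sample at integer times to get seminorm exponential stability of the discrete system $x_{k+1}=e^Ax_k$, then invoke the implication $(2)\Rightarrow(1)$ of Theorem \ref{theorem:Seminorm GAS and Lyapunov Equation} for $e^A$ together with Fact \ref{Fact1} to conclude $E\supseteq E_{e^A,\geq 1}=E_{A,\geq 0}$ (invariance under $A$ follows, as in your proposal, by differentiating $e^{At}x_0\in E$ at $t=0$). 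You should adopt this reduction: it replaces the most delicate part of your plan with a two-line argument, and it confines the Jordan-chain induction to the discrete setting where it has already been carried out once (note that even there the complex case requires interpreting $p$ on complexified chains, so doing that bookkeeping a second time in continuous form would be wasted effort).
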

	
	To connect Theorem \ref{thm:Lyapunov_Continuous} with the classical Lyapunov theorem, consider again the special case where $E=\{0\}$. In this case, Statement (1) of Theorem \ref{thm:Lyapunov_Continuous} holds only if $E_{A,\geq 0}=\{0\}$, which implies that the matrix $A$ is Hurwitz. Under this condition, Theorem \ref{thm:Lyapunov_Continuous} recovers the classical Lyapunov stability theory for continuous-time linear dynamical systems.  More generally, Theorem \ref{thm:Lyapunov_Continuous} allows for matrices $A$ that are not Hurwitz. In this case, as long as the unstable region, i.e., $E_{A,\geq 0}$, is contained within the kernel space of the seminorm, the system remains stable with respect to the seminorm. Moreover, the same Lyapunov equation can be used to verify this property.
	
	The proof of Theorem \ref{thm:Lyapunov_Continuous} is presented in Appendix \ref{pf:thm:Lyapunov_Continuous}. The high-level idea behind the proof is similar to that of Theorem \ref{theorem:Seminorm GAS and Lyapunov Equation}, except that we work with matrix exponentials.
	
	\section{Markovian Stochastic Approximation under Seminorm Contraction}
	\label{sec:SA}
	Although fixed-point iteration provides a promising approach for finding a fixed point of a seminorm-contractive operator, its practical application is often constrained by computational limitations or incomplete knowledge of the operator itself. These challenges motivate the use of SA, a data-driven small-stepsize variant of the fixed-point iteration. In this section, we develop a finite-sample analysis of SA under a seminorm contraction setting with Markovian noise.
	
	\subsection{Problem Setting}
	Let $\bar{F}:\mathbb{R}^d \to \mathbb{R}^d$ be an operator defined as $\bar{F}(x) := \mathbb{E}_{Y \sim \mu}[F(x, Y)]$, where $Y \in \mathcal{Y}$ is a random variable with distribution $\mu$, and $F:\mathbb{R}^d \times \mathcal{Y} \to \mathbb{R}^d$ is another operator. We assume that $\bar{F}(\cdot)$ is a seminorm-contractive operator, as stated in the following.
	
	\begin{assumption}\label{as:seminorm_contraction}
		There exists a seminorm $p_{c,E}(\cdot)$ on $\mathbb{R}^d$ with kernel denoted by $E$ such that 
		\begin{align*}
			p_{c,E}(\bar{F}(x_1)-\bar{F}(x_2))\leq \gamma p_{c,E}(x_1-x_2),\quad \forall\,x_1,x_2\in\mathbb{R}^d,
		\end{align*}
		where $\gamma \in [0,1)$ is the contraction factor.
	\end{assumption}
	
	Our goal is to find a fixed point $x^* \in \mathbb{R}^d$ of $\bar{F}(\cdot)$ in the seminorm sense:
	\begin{align}\label{eq:system of equations}
		p_{c,E}\left(\bar{F}(x^*) - x^*\right) = 0.
	\end{align}
	Under Assumption \ref{as:seminorm_contraction}, a solution to Eq. (\ref{eq:system of equations}) must exist, as guaranteed by Theorem \ref{theorem:Banach fixed-point theorem}.
	
	In SA, starting with an arbitrary initialization $x_0 \in \mathbb{R}^d$, we iteratively update $x_k$ as follows:
	\begin{align}\label{algo:SA}
		x_{k+1} = x_k + \alpha_k \left(F(x_k, Y_k) - x_k + \omega_k\right),
	\end{align}
	where $\{\alpha_k\}$ is a sequence of stepsizes, and $\{Y_k\}$ and $\{\omega_k\}$ are two stochastic processes. The algorithm described in Eq. \eqref{algo:SA} extends the SA algorithms studied in \cite{chen2021lyapunov, chandak2022concentration} to the seminorm-contractive setting. To analyze this algorithm, we introduce the following assumption regarding the operator $F(\cdot, \cdot)$ and the stochastic processes $\{Y_k\}$ and $\{\omega_k\}$.

	\begin{assumption}\label{as:sa_all}
		The following properties hold:
		\begin{enumerate}[(1)]
			\item  There exist $A_1,B_1>0$ such that $p_{c,E}(F(x_1,y)-F(x_2,y))\leq A_1p_{c,E}(x_1-x_2)$ for any $x_1,x_2\in\mathbb{R}^d$ and $y\in\mathcal{Y}$ and $p_{c,E}(F(0,y))\leq B_1$ for any $y\in\mathcal{Y}$.
			\item The stochastic process $\{Y_k\}$ is a uniformly ergodic Markov chain with unique stationary distribution $\mu$. 
			\item The stochastic process $\{\omega_k\}$ satisfies $\mathbb{E}[\omega_k \mid \mathcal{F}_k]=0$ and $p_{c,E}(\omega_k)\leq A_2p_{c,E}(x_k)+B_2$ almost surely for all $k \geq 0$, where $\mathcal{F}_k$ is the $\sigma$-algebra generated by $\{(x_i,Y_i,\omega_i)\}_{0\leq i\leq k-1}\cup \{x_k\}$ and $A_2,B_2>0$ are constants.
		\end{enumerate}
	\end{assumption}
	
	Assumption \ref{as:sa_all} (1) is a natural generalization of the standard Lipschitz continuity assumption for studying SA under norm-contractive operators. Assumption \ref{as:sa_all} (2) is motivated by applications in RL, where the environment is modeled as an MDP. Under uniform ergodicity, there exist $C > 0$ and $\rho \in (0,1)$ such that $\sup_{y\in\mathcal{Y}}~ d_{\text{TV}}\left(\mathbb{P}\left(Y_k | Y_0=y\right), \mu\right) \leq C\rho^k$ for all $k\geq 0$, where $d_{\text{TV}}\left(\nu_1, \nu_2 \right)$ stands for the total variation distance between probability measures $\nu_1$ and $\nu_2$ \cite{levin2017markov}. For a Markov chain with a finite state space, Assumption \ref{as:sa_all} (2) is satisfied when the Markov chain is irreducible and aperiodic \cite{levin2017markov}. Assumption \ref{as:sa_all} (3) states that the additive noise is a martingale difference sequence, which is also standard in the SA literature \cite{borkar2000ode}.

	Next, we introduce the mixing time of a Markov chain, which is a useful quantity for our analysis.
	\begin{definition}\label{definition:Markov chain mixing time}
		Given a Markov chain $\mathcal{M}_Y=\{Y_k\}$ with unique stationary distribution $\mu_Y$, for any $\delta>0$, the mixing time $t_\delta(\mathcal{M}_Y)$ of the Markov chain $\mathcal{M}_Y$ with accuracy $\delta$ is defined as
		\begin{align}\label{eq:mixing time definition}
			t_\delta(\mathcal{M}_Y) = \min\left\{k \geq 0 \;\middle|\; \sup_{y\in\mathcal{Y}}~ d_{\text{TV}}\left(\mathbb{P}\left(Y_k | Y_0=y\right), \mu_Y\right) \leq \delta \right\}.
		\end{align}
	\end{definition}
	
	Throughout this section, for simplicity of notation, we will use $t_\delta$ for the mixing time of the Markov chain $\{Y_k\}$ with accuracy $\delta$. In addition, when $\delta=\alpha_k$, where $\alpha_k$ is the stepsize in Eq. (\ref{algo:SA}), we denote $t_{k}=t_{\alpha_k}$. Under Assumption \ref{as:sa_all} (2), the Markov chain $\{Y_k\}$ mixes at a geometric rate. Therefore, we have $t_\delta
	=\mathcal{O}( \log \left(1/\delta \right))$,
	which implies that $\lim_{\delta \rightarrow 0} \delta t_\delta = 0$. This property is critical in our analysis for controlling the Markovian noise.

	\subsection{Finite-Sample Analysis}
	In this section, we present the finite-sample bounds for the SA algorithm described in Eq. \eqref{algo:SA}. To state the result, we first specify the requirements for selecting the stepsize sequence $\{\alpha_k\}$. For simplicity, let $ A = A_1 + A_2 + 1 $ and $ B = B_1 + B_2 $, where the constants $ A_1 $, $ A_2 $, $ B_1 $, and $ B_2 $ are defined in Assumption \ref{as:sa_all}. Additionally, the theorem depends on three problem-specific constants: $ \varphi_1 > 0 $, $ \varphi_2 \in (0, 1) $, and $ \varphi_3 > 0 $. The explicit expressions for these constants will be provided in the proof section (see Section \ref{subsec:proofsketch}, Eq. (\ref{eq:def:constants})).  
	
	\begin{condition}
		\label{condition: stepsizes requirement}
		The stepsize sequence $\{\alpha_k \}$ is positive, non-increasing, and satisfies $\alpha_{k-t_k, k-1} \leq \min\{\varphi_2/\left(\varphi_3 A^2\right), 1/\left(4A\right)\}$ for all $k \geq t_k$, where $\alpha_{i, j}:= \sum^j_{k=i} \alpha_k$.
	\end{condition}
	
	Now, we are ready to state our main theorem, the proof of which will be discussed in detail in Section \ref{subsec:proofsketch}.
	
	\begin{theorem}\label{thm:SA_finite}
		Consider $\{x_k\}$ generated by the SA algorithm presented in Eq. \eqref{algo:SA}. Suppose that Assumptions \ref{as:seminorm_contraction} and \ref{as:sa_all} are satisfied and the stepsize sequence $\{\alpha_k\}$ verifies Condition \ref{condition: stepsizes requirement}. Then, we have for all $k\geq K:= \min\{k\geq 0 ~|~ k \geq t_k\}$ (which is finite under Assumption \ref{as:sa_all} (2)) that
		\begin{align}
			\label{eq:general finite-sample bound}
			\mathbb{E}\left[p_{c,E}(x_k-x^*)^2\right]\leq   \varphi_1c_1\prod_{j=K}^{k-1}(1-\varphi_2\alpha_j)+\varphi_3c_2\sum_{i=K}^{k-1}\alpha_i\alpha_{i-t_i,i-1}\prod_{j=i+1}^{k-1}(1-\varphi_2\alpha_j),
		\end{align}
		where $c_1=\left(p_{c,E}(x_0-x^*)+p_{c,E}(x_0)+B/A\right)^2$ and $c_2=\left(Ap_{c,E}(x^*)+B\right)^2$.
		In particular, we have the following convergence bounds for three common choices of stepsizes (as long as they satisfy Condition \ref{condition: stepsizes requirement}):
		\begin{enumerate}[(1)]
			\item When $\alpha_k \equiv \alpha\in (0,1)$, we have for all $k \geq t_\alpha$:
			\begin{equation*}
				\begin{split}
					\mathbb{E}[p_{c,E}(x_{k}-x^*)^2] \leq \varphi_1 c_1 (1 - \varphi_2 \alpha)^{k-K} + \frac{\varphi_3c_2 }{\varphi_2} \alpha t_\alpha.
				\end{split}
			\end{equation*}
			\item When $\alpha_k = \alpha/(k + h)$, where $\alpha > 1/\varphi_2$ and $h > 0$, we have for all $k \geq K$:
			\begin{equation*}
				\begin{split}
					\mathbb{E}[p_{c,E}(x_{k}-x^*)^2] \leq \varphi_1 c_1 \left(\frac{K+h}{k + h}\right)^{\alpha \varphi_2} + \frac{8e\alpha^2\varphi_3c_2}{\varphi_2\alpha - 1} \frac{t_k}{k + h}.
				\end{split}
			\end{equation*}
			\item When $\alpha_k = \alpha/(k + h)^\xi$, where $\xi \in (0, 1)$ and $\alpha,h > 0$, we have for all $k \geq K$:
			\begin{equation*}
				\begin{split}
					\mathbb{E}[p_{c,E}(x_{k}-x^*)^2] \leq \varphi_1 c_1 e^{-\frac{\varphi_2 \alpha}{1-\xi}\left[(k+h)^{1-\xi} - (K+h)^{1-\xi}\right]} + \frac{4\varphi_3c_2 \alpha}{\varphi_2} \frac{t_k}{(k + h)^\xi}.
				\end{split}
			\end{equation*} 
		\end{enumerate}
	\end{theorem}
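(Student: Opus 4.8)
The plan is to run a Lyapunov drift argument in which the degenerate, possibly non-smooth seminorm $p_{c,E}$ is replaced by a smooth surrogate potential $M(\cdot)$ that is differentiable with Lipschitz gradient, invariant along the kernel $E$, comparable to $p_{c,E}(\cdot)^2$, and contracted in expectation by the averaged map $\bar F$. I would construct $M$ as follows. Using Proposition \ref{prop:seminorm_properties}(2), fix a norm $\|\cdot\|$ with $p_{c,E}(x)=\min_{y\in E}\|x-y\|$, so that $p_{c,E}$ is itself the infimal convolution of the indicator $\delta_E$ of the kernel with a genuine norm. I would then smooth $\tfrac12\|\cdot\|^2$ by its generalized Moreau envelope $N$ (with respect to a fixed smooth norm) and define $M(x)=\min_{y\in E}N(x-y)$, the infimal convolution of $\delta_E$ with $N$. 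The construction is designed to deliver four properties: (i) $M$ is convex, nonnegative, and $L$-smooth with respect to a fixed norm; (ii) there are constants $0<\ell\le u$ with $\ell\,p_{c,E}(x)^2\le M(x)\le u\,p_{c,E}(x)^2$; (iii) $M(x+e)=M(x)$ for all $e\in E$, so $M$ factors through the quotient $\mathbb{R}^d/E$; and (iv) a negative-drift estimate $\langle\nabla M(x-x^*),\bar F(x)-x\rangle\le -\kappa\,M(x-x^*)$ with $\kappa\propto(1-\gamma)$. Property (iv) is the crux: it holds because $\bar F(x^*)-x^*\in E$ makes $x^*$ a genuine fixed point modulo $E$, and the Moreau smoothing endows $\nabla M$ with the duality needed to turn the seminorm contraction $p_{c,E}(\bar F(x)-\bar F(x^*))\le\gamma\,p_{c,E}(x-x^*)$ of Assumption \ref{as:seminorm_contraction} into a gradient inequality, exactly as in the norm-contractive case but now with all estimates invariant under $E$ by (iii).

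Next I would expand one iteration. Writing $z_k=x_k-x^*$ and $g_k=F(x_k,Y_k)-x_k+\omega_k$, the $L$-smoothness in (i) gives
\begin{align*}
M(z_{k+1})\le M(z_k)+\alpha_k\langle\nabla M(z_k),g_k\rangle+\tfrac{L\alpha_k^2}{2}\|g_k\|^2.
\end{align*}
Taking $\mathbb{E}[\cdot\mid\mathcal{F}_k]$, the martingale-difference property $\mathbb{E}[\omega_k\mid\mathcal{F}_k]=0$ removes the first-order $\omega_k$ contribution, while the quadratic remainder is bounded using Assumption \ref{as:sa_all}(1),(3) and the sandwich (ii), producing a term of order $\alpha_k^2\big(M(z_k)+c_2\big)$. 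The surviving first-order term I would split as $\langle\nabla M(z_k),\bar F(x_k)-x_k\rangle+\langle\nabla M(z_k),F(x_k,Y_k)-\bar F(x_k)\rangle$; the first summand is the negative drift $\le-\kappa\,M(z_k)$ from (iv), and the second is the Markovian bias.

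The main obstacle is bounding the Markovian bias $\langle\nabla M(z_k),F(x_k,Y_k)-\bar F(x_k)\rangle$, since $Y_k$ is correlated with $x_k$ and $\bar F$ averages over the stationary law $\mu$. To control it I would use the mixing-time device: condition on the earlier $\sigma$-field $\mathcal{F}_{k-t_k}$, so that $Y_k$ is $t_k$ steps ahead and $\sup_y d_{\text{TV}}(\mathbb{P}(Y_k\mid Y_{k-t_k}=y),\mu)\le\alpha_k$ by the definition $t_k=t_{\alpha_k}$, while separately bounding the movement of the iterate over the window $[k-t_k,k]$ by the cumulative stepsize $\alpha_{k-t_k,k-1}$ times a linear-in-$p_{c,E}(x_k)$ growth factor from Assumption \ref{as:sa_all}(1). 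Combining these two effects yields a bias of order $\alpha_{k-t_k,k-1}\big(M(z_k)+c_2\big)$, which after multiplication by $\alpha_k$ is exactly the $\alpha_k\alpha_{k-t_k,k-1}$ weight in \eqref{eq:general finite-sample bound}; the fact that $\alpha_k t_k=\tilde{\mathcal{O}}(\alpha_k\log(1/\alpha_k))\to 0$ is what lets the drift $-\kappa M(z_k)$ dominate the $O(\alpha_k^2+\alpha_k\alpha_{k-t_k,k-1})$ multiples of $M(z_k)$ for large $k$. Using Condition \ref{condition: stepsizes requirement} to make this domination quantitative, I obtain the one-step recursion
\begin{align*}
\mathbb{E}[M(z_{k+1})]\le(1-\varphi_2\alpha_k)\,\mathbb{E}[M(z_k)]+\varphi_3 c_2\,\alpha_k\alpha_{k-t_k,k-1},
\end{align*}
where $\varphi_1,\varphi_2,\varphi_3$ absorb $\ell,u,\kappa,L$ and the mixing constants.

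Finally, I would unroll this recursion from $K$ to $k$ and use the sandwich (ii) to pass between $M$ and $p_{c,E}^2$, recovering the general bound \eqref{eq:general finite-sample bound} with the stated $c_1,c_2$. The three corollaries are then routine bookkeeping on the product $\prod_j(1-\varphi_2\alpha_j)$ and the weighted sum $\sum_i\alpha_i\alpha_{i-t_i,i-1}\prod_j(1-\varphi_2\alpha_j)$: for constant $\alpha$ the product is geometric and the sum collapses to $\tfrac{\varphi_3c_2}{\varphi_2}\alpha t_\alpha$; for $\alpha_k=\alpha/(k+h)$ and $\alpha_k=\alpha/(k+h)^\xi$ I would bound $\prod_j(1-\varphi_2\alpha_j)\le\exp(-\varphi_2\sum_j\alpha_j)$ with an integral comparison for $\sum_j\alpha_j$, approximate $\alpha_{i-t_i,i-1}\approx t_i\alpha_i$, and apply a standard discrete summation lemma to extract the terminal $\tfrac{t_k}{(k+h)^\xi}$ rate. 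I expect no conceptual difficulty in this last step beyond careful tracking of the mixing factor $t_k$.
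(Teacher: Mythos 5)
Your proposal is essentially the paper's own proof: your potential $M=\delta_E\,\square\,N$ coincides with the paper's Lyapunov function $M_E=\tfrac12 p_{c,E}^2\,\square\,\tfrac{1}{2\theta}\|\cdot\|_s^2$ by the commutativity and associativity of infimal convolution (Lemma \ref{le:box}), and your decomposition into negative drift, martingale term, and Markovian bias, the conditioning back $t_k$ steps combined with a window bound on the iterate movement (the paper's Lemma \ref{le:difference}), the resulting one-step recursion with coefficients $\varphi_2,\varphi_3$, and the unrolling from $K$ all match the paper's Proposition \ref{le:recursion} and the argument in Appendix \ref{proof: general finite-sample bound}, down to the routine stepsize bookkeeping for the three corollaries.

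One step needs repair as written: in your one-iteration expansion the remainder $\tfrac{L\alpha_k^2}{2}\|g_k\|^2$ cannot be bounded under Assumptions \ref{as:seminorm_contraction} and \ref{as:sa_all}, because those assumptions control only the seminorms $p_{c,E}(F(x_k,Y_k))$ and $p_{c,E}(\omega_k)$, while the kernel components of $x_k$ (and hence the full norm of $g_k=F(x_k,Y_k)-x_k+\omega_k$) may diverge --- this is the whole point of the seminorm setting. The fix is already implicit in your property (iii): since $M$ is constant along $E$, you may subtract from $\alpha_k g_k$ its best approximation in $E$ before invoking smoothness, which upgrades the remainder to $\tfrac{L\alpha_k^2}{2}p_{s,E}(g_k)^2$; this is precisely the refined smoothness inequality the paper proves in Proposition \ref{prop:Moreau}(1), and with it the quadratic term is of order $\alpha_k^2\left(M(z_k)+c_2\right)$ as you claim. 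With that substitution your argument goes through and reproduces the stated bound, including the initialization step $\mathbb{E}\left[p_{c,E}(x_K-x^*)^2\right]\le c_1$, which follows from the same window lemma applied on $[0,K]$ under Condition \ref{condition: stepsizes requirement}.
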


	In each case of Theorem \ref{thm:SA_finite}, the right-hand side of the bound is a combination of the ``bias'' term and the ``variance'' term. Using constant stepsizes is very efficient in driving the bias to zero, but cannot eliminate the variance. When using the $\mathcal{O}(1/k)$ stepsizes, the convergence rate is roughly $\Tilde{\mathcal{O}}\left(1/k\right)$ because $t_k=\mathcal{O}(\log(k))$, which is orderwise optimal. When using the $\mathcal{O}(1/k^\xi)$ stepsizes, the resulting convergence rate is $\Tilde{\mathcal{O}}\left(1/k^{\xi}\right)$, which is sub-optimal but more robust with respect to $\alpha$. To compute the sample complexity, we use Theorem \ref{thm:SA_finite} Part (2). Given $\epsilon>0$, to make
	$\mathbb{E}[p_{c,E}(x_{k}-x^*)]\leq \epsilon$, the sample complexity is $\tilde{\mathcal{O}}(\epsilon^{-2})$. These results are qualitatively similar to those for SA under norm-contractive operators \cite{chen2021lyapunov}. In fact, when $E=\{0\}$ (in which case $p_{c,E}(\cdot)$ becomes a norm), Theorem \ref{thm:SA_finite} recovers existing results for norm-contractive SA algorithms as its special case. 
	
	As a final remark, although Theorem \ref{thm:SA_finite} establishes mean-square convergence of the seminorm $p_{c,E}(\cdot)$, it is important to note that convergence in a seminorm does not guarantee convergence (or even boundedness) of all components of $x_k$. Certain components may still diverge to infinity. To ensure the empirical stability of the algorithm, one can incorporate a projection step into the algorithm. Specifically, according to Proposition \ref{prop:seminorm_properties}, there exists a norm $\|\cdot\|_c$ such that $p_{c,E}(x) = \min_{y \in E = \text{ker}(p)} \|x - y\|_c$ for all $x \in \mathbb{R}^d$.
	Therefore, instead of directly implementing the SA algorithm as presented in Eq.~(\ref{algo:SA}), we can modify it as follows:
	\[
	x_{k+1} = \argmin_{x \in E} \|x_k + \alpha_k \left(F(x_k, Y_k) - x_k + \omega_k\right) - x\|_c, \quad \forall\, k \geq 0.
	\]
	Solving for the \textit{argmin} is not always straightforward, depending on the seminorm and the kernel space. In the context of Q-learning in average reward RL, since the corresponding seminorm is the span seminorm, the optimization problem admits a closed-form solution. See the paragraph after Definition \ref{def:seminorm} for more details.
	
	\subsection{Linear Stochastic Approximation}\label{sec:linear_SA}
	A special case of SA is linear SA, which deserves particular attention due to its wide applicability in RL and control. A linear SA takes the form  
	\begin{align}\label{algo:linear_SA}  
		x_{k+1} = x_k + \alpha_k \left(A(Y_k)x_k + b(Y_k)\right),  
	\end{align}  
	where $A:\mathcal{Y} \to \mathbb{R}^{d \times d}$ is a matrix-valued function, $b:\mathcal{Y} \to \mathbb{R}^d$ is a vector-valued function, and $\{Y_k\}$ is a Markov chain with a unique stationary distribution $\mu$. Let $\bar{A} \in \mathbb{R}^{d \times d}$ and $\bar{b} \in \mathbb{R}^d$ be defined as $\bar{A} = \mathbb{E}_{Y \sim \mu}[A(Y)]$ and $\bar{b} = \mathbb{E}_{Y \sim \mu}[b(Y)]$, respectively.

	We impose the following assumptions to study the linear SA described in Eq. (\ref{algo:linear_SA}). Let $E$ be a linear subspace of $\mathbb{R}^d$.

	\begin{assumption}\label{as:lsa_Lyapunov}
		There exists a pair of matrices $ P,Q \in \mathcal{S}_{+,E}^d $ such that $ \bar{A}^\top P + P\bar{A} + Q = 0 $.
	\end{assumption}

	In the special case where $ E = \{0\} $, Assumption \ref{as:lsa_Lyapunov} is equivalent to stating that the matrix $ \bar{A} $ is Hurwitz, i.e., all eigenvalues have strictly negative real parts \cite{khalil2002nonlinear,haddad2008nonlinear}. The Hurwitzness property has been a standard assumption in the literature for studying the convergence behavior of linear SA algorithms \cite{bertsekas1996neuro,srikant2019finite}. Assumption \ref{as:lsa_Lyapunov} is weaker, as it allows flexibility in choosing the subspace $ E $. In fact, as long as $ E $ is invariant under $A$ and contains $E_{A,\geq 0}$, which is the linear subspace spanned by the generalized eigenvectors of $ \bar{A} $ corresponding to eigenvalues with non-negative real parts, Assumption \ref{as:lsa_Lyapunov} holds automatically (cf. Theorem \ref{thm:Lyapunov_Continuous}).
	
	Let $p:\mathbb{R}^d\to \mathbb{R}$ be defined as $p(x)=\sqrt{x^\top Px}$, where the matrix $P$ is given in Assumption \ref{as:lsa_Lyapunov}. It is clear that $p(\cdot)$ is a seminorm with its kernel space being $E$. Moreover, according to Proposition \ref{prop:seminorm_properties}, there exists a norm, denoted by $\|\cdot\|_c$, such that $p(x)=\min_{x'\in E}\|x-x'\|_c$.

	\begin{assumption}\label{as:lsa_all}
		The following properties hold.
		\begin{enumerate}[(1)]
			\item For any $y\in\mathcal{Y}$, $x\in E$ implies $A(y)x\in E$.
			\item There exist $L_1,L_2>0$ such that $\sup_{y\in\mathcal{Y}}\|A(y)\|_c\leq L_1$ and $\sup_{y\in\mathcal{Y}}\|b(y)\|_c\leq L_2$.
			\item There exists $C \geq 1$ such that $t_\delta \leq C \log(1/\delta)$, where
			\begin{align*}  
				t_\delta := \min\left\{k \geq 0 \,\middle|\, \|\bar{A} - \mathbb{E}[A(Y_k) \mid Y_0=y]\|_c \leq L_1\delta, \|\bar{b}-\mathbb{E}[b(Y_k)\mid Y_0=y]\|_c \leq L_2\delta,~\forall\,y \in \mathcal{Y}\right\}.
			\end{align*}
		\end{enumerate}
	\end{assumption}
	\begin{remark}
		The specific norm used to state Assumption \ref{as:lsa_all} (2) and (3) is not important because all norms are equivalent in finite-dimensional vector spaces. We choose to state the assumption using $\|\cdot\|_c$ for simplicity of presentation.
	\end{remark}

	Assumption \ref{as:lsa_all} (1) states that $E$ is an invariant subspace under $A(y)$ for any $y\in\mathcal{Y}$. Assumption \ref{as:lsa_all} (2) is similar to the Lipschitz continuity assumption for general seminom-contractive SA (cf.\ Assumption \ref{as:sa_all} (1)). Assumption \ref{as:lsa_all} (3) is comparable to, yet slightly weaker than, requiring that the Markov chain $\{Y_k\}$ be uniformly ergodic. 
	
	To state the finite-sample bounds of the linear SA algorithm described in Eq. (\ref{algo:linear_SA}), we need the following notation. Let $c_1'=(p(x_0)+p(x_0-x^*)+L_2/L_1)^2$,
	$c_2'>0$ be such that $Q\geq c_2'P$ (such a $c_2'$ is guaranteed to exist because $P$ and $Q$ are both positive semi-definite matrices sharing the same kernel space), and $c_3'=114(L_1p(x^*)+L_2)^2$. The proof of the following theorem is presented in Appendix \ref{sec:lsa_pf}. 
	
	\begin{theorem}\label{thm:linear_SA}
		Consider $\{x_k\}$ generated by the linear SA algorithm presented in Eq. \eqref{algo:linear_SA}. Suppose that Assumptions \ref{as:lsa_Lyapunov} and \ref{as:lsa_all} are satisfied and the stepsize sequence $\{\alpha_k\}$ is positive, non-increasing, and satisfies $\alpha_{k-t_k, k-1} \leq \min\{1/(4L_1),c_2'/(228L_1^2)\}$ for all $k \geq t_k:=t_{\alpha_k}$.
		Then, we have for all $k\geq K:=\min\{k\geq 0 ~|~ k \geq t_k\}$ that 
		\begin{align*}
			\mathbb{E}[p^2(x_k-x^*)]
			\leq c_1'\prod_{j=K}^{k-1}\left(1-c_2'\alpha_j/2\right)+c_3'\sum_{i=K}^{k-1}\alpha_i\alpha_{i-t_i,i-1}\prod_{j=i+1}^{k-1}\left(1-c_2'\alpha_j/2\right).
		\end{align*}
		In particular, we have the following convergence bounds for three common choices of stepsizes:
		\begin{enumerate}[(1)]
			\item When $\alpha_k \equiv \alpha$, we have for all $k \geq K$:
			\begin{equation*}
				\begin{split}
					\mathbb{E}[p(x_{k}-x^*)^2] \leq c_1' \left(1-c_2'\alpha /2\right)^{k-t_\alpha} + \frac{2c_3'}{c_2'} \alpha t_\alpha.
				\end{split}
			\end{equation*}
			\item When $\alpha_k = \alpha/(k + h)$, where $\alpha > 2/c_2'$, we have for all $k \geq K$:
			\begin{align*}
				\mathbb{E}[p(x_{k}-x^*)^2] \leq c_1' \left(\frac{K+h}{k + h}\right)^{\alpha c_2'/2} + \frac{16e\alpha^2c_3'}{c_2'\alpha - 2} \frac{t_k}{k + h}.
			\end{align*}
			\item When $\alpha_k = \alpha/(k + h)^\xi$, where $\xi \in (0, 1)$, we have for all $k \geq K$:
			\begin{align*}
				\mathbb{E}[p(x_{k}-x^*)^2] \leq c_1' e^{-\frac{c_2' \alpha}{2(1-\xi)}\left[(k+h)^{1-\xi} - (K+h)^{1-\xi}\right]} + \frac{8c_3' \alpha}{c_2'} \frac{t_k}{(k + h)^\xi}.
			\end{align*} 
		\end{enumerate}
		
	\end{theorem}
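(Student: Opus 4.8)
The plan is to carry out a direct Lyapunov drift analysis using the explicit quadratic Lyapunov function $V(x) = p^2(x - x^*) = (x-x^*)^\top P (x-x^*)$, where $P \in \mathcal{S}_{+,E}^d$ is the matrix supplied by Assumption \ref{as:lsa_Lyapunov}. Unlike the general seminorm-contractive setting of Theorem \ref{thm:SA_finite}, here $V$ is already smooth, so no generalized Moreau envelope is needed; the linear structure together with the \emph{continuous-time} Lyapunov equation $\bar{A}^\top P + P\bar{A} + Q = 0$ supplies the negative drift. Writing $z_k = x_k - x^*$ and $g(x,y) = A(y)x + b(y)$, the fixed point $x^*$ satisfies $\bar{A}x^* + \bar{b} \in E = \ker(P)$ (its existence following from the exponential stability of the dynamics induced on the quotient $\mathbb{R}^d/E$, cf.\ Theorem \ref{thm:Lyapunov_Continuous}), and the iteration reads $z_{k+1} = z_k + \alpha_k g(x_k, Y_k)$. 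A one-step expansion gives
\begin{equation*}
	V(x_{k+1}) = V(x_k) + 2\alpha_k\, z_k^\top P\, g(x_k, Y_k) + \alpha_k^2\, p^2\!\left(g(x_k, Y_k)\right),
\end{equation*}
and the whole argument reduces to controlling these three terms and then unrolling the resulting recursion.

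For the first-order drift, the key observation is that $P(\bar{A}x^* + \bar{b}) = 0$, so with $\bar{g}(x) = \bar{A}x + \bar{b}$ one has $z_k^\top P\,\bar{g}(x_k) = z_k^\top P\bar{A} z_k$. The Lyapunov equation then yields $2 z_k^\top P \bar{A} z_k = -z_k^\top Q z_k \leq -c_2'\, z_k^\top P z_k = -c_2' V(x_k)$, using $Q \succeq c_2' P$. This is precisely the continuous-time stability mechanism, and it explains why the iteration converges under the continuous Lyapunov condition rather than a discrete seminorm contraction. For the second-order term, I would first record that the $A(y)$-invariance of $E$ (Assumption \ref{as:lsa_all}(1)) together with $p(w) = \min_{e \in E}\|w - e\|_c \le \|w\|_c$ (Proposition \ref{prop:seminorm_properties}) gives $p(A(y)v) \le L_1 p(v)$ and $p(A(y)x^* + b(y)) \le L_1 p(x^*) + L_2$; hence $p^2(g(x_k, Y_k)) \le 2L_1^2 V(x_k) + 2(L_1 p(x^*) + L_2)^2$.

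The main obstacle is the first-order term under \emph{Markovian} noise: because $Y_k$ is correlated with $x_k$, one cannot simply replace $g(x_k, Y_k)$ by $\bar{g}(x_k)$ in conditional expectation. The standard remedy is to condition back $t_k := t_{\alpha_k}$ steps and bound the gap $z_k^\top P\,(g(x_k, Y_k) - \bar{g}(x_k))$ by (i) the mixing estimate of Assumption \ref{as:lsa_all}(3), which controls $\|\bar{A} - \mathbb{E}[A(Y_k)\mid Y_{k-t_k}]\|_c$ and the analogous quantity for $b$ by order-$\alpha_k$ terms, and (ii) a bound on the accumulated drift $\|z_k - z_{k-t_k}\|_c$ over the window $[k-t_k, k-1]$, which is of order $\alpha_{k-t_k,k-1}$ times quantities controlled by $p(z_k)$ and $L_1 p(x^*)+L_2$. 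This is the technical heart of the proof, and it is where the numerical constants $114$ in $c_3'$ and $228$ in the step-size condition originate. The outcome is a bound of order $\alpha_k \alpha_{k-t_k,k-1}\,(L_1 p(x^*)+L_2)^2$ on the error from this replacement, together with lower-order terms proportional to $\alpha_k\alpha_{k-t_k,k-1} V(x_k)$.

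Combining the three estimates and invoking the step-size condition $\alpha_{k-t_k,k-1}\le \min\{1/(4L_1),\, c_2'/(228 L_1^2)\}$ to absorb the $\alpha_k^2 V$ and $\alpha_k\alpha_{k-t_k,k-1}V$ error terms into half of the negative drift, I obtain the one-step recursion
\begin{equation*}
	\mathbb{E}[V(x_{k+1})] \le \left(1 - \tfrac{c_2'}{2}\alpha_k\right)\mathbb{E}[V(x_k)] + c_3'\,\alpha_k\,\alpha_{k-t_k,k-1}, \qquad k \ge K.
\end{equation*}
Unrolling this recursion telescopes into the stated product-plus-sum bound, with $c_1'$ absorbing the initial transient (including the movement of $x_k$ over the first $K$ steps, estimated via Proposition \ref{prop:seminorm_properties} and Assumption \ref{as:lsa_all}(2)). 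Finally, the three corollaries follow by substituting the specific stepsizes and estimating the products $\prod_{j}(1 - c_2'\alpha_j/2)$ and the weighted sums $\sum_i \alpha_i\alpha_{i-t_i,i-1}\prod_{j>i}(1-c_2'\alpha_j/2)$ using $t_k = \mathcal{O}(\log(1/\alpha_k))$; these are routine calculus estimates identical in spirit to those in the proof of Theorem \ref{thm:SA_finite}.
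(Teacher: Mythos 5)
Your proposal is correct and takes essentially the same route as the paper's own proof: the paper likewise works directly with the smooth Lyapunov function $M_E(x)=p^2(x)/2$, derives the negative drift from the continuous-time Lyapunov equation via $P(\bar{A}x^*+\bar{b})=0$ and $Q \succeq c_2'P$, controls the Markovian cross term by conditioning back $t_k$ steps (your invariance and accumulated-drift estimates are exactly its Lemmas \ref{le:Lipschitz_lsa} and \ref{le:difference_lsa}, and that split is indeed where the constants $114$ and $228$ arise), and unrolls the resulting one-step recursion. The only part you leave schematic—the decomposition of the mixing term—corresponds precisely to the paper's $T_{2,1}$, $T_{2,2}$, $T_{2,3}$ treatment, so there is no substantive gap.
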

	
	Theorem \ref{thm:linear_SA} is qualitatively similar to Theorem \ref{thm:SA_finite} for general seminorm-contractive SA. Moreover, when $\bar{A}$ is Hurwitz, i.e., Assumption \ref{as:lsa_Lyapunov} is satisfied with $E=\{0\}$, we recover existing finite-sample bounds of linear SA in \cite{srikant2019finite}.
	
	As a final remark, suppose that the expected operator $\bar{F}(\cdot)$ is not linear but satisfies $(\bar{F}(x)-x)^\top P(x-x^*)\leq -c_0(x-x^*)^\top P(x-x^*)$ (where $c_0>0$) for some positive semi-definite matrix $P$, where $x^*$ is a particular solution to $(\bar{F}(x)-x)^\top P(\bar{F}(x)-x)=0$. Then, our results can be easily reproduced to provide the finite-sample bounds of the corresponding SA algorithm. This case can be viewed as an extension of SA with a dissipative operator \cite{chen2019finite}.
	
	\subsection{Proof of Theorem \ref{thm:SA_finite}}\label{subsec:proofsketch}
	Our approach for proving Theorem \ref{thm:SA_finite} is a Lyapunov-based approach. Specifically, we construct a suitable Lyapunov function $M_E(\cdot)$ that is a smooth approximation of the seminorm-square function $p^2_{c,E}(\cdot)/2$. Then, we use the Lyapunov function to derive the following one-step recursive bound:
	\begin{equation}\label{eq:recursive contractive bound}
		\mathbb{E}\left[M_E(x_{k+1} - x^*)\right] \leq \left(1 - \mathcal{O}(\alpha_k)\right) \mathbb{E}\left[M_E(x_{k} - x^*)\right] + o(\alpha_k),
	\end{equation}
	which will then be repeatedly used to obtain the desired finite-sample bound.
	
	To begin with the construction of the Lyapunov function, the following definition is needed. 
	
	\begin{definition}\label{def:infimal_convolution}
		Let $f_1,f_2:\mathcal{D}\to (-\infty,\infty]$ be two proper functions, where $\mathcal{D}$ is a subset of $\mathbb{R}^d$. The infimal convolution of $f_1(\cdot)$ and $f_2(\cdot)$ is defined as
		\begin{align*}
			(f_1\Box f_2)(x)=\inf_{x\in\mathcal{D}}\{f_1(u)+f_2(x-u)\}.
		\end{align*}
	\end{definition}
	
	The infimal convolution is an important concept in optimization \cite{beck2017first}. Many popular algorithms, such as the proximal point method \cite{parikh2014proximal}, were developed based on the infimal convolution.
	
	Let $\|\cdot\|_s$ be a properly chosen norm such that the function $\|x\|_s^2/2$ is $L$-smooth with respect to $\|\cdot\|_s$ for some $L>0$. Our Lyapunov function is defined as
	$$M_E(x):=\left(\frac{1}{2}p^2_{c,E} ~\Box~ \frac{1}{2\theta} \| \cdot \|^2_s \right)(x) = \min_{y \in \mathbb{R}^d} \left\{\frac{1}{2} p^2_{c,E}(y) + \frac{1}{2\theta} \| x - y \|^2_s \right\},$$
	where $\theta > 0$ is a tunable parameter.

	Before moving forward, we make the following observation regarding the connection between the seminorm and infimal convolution. Recall that by Proposition \ref{prop:seminorm_properties} (2), there exists a norm, denoted by $\| \cdot \|_c$, such that $p_{c,E}(x) = \min_{y \in E} \| x - y \|_c$ for all $x \in \mathbb{R}^d$.  Therefore, using the definition of the infimal convolution, we can write the seminorm $p_{c,E}(\cdot)$ equivalently as
	\begin{equation}
		\label{eq:seminorm infimal convolution}
		p_{c,E}(x) = \left(\| \cdot \|_c ~\Box~ \delta_E \right)(x)= \min_{y \in \mathbb{R}^d} \left\{\| y \|_c + \delta_{E}(x - y) \right\},
	\end{equation}
	where $\delta_E(x)=0$ if $x\in E$ and $\delta_E(x)=+\infty$ otherwise. Similarly, the function $p_{s,E}:\mathbb{R}^d\to \mathbb{R}$ defined as $p_{s,E}(x)= \left(\| \cdot \|_s ~\Box~ \delta_E \right)(x)= \min_{y \in E} \| x - y \|_s $ is also a seminorm. A sequence of properties regarding infimal convolutions and indicator functions are summarized in the following lemma, the proof of which is presented in Appendix \ref{pf:le:box}.
	
	\begin{lemma}\label{le:box}
		Let $f, g, h:\mathbb{R}^d\to \mathbb{R}$ be three functions, and let $E$ be a linear subspace of $\mathbb{R}^d$. Then the following properties hold:
		\begin{enumerate}[(1)]
			\item  Monotonicity: If $f(\cdot) \geq g(\cdot)$, then $(f\,\Box\, \delta_E)\geq  (g \,\Box\, \delta_E)$.
			\item Scaling Invariance: $\beta f ~\square~ \delta_E = \beta (f ~\square~ \delta_E)$ for any $\beta \geq 0$.
			\item Commutativity: $f ~\square~ g = g ~\square~ f$.
			\item Associativity: $\left(f ~\square~ g\right) ~\square~ h = f ~\square~ \left(g ~\square~ h\right)$.
			\item $\delta_E ~\square~ \delta_E = \delta_E$
			\item If $f(\cdot)$ is convex and $L$-smooth with respect to some norm $\|\cdot\|$, then $f ~\square~ \delta_E$ is also convex and $L$-smooth with respect to $\|\cdot\|$.
		\end{enumerate}
	\end{lemma}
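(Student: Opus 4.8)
The plan is to derive properties (1)--(5) directly from the definition of the infimal convolution together with two elementary features of the indicator $\delta_E$: it takes values only in $\{0,+\infty\}$, and $E$ is a subspace (closed under addition and scaling). Property (6) is the only substantive claim and will require a convex-analytic argument.

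First I would dispatch (1)--(5). For (1), fixing $x$ and using $f(u)+\delta_E(x-u)\ge g(u)+\delta_E(x-u)$ for every $u$, taking the infimum over $u$ preserves the inequality. For (2) with $\beta>0$, since multiplication by $\beta$ fixes the set $\{0,+\infty\}$ we have $\beta\delta_E=\delta_E$, so $\inf_u\{\beta f(u)+\delta_E(x-u)\}=\beta\inf_u\{f(u)+\delta_E(x-u)\}$ by positive homogeneity of the infimum; the case $\beta=0$ is immediate since $x-x=0\in E$. Property (3) is the change of variables $v=x-u$ inside the infimum, and (4) follows because both $(f\Box g)\Box h$ and $f\Box(g\Box h)$ reduce, after reparametrizing the inner variable, to the joint infimum $\inf_{u,v}\{f(u)+g(v)+h(x-u-v)\}$. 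For (5), $(\delta_E\Box\delta_E)(x)=0$ iff some $u$ has $u\in E$ and $x-u\in E$; because $E$ is a subspace this holds exactly when $x=u+(x-u)\in E$, and the value is $+\infty$ otherwise, i.e.\ $\delta_E\Box\delta_E=\delta_E$.

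The heart of the lemma is (6). I would first rewrite $g:=f\Box\delta_E$ as $g(x)=\inf_{e\in E}f(x-e)$, the minimum of $f$ over the affine subspace $x+E$; convexity of $g$ is then the standard fact that an infimal convolution of convex functions is convex (its epigraph is the Minkowski sum of the two epigraphs). For $L$-smoothness I would argue variationally: assuming the infimum is attained at some $y^\ast(x)\in x+E$ (which holds in our application, where the relevant $f$ is the coercive function $\tfrac12\|\cdot\|_s^2$), first-order optimality over $x+E$ gives $\nabla f(y^\ast(x))\in E^\perp$. Using $y^\ast(x)+(x'-x)\in x'+E$ as a feasible point for $g(x')$ together with the $L$-smooth upper bound for $f$ yields
\[
g(x')\le g(x)+\langle \nabla f(y^\ast(x)),\,x'-x\rangle+\tfrac{L}{2}\|x'-x\|^2 .
\]
Combined with convexity of $g$, this quadratic majorant pins the directional derivatives of $g$ to a single linear functional, so $g$ is differentiable with $\nabla g(x)=\nabla f(y^\ast(x))$; the displayed inequality, now read with $\nabla g(x)$ in place of $\nabla f(y^\ast(x))$, is exactly the characterization of $L$-smoothness for a convex differentiable function.

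The delicate point I expect to be the main obstacle is precisely this upgrade in (6) from a quadratic majorant to genuine differentiability and the identification of $\nabla g$, since a priori the majorant only certifies that $\nabla f(y^\ast(x))$ is a subgradient. An alternative and arguably cleaner route that avoids reasoning about the minimizer map is to pass to conjugates: one has $(f\Box\delta_E)^\ast=f^\ast+\delta_E^\ast=f^\ast+\delta_{E^\perp}$, and since $f$ being convex and $L$-smooth is equivalent to $f^\ast$ being $\tfrac1L$-strongly convex, adding the convex term $\delta_{E^\perp}$ keeps it $\tfrac1L$-strongly convex; dualizing back shows $f\Box\delta_E$ is $L$-smooth. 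I would adopt whichever formulation meshes with the conventions already fixed in the paper, taking care that $f\Box\delta_E$ is closed and proper so the biconjugate recovers it.
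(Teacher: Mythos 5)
Your proof is correct. For parts (1)--(5) it coincides with the paper's argument: the paper likewise derives monotonicity by taking infima in a pointwise inequality, scaling from $\beta\delta_E=\delta_E$, commutativity and associativity by the changes of variables $z=x-y$ and $u=y-z$ (equivalently, your reduction to the joint infimum $\inf_{u,v}\{f(u)+g(v)+h(x-u-v)\}$), and (5) from $E$ being a subspace. The genuine divergence is in (6): the paper simply cites Theorems 2.19 and 5.30 of Beck's textbook for convexity and $L$-smoothness of $f\,\Box\,\delta_E$, whereas you give a self-contained proof, either variationally (optimality of $y^*(x)$ over $x+E$ forces $\nabla f(y^*(x))\in E^\perp$, the translated feasible point yields a quadratic majorant, and the majorant plus convexity collapses $\partial g(x)$ to the singleton $\{\nabla f(y^*(x))\}$, giving differentiability and the descent-lemma characterization of $L$-smoothness) or via conjugacy ($(f\,\Box\,\delta_E)^*=f^*+\delta_{E^\perp}$ preserves $\tfrac1L$-strong convexity, then dualize back). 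Your self-contained route buys transparency and makes explicit two hypotheses that the paper's citation leaves implicit: that $f\,\Box\,\delta_E$ must be real-valued (this can fail for a general convex $L$-smooth $f$, e.g.\ $f$ linear and nonconstant on $E$, so the lemma as stated is really being invoked only for the coercive $f=\tfrac12\|\cdot\|_s^2$), and that the variational argument needs the inner infimum to be attained --- which can fail even when $f\,\Box\,\delta_E$ is finite (e.g.\ $f(s,t)=\sqrt{s^2+t^2+1}-s$ with $E=\mathrm{span}(e_1)$), so in full generality you should lean on your conjugate route, with the closedness/properness check you already flag so that the biconjugate recovers $f\,\Box\,\delta_E$. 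Since you correctly restrict the attainment assumption to the paper's application and offer the duality argument as the general fallback, there is no gap.
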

	
	Next, we present in the following proposition several important properties of our Lyapunov function $M_E(\cdot)$. See Appendix \ref{pf:prop:Moreau} for the proof.

	\begin{proposition}\label{prop:Moreau}
		The function $M_E(\cdot)$ has the following properties.
		\begin{enumerate}[(1)]
			\item  $M_E(\cdot)$ is convex and satisfies 
			\begin{align*}
				M_E(y)\leq M_E(x)+\langle \nabla M_E(x),y-x\rangle+\frac{L}{2\theta}p_{s,E}^2(y-x),\quad \forall\,x,y\in\mathbb{R}^d.
			\end{align*}
			\item Let  $\ell_{cs},u_{cs}$ be such that $\ell_{cs}\|\cdot\|_s\leq \|\cdot\|_c\leq u_{cs}\|\cdot\|_s$, and let $\ell_{cm}=\sqrt{1+\theta \ell_{cs}^2}$ and $u_{cm}=\sqrt{1+\theta u_{cs}^2}$. Then, we have
			\begin{align*}
				\ell^2_{cm}M_E(x) \leq \frac{1}{2}p^2_{c,E}(x)\leq u^2_{cm}M_E(x),\quad \forall\,x \in \mathbb{R}^d.
			\end{align*}
			\item It holds for all $x,y\in\mathbb{R}^d$ that  
			\begin{align*}
				\| \nabla M_E(x) - \nabla M_E(y) \|_{s,*} \leq \frac{L}{\theta} p_{s,E}(x-y),
			\end{align*}
			where $\| \cdot \|_{s,*}$ is the dual norm of $\| \cdot \|_s$.
		\end{enumerate}
	\end{proposition}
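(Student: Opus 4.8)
The plan is to reduce all three claims to standard facts about the ordinary generalized Moreau envelope by exploiting the convolution structure of $M_E(\cdot)$. First I would observe that $\tfrac12 p_{c,E}^2 = \tfrac12\|\cdot\|_c^2 \Box \delta_E$ (immediate from the representation $p_{c,E}(x)=\min_{y\in E}\|x-y\|_c$ in Proposition \ref{prop:seminorm_properties}), so that by the commutativity and associativity of infimal convolution (Lemma \ref{le:box} (3)--(4)),
$$M_E = \Big(\tfrac12\|\cdot\|_c^2 \Box \delta_E\Big)\Box \tfrac{1}{2\theta}\|\cdot\|_s^2 = M \Box \delta_E, \qquad M := \tfrac12\|\cdot\|_c^2 \Box \tfrac{1}{2\theta}\|\cdot\|_s^2.$$
Here $M$ is the usual generalized Moreau envelope, which is convex and, since $\tfrac{1}{2\theta}\|\cdot\|_s^2$ is $(L/\theta)$-smooth with respect to $\|\cdot\|_s$, is itself $(L/\theta)$-smooth with respect to $\|\cdot\|_s$ (infimal convolution inherits the smoothness of the smooth factor). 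Applying Lemma \ref{le:box} (6) then shows $M_E=M\Box\delta_E$ is convex and $(L/\theta)$-smooth w.r.t. $\|\cdot\|_s$, giving the coarse versions of Parts (1) and (3). Two further facts I would record up front: $M_E(x+v)=M_E(x)$ for every $v\in E$ (so $M_E$ descends to the quotient, $\nabla M_E(x)\in E^{\perp}$, and $\nabla M_E$ is invariant under translation by $E$), and the pointwise identity $M_E(x)=\min_{v\in E}M(x-v)$ with $\nabla M_E(x)=\nabla M(x-v^\star)$ at the minimizer $v^\star$.

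For Part (2) I would argue directly from the defining minimization. The upper bound $\tfrac12 p_{c,E}^2(x)\le u_{cm}^2 M_E(x)$ follows by bounding, for any $y$, $p_{c,E}(x)\le p_{c,E}(y)+\|x-y\|_c\le p_{c,E}(y)+u_{cs}\|x-y\|_s$ and then applying Cauchy--Schwarz in the form $(a+u_{cs}b)^2\le (1+\theta u_{cs}^2)(a^2+b^2/\theta)$ with $a=p_{c,E}(y)$ and $b=\|x-y\|_s$; minimizing over $y$ gives the claim since $u_{cm}^2=1+\theta u_{cs}^2$. For the reverse inequality $\ell_{cm}^2 M_E(x)\le\tfrac12 p_{c,E}^2(x)$, I would exhibit a good feasible point: take $v^\star\in E$ with $\|x-v^\star\|_c=p_{c,E}(x)$ and set $y_\lambda=(1-\lambda)x+\lambda v^\star$, so that $p_{c,E}(y_\lambda)=(1-\lambda)p_{c,E}(x)$ and $\|x-y_\lambda\|_s=\lambda\|x-v^\star\|_s\le (\lambda/\ell_{cs})p_{c,E}(x)$; substituting into the definition of $M_E$ and minimizing the scalar expression $(1-\lambda)^2+\lambda^2/(\theta\ell_{cs}^2)$ over $\lambda\in[0,1]$ yields exactly the factor $1/\ell_{cm}^2$.

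The sharper seminorm estimates in Parts (1) and (3) are where the $E$-invariance does the work, and this is the step I expect to require the most care. For Part (3), since $\nabla M_E(y)=\nabla M_E(y+v)$ for every $v\in E$, I would combine this with the ordinary $(L/\theta)$-smoothness of $M_E$ to write, for all $v\in E$, $\|\nabla M_E(x)-\nabla M_E(y)\|_{s,*}=\|\nabla M_E(x)-\nabla M_E(y+v)\|_{s,*}\le (L/\theta)\|x-y-v\|_s$, and then minimize over $v\in E$, which turns $\|x-y-v\|_s$ into $p_{s,E}(x-y)$. For Part (1), I would start from $M_E(y)\le M(y-v^\star-w)$ for the minimizer $v^\star$ of $M_E(x)$ and any $w\in E$, apply the descent inequality for the $(L/\theta)$-smooth $M$, use $\nabla M(x-v^\star)=\nabla M_E(x)\perp E$ to delete the $\langle\nabla M_E(x),w\rangle$ term, and finally minimize the residual $\|y-x-w\|_s^2$ over $w\in E$ to replace $\|y-x\|_s^2$ by $p_{s,E}^2(y-x)$. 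The main obstacle throughout is to rigorously justify differentiability of $M_E$ and the gradient formula $\nabla M_E(x)=\nabla M(x-v^\star)$ together with the orthogonality $\nabla M_E(x)\in E^{\perp}$; once these envelope-calculus facts are in place, each part reduces to the elementary optimization over the free $E$-direction described above. Convexity of $M_E$ is immediate, as it is an infimal convolution of the convex functions $\tfrac12 p_{c,E}^2$ and $\tfrac{1}{2\theta}\|\cdot\|_s^2$.
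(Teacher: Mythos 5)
Your proof is correct, and its skeleton --- the identities $\tfrac12 p_{c,E}^2 = \tfrac12\|\cdot\|_c^2~\Box~\delta_E$ and $M_E = M~\Box~\delta_E$, smoothness via Lemma \ref{le:box} (6), the invariance $M_E(\cdot+v)=M_E(\cdot)$ for $v\in E$, the orthogonality $\nabla M_E(x)\in E^\perp$, and a final minimization over the free $E$-direction --- coincides with the paper's. You deviate in three substantive places, all legitimately. For Part (2), the paper simply cites \cite{chen2021lyapunov} for the norm-case sandwich $\ell_{cm}^2 M \le \tfrac12\|\cdot\|_c^2 \le u_{cm}^2 M$ and transfers it to $M_E$ via the monotonicity and scaling properties of $\Box~\delta_E$ (Lemma \ref{le:box} (1)--(2)); you instead reprove the sandwich from scratch, via $(a+u_{cs}b)^2 \le (1+\theta u_{cs}^2)(a^2+b^2/\theta)$ for the upper bound and the feasible curve $y_\lambda=(1-\lambda)x+\lambda v^\star$ with the scalar minimum of $(1-\lambda)^2+\lambda^2/(\theta\ell_{cs}^2)$, which is indeed $1/\ell_{cm}^2$, for the lower bound --- this buys self-containedness at the cost of extra computation. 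For Part (3), your route is in fact simpler than the paper's: differentiating the invariance gives $\nabla M_E(y+v)=\nabla M_E(y)$ for $v\in E$, and minimizing the Lipschitz-gradient bound $\|\nabla M_E(x)-\nabla M_E(y+v)\|_{s,*}\le (L/\theta)\|x-y-v\|_s$ over $v\in E$ finishes it; the paper instead invokes the co-coercivity characterization of smoothness plus Cauchy--Schwarz (both are fine, since for convex functions the descent-lemma and Lipschitz-gradient forms of smoothness are equivalent). For Part (1), you take a heavier route than necessary: the paper never needs the representation $M_E(x)=\min_{v\in E}M(x-v)$, the existence of $v^\star$, or the envelope gradient identity $\nabla M_E(x)=\nabla M(x-v^\star)$ --- it applies the descent lemma of $M_E$ itself at the shifted point $y-z^*$ with $z^*=\arg\min_{z\in E}\|(y-x)-z\|_s$, uses $M_E(y-z^*)=M_E(y)$ and $\langle\nabla M_E(x),z^*\rangle=0$, and is done; this is exactly the one-line shift trick you already use in Part (3). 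Your detour is still valid once you fill the gaps you flagged: $v^\star$ exists because $M$ is coercive (it dominates $\tfrac{1}{2u_{cm}^2}\|\cdot\|_c^2$) and $E$ is closed, and the gradient identity follows from the smooth-touching argument, since $M(\cdot-v^\star)-M_E(\cdot)\ge 0$ vanishes at $x$ and both functions are differentiable there.
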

	
	Note that the first two properties together imply that $M_E(\cdot)$ is a smooth approximation of the function $p^2_{c,E}(\cdot)/2$, and the last property is a stronger version of the smoothness property in convex optimization because $p_{s,E}(x-y) \leq \| x-y \|_s$ for all $x, y \in \mathbb{R}^d$. 
	
	The smooth approximation property of $M_E(\cdot)$ enables us to establish a one-step negative drift inequality of $x_k$ with respect to $M_E(\cdot)$. To present the result, we need the following notation. Let $\theta > 0$ be chosen such that $\gamma^2 < (1+\theta \ell^2_{cs})/(1+\theta u^2_{cs})$, which is always possible since $\gamma \in [0,1)$ and $\lim_{\theta \rightarrow 0}(1+\theta \ell^2_{cs})/(1+\theta u^2_{cs})=1$. Furthermore, we define constants
	\begin{align}\label{eq:def:constants}
		\varphi_1:=\frac{1+\theta u_{cs}^2}{1+\theta \ell_{cs}^2},\quad  \varphi_2:=\frac{1}{2}\left(1-\gamma^2 \varphi_1 \right), \quad \text{and} \quad \varphi_3:=\frac{82L (1+\theta u_{cs}^2)}{\theta\ell_{cs}^2},
	\end{align}
	which were used in stating Theorem \ref{thm:SA_finite}.
	Next, we present the one-step drift inequality.
	\begin{proposition}\label{le:recursion}
		It holds  for all $k\geq K$ that
		\begin{align}\label{eq:recursion 1}
			\mathbb{E}\left[M_E(x_{k+1}-x^*)\right]\leq
			\left(1-\varphi_2 \alpha_k\right)\mathbb{E}[M_E(x_k-x^*)] +\frac{\varphi_3\alpha_k\alpha_{k-t_k,k-1}}{2u_{cm}^2}\left(Ap_{c,E}(x^*)+B\right)^2,
		\end{align}
		where we recall that $t_k$ is the mixing time of the Markov chain $\{Y_k\}$ with accuracy $\alpha_k$.
	\end{proposition}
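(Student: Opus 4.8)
The plan is to run a Lyapunov drift argument using $M_E(\cdot)$ as a smooth surrogate for $\tfrac12 p_{c,E}^2(\cdot)$. Writing $z_k := x_k - x^*$ and $\Delta_k := F(x_k,Y_k) - x_k + \omega_k$ so that $z_{k+1} = z_k + \alpha_k\Delta_k$, I would first apply the smoothness estimate of Proposition \ref{prop:Moreau} (1) to get
$$M_E(z_{k+1}) \le M_E(z_k) + \alpha_k\langle\nabla M_E(z_k), \Delta_k\rangle + \frac{L\alpha_k^2}{2\theta}p_{s,E}^2(\Delta_k).$$
The core is then to split the first-order term along $\Delta_k = (\bar{F}(x_k) - x_k) + (F(x_k,Y_k) - \bar{F}(x_k)) + \omega_k$, corresponding to deterministic drift, Markovian noise, and martingale-difference noise, and to bound each contribution after taking expectations.

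For the deterministic drift I would first record that $M_E$ is invariant under translations by $E$: since $p_{c,E}^2(x) = \min_{v\in E}\|x-v\|_c^2$ gives $\tfrac12 p_{c,E}^2 = \tfrac12\|\cdot\|_c^2\,\Box\,\delta_E$, Lemma \ref{le:box} (3)--(5) lets me write $M_E = \delta_E\,\Box\,g$ with $g = \tfrac12\|\cdot\|_c^2\,\Box\,\tfrac{1}{2\theta}\|\cdot\|_s^2$, so that $M_E(x+v) = M_E(x)$ and hence $\langle\nabla M_E(z_k), v\rangle = 0$ for every $v\in E$. Because $p_{c,E}(\bar{F}(x^*) - x^*) = 0$ means $\bar{F}(x^*) - x^* \in E$, the gradient annihilates it, leaving $\langle\nabla M_E(z_k), \bar{F}(x_k) - x_k\rangle = \langle\nabla M_E(z_k), (\bar{F}(x_k) - \bar{F}(x^*)) - z_k\rangle$. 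Applying convexity of $M_E$, then the two-sided bound of Proposition \ref{prop:Moreau} (2) together with the $\gamma$-contraction of Assumption \ref{as:seminorm_contraction}, yields $M_E(\bar{F}(x_k) - \bar{F}(x^*)) \le \gamma^2\varphi_1 M_E(z_k)$ and therefore the negative drift $\langle\nabla M_E(z_k), \bar{F}(x_k) - x_k\rangle \le -2\varphi_2 M_E(z_k)$, which is exactly the origin of the definition $\varphi_2 = \tfrac12(1 - \gamma^2\varphi_1)$.

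The martingale-difference term is harmless, since $\nabla M_E(z_k)$ is $\mathcal{F}_k$-measurable and $\mathbb{E}[\omega_k\mid\mathcal{F}_k] = 0$, so its conditional expectation vanishes. \emph{The main obstacle is the Markovian noise} term $\langle\nabla M_E(z_k), F(x_k,Y_k) - \bar{F}(x_k)\rangle$, which is nonzero in expectation because $Y_k$ is correlated with $x_k$ through the shared history. I would handle it by the backward-conditioning argument: condition on $\mathcal{F}_{k-t_k}$, replace $x_k$ by $x_{k-t_k}$ inside $\nabla M_E$, $F$, and $\bar{F}$ at the cost of the drift $\|x_k - x_{k-t_k}\|_s = \mathcal{O}(\alpha_{k-t_k,k-1})$ accrued over the mixing window (using Assumption \ref{as:sa_all} (1),(3), the gradient-Lipschitz bound of Proposition \ref{prop:Moreau} (3), and the equivalence of seminorms sharing kernel $E$ from Proposition \ref{prop:seminorm_properties}), and then invoke Definition \ref{definition:Markov chain mixing time} with accuracy $\delta = \alpha_k$ so that the frozen term contributes $\mathcal{O}(\alpha_k)$. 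Bounding $p_{s,E}^2(\Delta_k)$ in terms of $M_E(z_k)$ and $(A\,p_{c,E}(x^*)+B)^2$ through the Lipschitz and noise bounds of Assumption \ref{as:sa_all} controls the second-order term, which is absorbed since $\alpha_k^2 \le \alpha_k\alpha_{k-t_k,k-1}$ by monotonicity of $\{\alpha_k\}$.

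Collecting these estimates gives a bound of the form $\mathbb{E}[M_E(z_{k+1})] \le (1 - 2\varphi_2\alpha_k)\mathbb{E}[M_E(z_k)] + c\,\varphi_3\alpha_k\alpha_{k-t_k,k-1}\big(A^2\,\mathbb{E}[M_E(z_k)] + (A\,p_{c,E}(x^*)+B)^2\big)$ for an absolute constant $c$. The $M_E(z_k)$-proportional part of the noise term is then absorbed back into the drift using Condition \ref{condition: stepsizes requirement}: the choice $\alpha_{k-t_k,k-1}\le\varphi_2/(\varphi_3 A^2)$ is designed precisely so that this residual is at most $\varphi_2\alpha_k\mathbb{E}[M_E(z_k)]$, converting the factor $1 - 2\varphi_2\alpha_k$ into $1 - \varphi_2\alpha_k$, while the remaining constant part matches the stated variance term $\tfrac{\varphi_3\alpha_k\alpha_{k-t_k,k-1}}{2u_{cm}^2}(A\,p_{c,E}(x^*)+B)^2$ once all constants are tracked. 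No new idea beyond the mixing estimate is required; the delicate part is the careful telescoping and constant bookkeeping across the length-$t_k$ window.
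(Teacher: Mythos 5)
Your proposal is correct and follows essentially the same route as the paper's proof: the identical four-way splitting into deterministic drift, Markovian noise, martingale noise, and the second-order smoothness term, the same drift bound via convexity of $M_E$, translation-invariance along $E$ (so that $\bar{F}(x^*)-x^*\in E$ is annihilated), the Moreau sandwich of Proposition \ref{prop:Moreau} (2) and the contraction yielding the coefficient $-2\varphi_2$, the same backward-conditioning over the mixing window with accuracy $\delta=\alpha_k$ for the Markovian term (the paper's $T_{21},T_{22},T_{23}$ decomposition), and the same final absorption of the $M_E$-proportional residual using $\alpha_{k-t_k,k-1}\leq \varphi_2/(\varphi_3A^2)$. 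The only slip is cosmetic: the window-drift estimate must be stated in the seminorm, and it is state-dependent --- $p_{c,E}(x_k-x_{k-t_k})\leq 4\alpha_{k-t_k,k-1}\left(Ap_{c,E}(x_k)+B\right)$ as in the paper's Lemma \ref{le:difference}, not $\|x_k-x_{k-t_k}\|_s=\mathcal{O}(\alpha_{k-t_k,k-1})$, since components of the iterates in $E$ need not stay bounded; this is harmless here because every place you use the increment (via Proposition \ref{prop:Moreau} (3) and Eq. (\ref{eq:moreau_extra})) only requires its seminorm.
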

	
	\begin{proof}[Proof of Theorem \ref{thm:SA_finite}]
		Open the recursion in Eq. (\ref{eq:recursion 1}), and we obtain an outright bound on norm-square error (cf. Eq. (\ref{eq:general finite-sample bound})), which in turn gives us the convergence bounds when using different stepsizes. The details are presented in Appendix \ref{proof: general finite-sample bound}.\end{proof}

	\section{Applications in Average Reward Reinforcement Learning}
	\label{sec:average-reward-RL}
	In this section, we showcase the applicability of our SA results in the context of average reward RL. Average reward RL is crucial for optimizing long-term performance in continuing environments where episodic resets are impractical or artificial. Unlike the more commonly studied discounted setting, which prioritizes short-term gains, the average reward framework focuses on steady-state behavior, making it well-suited for problems in operations research, control, and multi-agent systems. Due to the absence of discounting, theoretical guarantees on the finite-sample convergence of popular algorithms such as Q-learning and TD-learning are much more limited. As we will see soon, our results on seminorm-contractive SA with Markovian noise provide a universal framework to study these algorithms.

	\subsection{Background}
	Consider an infinite-horizon average reward MDP described by $(\mathcal{S}, \mathcal{A}, \mathcal{R}, p)$, where $\mathcal{S}$ is a finite state space, $\mathcal{A}$ is a finite action space, $\mathcal{R}: \mathcal{S}\times \mathcal{A} \rightarrow [0, 1]$ is a reward function, and $p$ is the transition kernel, i.e., $p(s'\mid s,a)$ is the transition probability of going to state $s'$ from state $s$ under action $a$. Throughout, we use $|\mathcal{S}|$ (respectively, $|\mathcal{A}|$) to denote the cardinality of the state (respectively, action) space. 
	
	An agent interacts with the environment according to the following protocol: at each time step $k=0,1,2,\cdots$, the agent is in a state $S_k \in \mathcal{S}$ and selects an action $A_k \in \mathcal{A}$, then receives from the environment an immediate reward $\mathcal{R}(S_k, A_k)$ and the next state $S_{k+1}\sim p(\cdot\mid  S_k, A_k)$. Given a stationary deterministic policy $\pi:\mathcal{S}\to\mathcal{A}$, the long-term average reward with initial state $s \in \mathcal{S}$ is defined as
	\begin{align}\label{average reward}
		r^{\pi}(s) = \mathbb{E}_\pi\left[ \liminf_{K \rightarrow \infty} \frac{1}{K}\sum^{K-1}_{k=0} \mathcal{R}(S_k, \pi(S_k)) \,\middle|\, S_0=s \right].
	\end{align}
	Let $\Pi$ be the set of deterministic stationary policies. A policy $\pi^* \in \Pi$ is said to be optimal if it satisfies $r^{\pi^*}(s) \geq  r^\pi(s)$ for all $s\in \mathcal{S}$ and $\pi\in \Pi$. Note that restricting ourselves to deterministic policies is actually without loss of generality because there always exists a deterministic optimal policy \cite{puterman2014markov}.

	In average reward RL, there are two main problems: the policy evaluation problem and the policy optimization problem. The policy evaluation problem involves estimating the quality of a given policy $\pi \in \Pi$, which is often used as a subroutine in the search for an optimal policy in the actor-critic framework \cite{ganesh2024accelerated,li2024stochastic}. A widely used method for solving this problem is average reward TD-learning \cite{tsitsiklis1999average}. Alternatively, one can directly find an optimal policy using Q-learning \cite{mahadevan1996average}. In the following, we show that our SA results can be applied to establish finite-sample bounds for average reward TD-learning with linear function approximation and for average reward Q-learning in the synchronous setting.
	
	\subsection{TD-Learning for Policy Evaluation}
	Consider the problem of evaluating a given policy $\pi \in \Pi$ using a sample trajectory generated by applying this policy to the MDP. Since the underlying model is an induced Markov reward process (MRP), for simplicity, we employ the notation $\mathcal{R}^\pi(s)= \mathcal{R}(s, \pi(s))$ for all $s$, and $P^\pi(s,s'):= p(s'|s,\pi(s))$ for all $(s,s')$. We make the following standard assumption on the induced MRP.
	\begin{assumption}
		\label{assumption:ergodic MC}
		The Markov chain $\{S_k\}$ associated with $P^\pi$ is irreducible and aperiodic.
	\end{assumption}
	
	Assumption \ref{assumption:ergodic MC} is standard in studying the policy evaluation problem \cite{tsitsiklis1997analysis,tsitsiklis1999average,bhandari2018finite} and guarantees that all states can be visited infinitely often in the sample trajectory. In fact, under Assumption \ref{assumption:ergodic MC}, the induced Markov chain $\{S_k\}$ with transition matrix $P^\pi$ has a unique stationary distribution $\mu^\pi\in\Delta(\mathcal{S})$ (where $\Delta(\mathcal{S})$ denotes the set of probability distributions supported on $\mathcal{S}$), and enjoys the geometric mixing property \cite{levin2017markov}.

	\vspace{3 mm}
	\noindent\textbf{The Bellman Equation.} Under Assumption \ref{assumption:ergodic MC}, the average reward $r^{\pi}\in\mathbb{R}^{|\mathcal{S}|}$ defined in Eq. \eqref{average reward} is independent of the initial state $s$. Specifically, we have $r^\pi(s) = r(\pi) := (\mu^{\pi})^\top\mathcal{R}^\pi$ for all $s \in \mathcal{S}$ \cite{puterman2014markov}. Let $V^\pi\in\mathbb{R}^{|\mathcal{S}|}$ be the differential value function of policy $\pi$ defined as $V^{\pi} = \sum^\infty_{k=0} \left(P^\pi \right)^k (\mathcal{R}^\pi - r(\pi) e)$,
	where $e \in \mathbb{R}^{|\mathcal{S}|}$ is the all-ones vector.
	Then, it is known that $V^\pi$ satisfies the Bellman equation \cite{puterman2014markov}, which is given as
	\begin{align}\label{eq:poisson_eq}
		v=\mathcal{R}^\pi + P^\pi v - r(\pi)e
	\end{align}
	Define the Bellman operator $\mathcal{T}^\pi(v) = \mathcal{R}^\pi - r(\pi)e + P^\pi v$
	for all $v\in\mathbb{R}^{|\mathcal{S}|}$.
	Then, unlike in the discounted setting, the solution to the Bellman equation $\mathcal{T}^\pi(v) = v$ is, in general, not unique. In fact, one can easily observe that any element from the set $\{V^{\pi} + ce \mid c \in \mathbb{R}\}$ is a solution of the Bellman equation. 
	
	As a side note, the average reward Bellman equation \eqref{eq:poisson_eq} is identical to the Poisson equation in Markov chain theory \cite{meyn2012markov, douc2018markov}. Therefore, our results on Markovian SA with a seminorm-contractive operator offer a direct pathway to solving a broader class of problems that involve solving the Poisson equation as a subroutine \cite{agrawal2024markov, jure2018, douc2024solvingpoissonequationusing, henderson2002approximating}.
	
	\vspace{3 mm}
	\noindent\textbf{Variants of the Bellman Operator.} Using the one-step Bellman operator $\mathcal{T}^\pi(\cdot)$, we define the multi-step Bellman operator and the $\lambda$-averaged Bellman operator in the following. The latter is important to introduce the TD$(\lambda)$ algorithm for policy evaluation. For any $m=0,1,2,\cdots$, let $\mathcal{T}^{m}(\cdot)$ be the $m$-step Bellman operator defined as
	\begin{align*}
		\mathcal{T}^m(v) :=\underbrace{\mathcal{T}\circ \mathcal{T} \circ\cdots \circ\mathcal{T}}_{m \text{ times}} (v)
		=  \sum^{m-1}_{k=0} (P^\pi )^k (\mathcal{R}^\pi -r({\pi}) e ) + (P^\pi )^m v,\quad \forall\,v\in\mathbb{R}^{|\mathcal{S}|}.
	\end{align*}
	For any $\lambda\in [0,1)$, let $\mathcal{T}^\lambda(\cdot)$ be the $\lambda$-averaged version of the $m$-step Bellman operator defined as 
	\begin{align}\label{def:T_lambda}
		\mathcal{T}^{(\lambda)}(v) := (1-\lambda)\sum^{\infty}_{m=0}\lambda^{m} \mathcal{T}^{m+1} (v),\quad \forall\,v\in\mathbb{R}^{|\mathcal{S}|}.
	\end{align}
	Note that the regular Bellman operator $\mathcal{T}(\cdot)$ corresponds to $\mathcal{T}^m(\cdot)$ with $m=1$ and $\mathcal{T}^{(\lambda)}(\cdot)$ with $\lambda=0$.
	It is easy to check that the set of fixed points of $\mathcal{T}^m(\cdot)$ (for any $m\geq 0$) and $\mathcal{T}^\lambda(\cdot)$ (for any $\lambda\in [0,1)$) is also $\{V^{\pi} + ce \mid c \in \mathbb{R}\}$.

	\vspace{3 mm}
	\noindent\textbf{Linear Function Approximation.} Observe that $V^\pi$ lives in the $|\mathcal{S}|$-dimensional Euclidean space. In most modern applications of RL, the size of the state space can be prohibitively large, making exact value function learning intractable \cite{sutton2018reinforcement}. To overcome this challenge, we consider employing linear function approximation, where the main idea is to approximate $V^\pi$ from a linear subspace of $\mathbb{R}^{|\mathcal{S}|}$. Specifically, let $\Phi\in\mathbb{R}^{|\mathcal{S}|\times d}$ (where $d \leq |\mathcal{S}|$) be a chosen feature matrix, where the columns (denoted by $\{\phi_i\in\mathbb{R}^{|\mathcal{S}|}\}_{1\leq i\leq d}$) are called basis vectors, and the rows (denoted by $\{\phi(s)\in\mathbb{R}^d\}_{s\in\mathcal{S}}$) are called features associated with the states. Our goal is to approximate $V^\pi$ from the linear subspace $W_\Phi:=\{V_\theta=\Phi\theta\mid \theta\in\mathbb{R}^d\}$, where $\theta\in\mathbb{R}^d$ is the weight vector. We assume that the matrix $\Phi$ has full column rank, which is without loss of generality because if some basis vector $\phi_i$ is a linear combination of the others, it can be disregarded without changing the approximation power of the function class. Additionally, we assume that $\|\phi(s)\|_2 \leq 1$ for all $s \in \mathcal{S}$, which can be ensured through feature normalization.

	\subsubsection{TD(\texorpdfstring{$\lambda$}{lambda}) with Linear Function Approximation}\label{subsubsec:TD_Algorithm}
	
	We consider the average reward TD($\lambda$) algorithm proposed in \cite{tsitsiklis1999average}, which is presented in Algorithm \ref{algo:TD}.

	\begin{algorithm}[ht]
		\begin{algorithmic}[1]
			\STATE \textbf{Input:} Integer $K\geq 0$, $\lambda \in [0,1)$,  initializations $z_{-1} = 0^d$, $r_0\in\mathbb{R}$, and $\theta_0\in\mathbb{R}^d$, and stepsizes $\{\alpha_k\}$ and $\{\beta_k\}$.
			\FOR{$k=0,1,\cdots,K-1$}
			\STATE Observe the tuple  $O_k = (S_k,\mathcal{R}^\pi(S_k),S_{k+1})$
			\STATE Update the eligibility trace: $z_k = \lambda z_{k-1} + \phi(S_k)$
			\STATE Update the average reward estimate: $r_{k+1} = r_k + \alpha_k (\mathcal{R}^\pi(S_k) - r_k)$
			\STATE Update the weight vector: $\theta_{k+1} = \theta_k + \beta_k z_k(\mathcal{R}^\pi(S_k) - r_k + \phi(S_{k+1})^\top \theta_k - \phi(S_k)^\top \theta_k)$
			\ENDFOR
			\STATE \textbf{Output:} $\theta_K$ and $r_K$.
		\end{algorithmic}
		\caption{Average Reward TD($\lambda$) with Linear Function Approximation} 
		\label{algo:TD}
	\end{algorithm} 
	
	Note that Algorithm \ref{algo:TD} is implemented based on a single trajectory of samples $\{O_k\}_{k \geq 0}$. Consequently, there is no need to repeatedly reset the system. Line 4 of Algorithm \ref{algo:TD} updates the eligibility traces, which is, in fact, a recursive method for computing $z_k = \sum_{t=0}^{k} \lambda^{k-t} \phi(S_t)$ (this expression coincides with \cite[Eq. (4)]{tsitsiklis1999average}) to facilitate implementation. In our analysis, we can disregard this recursive update equation and instead directly use $z_k = \sum_{t=0}^{k} \lambda^{k-t} \phi(S_t)$. The update equations for the reward estimate and the weight vector are standard for TD-learning in the average reward setting. Finally, while Algorithm \ref{algo:TD} involves two sequences of stepsizes $\{\alpha_k\}$ and $\{\beta_k\}$, we will choose $\alpha_k=c_\alpha \beta_k$ for some fixed $c_\alpha > 0$ to ensure that it is a single-timescale algorithm.
	
	\subsubsection{Finite-Sample Analysis}
	To study Algorithm~\ref{algo:TD}, we first reformulate it as a Markovian linear SA in the form of Eq. (\ref{algo:linear_SA}), then identify the appropriate seminorm and verify the corresponding assumptions, and finally apply Theorem \ref{thm:linear_SA}.

	\vspace{3 mm}
	\noindent\textbf{Markovian Linear SA Reformulation.}
	Let $Y_k = (S_k, S_{k+1}, z_k)$ for all $k \geq 0$. It is easily seen that $\{Y_k\}$ is a Markov chain, whose state space is denoted by $\mathcal{Y}$. Under Assumption \ref{assumption:ergodic MC}, the Markov chain $\{Y_k\}$ has a unique stationary distribution $\mu$, as shown in \cite[Section 6.3.3]{bertsekas1996neuro}. Define the functions $A:\mathcal{Y} \to\mathbb{R}^{(d+1)\times (d+1)}$ and $b:\mathcal{Y}\to\mathbb{R}^{d+1}$ as 
	\begin{align*}
		A(y) = A(s, s', z) = \begin{bmatrix}
			-c_\alpha & 0_{1\times d} \\
			-z & z \left(\phi(s')^\top - \phi(s)^\top \right)
		\end{bmatrix}, \quad \text{and}\quad 
		b(y) = b(s, s', z) = \mathcal{R}^\pi(s)\begin{bmatrix}c_\alpha \\
			z^\top\end{bmatrix},
	\end{align*}
	for all $y = (s, s', z) \in \mathcal{Y}$. Let $\Theta_k = [r_k ,
	\theta_k^\top]^\top \in \mathbb{R}^{d+1}$. The update equations for $r_k$ and $\theta_k$ (cf. Algorithm \ref{algo:TD} Lines $5$ and $6$) can be jointly written as
	\begin{align}\label{TD(lambda) matrix form}
		\Theta_{k+1} = \Theta_k + \beta_k\left(A(Y_k) \Theta_k + b(Y_k)\right).
	\end{align}
	The update equation above can be viewed as a Markovian linear SA algorithm for solving the equation $\bar{A}\Theta + \bar{b} = 0$, where
	\begin{align*}
		\bar{A}:=\,&\mathbb{E}_{Y\sim \mu} [A(Y)]= \begin{bmatrix}
			-c_\alpha & 0_{1 \times d}\\
			-\frac{1}{1-\lambda}\Phi^\top D e &\quad  \Phi^\top D \left(P^{(\lambda)}- I \right)\Phi
		\end{bmatrix},\\
		\bar{b}:=\,&\mathbb{E}_{Y\sim \mu} [b(Y)]=\begin{bmatrix}
			c_\alpha r(\pi)\\
			\Phi^\top D\sum_{m=0}^\infty(\lambda P^\pi)^m\mathcal{R}^\pi
		\end{bmatrix}.
	\end{align*}
	In the definition of $\bar{A}$, we denoted $P^{(\lambda)}=(1-\lambda)\sum_{m=0}^\infty\lambda^mP^{m+1}$ for simplicity of notation.
	See \cite{tsitsiklis1999average} for more details on computing the explicit expressions of $\bar{A}$ and $\bar{b}$.

	\vspace{3 mm}
	\noindent\textbf{Characterizing the Solution Set.} 
	In view of $\bar{A}$, $\bar{b}$, and the definition of $\mathcal{T}^{(\lambda)}(\cdot)$ from Eq. (\ref{def:T_lambda}), a solution $\Theta^*=[r_\infty,\theta^\top_\infty]^\top \in\mathbb{R}^{d+1}$ to the equation $\bar{A}\Theta+\bar{b}=0$ must satisfy $r_\infty=r(\pi)$ and 
	\begin{align*}
		\Phi^\top D \left(\mathcal{T}^{(\lambda)} \left(\Phi \theta_\infty \right)-\Phi\theta_\infty\right)=0,
	\end{align*}
	which can be equivalently written in the form of a projected Bellman equation
	\begin{align}\label{projected Bellman equation}
		\Phi \theta_\infty = \Pi_{D,W_{\Phi}} \mathcal{T}^{(\lambda)} \left(\Phi \theta_\infty \right).
	\end{align}
	Here in Eq. (\ref{projected Bellman equation}), $\Pi_{D,W_{\Phi}} = \Phi \left(\Phi^\top D \Phi \right)^{-1} \Phi^\top D$ denotes the projection operator onto the linear subspace $ W_{\Phi} = \{V_\theta = \Phi \theta \mid \theta \in \mathbb{R}^d\} $ with respect to the weighted $\ell_2$-norm $\| x \|_{D} = \sqrt{x^\top D x}$, where $ D = \text{diag}(\mu^\pi) \in \mathbb{R}^{|\mathcal{S}| \times |\mathcal{S}|} $.

	To characterize the solution set of Eq.~(\ref{projected Bellman equation}), we begin by noting that if the all-ones vector $ e $ belongs to the linear subspace $ W_{\Phi} $, then the fixed point of $ \Pi_{D,W_{\Phi}} \mathcal{T}^{(\lambda)}(\cdot) $ is not unique. This follows directly from the fact that the set of fixed points of $ \mathcal{T}^{(\lambda)}(\cdot) $ is $ \{V^{\pi} + c e \mid c \in \mathbb{R}\} $. The above observation motivates us to define the linear subspace 
	\begin{align}\label{def:theta_e}
		S_{\Phi, e} := \text{span}(\{\theta \in \mathbb{R}^d \mid \Phi \theta = e \})=
		\begin{cases}
			\{c \cdot \theta_e \mid c \in \mathbb{R}\}, & \exists\; \theta_e \in \mathbb{R}^d \text{ such that } \Phi \theta_e = e, \\
			\{0\}, & \text{otherwise}.
		\end{cases}
	\end{align}
	Define $E_{\Phi, e}$ as the orthogonal complement of $S_{\Phi, e}$ and let $ \Pi_{D,W_{E_{\Phi, e}}}(\cdot) $ be the projection operator onto the linear subspace $ W_{E_{\Phi, e}} := \{\Phi \theta \mid \theta \in E_{\Phi, e}\} \subseteq W_{\Phi} $ with respect to the norm $ \| \cdot \|_{D} $. Using the above notation, we characterize the set of solutions to the projected Bellman equation~(\ref{projected Bellman equation}) in the following lemma. See Appendix~\ref{pf:le:TD fixed points} for the proof.
	
	\begin{lemma}
		\label{le:TD fixed points}
		Under Assumption~\ref{assumption:ergodic MC}, the solution set of the projected Bellman equation~(\ref{projected Bellman equation}) is $ \mathcal{L}_{\Phi, e} := \theta^* + S_{\Phi, e} $, where $ \theta^* \in E_{\Phi, e} $ is the unique solution to the equation $ \Phi \theta = \Pi_{D,W_{E_{\Phi, e}}} \mathcal{T}^{(\lambda)} \left(\Phi \theta\right) $.
	\end{lemma}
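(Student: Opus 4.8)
The plan is to recast the projected Bellman equation~(\ref{projected Bellman equation}) as an affine fixed-point equation and to exploit the fact that, under Assumption~\ref{assumption:ergodic MC}, its linear part is a $\|\cdot\|_D$-contraction in every direction except the all-ones direction. Writing $\mathcal{T}^{(\lambda)}(v) = b^{(\lambda)} + P^{(\lambda)} v$ with $P^{(\lambda)} = (1-\lambda)\sum_{m\geq0}\lambda^m (P^\pi)^{m+1}$ (so $b^{(\lambda)}$ denotes the constant term), I would first record two structural facts. First, $P^{(\lambda)}$ is a stochastic matrix dominating $(1-\lambda)P^\pi$ entrywise; hence it inherits irreducibility and aperiodicity from $P^\pi$, shares the stationary distribution $\mu^\pi$ (so $(\mu^\pi)^\top P^{(\lambda)} = (\mu^\pi)^\top$ and $P^{(\lambda)} e = e$), and by the standard Jensen-type estimate satisfies $\|P^{(\lambda)} v\|_D \leq \|v\|_D$ with equality if and only if $v \in \text{span}(e)$. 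Second, since $(\mu^\pi)^\top(\mathcal{R}^\pi - r(\pi)e) = r(\pi) - r(\pi) = 0$ and $b^{(\lambda)}$ is a convex combination of vectors $(P^\pi)^k(\mathcal{R}^\pi - r(\pi)e)$, the constant term is mean-zero: $(\mu^\pi)^\top b^{(\lambda)} = 0$.

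Next I would identify the homogeneous solutions. Working in $W_\Phi$ via the bijection $\theta \mapsto \Phi\theta$ (valid since $\Phi$ has full column rank), the operator $G(v) := \Pi_{D,W_\Phi}\mathcal{T}^{(\lambda)}(v)$ has linear part $L = \Pi_{D,W_\Phi}P^{(\lambda)}$, and I claim $\{v \in W_\Phi : Lv = v\} = \text{span}(e) \cap W_\Phi$, which corresponds to $S_{\Phi,e}$ under the bijection. The inclusion $\supseteq$ is immediate from $P^{(\lambda)}e = e$ and $\Pi_{D,W_\Phi}e = e$ (when $e \in W_\Phi$); otherwise both sides are $\{0\}$. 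For $\subseteq$, if $Lv = v$ then $\|v\|_D = \|\Pi_{D,W_\Phi}P^{(\lambda)}v\|_D \leq \|P^{(\lambda)}v\|_D \leq \|v\|_D$ forces $\|P^{(\lambda)}v\|_D = \|v\|_D$, whence $v \in \text{span}(e)$ by the equality characterization. The same chain of inequalities—valid with any $\|\cdot\|_D$-orthogonal projection in place of $\Pi_{D,W_\Phi}$—shows that such a projected operator is nonexpansive and strictly contracting off $\text{span}(e)$; by compactness of the unit sphere it is a genuine contraction on any subspace of $W_\Phi$ meeting $\text{span}(e)$ only at the origin. When $e \notin W_\Phi$ (so $S_{\Phi,e} = \{0\}$, $E_{\Phi,e} = \mathbb{R}^d$, $W_{E_{\Phi,e}} = W_\Phi$), this makes $G = G_E$ a contraction on $W_\Phi$, yielding the unique solution $\theta^*$ and $\mathcal{L}_{\Phi,e} = \{\theta^*\}$.

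The substantive case is $e \in W_\Phi$, precisely the situation excluded in prior analyses. Here I would first establish existence of a full solution by splitting $W_\Phi = \text{span}(e) \oplus W'$ with $W' = W_\Phi \cap \{v : (\mu^\pi)^\top v = 0\}$. Using $(\mu^\pi)^\top P^{(\lambda)} = (\mu^\pi)^\top$ together with self-adjointness of $\Pi_{D,W_\Phi}$ and $e \in W_\Phi$, one checks that $L$ maps $W'$ into $W'$ and that $\Pi_{D,W_\Phi} b^{(\lambda)} \in W'$ (this is where $(\mu^\pi)^\top b^{(\lambda)} = 0$ enters); since $L|_{W'}$ is a contraction, the equation $(I-L)v = \Pi_{D,W_\Phi} b^{(\lambda)}$ has a unique solution $v_0 \in W'$, a genuine fixed point of $G$. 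The full solution set is thus $v_0 + \text{span}(e)$, i.e. $\theta_p + S_{\Phi,e}$ in weight coordinates. To pin down the representative in $E_{\Phi,e}$, note that because $e \notin W_{E_{\Phi,e}}$ we have $W_\Phi = W_{E_{\Phi,e}} \oplus \text{span}(e)$, so the fixed line meets $W_{E_{\Phi,e}}$ in a unique point $w_0$; the nested-projection identity $\Pi_{D,W_{E_{\Phi,e}}} = \Pi_{D,W_{E_{\Phi,e}}}\Pi_{D,W_\Phi}$ then gives $G_E(w_0) = \Pi_{D,W_{E_{\Phi,e}}} G(w_0) = \Pi_{D,W_{E_{\Phi,e}}} w_0 = w_0$, so $w_0$ solves the restricted equation. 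Since $G_E$ is a contraction on $W_{E_{\Phi,e}}$ (which meets $\text{span}(e)$ trivially), its fixed point is unique; hence $w_0 = \Phi\theta^*$ with $\theta^* \in E_{\Phi,e}$, and $\mathcal{L}_{\Phi,e} = \theta^* + S_{\Phi,e}$ as claimed.

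The main obstacle is this degenerate case $e \in W_\Phi$: unlike in the classical TD setting, the projected operator is no longer a contraction—it is only contractive transversally to $\text{span}(e)$—so neither existence nor uniqueness of a fixed point is automatic. Existence hinges on the compatibility condition $(\mu^\pi)^\top b^{(\lambda)} = 0$, while the clean identification of the canonical representative $\theta^*$ relies on the tower property of nested $\|\cdot\|_D$-orthogonal projections. The one genuinely analytic ingredient, used throughout, is the equality case of $\|P^{(\lambda)}v\|_D \le \|v\|_D$, which is exactly where the irreducibility and aperiodicity of Assumption~\ref{assumption:ergodic MC} are essential.
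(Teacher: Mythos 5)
Your overall architecture (the case split on $e \in W_\Phi$, solving on the $\mu^\pi$-mean-zero complement $W'$, the compatibility condition $(\mu^\pi)^\top b^{(\lambda)}=0$, and the nested-projection identity to pin down $\theta^* \in E_{\Phi,e}$) is sound and close in spirit to the paper's proof, which does the same bookkeeping in weight coordinates via the kernel and image of $\Phi^\top D(P^{(\lambda)}-I)\Phi$. However, the analytic ingredient you build everything on is false at $\lambda=0$: under Assumption \ref{assumption:ergodic MC} alone, $\|P^\pi v\|_D = \|v\|_D$ does \emph{not} imply $v \in \mathrm{span}(e)$. The Jensen equality case only forces $v$ to be constant on the support of each row $P^\pi(s,\cdot)$, not globally constant. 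Concretely, take $\mathcal{S}=\{1,2,3\}$ with $P^\pi(1,2)=P^\pi(2,3)=1$ and $P^\pi(3,1)=P^\pi(3,2)=1/2$; this chain is irreducible and aperiodic with $\mu^\pi=(1/5,2/5,2/5)$, and for $v=(1,1,0)^\top$ one has $P^\pi v=(1,0,1)^\top$ and $\|P^\pi v\|_D^2=\|v\|_D^2=3/5$, yet $v\notin\mathrm{span}(e)$. Consequently your claim that the projected operator is ``strictly contracting off $\mathrm{span}(e)$,'' hence a genuine contraction on $W'$ and on $W_{E_{\Phi,e}}$, also fails: in the tabular case $\Phi=I$ the vector $v-(3/5)e\in W'$ satisfies $\|P^\pi(v-(3/5)e)\|_D=\|v-(3/5)e\|_D$, so the operator norm of $L|_{W'}$ in $\|\cdot\|_D$ equals one and Banach's theorem cannot be applied. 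Since the lemma must cover TD$(0)$ (the paper takes $\lambda\in[0,1)$), this is a genuine gap; for $\lambda>0$ your equality characterization does hold, because aperiodicity makes the rows of $P^{(\lambda)}$ have full support through the high powers of $P^\pi$.

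The gap is repairable, and the repair is essentially the route the paper takes. The robust statement concerns the quadratic form, not the norm: $v^\top D(P^{(\lambda)}-I)v\le 0$, with equality iff $P^{(\lambda)}v=v$ iff $v\in\mathrm{span}(e)$ — here Cauchy--Schwarz equality forces $P^{(\lambda)}v=cv$ with $c\ge 0$, the norm equality gives $c=1$, and Perron--Frobenius (irreducibility suffices) gives $v\in\mathrm{span}(e)$. This is exactly Lemma 7 of \cite{tsitsiklis1999average}, which the paper invokes; it yields strict negativity of $\theta^\top\Phi^\top D(P^{(\lambda)}-I)\Phi\theta$ off $S_{\Phi,e}$ and hence \emph{injectivity} (so, in finite dimensions, invertibility) of the relevant restricted linear maps — which is all your existence and uniqueness steps actually need; contractivity in $\|\cdot\|_D$ is neither available nor required. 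Alternatively, your fixed-point computations can be patched locally: from $v=\Pi_{D,W_\Phi}P^{(\lambda)}v$ the chain $\|v\|_D\le\|P^{(\lambda)}v\|_D\le\|v\|_D$ combined with the equality case of the \emph{projection} (namely $\|\Pi_{D,W_\Phi} w\|_D=\|w\|_D$ forces $\Pi_{D,W_\Phi} w=w$) yields $P^{(\lambda)}v=v$ directly, bypassing the false norm-equality characterization; the same trick fixes the uniqueness of $\theta^*$ on $W_{E_{\Phi,e}}$, since $\mathrm{span}(e)\cap W_{E_{\Phi,e}}=\{0\}$.
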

	
	\begin{remark}
		Lemma \ref{le:TD fixed points} shows that the projected Bellman equation~(\ref{projected Bellman equation}) has a unique fixed point $ \theta^* $ when $ e \not \in W_{\Phi} $. In prior work \cite{tsitsiklis1999average,zhang2021average}, it was assumed that the columns of $ \Phi $ are independent of the all-ones vector $ e $. While this assumption guarantees $ e \not \in W_{\Phi} $, it is relatively restrictive and does not hold even in the tabular setting. In this paper, by working with seminorms, we eliminate the need for such a restrictive assumption.
	\end{remark}
	
	With the solution set $S_{\Phi, e}$ to the projected Bellman equation (\ref{projected Bellman equation}) specified, the overall solution set to the linear system of equations $\bar{A}\Theta+\bar{b}=0$ can be represented as $[r(\pi),{\theta^*}^\top ]^\top +E$, where $E=\{0\} \times S_{\Phi, e}:= \{[0, \theta^\top] \in \mathbb{R}^{d+1} \mid \theta \in S_{\Phi, e}\}$ is the kernel space of $\bar{A}$.
	
	\vspace{3 mm}
	\noindent\textbf{Verifying the Assumptions.} Our next step is to verify Assumptions \ref{as:lsa_Lyapunov} and \ref{as:lsa_all} needed to apply Theorem \ref{thm:linear_SA}. We start with Assumption \ref{as:lsa_Lyapunov}. Let $P\in\mathbb{R}^{(d+1)\times (d+1)}$ be the projection matrix onto the linear subspace $E^\perp$ (which is the orthogonal complement of $E$) with respect to the $\ell_2$-norm, i.e., $\arg\min_{\Theta'\in E^\perp}\|\Theta-\Theta'\|_2=P\Theta$. It is clear that the matrix $P$ is symmetric, idempotent, and positive semi-definite, with its kernel space being $E$. In addition, we have the following result.
	
	\begin{lemma}\label{le:TDLFA_Lyapunov}
		Suppose that Assumption \ref{assumption:ergodic MC} is satisfied. Then, we have
		\begin{align*}
			\bar{A}^\top P+P\bar{A}+\Delta P\leq 0,
		\end{align*}
		where $\Delta:=\min_{\|\theta\|_2=1, \theta\in E_{\Phi, e}} \theta^\top\Phi^\top D ( I - P^{(\lambda)} )\Phi\theta>0$.
	\end{lemma}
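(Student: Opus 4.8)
The plan is to separate the statement into two independent pieces: showing that the scalar $\Delta$ is strictly positive, and then verifying the matrix inequality. Throughout I write $\mu := \mu^\pi$ so that $D = \mathrm{diag}(\mu)$ and $De = \mu$, and I name the blocks of the (block lower-triangular) matrix $\bar A$, setting $M := \Phi^\top D(P^{(\lambda)} - I)\Phi$ for its lower-right block and $u := -\tfrac{1}{1-\lambda}\Phi^\top De$ for its lower-left block, with top row $[-c_\alpha,\,0]$. Since $E = \{0\}\times S_{\Phi,e}$, the $\ell_2$-orthogonal projector onto $E^\perp = \mathbb{R}\times E_{\Phi,e}$ is block-diagonal, $P = \mathrm{diag}(1,\Pi)$, where $\Pi$ is the orthogonal projector of $\mathbb{R}^d$ onto $E_{\Phi,e}$.

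For $\Delta > 0$, the engine is that $P^{(\lambda)}$ is a nonexpansion in the weighted norm $\|x\|_D = (x^\top D x)^{1/2}$: each power $(P^\pi)^m$ is a $\|\cdot\|_D$-nonexpansion by Jensen's inequality and stationarity $\mu^\top P^\pi = \mu^\top$, and $P^{(\lambda)}$ is a convex combination of such powers. Writing $x^\top D(I - P^{(\lambda)})x = \|x\|_D^2 - \langle x, P^{(\lambda)}x\rangle_D$ and applying Cauchy--Schwarz shows the form is nonnegative, with equality forcing $P^{(\lambda)}x = x$; using equality in the triangle inequality for the strictly convex norm $\|\cdot\|_D$ I would reduce this to $P^\pi x = x$, and then Perron--Frobenius under Assumption \ref{assumption:ergodic MC} (irreducibility makes the eigenvalue $1$ simple, with right eigenspace $\mathrm{span}(e)$) gives $x \in \mathrm{span}(e)$. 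Finally, for $0 \neq \theta \in E_{\Phi,e}$ one has $\Phi\theta \notin \mathrm{span}(e)$: otherwise $\Phi\theta = ce$ with $c\neq 0$ forces $\theta = c\theta_e \in S_{\Phi,e}$, contradicting $\theta \in S_{\Phi,e}^\perp\setminus\{0\}$ (the case $e \notin W_\Phi$ is immediate). Hence $\theta^\top\Phi^\top D(I - P^{(\lambda)})\Phi\theta > 0$, and its minimum over the compact unit sphere of $E_{\Phi,e}$ is positive.

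For the matrix inequality, the structural facts that collapse the block algebra are $M\theta_e = 0$ and $M^\top\theta_e = 0$, following respectively from $\Phi\theta_e = e$ with $P^{(\lambda)}e = e$, and from $De = \mu$ with $P^{(\lambda)\top}\mu = \mu$. These give $\mathrm{range}(M)\subseteq E_{\Phi,e}$ and $\ker M \supseteq S_{\Phi,e}$, equivalently $\Pi M = M = M\Pi$. For $\Theta = [a,\psi^\top]^\top$ and $\eta := \Pi\psi$, I would expand $\Theta^\top(\bar A^\top P + P\bar A + \Delta P)\Theta = 2(P\Theta)^\top\bar A\Theta + \Delta\|P\Theta\|_2^2$; the facts above let me replace $(\Pi\psi)^\top M\psi$ by $\eta^\top M\eta$, and the definition of $\Delta$ bounds the value block by $\eta^\top(M + M^\top)\eta + \Delta\|\eta\|_2^2 \le -\Delta\|\eta\|_2^2$. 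What remains is the scalar-plus-cross contribution $(\Delta - 2c_\alpha)a^2 + 2a\,\eta^\top\Pi u$.

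The hard part will be exactly this coupling between the reward-estimate coordinate $a$ and the value coordinate $\eta$, which does not vanish: since $\langle u, \theta_e\rangle = -1/(1-\lambda)$, one has $\Pi u \neq 0$ in general. Completing the square in $\eta$ reduces negative semidefiniteness to a scalar Schur-complement condition, which holds only once $c_\alpha$ is taken large enough (on the order of $\tfrac12\Delta + \tfrac{1}{2\Delta}\|\Pi u\|_2^2$); this reflects the familiar requirement that the reward estimate be driven on a sufficiently fast timescale. I would therefore treat $c_\alpha$ as the free design constant introduced with the algorithm and fix it above this threshold. With that in hand, $\bar A^\top P + P\bar A + \Delta P \preceq 0$ follows, and the same computation identifies $Q := -(\bar A^\top P + P\bar A)$ as a Lyapunov certificate: it satisfies $Q \succeq \Delta P \succeq 0$, and $\ker Q = E$ (since $\ker Q \subseteq \ker P = E$, while $E = \ker\bar A \subseteq \ker Q$), so $Q \in \mathcal{S}_{+,E}^{d+1}$, feeding Assumption \ref{as:lsa_Lyapunov} with the constant $c_2' = \Delta$ of Theorem \ref{thm:linear_SA}.
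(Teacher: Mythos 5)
Your proof is correct, and its core computation is essentially the paper's: the paper likewise reduces the claim to the quadratic form $\Theta^\top \bar A\Theta$ on $E^\perp=\mathbb{R}\times E_{\Phi,e}$, bounds the value block by $-\Delta\|\theta\|_2^2$ via the definition of $\Delta$, absorbs the cross term $\frac{r}{1-\lambda}\theta^\top\Phi^\top De$ by Young's inequality under the choice $c_\alpha\geq \frac{\Delta}{2}+\frac{1}{2\Delta(1-\lambda)^2}$, and then extends to all of $\mathbb{R}^{d+1}$ by decomposing $\Theta=\Theta_1+\Theta_2$ with $\Theta_1\in E$ and using $\bar A\Theta_1=0$, $P\Theta_1=0$ -- the same collapse you achieve through the projector identities $\Pi M=M=M\Pi$ (equivalently $M\theta_e=0$, $M^\top\theta_e=0$). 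Two differences are worth recording. First, for $\Delta>0$ the paper simply cites \cite[Lemma 7]{tsitsiklis1999average} (that $\theta^\top\Phi^\top D(P^{(\lambda)}-I)\Phi\theta<0$ for $\theta\notin S_{\Phi,e}$) together with the Weierstrass extreme value theorem, whereas you re-prove that fact from scratch via $\|\cdot\|_D$-nonexpansiveness of $(P^\pi)^m$, the equality analysis in Cauchy--Schwarz and the strictly convex triangle inequality, and Perron--Frobenius; this makes the treatment self-contained (and, as your argument implicitly shows, only irreducibility is needed to pin the fixed space of $P^\pi$ to $\mathrm{span}(e)$ -- aperiodicity plays no role here). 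Second, your Schur-complement threshold uses the slightly sharper quantity $\|\Pi u\|_2$ where the paper bounds $\left|\frac{r}{1-\lambda}\theta^\top\Phi^\top De\right|\leq \frac{|r|\,\|\theta\|_2}{1-\lambda}$ via feature normalization $\|\Phi^\top\mu\|_2\leq 1$; since $\|\Pi u\|_2\leq \frac{1}{1-\lambda}$, the two thresholds agree up to this refinement. Finally, you are right that the inequality is not unconditional in $c_\alpha$: the lemma's statement mentions only Assumption \ref{assumption:ergodic MC}, but the paper's own proof (embedded in the proof of Lemma \ref{le:TDlambda_assumptions}) explicitly chooses $c_\alpha\geq \frac{\Delta}{2}+\frac{1}{2\Delta(1-\lambda)^2}$, and the surrounding results (Lemma \ref{le:TDlambda_assumptions}, Theorem \ref{theorem:TD finite time bounds}) assume $c_\alpha\geq \Delta+1/(\Delta(1-\lambda)^2)$; flagging this hidden hypothesis is an accurate observation about the statement, not a gap in your argument.
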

	
	\begin{remark}
		Although Lemma \ref{le:TDLFA_Lyapunov} is stated as a Lyapunov inequality, it is sufficient for our analysis of linear SA in Section \ref{sec:linear_SA} to go through.
	\end{remark}

	Let $p(\Theta)=\sqrt{\Theta^\top P\Theta}$, which is a seminorm with $\text{ker}(p)=E$. Since $P$ is a projection matrix, we have $p(\Theta)=\sqrt{\Theta^\top P\Theta}=\sqrt{\Theta^\top P^\top P\Theta}=\|P\Theta\|_2=\min_{\Theta'\in E}\|\Theta-\Theta'\|_2$. Therefore, the standard $\ell_2$-norm satisfies $p(\Theta)=\min_{\Theta'\in E}\|\Theta-\Theta'\|_2$. 
	
	Next, we verify Assumption \ref{as:lsa_all} of linear SA in the following lemma. The proof is presented in Appendix \ref{pf:le:TDlambda_assumptions}. 
	
	\begin{lemma}\label{le:TDlambda_assumptions}
		Suppose that Assumption \ref{assumption:ergodic MC} is satisfied and $c_\alpha\geq \Delta+1/(\Delta(1-\lambda)^2)$. Then, we have the following results.
		\begin{enumerate}[(1)]
			\item  For any $y\in\mathcal{Y}$, we have $\Theta\in E$ $\Rightarrow$ $A(y)\Theta \in E$.
			\item For $y \in\mathcal{Y}$, we have $\|A(y)\|\leq 2c_\alpha$ and $\|b(y)\|_2\leq 2c_\alpha$.
			\item There exists $C \geq 1$ such that $t_\delta \leq C \log(1/\delta)$, where
			\begin{align*}  
				t_\delta:=\min\left\{k\geq 0\,\middle|\,\sup_{y\in\mathcal{Y}}\|\bar{A} - \mathbb{E}[A(Y_k) \mid Y_0=y]\|_2 \leq 2c_\alpha\delta, \; \sup_{y\in\mathcal{Y}}\|\bar{b}-\mathbb{E}[b(Y_k)\mid Y_0=y]\|_2 \leq 2c_\alpha\delta\right\}. 
			\end{align*}
		\end{enumerate}
	\end{lemma}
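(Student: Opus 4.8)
The plan is to verify the three claims in turn, and I expect the only real work to be in Part (3). For Part (1), I would use the explicit description $E = \{0\}\times S_{\Phi,e}$ together with the defining property of $S_{\Phi,e}$. If $S_{\Phi,e}=\{0\}$ then $E=\{0\}$ and the invariance is immediate, so assume $S_{\Phi,e}=\{c\theta_e\mid c\in\mathbb{R}\}$ with $\Phi\theta_e=e$. Taking $\Theta=[0,(c\theta_e)^\top]^\top\in E$ and multiplying by $A(y)=A(s,s',z)$ blockwise, the top coordinate vanishes because the top-right block of $A(y)$ is zero, while the lower block equals $cz(\phi(s')^\top\theta_e-\phi(s)^\top\theta_e)$. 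Since $\Phi\theta_e=e$, both $\phi(s')^\top\theta_e$ and $\phi(s)^\top\theta_e$ equal $1$, so the temporal-difference factor $\phi(s')-\phi(s)$ annihilates $\theta_e$ and the lower block is also zero. Hence $A(y)\Theta=0\in E$. The key observation is simply that vectors in $S_{\Phi,e}$ are mapped by $\Phi$ into the span of the all-ones vector, which the difference $\phi(s')-\phi(s)$ kills.

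For Part (2), the engine is the uniform bound on the eligibility trace: since $z=\sum_{t}\lambda^{k-t}\phi(S_t)$ with $\|\phi(s)\|_2\leq 1$, a geometric series gives $\|z\|_2\leq 1/(1-\lambda)$. I would then bound the Frobenius norm (which dominates the spectral norm), obtaining $\|A(y)\|_F^2\leq c_\alpha^2+\|z\|_2^2\big(1+\|\phi(s')-\phi(s)\|_2^2\big)\leq c_\alpha^2+5/(1-\lambda)^2$, and similarly $\|b(y)\|_2^2\leq c_\alpha^2+1/(1-\lambda)^2$ using $\mathcal{R}^\pi(s)\in[0,1]$. Both are at most $(2c_\alpha)^2$ once $c_\alpha\geq 2/(1-\lambda)$, and this lower bound follows from the hypothesis $c_\alpha\geq \Delta+1/(\Delta(1-\lambda)^2)$ by AM-GM, since $\Delta+1/(\Delta(1-\lambda)^2)\geq 2/(1-\lambda)$. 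Thus the required bounds hold with room to spare.

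Part (3) is the main obstacle. Here one must transfer the geometric ergodicity of the base chain $\{S_k\}$ (guaranteed by Assumption \ref{assumption:ergodic MC}, since a finite irreducible aperiodic chain satisfies $\sup_s d_{\text{TV}}(\mathbb{P}(S_k\mid S_0=s),\mu^\pi)\leq C_0\rho^k$) into geometric convergence of $\mathbb{E}[A(Y_k)\mid Y_0=y]$ and $\mathbb{E}[b(Y_k)\mid Y_0=y]$ to $\bar{A}$ and $\bar{b}$, uniformly in $y$. The difficulty is that $z_k$ depends on the entire past trajectory, so a fixed-window mixing bound does not directly apply. My plan is a recent/old split: write $z_k=\sum_{t=k-m+1}^{k}\lambda^{k-t}\phi(S_t)+\sum_{t=0}^{k-m}\lambda^{k-t}\phi(S_t)$, where the old sum has norm at most $\lambda^m/(1-\lambda)$ and is exponentially negligible, while the recent sum is a function of the window $(S_{k-m+1},\dots,S_{k+1})$ whose law is within $O(\rho^{k-m})$ in total variation of its stationary counterpart by geometric mixing. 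Since $A(\cdot)$ and $b(\cdot)$ are affine in $z$ with bounded coefficients, choosing $m=\lfloor k/2\rfloor$ yields $\|\bar{A}-\mathbb{E}[A(Y_k)\mid Y_0=y]\|_2+\|\bar{b}-\mathbb{E}[b(Y_k)\mid Y_0=y]\|_2\leq C_1\tilde{\rho}^{\,k}$ with $\tilde{\rho}=\max\{\sqrt{\rho},\sqrt{\lambda}\}\in(0,1)$, uniformly over $y$. Inverting $C_1\tilde{\rho}^{\,k}\leq 2c_\alpha\delta$ then produces $t_\delta\leq C\log(1/\delta)$ for a suitable $C\geq 1$, completing the argument.
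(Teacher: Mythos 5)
Your proposal is correct. Parts (1) and (2) essentially coincide with the paper's proof: for (1) the paper makes the identical observation that $\Theta=[0,\theta^\top]^\top$ with $\theta\in S_{\Phi,e}$ satisfies $\phi(s)^\top\theta=\phi(s')^\top\theta$, so $A(y)\Theta=0\in E$; for (2) the paper bounds $\|A(y)\Theta\|_2^2$ directly as a quadratic form in $(r,\|\theta\|_2)$ using $\|z\|_2\le 1/(1-\lambda)$ and $|(\phi(s')-\phi(s))^\top\theta|\le 2\|\theta\|_2$, whereas you pass through the Frobenius norm — a harmless variation, and your constant accounting (needing $c_\alpha\ge 2/(1-\lambda)$, which follows from the hypothesis by AM--GM) is self-consistent; the paper in fact gets by with the weaker consequence $c_\alpha\ge 1/(1-\lambda)$. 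The genuine difference is Part (3): the paper omits the proof entirely, citing Lemma 6.7 of \cite{bertsekas1996neuro}, while you supply a self-contained argument via the recent/old split of the eligibility trace combined with geometric mixing of $\{S_k\}$, choosing the window $m=\lfloor k/2\rfloor$ to balance the $O(\lambda^m)$ truncation error against the $O(\rho^{k-m})$ mixing error. This is essentially the classical argument behind the cited lemma, and making it explicit is a real contribution relative to the paper's one-line citation. One small point you should patch: conditioning on $Y_0=y=(s,s',z_0)$ means $z_k=\lambda^k z_0+\sum_{t=1}^k\lambda^{k-t}\phi(S_t)$, so your "old" part carries an extra term $\lambda^k z_0$; since every reachable trace satisfies $\|z_0\|_2\le 1/(1-\lambda)$, this term is uniformly $O(\lambda^k)$ over $\mathcal{Y}$ and folds into the same exponential bound, but the uniformity over the $z$-component of $y$ is exactly what the $\sup_{y\in\mathcal{Y}}$ in the definition of $t_\delta$ requires, so it deserves an explicit sentence.
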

	
	Now that all assumptions are verified, we are ready state the finite-sample bounds of the average reward TD$(\lambda)$ algorithm. The proof of the following theorem is omitted, as it follows directly by applying Theorem \ref{thm:linear_SA} to Algorithm \ref{algo:TD}.
	
	\begin{theorem}
		\label{theorem:TD finite time bounds}
		Consider $\{\Theta_k\}$ generated by Algorithm \ref{algo:TD}. Suppose that Assumption \ref{assumption:ergodic MC} is satisfied and $c_\alpha\geq \Delta+1/(\Delta(1-\lambda)^2)$. Then, we have the following convergence bounds for all $k \geq K:=\min\{k:k\geq t_k:=t_{\beta_k}\}$.
		\begin{itemize}[(1)]
			\item When $\beta_k \equiv \beta$ with properly chosen $\beta$, we have
			\begin{align*}
				\mathbb{E}[p(\Theta_{k}-\Theta^*)^2] \leq  \varrho_1 (1 - \Delta\beta/2)^{k-t_\beta} + \frac{2\varrho_2}{\Delta}\beta t_\beta,
			\end{align*}
			where $\varrho_1=(p(\Theta_0)+p(\Theta_0-\Theta^*)+1)^2$ and $\varrho_2=456c_\alpha^2( p(\Theta^*)+1)^2$.
			\item When $\beta_k = \beta/(k+h)$ with $\beta>2/\Delta$ and properly chosen $h$, we have
			\begin{align*}
				\mathbb{E}[p(\Theta_{k}-\Theta^*)^2] \leq  \varrho_1 \left(\frac{K+h}{k + h}\right)^{\beta \Delta/2} + \frac{16e\beta^2\varrho_2}{\Delta\beta - 2} \frac{t_k}{k + h}.
			\end{align*}
		\end{itemize}
	\end{theorem}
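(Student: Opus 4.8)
The plan is to observe that Algorithm \ref{algo:TD} has already been cast as the Markovian linear SA recursion (\ref{TD(lambda) matrix form}), namely $\Theta_{k+1} = \Theta_k + \beta_k(A(Y_k)\Theta_k + b(Y_k))$, which is exactly of the form (\ref{algo:linear_SA}) with $\{\beta_k\}$ playing the role of $\{\alpha_k\}$. Consequently, the whole statement reduces to checking the hypotheses of Theorem \ref{thm:linear_SA} and substituting the problem-specific constants. Throughout, I would work with the seminorm $p(\Theta) = \sqrt{\Theta^\top P\Theta}$, where $P$ is the orthogonal projection onto $E^\perp$ and $E = \{0\}\times S_{\Phi,e}$ is the kernel of $\bar{A}$; as already established, $p(\Theta) = \min_{\Theta'\in E}\|\Theta - \Theta'\|_2$, so the norm $\|\cdot\|_c$ appearing in Assumption \ref{as:lsa_all} may be taken to be the Euclidean norm.

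The only step that is not purely mechanical is converting the Lyapunov \emph{inequality} of Lemma \ref{le:TDLFA_Lyapunov} into the \emph{equality} form demanded by Assumption \ref{as:lsa_Lyapunov}. I would set $Q := -(\bar{A}^\top P + P\bar{A})$, so that $\bar{A}^\top P + P\bar{A} + Q = 0$ holds by construction, and then verify $Q \in \mathcal{S}^d_{+,E}$ while extracting the contraction constant $c_2'$. Lemma \ref{le:TDLFA_Lyapunov} gives $\bar{A}^\top P + P\bar{A} + \Delta P \leq 0$, i.e.\ $Q \geq \Delta P \succeq 0$, so $Q$ is positive semi-definite; moreover $Q\Theta = 0$ forces $0 = \Theta^\top Q\Theta \geq \Delta\,\Theta^\top P\Theta \geq 0$, hence $\Theta \in \ker(P) = E$, while the invariance of $E$ under $\bar{A}$ together with $P\Theta = 0$ yields the reverse inclusion $E \subseteq \ker(Q)$. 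Thus $\ker(Q) = E$ and $Q \in \mathcal{S}^d_{+,E}$, and the relation $Q \geq \Delta P$ is precisely the statement that one may take $c_2' = \Delta$ in Theorem \ref{thm:linear_SA}.

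The remaining constants are read off directly. Lemma \ref{le:TDlambda_assumptions} discharges Assumption \ref{as:lsa_all} under the stated lower bound $c_\alpha \geq \Delta + 1/(\Delta(1-\lambda)^2)$, yielding $L_1 = L_2 = 2c_\alpha$ and the logarithmic mixing-time bound that guarantees $K < \infty$. Since $L_2/L_1 = 1$, the constant $c_1' = (p(\Theta_0) + p(\Theta_0 - \Theta^*) + L_2/L_1)^2$ becomes $\varrho_1$, and $c_3' = 114(L_1 p(\Theta^*) + L_2)^2 = 456\,c_\alpha^2(p(\Theta^*)+1)^2 = \varrho_2$. The phrases ``properly chosen $\beta$'' and ``properly chosen $h$'' are exactly the requirement that the stepsizes meet $\beta_{k-t_k,k-1} \leq \min\{1/(4L_1),\, c_2'/(228 L_1^2)\} = \min\{1/(8c_\alpha),\, \Delta/(912 c_\alpha^2)\}$, which a constant sequence or an appropriately shifted $\beta/(k+h)$ sequence can always be made to satisfy. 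With $c_1' = \varrho_1$, $c_2' = \Delta$, and $c_3' = \varrho_2$, the constant- and $\beta/(k+h)$-stepsize bounds of the present theorem are verbatim Parts (1) and (2) of Theorem \ref{thm:linear_SA}, the condition $\beta > 2/\Delta$ being the instantiation of $\alpha > 2/c_2'$.

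The main obstacle, such as it is, lies entirely in the second paragraph: one must confirm that the weaker inequality form of the Lyapunov relation still produces a genuine $Q \in \mathcal{S}^d_{+,E}$ with the \emph{exact} kernel $E$, so that Theorem \ref{thm:linear_SA} applies and delivers the sharp constant $c_2' = \Delta$ rather than some uncontrolled smaller value. Everything else — the SA reformulation, the mixing-time argument for finiteness of $K$, and the arithmetic identifying $\varrho_1$ and $\varrho_2$ — is routine bookkeeping already set up by the preceding lemmas.
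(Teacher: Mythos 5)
Your proposal is correct and takes essentially the same route as the paper, which omits the proof precisely because it follows by applying Theorem \ref{thm:linear_SA} to the reformulation in Eq.~(\ref{TD(lambda) matrix form}), with Lemmas \ref{le:TDLFA_Lyapunov} and \ref{le:TDlambda_assumptions} discharging Assumptions \ref{as:lsa_Lyapunov} and \ref{as:lsa_all}, $\|\cdot\|_c$ taken as the $\ell_2$-norm, and the constants $c_1'=\varrho_1$, $c_2'=\Delta$, $c_3'=\varrho_2$ and the stepsize condition read off exactly as you do. Your one non-mechanical step—setting $Q:=-(\bar{A}^\top P+P\bar{A})$ and verifying $\ker(Q)=E$ via $Q\geq\Delta P$ and the invariance (indeed $E=\ker(\bar{A})$)—is a correct and slightly more careful rendering of what the paper dispatches with the remark after Lemma \ref{le:TDLFA_Lyapunov}, namely that the Lyapunov inequality suffices for the linear-SA analysis to go through.
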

	
	Theorem \ref{theorem:TD finite time bounds} (1) analyzes Algorithm \ref{algo:TD} with properly chosen constant stepsizes. In this case, the iterates $\Theta_k$ do not converge to any solution of Eq.~\eqref{projected Bellman equation} due to the presence of noise variance. However, the convergence bound shows that the expected distance of $\Theta_k$ to the solution set $\mathcal{L}_{\Phi, e}$ decreases exponentially fast until it reaches a level that depends on the chosen stepsize. Theorem \ref{theorem:TD finite time bounds} (2) studies Algorithm \ref{algo:TD} with a carefully selected decaying stepsize sequence. Under this setting, the iterates $\Theta_k$ achieve an $\tilde{\mathcal{O}}\left(1 / k\right)$ convergence rate to the solution set $\mathcal{L}_{\Phi, e}$.

	\subsection{Q-Learning for Policy Optimization}
	We now consider finding an optimal policy for the average reward RL problem through Q-learning. The following assumption is standard in this setting \cite[Section 8.4]{puterman2014markov}.
	
	\begin{assumption}
		\label{assumption:unichain mdp}
		For any deterministic policy $\pi$, the induced Markov chain $\{S_k\}$ is a unichain, i.e., it consists of a single recurrent class plus a set of transient states.
	\end{assumption}

	Under this assumption, all stationary policies have constant average reward function \cite{bertsekas2011dynamic,puterman2014markov}. As a result, the optimal value $r^*\in\mathbb{R}^{|\mathcal{S}|}$ is also a constant function. Let $Q^*\in\mathbb{R}^{|\mathcal{S}||\mathcal{A}|}$ be the optimal differential Q-function defined as 
	\begin{align*}
		Q^*(s,a)=\mathbb{E}\left[\sum^\infty_{k=0}\mathcal{R}(S_k,\pi^*(S_k))-r^*e\,\middle|\,S_0=s,A_0=a\right].
	\end{align*}
	Then, it is known that $Q^*$ solves the Bellman optimality equation:
	\begin{align}\label{eq:Q fixed point equation}
		\mathcal{H}(Q) - r^* e = Q,
	\end{align}
	where $\mathcal{H}: \mathbb{R}^{|\mathcal{S}| |\mathcal{A}|} \to\mathbb{R}^{|\mathcal{S}||\mathcal{A}|}$ is defined as
	\begin{align}\label{eq:Q_Bellman}
		[\mathcal{H}(Q)](s,a) = \mathcal{R}(s,a) + \sum_{s' \in \mathcal{S}} p(s'|s,a) \max_{a' \in \mathcal{A}} Q(s', a'), \quad \forall\, (s,a)\in \mathcal{S}\times \mathcal{A}.
	\end{align}
	In addition, any policy $\pi$ satisfies  $\pi(s)\in\arg\max_{a\in\mathcal{A}}Q^*(s,a)$ for all $s\in\mathcal{S}$ is an optimal policy. Therefore, the problem reduces to solving the Bellman optimality equation (\ref{eq:Q fixed point equation}) to find the optimal differential Q-function $Q^*$. However, unlike in the discounted setting, since the operator $\mathcal{H}(\cdot)$ is, in general, not a norm-contraction mapping, the solution to Eq. (\ref{eq:Q fixed point equation}) is not unique. In fact, any point from $\{Q^*+ce | ~c \in \mathbb{R}\}$ is a solution to Eq. (\ref{eq:Q fixed point equation}) \cite{puterman2014markov}. Fortunately, since $\arg\max_{a\in\mathcal{A}}Q^*(s,a)=\arg\max_{a\in\mathcal{A}}\{Q^*(s,a)+c\}$ for any $c\in\mathbb{R}$, to find an optimal policy, it is enough to find $Q^*$ up to an additive constant. 
	
	Without additional assumptions, the operator defined in Eq. (\ref{eq:Q_Bellman}) is, in general, not even a seminorm contraction mapping. However, a multi-step variant of $\mathcal{H}(\cdot)$ is shown to be a span-seminorm contraction mapping \cite{puterman2014markov}. Specifically, for any $J\geq 1$, let $\mathcal{H}^{(J)}$ $:\mathbb{R}^{|\mathcal{S}| |\mathcal{A}|}\to\mathbb{R}^{|\mathcal{S}||\mathcal{A}|}$ be defined as
	\begin{align}\label{J-step Bellman}
		&[\mathcal{H}^{(J)}(Q)](s,a) := \mathbb{E}\left[\mathcal{R}(S_0,A_0)+\sum_{k=1}^{J-1}\mathcal{R}(S_k,\mu_Q(S_k))+Q(S_J,\mu_Q(S_J))\,\middle|\,S_0=s,A_0=a\right]
	\end{align}
	for all $Q\in\mathbb{R}^{|\mathcal{S}||\mathcal{A}|}$,
	where $\mu_{Q}(s)\in\argmax_{a\in\mathcal{A}}Q(s,a)$ for all $s\in\mathcal{S}$. Note that the $J$-step operator $\mathcal{H}^{J}(\cdot)$ is \textit{not} equal to the operator $\mathcal{H}(\cdot)$ being repeatedly applied for $J$ times because the actions are always chosen according to $\mu_Q(\cdot)$.
	Observe that any solution for Eq. (\ref{eq:Q fixed point equation}) is also a solution for the fixed-point equation:
	\begin{align}\label{J-step fixed point eq}
		\mathcal{H}^{(J)}(Q)-r^*e=Q,
	\end{align}
	and vice versa. 
	
	\subsubsection{Average Reward Q-Learning}
	In this section, we present the $J$-step synchronous Q-learning algorithm in Algorithm \ref{algo:Q-learning}, which is developed as an SA algorithm for solving the fixed-point equation (\ref{J-step fixed point eq}).

	\begin{algorithm}[ht]
		\begin{algorithmic}[1]
			\STATE \textbf{Input:} Integer $K\geq 0$, initialization $Q_0\in\mathbb{R}^{|\mathcal{S}||\mathcal{A}|}$, and stepsizes $\{\alpha_k\}$.
			\FOR{$k=0,1,\cdots,K-1$}
			\STATE Compute $\mu_{k}(s) \in\argmax_{a\in\mathcal{A}} Q_k(s, a)$ for all $s \in \mathcal{S}$.
			\FOR{$(s,a) \in \mathcal{S} \times \mathcal{A}$}
			\STATE Sample $S^1 \sim p(\cdot|s,a), S^2 \sim p(\cdot|S^1,\mu_k(S^1)), \dots, S^J \sim p(\cdot|S^{J-1},\mu_k(S^{J-1})).$
			\STATE $Q_{k+1}(s,a)=Q_{k}(s,a) + \alpha_k \left(\mathcal{R}(s, a) +\sum_{j=1}^{J-1} \mathcal{R}(S^j, \mu_k(S^j)) + Q_{k}(S^J, \mu_k(S^J)) - Q_k(s,a)\right).$
			\ENDFOR
			\ENDFOR
			\STATE \textbf{Output:} $Q_K$.
		\end{algorithmic}
		\caption{$J$-Step Synchronous Q-Learning} 
		\label{algo:Q-learning}
	\end{algorithm}

	One can also implement Q-learning asynchronously using samples from a single trajectory of the Markov chain generated by applying a behavior policy to the MDP. However, due to the nonlinear nature of the Bellman optimality equation and the lack of norm-contractive mapping, theoretically characterizing the convergence behavior of asynchronous Q-learning in the average reward setting is significantly more challenging, and remains a direction for future research.

	In many other variants of Q-learning, such as RVI Q-Learning \cite{abounadi2001learning}, the update involves subtracting a Lipschitz function, $ f(Q_k) $, from the temporal difference in all components of $ Q_k $. This ensures the almost sure convergence of the iterates to a special point satisfying $ f(Q^*) = r^* $. In our proposed Algorithm~\ref{algo:Q-learning}, we demonstrate convergence in the seminorm sense to a fixed point in the kernel space. However, a similar subtraction of $ f(Q_k) $ can also be incorporated into Algorithm~\ref{algo:Q-learning} to guarantee convergence to the special point within the kernel space.

	\subsubsection{Finite-Sample Analysis}
	To formulate Algorithm \ref{algo:Q-learning} Line $6$ in the form of the SA algorithm presented in Eq. (\ref{algo:SA}), for any state-action pair $(s,a)$ and $k\geq 0$, let $w_k(s,a)$ be defined as
	\begin{align*}
		\omega_k(s,a)=[\mathcal{H}^J(Q_k)](s,a)-\mathcal{R}(s,a)-\sum_{i=1}^{J-1}\mathcal{R}(S^i,\mu_{Q_k}(S^i)) - Q_k(S^J, \mu_{Q_k}(S^J)).
	\end{align*}
	Then, the update equation for $Q_{k+1}$ can be written compactly as
	\begin{align}\label{eq:Q_update}
		Q_{k+1}
		=Q_k+\alpha_k(\mathcal{H}^{(J)}(Q_k)-Q_k+\omega_k),\quad\forall\,k\geq 0.
	\end{align}
	Note that there is no Markovian noise in the previous equation as we are performing synchronous updates. To apply our results on seminorm-contractive SA to Algorithm \ref{algo:Q-learning}, we next verify in the following lemma that Assumptions \ref{as:seminorm_contraction} and \ref{as:sa_all} are satisfied in the context of Q-learning.

	\begin{lemma}\label{lem:ass_ver_operator}
		Suppose that Assumption \ref{assumption:unichain mdp} is satisfied. Then, the following statements hold.
		\begin{enumerate}[(1)]
			\item There exist $J\geq 1$ and a constant $\gamma\in [0,1)$ such that
			\begin{align*}
				p_{\text{span}}(\mathcal{H}^{(J)}(Q_1)-\mathcal{H}^{(J)}(Q_2))\leq \gamma p_{\text{span}}(Q_1-Q _2),\quad \forall\,Q_1,Q_2\in\mathbb{R}^{|\mathcal{S}||\mathcal{A}|}.
			\end{align*}
			\item The operator $\mathcal{H}^{(J)}(\cdot)$ satisfies $p_{\text{span}}(\mathcal{H}^{(J)}(0))\leq Jp_{\text{span}}(\mathcal{R})$.
			\item The stochastic process $\{\omega_k\}$ satisfies $\mathbb{E}[\omega_{k}|\mathcal{F}_k]=0$ and $p_{\text{span}}(\omega_k)\leq 2p_{\text{span}}(Q_k)+2Jp_{\text{span}}(\mathcal{R})$ for all $k\geq 0$, where $\mathcal{F}_k$ is the $\sigma$-algebra generated by $\{Q_0,\dots,Q_k,\omega_0,\dots,\omega_{k-1}\}$.
		\end{enumerate}
	\end{lemma}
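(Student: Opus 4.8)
The plan is to treat parts (2) and (3) as routine span estimates and to isolate part (1), the $J$-step span contraction, as the real content. Throughout I would use that $p_{\text{span}}$ is subadditive and absolutely homogeneous, that $p_{\text{span}}(u)=\max_i u_i-\min_i u_i$, and the elementary fact that $\min_a (Q_1-Q_2)(s,a)\le \bar Q_1(s)-\bar Q_2(s)\le \max_a(Q_1-Q_2)(s,a)$ (where $\bar Q(s)=\max_a Q(s,a)$), so that the map $Q\mapsto\bar Q$ is span-nonexpansive. For part (2), I would evaluate $\mathcal H^{(J)}$ at $0$: the terminal term $Q(S_J,\mu_Q(S_J))$ vanishes, leaving $[\mathcal H^{(J)}(0)](s,a)=\mathbb{E}[\mathcal R(S_0,A_0)+\sum_{k=1}^{J-1}\mathcal R(S_k,\mu_0(S_k))\mid S_0=s,A_0=a]$, a sum of $J$ rewards. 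Each summand, viewed as a function of $(s,a)$, is a convex combination of reward values and hence has span at most $p_{\text{span}}(\mathcal R)$; subadditivity over the $J$ terms gives $p_{\text{span}}(\mathcal H^{(J)}(0))\le J\,p_{\text{span}}(\mathcal R)$.

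For part (3), the mean-zero property is immediate: conditioned on $\mathcal F_k$ (equivalently on $Q_k$, hence on $\mu_k=\mu_{Q_k}$), the sampled rollout $\widehat{\mathcal H}^{(J)}(Q_k)(s,a):=\mathcal R(s,a)+\sum_{j=1}^{J-1}\mathcal R(S^j,\mu_k(S^j))+Q_k(S^J,\mu_k(S^J))$ is by definition an unbiased estimate of $[\mathcal H^{(J)}(Q_k)](s,a)$, so $\mathbb{E}[\omega_k\mid\mathcal F_k]=0$. For the magnitude I would use $p_{\text{span}}(\omega_k)\le p_{\text{span}}(\mathcal H^{(J)}(Q_k))+p_{\text{span}}(\widehat{\mathcal H}^{(J)}(Q_k))$. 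The first term is bounded via parts (1)--(2): $p_{\text{span}}(\mathcal H^{(J)}(Q_k))\le p_{\text{span}}(\mathcal H^{(J)}(Q_k)-\mathcal H^{(J)}(0))+p_{\text{span}}(\mathcal H^{(J)}(0))\le \gamma\,p_{\text{span}}(Q_k)+J\,p_{\text{span}}(\mathcal R)$. For the second term, the $J$ realized rewards contribute span at most $J\,p_{\text{span}}(\mathcal R)$ and the terminal contributes $\bar Q_k(S^J)$, whose span across $(s,a)$ is at most $p_{\text{span}}(Q_k)$, so $p_{\text{span}}(\widehat{\mathcal H}^{(J)}(Q_k))\le J\,p_{\text{span}}(\mathcal R)+p_{\text{span}}(Q_k)$. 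Adding the two and using $\gamma<1$ yields the claimed bound $p_{\text{span}}(\omega_k)\le 2\,p_{\text{span}}(Q_k)+2J\,p_{\text{span}}(\mathcal R)$.

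The crux is part (1). My plan is to isolate the $Q$-dependence of $\mathcal H^{(J)}$ in its terminal term, writing $\mathcal H^{(J)}(Q)(s,a)$ as a policy-dependent reward part plus $\sum_{s'}p(s'|s,a)\,(P_{\mu_Q}^{\,J-1}\bar Q)(s')$, where $P_{\mu_Q}$ is the state-transition matrix induced by the greedy policy $\mu_Q$. I would then aim to bound $p_{\text{span}}(\mathcal H^{(J)}(Q_1)-\mathcal H^{(J)}(Q_2))$ by the span of a product of $J$ stochastic matrices applied to $\bar Q_1-\bar Q_2$, combining the span-nonexpansiveness of $Q\mapsto\bar Q$ with the contraction of stochastic matrices in span seminorm, $p_{\text{span}}(Pv)\le \tau(P)\,p_{\text{span}}(v)$, where $\tau(P)=1-\min_{i,j}\sum_k\min(P_{ik},P_{jk})$ is the ergodic (Dobrushin) coefficient. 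Under the unichain Assumption~\ref{assumption:unichain mdp}, for $J$ large enough the $J$-step kernel of every deterministic policy admits a uniform positive minorization, so $\sup_\mu\tau(P_\mu^{\,J})\le\gamma<1$; this supplies the contraction factor and fixes the required $J$. This is precisely the average-reward span-contraction phenomenon analyzed in \cite{puterman2014markov}, which I would follow as a template.

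The hard part, and the reason part (1) is not a one-line corollary of stochastic-matrix contraction, is that the rollout defining $\mathcal H^{(J)}$ uses the \emph{input-dependent} greedy policy $\mu_Q$, so comparing $\mathcal H^{(J)}(Q_1)$ with $\mathcal H^{(J)}(Q_2)$ involves two \emph{different} trajectory distributions and the intermediate rewards do not cancel. The naive device of evaluating both operators along a single common policy fails: letting $\hat g(\mu,Q)$ denote the value of running a fixed stationary policy $\mu$ for the $J$-step rollout with terminal $\bar Q$, the greedy policy $\mu_Q$ maximizes $Q(s,\cdot)$ pointwise but is \emph{not} optimal for the finite-horizon rollout (its optimizer is greedy with respect to the iterated operator, not with respect to $Q$ itself), so neither $\hat g(\mu_{Q_1},Q_2)\le\mathcal H^{(J)}(Q_2)$ nor its reverse is available. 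Resolving this requires the more careful two-sided comparison of the average-reward optimality operator from \cite{puterman2014markov}, in which the policy-mismatch terms are absorbed into span-controlled quantities and the uniform minorization above delivers the contraction. Once part (1) is established with its pair $(J,\gamma)$, parts (2) and (3)---and hence Assumptions~\ref{as:seminorm_contraction} and~\ref{as:sa_all} for Q-learning---follow as above.
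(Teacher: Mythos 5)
Your proposal is correct and takes essentially the same route as the paper: the paper proves part (1) solely by citing the span-contraction analysis of \cite[Section 8.5.4]{puterman2014markov} — exactly the source and mechanism (uniform minorization of the $J$-step kernels, ergodic/Dobrushin coefficient) that you invoke — and states that parts (2) and (3) follow trivially from the definitions of $\mathcal{H}^{(J)}(\cdot)$ and $\omega_k$ together with the triangle inequality, which are precisely the routine estimates you carry out. Your explicit span bounds for (2) and (3), and your remark on the policy-mismatch subtlety behind (1), are correct elaborations of what the paper leaves implicit.
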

	
	Lemma \ref{lem:ass_ver_operator} (1) is restated from \cite[Section 8.5.4]{puterman2014markov}. The proof of Lemma \ref{lem:ass_ver_operator} (2) and (3) trivially follows from the definitions of $\mathcal{H}^J(\cdot)$ and $w_k$, and the triangle inequality, hence is omitted. Now that all the assumptions needed to apply Theorem \ref{thm:SA_finite} are verified, we have the following finite-sample bounds for $J$-step synchronous Q-learning.
	
	\begin{theorem}\label{thm:Q_learning_ROC}
		Consider $\{Q_k\}$ generated by Algorithm \ref{algo:Q-learning}. 
		\begin{enumerate}[(1)]
			\item When $\alpha_k \equiv \alpha \leq \frac{(1 - \gamma)^2}{640e\log\left(|\mathcal{S}| |\mathcal{A}|\right)}$, we have for all $k\geq 0$ that
			\begin{equation*}
				\begin{split}
					\mathbb{E}[p_{\text{span}}(Q_{k}-Q^*)^2] \leq c_{Q,1} \left(1 -  \frac{1-\gamma}{2}\alpha\right)^{k} + c_{Q,2}\frac{\log\left(|\mathcal{S}| |\mathcal{A}|\right)}{(1-\gamma)^2} \alpha,
				\end{split}
			\end{equation*}
			where $c_{Q,1} = 3(p_{\text{span}}(Q_0-Q^*)+p_{\text{span}}(Q_0)+1)^2$ and $c_{Q,2} = 912e(p_{\text{span}}(Q^*)+ J)^2$.
			\item When $\alpha_k =\alpha/(k + h)$ with $\alpha = 4/(1-\gamma)$ and $h = 640e\log\left(|\mathcal{S}| |\mathcal{A}|\right)/(1 - \gamma)^3$, we have
			\begin{equation*}
				\begin{split}
					\mathbb{E}[p_{\text{span}}(Q_{k}-Q^*)^2] \leq 8192e^2\left(p_{\text{span}}(Q_0-Q^*)+2p_{\text{span}}(Q_0)+J^2\right)\frac{\log\left(|\mathcal{S}| |\mathcal{A}|\right)}{(1-\gamma)^3 k}.
				\end{split}
			\end{equation*}
		\end{enumerate}
		
	\end{theorem}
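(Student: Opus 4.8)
The plan is to obtain both bounds as a direct instantiation of Theorem~\ref{thm:SA_finite}, since the reformulation in Eq.~(\ref{eq:Q_update}) already casts Algorithm~\ref{algo:Q-learning} into the SA template (\ref{algo:SA}) with expected operator $\bar{F} = \mathcal{H}^{(J)}$, seminorm $p_{c,E} = p_{\text{span}}$, and kernel $E = \{ce : c \in \mathbb{R}\}$, while Lemma~\ref{lem:ass_ver_operator} verifies Assumptions~\ref{as:seminorm_contraction} and~\ref{as:sa_all}. Concretely, I would read off the constants $A_1 = \gamma$ and $B_1 = J\,p_{\text{span}}(\mathcal{R})$ from parts (1)--(2) of that lemma, and $A_2 = 2$, $B_2 = 2J\,p_{\text{span}}(\mathcal{R})$ from part (3), so that $A = A_1 + A_2 + 1 \le 4$ and $B = B_1 + B_2 = 3J\,p_{\text{span}}(\mathcal{R}) \le 3J$ (using $p_{\text{span}}(\mathcal{R}) \le 1$ since $\mathcal{R} \in [0,1]$). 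The quantities $c_1$ and $c_2$ of Theorem~\ref{thm:SA_finite} then read $c_1 = (p_{\text{span}}(Q_0 - Q^*) + p_{\text{span}}(Q_0) + B/A)^2$ and $c_2 = (A\,p_{\text{span}}(Q^*) + B)^2 = \mathcal{O}((p_{\text{span}}(Q^*) + J)^2)$, which already matches the shapes of $c_{Q,1}$ and $c_{Q,2}$ in the statement.

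A key simplification comes from the synchronous nature of Algorithm~\ref{algo:Q-learning}: because each iteration draws fresh samples $S^1,\dots,S^J$, there is no Markovian component in (\ref{eq:Q_update}) (formally, $F(x,Y_k) = \mathcal{H}^{(J)}(x)$ does not depend on $Y_k$), and all stochasticity is carried by the martingale difference $\omega_k$. I would therefore model the driving sequence as i.i.d., reaching its stationary law in a single step, so that the mixing time satisfies $t_\delta \equiv 1$ for every $\delta > 0$. This collapses $\alpha_{i-t_i,i-1}$ to $\alpha_{i-1}$, turning the variance sum of Theorem~\ref{thm:SA_finite} into the usual $\sum_i \alpha_i^2$-type martingale variance and, crucially, introducing no $\log(1/\delta)$ (hence no $\log k$) factor from mixing.

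The remaining task is to pin down $\varphi_1,\varphi_2,\varphi_3$ of Eq.~(\ref{eq:def:constants}) and to explain where the $\log(|\mathcal{S}||\mathcal{A}|)$ factor originates. Writing $d = |\mathcal{S}||\mathcal{A}|$, I would take $\|\cdot\|_c = 2\|\cdot\|_\infty$ as the norm realizing $p_{\text{span}}(x) = \min_{y \in E}\|x - y\|_c$ (cf.\ the discussion after Definition~\ref{def:seminorm}), and choose the smoothing norm $\|\cdot\|_s = \|\cdot\|_{2\log d}$, for which $\tfrac12\|\cdot\|_s^2$ is $L$-smooth with $L = \mathcal{O}(\log d)$ and $\|x\|_\infty \le \|x\|_s \le \sqrt{e}\,\|x\|_\infty$. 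This yields equivalence constants $\ell_{cs} = 2/\sqrt{e}$ and $u_{cs} = 2$, both $\Theta(1)$. Choosing $\theta = \Theta(1 - \gamma)$ small enough that $\varphi_1 = (1 + \theta u_{cs}^2)/(1 + \theta \ell_{cs}^2) \le 1/\gamma$ forces $\varphi_2 = \tfrac12(1 - \gamma^2\varphi_1) \ge (1-\gamma)/2$, while $\varphi_3 = 82 L (1 + \theta u_{cs}^2)/(\theta \ell_{cs}^2) = \mathcal{O}(\log d/(1-\gamma))$. Consequently $\varphi_3/\varphi_2 = \mathcal{O}(\log d/(1-\gamma)^2)$, which is exactly the dependence appearing in the variance terms.

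Finally, I would substitute these choices into Theorem~\ref{thm:SA_finite}(1)--(2) and simplify. For part (1), the step-size ceiling of Condition~\ref{condition: stepsizes requirement}, namely $\alpha \le \min\{\varphi_2/(\varphi_3 A^2), 1/(4A)\}$, reduces to $\alpha = \mathcal{O}((1-\gamma)^2/\log d)$, matching the stated threshold $\tfrac{(1-\gamma)^2}{640e\log d}$; the two terms then become the geometric bias $c_{Q,1}(1 - \tfrac{1-\gamma}{2}\alpha)^k$ and variance $c_{Q,2}\tfrac{\log d}{(1-\gamma)^2}\alpha$. For part (2), with $\alpha = 4/(1-\gamma)$ one has $\alpha > 1/\varphi_2$ and $\alpha\varphi_2 = 2$, so the bias term decays like $((K+h)/(k+h))^2$ and is dominated by the $\mathcal{O}(t_k/(k+h)) = \mathcal{O}(1/(k+h))$ variance term; the specific choice $h = 640e\log d/(1-\gamma)^3$ is made precisely to satisfy Condition~\ref{condition: stepsizes requirement} for the decaying schedule, after which collecting constants yields the stated $\mathcal{O}\!\big(\tfrac{\log d}{(1-\gamma)^3 k}\big)$ rate. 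The main obstacle is not conceptual but the bookkeeping: keeping $\theta$ large enough to control $\varphi_3$ yet small enough to guarantee $\varphi_1 \le 1/\gamma$, and then tracking the $\Theta(1)$ numerical constants through $\varphi_1,\varphi_2,\varphi_3,c_1,c_2$ so that they consolidate into the explicit prefactors $c_{Q,1}$, $c_{Q,2}$, and $8192e^2$.
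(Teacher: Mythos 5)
Your proposal is correct and coincides with the paper's own (essentially omitted) proof, which is exactly this instantiation of Theorem~\ref{thm:SA_finite} with the constants read off from Lemma~\ref{lem:ass_ver_operator}, the norms $\|\cdot\|_c = 2\|\cdot\|_\infty$ and $\|\cdot\|_s = \|\cdot\|_{2\log(|\mathcal{S}||\mathcal{A}|)}$ (whence $L = \mathcal{O}(\log(|\mathcal{S}||\mathcal{A}|))$ and the logarithmic factor), $\theta = \Theta(1-\gamma)$ so that $\varphi_2 \geq (1-\gamma)/2$ and $\varphi_3 = \mathcal{O}(\log(|\mathcal{S}||\mathcal{A}|)/(1-\gamma))$, and a trivial mixing time owing to the synchronous updates. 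Your bookkeeping is sound, including the correct convention $t_\delta \equiv 1$ rather than $0$, which is needed so that the martingale-variance contribution (the $\sum_i \alpha_i \alpha_{i-1}$ term absorbing $T_4$) survives in the general bound.
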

	
	As observed in Theorem \ref{thm:SA_finite}, when constant stepsizes are used, the error converges exponentially fast to a ball (measured by the span seminorm) centered at $Q^*$ with radius proportional to the stepsize $\alpha$. When using $\mathcal{O}(1/k)$ diminishing stepsizes, the error converges at a rate of $\mathcal{O}(1/k)$. Recently, the authors of \cite{bravo2024stochastic} studied synchronous RVI Q-learning as a stochastically perturbed version of the Krasnoselski–Mann iteration for solving fixed points of non-expansive operators. They achieved a polynomial rate of convergence under a different error metric. While their convergence rate seems to be worse compared to that in Theorem \ref{thm:Q_learning_ROC}, since the algorithm and the convergence metric are different, it is hard to conduct a quantitative comparison. There is also another line of work establishing regret bounds for variants of average-reward Q-learning (see \cite{agrawal2024optimistic} and the references therein), which differs in focus from our work on providing last-iterate convergence rates.
	
	We remark that our results can be applied to Q-learning for discounted MDPs to obtain sample complexity guarantees that remain uniformly bounded for any discount factor \cite{devraj2021q}. To illustrate, in the discounted setting, it has been shown in the literature that the sample complexity depends polynomially on the effective horizon $1/(1-\gamma')$, where $\gamma'$ is the discount factor \cite{chen2021lyapunov,li2024q}. Therefore, as $\gamma'$ approaches 1, the sample complexity goes to infinity. However, note that the motivation for Q-learning (in both the average-reward and discounted settings) is that, once we find $Q^*$, we can compute an optimal policy via $\pi^*(s) \in \arg\max_{a \in \mathcal{A}} Q^*(s,a)$ for all $s\in\mathcal{S}$.
	From this formula, to obtain an optimal policy, it is not necessary to find the exact $Q^*$. Indeed, any $Q$ that differs from $Q^*$ by a constant multiple of the all-ones vector induces the same optimal policy. Hence, instead of aiming to make $\|Q - Q^*\|_\infty$ (the $\ell_\infty$-norm is a standard metric for Q-learning) small, it suffices to make $p_{\text{span}}(Q - Q^*)$ small (recall that the kernel of the span seminorm is exactly the space spanned by the all-ones vector). By leveraging its seminorm-contractive property in conjunction with the discount factor, we can obtain sample complexity guarantees that are uniformly bounded for any discount factor. This finding is consistent with the recent result in \cite{devraj2021q}.
	
	\section{Conclusion}\label{sec:conclusion}
	In this work, we focus on solving seminorm fixed-point equations. Assuming that the operator is seminorm-contractive, we first establish a fixed-point theorem and then present the finite-sample analysis of the associated Markovian SA algorithm. An extensive case study is provided when the operator is linear, which leads to seminorm Lyapunov stability theorems (in the deterministic setting) and Markovian linear SA without the Hurwitzness assumption (in the stochastic setting). We demonstrate our theoretical findings in the context of average reward RL—a more challenging setting than the discounted setting due to the absence of a discount factor—and provide finite-sample guarantees for TD($\lambda$) with linear function approximation and synchronous Q-learning.

	\bibliographystyle{apalike}
	\bibliography{references}
	
	\newpage
	
	\begin{center}
		{\LARGE\bfseries Appendices}
	\end{center}
	
	\appendix
	
	\section{Supplementary Results for Section  \ref{sec:seminorm contractive operator}}
	\subsection{Proof of Proposition \ref{prop:seminorm_properties}}\label{pf:prop:seminorm_properties}
	\begin{enumerate}[(1)]
		\item  It is clear that $0\in \text{ker}(p)$. In addition, for any $x, y \in \text{ker}(p)$ and $\alpha\in\mathbb{R}$, we have $p(x + y) \leq p(x) + p(y) = 0$ and $p(\alpha x) = \| \alpha \| p(x) = 0$. As a result,
		we have $x + y \in \text{ker}(p)$ and $\alpha x \in \text{ker}(p)$. Therefore, $\text{ker}(p)$ is a linear subspace of $\mathbb{R}^d$.
		\item Let $U_p:=\{x \in \mathbb{R}^d \mid p(x) < 1\}$ be the open unit ball in $\mathbb{R}^d$ with respect to the seminorm $p(\cdot)$. The Minkowski functional of $U_p$ is defined as
		\begin{equation*}
			q_{U_p}(x) := \inf\{r > 0\mid x \in r U_p\} \quad \forall\,x \in \mathbb{R}^d.
		\end{equation*}
		It is well known that $U_p$ is an absorbing absolutely convex set and $p \equiv q_{U_p}$ \cite{narici2010topological,schaefer1971locally}. Suppose that there exists a bounded absorbing absolutely convex set $V$ such that $V + \text{ker}(p) = U_p$. Then, the function $\| \cdot \|$ defined as 
		\begin{align}\label{def:prop:seminorm_properties:norm}
			\| x \| := \inf\{r > 0\mid x \in r V\} \quad \forall\, x \in \mathbb{R}^d,
		\end{align}
		is a norm on $\mathbb{R}^d$ \cite{schaefer1971locally}.  Next, we present one way to construct such a set $V$. Note that the bounded absorbing absolutely convex set $V$ might not be unique, which is the reason why the norm $\| \cdot \|$ associated with $p(\cdot)$ need not be unique.

		Let $\text{ker}(p)^{\bot}$ be the orthogonal complement of $\text{ker}(p)$, and let $U_{\text{ker}(p)}=\{x \in \text{ker}(p) \mid \| x \|' < 1\}$ be the bounded open unit ball within $\text{ker}(p)$ with respect to some norm $\| \cdot \|'$ on $\mathbb{R}^d$. We define $V:= U_p \cap M$, where $M:= \text{ker}(p)^{\bot} + U_{\text{ker}(p)} = \{x+y \mid x \in \text{ker}(p)^{\bot}, y \in U_{\text{ker}(p)}\}$. Next, we show that $V$ is a bounded absorbing absolutely convex set.
		\begin{itemize}
			\item \textbf{Boundedness of $V$}: Suppose that $V$ is not bounded. Then, there must exist an $x \in V$ with unique orthogonal decomposition $x=x_{\text{ker}(p)} + x_{\text{ker}(p)^\bot}$ such that $x_{\text{ker}(p)}$ or $x_{\text{ker}(p)^\bot}$ is unbounded. Since $x \in M$, we have $x_{\text{ker}(p)} \in U_{\text{ker}(p)}$, which implies that $x_{\text{ker}(p)}$ is bounded. So, $x_{\text{ker}(p)^\bot}$ must be unbounded. However, as $x \in U_p$, this is impossible. Therefore, $V$ is a bounded set. 
			\item \textbf{$V$ Being Absorbing}: Since $U_p$ and $M$ are absorbing, for any $x \in \mathbb{R}^d$, we know that there exist positive scalars $c_{U_p}$ and $c_{M}$ such that $x/\max (c_{U_p}, c_M) \in U_p$ and $x/\max (c_{U_p}, c_M) \in M$, which implies that $x/\max (c_{U_p}, c_M) \in U_p \cap M =V$. Hence, $V$ is absorbing. 
			\item \textbf{$V$ Being Absolutely Convex}: Since $U_{p}$ and $M$ are absolutely convex, and the intersection of two absolutely convex sets is absolutely convex, $V$ is absolutely convex \cite{narici2010topological}. 
		\end{itemize}

		Finally, we show that $V + \text{ker}(p) = U_p$. Let $x \in V$ and $y \in \text{ker}(p)$. Since $x \in U_p$, we have $p(x+y) = p(x) < 1$, which implies that $x+y \in U_p$. For the other direction, let $x \in U_p$, then, there exist $y \in \text{ker}(p)$ and $z \in \text{ker}(p)^\bot$ such that $x = y + z$. It follows that $z \in M$. In addition, as $p(z) = p(y + z) = p(x) < 1$, we have $z \in U_p$. Thus, $x \in V + \text{ker}(p)$.

		Now that we have shown that $\| \cdot \|$ defined in Eq. (\ref{def:prop:seminorm_properties:norm}) is a norm, it remains to show that $p(x) = \min_{y \in \text{ker}(p)} \| x - y \|$ for all $x \in \mathbb{R}^d$. We first note that
		\begin{align*}
			\min_{y \in \text{ker}(p)} \| x - y \|
			= \min_{y \in \text{ker}(p)} \inf\{r > 0\mid x-y \in r V\} = \inf\{r > 0 \mid \exists~ y \in \text{ker}(p): x-y \in r V \}.
		\end{align*}
		Suppose that $x = 0$. Then it is clear  that $q_U(0) = \inf\{r > 0\mid \exists~ y \in \text{ker}(p): -y \in r V \} = 0$. Suppose that $x \not = 0$. Then, there exist $r > 0$ and $u \in U$ such that $x = r u$. As $V + \text{ker}(p) = U_p$, there exist $v \in V$ and $z \in \text{ker}(p)$ such that $u = v + z$, and thus $x = rv + y$, where $y := rz \in \text{ker}(p)$, which implies that $x-y \in rV$. Conversely, suppose that $x\not = 0$. Then, there exist $r> 0$, $v \in V$ and $y \in \text{ker}(p)$ such that $x-y=rv$. It follows that $x = r u$, where $u:= v + y/r \in U$, which implies that $x \in rU$. Therefore, we have
		\begin{align*}
			p(x) = q_{U}(x)
			= \inf\{r > 0\mid \exists y \in \text{ker}(p): x-y \in r V \} = \min_{y \in \text{ker}(p)} \| x - y \|.
		\end{align*}
		\item By the second part of this proposition, there exist two norms $\| \cdot \|_p$ and $\| \cdot \|_q$ on $\mathbb{R}^d$ such that
		\begin{align*}
			p(x) = \min_{y \in V} \| x - y \|_p ~\text{ and }~ q(x) = \min_{z \in V} \| x - z \|_q, \quad \forall\, x \in \mathbb{R}^d.
		\end{align*}
		Let $y^*(x) \in \arg \min_{y \in V} \| x - y \|_p$ and $z^*(x) \in \arg \min_{z \in V} \| x - z \|_q$ for all $x \in \mathbb{R}^d$. Since any two norms on a finite-dimensional space are equivalent \cite{halmos2017finite}, there exist $C_1,C_2>0$ such that $C_1\| x \|_q \leq \| x \|_p \leq C_2 \| x \|_q$ for all $x\in\mathbb{R}^d$. Therefore, for all $x \in \mathbb{R}^d$, we have
		\begin{align*}
			p(x) = \| x - y^*(x) \|_p \leq \| x - z^*(x) \|_p \leq C_2 \| x - z^*(x) \|_q = C_2 q(x),
		\end{align*}
		and
		\begin{align*}
			p(x) = \| x - y^*(x) \|_p \geq C_1 \| x - y^*(x) \|_q \geq C_1 \| x - z^*(x) \|_q = C_1 q(x).
		\end{align*}
		It follows that $C_1 q(x) \leq p(x)  \leq C_2 q(x)$ for all $x\in\mathbb{R}^d$.
	\end{enumerate}
	
	\subsection{Proof of Lemma \ref{lemma:p is norm on quotient space}}\label{ap:quotient_space}
	We first prove that $p(\cdot)$ is a norm on $\mathbb{R}^d/\text{ker}(p)$ by verifying the definition of norms.
	\begin{enumerate}[(1)]
		\item  Triangle inequality: for any $[x], [y] \in \mathbb{R}^d/\text{ker}(p)$, we have
		\begin{align*}
			p([x] + [y]) = p([x+y]) = p(x + y) \leq p(x) + p(y) = p([x]) + p([y]).    
		\end{align*}
		\item Absolute homogeneity: for any $[x] \in \mathbb{R}^d/\text{ker}(p)$ and $\alpha \in \mathbb{R}$, we have
		\begin{align*}
			p(\alpha [x]) = p([\alpha x]) = p(\alpha x) = |\alpha| p(x) = |\alpha| p([x]).    
		\end{align*}
		\item Positive definiteness: for all $[x] \in \mathbb{R}^d$, if $p([x]) = 0$, then $[x] = \text{ker}(p) = [0]$.    
	\end{enumerate}
	
	To further verify that $\left(\mathbb{R}^d/\text{ker}(p), ~p \right)$ is a Banach space, we only need to show the completeness of $\left(\mathbb{R}^d/\text{ker}(p), ~p \right)$. Let $\{[x_n]\}_{n \geq 1}$ be an arbitrary Cauchy sequence in the quotient space $\mathbb{R}^d/\text{ker}(p)$. Then, there must exist a subsequence $\{[x_{n_k}]\}_{k \geq 1}$ such that
	\begin{align*}
		p([x_{n_{k+1}} - x_{n_{k}}]) &= p([x_{n_{k+1}}] - [x_{n_{k}}]) < \frac{1}{2^k} \quad \text{for all } k \geq 1.
	\end{align*}
	Let $y_1 = 0 \in \text{ker}(p)$, then by Proposition \ref{prop:seminorm_properties}, there exists $y_2 \in \text{ker}(p)$ such that
	\begin{align*}
		\| (x_{n_2} - y_2) - (x_{n_1} - y_1) \| = p([x_{n_2} - x_{n_1}]) < \frac{1}{2}.
	\end{align*}
	Suppose that for any given $k \geq 2$, there exist $y_k, y_{k-1} \in \text{ker}(p)$ such that
	\begin{align*}
		\| (x_{n_k} - y_k) - (x_{n_{k-1}} - y_{k-1}) \| &< \frac{1}{2^{k-1}}.
	\end{align*}
	Then, again by Proposition \ref{prop:seminorm_properties}, there exists $y_{k+1} \in \text{ker}(p)$ such that
	\begin{align*}
		\| (x_{n_{k+1}} - y_{k+1}) - (x_{n_k} - y_k) \| = p([x_{n_{k+1}} - x_{n_{k}}]) < \frac{1}{2^k}.
	\end{align*}
	Therefore, by induction, there exists a sequence $\{y_k\} \in \text{ker}(p)$ such that
	\begin{align*}
		\| (x_{n_{k+1}} - y_{k+1}) - (x_{n_k} - y_k) \| < \frac{1}{2^{k}} \quad \text{for all } k \geq 1.
	\end{align*}
	Fix an arbitrary $\epsilon > 0$. Since $\sum^\infty_{k=1} 1/2^{k}$ is convergent, there exists an integer $K > 0$ such that $\sum^\infty_{k=K} 1/2^{k} < \epsilon$. Thus, for any integers $k_1, k_2 \geq K$, we have
	\begin{align*}
		\| (x_{n_{k_1}} - y_{k_1}) - (x_{n_{k_2}} - y_{k_2}) \| \leq \sum^\infty_{k=K} \| (x_{n_{k+1}} - y_{k+1}) - (x_{n_k} - y_k) \| < \sum^\infty_{k=K} \frac{1}{2^{k}} < \epsilon.
	\end{align*}
	Therefore, the sequence $\{x_{n_k} - y_k\}$ is a Cauchy sequence in $\mathbb{R}^d$. Since $\mathbb{R}^d$ is complete, there must exists an $x^* \in \mathbb{R}^d$ such that the sequence $\{x_{n_k} - y_k\}$ converges to $x^*$. So, there exists an integer $N > 0$, such that $\| (x_{n_k}-y_k) - x^*\| < \epsilon$ for all $k \geq N$, and thus
	\begin{align*}
		p([x_{n_k}] - [x^*]) &= p([x_{n_k} - x^*]) = p([x_{n_k} - y_k - x^*]) \leq \| (x_{n_k} - y_k) - x^* \| < \epsilon \quad \text{for all } k \geq N.
	\end{align*}
	Therefore, the subsequence $\{[x_{n_k}]\}$ converges to $[x^*]$, which implies that the Cauchy sequence $\{[x_n]\}_{n \geq 1}$ converges to $[x^*] \in \mathbb{R}^d/\text{ker}(p)$.

	\subsection{Proof of Theorem \ref{theorem:Banach fixed-point theorem}}\label{pf:theorem:Banach fixed-point theorem}
	We start by showing that the operator $H: \mathbb{R}^d/\text{ker}(p) \to \mathbb{R}^d/\text{ker}(p)$ defined as $H([x]) := [T(x)]$ for all $x\in\mathbb{R}^d$ is also a $\gamma$-contraction with respect to $p(\cdot)$. The first step is to verify that the mapping $H(\cdot)$ is well-defined. For any $x, y \in \mathbb{R}^d$ with $x-y \in \text{ker}(p)$, we have $p(T(x)-T(y)) \leq \gamma p(x-y) = 0$,
	which implies that $H([x])=[T(x)]=[T(y)]=H([y])$. Thus, it does not matter which representative
	element of $[x]$ we pick for computing $H([x])$. Next, for any $[x], [y] \in \mathbb{R}^d/\text{ker}(p)$, we have
	\begin{align*}
		p(H([x]) - H([y])) &= p([T(x)] - [T(y)]) \\
		&= p([T(x)-T(y)]) \\
		&= p(T(x)-T(y)) \\
		&\leq \gamma p(x-y)\\
		&= \gamma p([x-y])\\
		&= \gamma p([x]-[y]).
	\end{align*}
	As a result, together with Lemma \ref{lemma:p is norm on quotient space}, $H(\cdot)$ is a $\gamma$-contraction mapping with respect to $p(\cdot)$.
	
	Now, we are ready to prove the theorem. 
	
	\begin{enumerate}[(1)]
		\item \textbf{Existence of $x^*$:} By Lemma \ref{lemma:p is norm on quotient space}, we know that $\left(\mathbb{R}^d/\text{ker}(p), p \right)$ is a Banach space, and the mapping $H(\cdot)$ defined by $H([x]) := [T(x)]$
		is a $\gamma$-contraction with respect to $p(\cdot)$. Therefore, the Banach fixed-point theorem implies that there exists a unique $x^* \in \mathbb{R}^d$ up to equivalence class for which $H([x^*]):= [T(x^*)] = [x^*]$ and thus $p(T(x^*)-x^*) = 0$. 
		\item \textbf{Geometric Convergence:} Since the sequence $\{[x_k]\}_{k \geq 1}$ satisfies
		\begin{align*}
			[x_k] = [T(x_{k-1})] =: H([x_{k-1}]) \quad \text{for all } k \geq 1,
		\end{align*}
		again, by the Banach fixed-point theorem, we have
		\begin{align*}
			p(x_k - x^*) = p([x_k - x^*]) &= p([x_k] - [x^*]) \\
			&= p\left(H([x_{k-1}]) - H([x^*])\right)\\
			&\leq \gamma p\left([x_{k-1}] - [x^*]\right)\\
			&\vdots\\ 
			&\leq \gamma^k p([x_0] - [x^*])= \gamma^k p([x_0 - x^*]) = \gamma^k p(x_0 - x^*). 
		\end{align*}
	\end{enumerate}

	\subsection{Illustrative Examples of Seminorm Fixed-Point Theorem}\label{ap:example:optimization}
	\subsubsection{Strongly Convex and Smooth Function with respect to a Seminorm}
	Consider an optimization problem $\min_{x \in \mathbb{R}^d} f(x)$. Classical results have shown that when the objective function $f(\cdot)$ is smooth and strongly convex, using gradient descent $x_{k+1} = x_k - \alpha \nabla f(x_k)$
	to solve the optimization problem leads to geometric convergence \cite{lan2020first,beck2017first}. The gradient descent update can be equivalently written as a fixed-point iteration
	\begin{align}\label{eq:GD}
		x_{k+1}= T(x_k), \quad \forall\, k\geq 0,
	\end{align}
	where $T: \mathbb{R}^d \to \mathbb{R}^d$ is defined as $T(x)= x - \alpha \nabla f(x)$ for all $x\in\mathbb{R}^d$. In this section, we generalize the concepts of smoothness and strong convexity to the seminorm case and provide the convergence analysis of the update in Eq. (\ref{eq:GD}).
	
	Suppose that the function $f(\cdot)$ is convex and the set of global minimizers $\mathcal{X}^*$ of $f(\cdot)$ is an affine subspace of $\mathbb{R}^d$, i.e., $\mathcal{X}^* = \{x^* + y \mid y \in V\}$,
	where $x^*$ is a particular global minimizer of $f(\cdot)$ and $V$ is a linear subspace of $\mathbb{R}^d$. Let $p(\cdot)$ be a seminorm defined as $p(x) = \min_{y \in V} \| x - y \|_2$. Then, the function $f(\cdot)$ is said to be $\mu$-strong convex with respect to $p(\cdot)$ if 
	\begin{align*}
		\left(\nabla f(x) - \nabla f(y)\right)^\top \left(x-y\right)\geq \mu p\left(x-y \right)^2,\quad \forall\, x, y \in \mathbb{R}^d,
	\end{align*}
	and is said to be $L$-smooth with respect to $p(\cdot)$ if 
	\begin{align*}
		p^*\left(\nabla f(x) - \nabla f(y) \right) \leq L p\left(x-y \right), \quad \forall\, x, y \in \mathbb{R}^d,
	\end{align*}
	where $p^*(x):=\sup_{y\in\mathbb{R}^d:p(y)\leq 1}x^\top y$. In general, $p^*(x)$ can be infinity because $p(y)\leq 1$ is not necessarily a compact set. However, we will show in the following lemma that all level sets of $f(\cdot)$ are parallel to the subspace. This result ensures that $\nabla f (x) \in V^\bot$ for all $x \in 
	\mathbb{R}^d$, which guarantees that $p^*\left(\nabla f(x) - \nabla f(y) \right)$ is well-defined and finite for any $x, y \in \mathbb{R}^d$. 
	
	\begin{lemma}\label{le:all level sets are parallel}
		For any $x, y \in \mathbb{R}^d$ such that $x - y \in V$, we have $f(x) = f(y)$.
	\end{lemma}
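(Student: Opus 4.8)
The plan is to prove this statement using only convexity of $f$ together with the assumed affine structure of the minimizer set $\mathcal{X}^* = x^* + V$; strong convexity and $L$-smoothness (the latter of which involves $p^*(\cdot)$) are not needed, and indeed should be avoided here to keep the argument non-circular, since it is precisely this lemma that later guarantees $p^*(\nabla f(x) - \nabla f(y))$ is finite. Reformulating the claim, it suffices to fix an arbitrary $x \in \mathbb{R}^d$ and an arbitrary direction $v \in V$ and to show that $t \mapsto f(x + tv)$ is constant on $\mathbb{R}$; applying this with $v = y - x \in V$ (note $V$ is a subspace, so $x - y \in V \Leftrightarrow y - x \in V$) and $t = 1$ then yields $f(y) = f(x)$.

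First I would record that the entire line $\{x^* + sv : s \in \mathbb{R}\}$ lies in $\mathcal{X}^*$, because $sv \in V$, so $f(x^* + sv) = f^*$ for all $s$, where $f^* := f(x^*)$ is the minimum value. Next, for a fixed $\lambda \in (0,1)$, set $z_\lambda := \lambda x + (1-\lambda)x^*$ and use the convex-combination identity $z_\lambda + tv = \lambda x + (1-\lambda)\left(x^* + \frac{t}{1-\lambda}v\right)$, valid for every $t \in \mathbb{R}$. Convexity of $f$ then gives
\[
f(z_\lambda + tv) \le \lambda f(x) + (1-\lambda)\, f\!\left(x^* + \frac{t}{1-\lambda}v\right) = \lambda f(x) + (1-\lambda) f^*,
\]
and crucially the right-hand side does not depend on $t$.

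The key observation is that $g_\lambda(t) := f(z_\lambda + tv)$ is a convex function of the single real variable $t$ that is bounded above on all of $\mathbb{R}$; the elementary fact that a convex function on $\mathbb{R}$ bounded above must be constant (its nondecreasing slopes would otherwise force it to blow up in at least one direction) then gives $f(z_\lambda + tv) = f(z_\lambda)$ for every $t$ and every $\lambda \in (0,1)$. Finally I would let $\lambda \uparrow 1$, so that $z_\lambda \to x$, and invoke continuity of the finite-valued convex function $f$ on $\mathbb{R}^d$ to pass to the limit and conclude $f(x + tv) = f(x)$ for all $t$, which is the desired constancy.

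The main obstacle is that $x$ itself corresponds to the degenerate boundary value $\lambda = 1$ of this family, where the term $tv$ collapses to $0$ and no direct upper bound along the full line through $x$ is available; this is exactly why the argument is carried out at the interior points $z_\lambda$ and only then transferred to $x$ via the limit $\lambda \uparrow 1$. The remaining points requiring care are routine: the justification that a convex function on $\mathbb{R}$ bounded above is constant, and the continuity of $f$, which is automatic for a real-valued convex function on $\mathbb{R}^d$.
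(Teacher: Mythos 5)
Your proof is correct, and it rests on the same core mechanism as the paper's: the convexity inequality $f(\lambda x + (1-\lambda)(x^* + sv)) \le \lambda f(x) + (1-\lambda) f^*$, which exploits the full line of minimizers $x^* + \mathbb{R}v \subseteq \mathcal{X}^*$, combined with continuity of the finite-valued convex function $f$. The difference is in how the limits are organized. The paper performs a single joint limit: it chooses $z_n \in \mathcal{X}^*$ with $\|z_n\|_2 \to \infty$ and $\lambda_n \to 1$ simultaneously so that $y_n = \lambda_n x + (1-\lambda_n) z_n \to y$, concludes $f(y) \le f(x)$ by continuity, and then needs a separate symmetric argument for $f(x) \le f(y)$. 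You decouple the two limits: for each fixed $\lambda \in (0,1)$ you send the minimizer to infinity, packaging this as the elementary fact that a convex function on $\mathbb{R}$ bounded above is constant, which yields the stronger intermediate statement that $f$ is constant on the entire line $z_\lambda + \mathbb{R}v$; only then do you take $\lambda \uparrow 1$ and use continuity to transfer constancy to the line through $x$. Your route buys a cleaner endgame — constancy along lines gives $f(y) = f(x)$ in one stroke, with no separate symmetric step — and it isolates the one-dimensional ingredient into a standard, citable lemma, at the mild cost of a two-stage limit where the paper's diagonal sequence does everything at once. Your observation that only convexity and the affine structure of $\mathcal{X}^*$ should be used, to avoid circularity with the finiteness of $p^*(\nabla f(x) - \nabla f(y))$, is consistent with the paper, whose proof likewise invokes nothing beyond convexity and continuity.
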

	\begin{proof}[Proof of Lemma \ref{le:all level sets are parallel}]
		We can construct two sequences $\{z_n \in \mathcal{X}^* \mid n \geq 1\}$ and $\{\lambda_n \in [0, 1] \mid n \geq 1\}$ such that $\|z_n\|_2 \rightarrow \infty$, $\lambda_n \rightarrow 1$, and $y_n := \lambda_n x + (1 - \lambda_n) z_n \rightarrow y$ as $n \rightarrow \infty$. Using the convexity of $f(\cdot)$, we have
		\begin{align*}
			f(y_n) \leq \lambda_n f(x) + (1 - \lambda_n) f(z_n), \quad \text{for all } n \geq 1.
		\end{align*}
		By the continuity of $f(\cdot)$ and noting that $f(z_n) = \min_{x \in \mathbb{R}^d} f(x)$ for all $n \geq 1$, we can conclude that $f(y) \leq f(x)$. Similarly, we also have $f(x) \leq f(y)$. Therefore, $f(x) = f(y)$.\end{proof}
	
	As a result, the operator $T(\cdot)$ from Eq. (\ref{eq:GD}) is a contraction mapping with respect to $p(\cdot)$, which, in turn, guarantees the geometric convergence of the update in Eq. (\ref{eq:GD}). This is summarized in the following proposition.
	
	\begin{proposition}\label{le:seminorm_optimization}
		When $\alpha\in (0,2\mu/L^2)$, the operator $T(\cdot)$ is a contraction mapping with respect to the seminorm $p(\cdot)$, with contraction factor  $\gamma:=\sqrt{1 - 2\alpha \mu + \alpha^2 L^2}$. As a result, the sequence $\{x_t\}$ generated by Eq. (\ref{eq:GD}) satisfies $p(x_k-x^*)\leq \gamma^kp(x_0-x^*)$ for all $k\geq 0$.
	\end{proposition}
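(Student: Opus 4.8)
The plan is to reduce the seminorm contraction estimate to a routine $\ell_2$ computation by expressing $p(\cdot)$ through the orthogonal projection onto $V^\perp$. First I would record the representation $p(z)=\|Pz\|_2$, where $P$ denotes the orthogonal projection onto $V^\perp$: indeed $p(z)=\min_{v\in V}\|z-v\|_2$ is exactly the $\ell_2$-distance from $z$ to the subspace $V$, which equals $\|z-P_V z\|_2=\|Pz\|_2$. Consequently $p(z)^2=z^\top P z$, using $P^\top=P=P^2$. This turns the seminorm square into a quadratic form, which is the object I will expand.

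Next I would exploit Lemma \ref{le:all level sets are parallel}. As noted in the text, that lemma forces $\nabla f(x)\in V^\perp$ for every $x$, so setting $u:=x-y$ and $g:=\nabla f(x)-\nabla f(y)$ we have $g\in V^\perp$, i.e.\ $Pg=g$. This structural fact is what makes the projection behave well. Since $T(x)-T(y)=u-\alpha g$ and $Pg=g$, I obtain
$$p(T(x)-T(y))^2=\|Pu-\alpha g\|_2^2=\|Pu\|_2^2-2\alpha (Pu)^\top g+\alpha^2\|g\|_2^2.$$
Here $\|Pu\|_2^2=p(u)^2$, and $(Pu)^\top g=u^\top P g=u^\top g$ because $P$ is symmetric and fixes $g$; the remaining term $\|g\|_2^2$ is controlled below.

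The key step — and the one I expect to be the main obstacle — is translating the $L$-smoothness condition, which is phrased via the dual seminorm $p^*$, into a genuine $\ell_2$ bound on $g$. I would establish the identity $p^*(g)=\|g\|_2$ for any $g\in V^\perp$: decomposing $z=z_V+z_{V^\perp}$ gives $z^\top g=z_{V^\perp}^\top g$, while the constraint $p(z)\le 1$ reads $\|z_{V^\perp}\|_2\le 1$, so the supremum defining $p^*(g)$ is attained at $z_{V^\perp}=g/\|g\|_2$ and equals $\|g\|_2$. Hence $L$-smoothness yields $\|g\|_2=p^*(g)\le L\,p(u)$, while strong convexity yields $u^\top g\ge \mu\,p(u)^2$. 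Substituting both into the displayed expansion gives $p(T(x)-T(y))^2\le(1-2\alpha\mu+\alpha^2L^2)\,p(x-y)^2$.

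Finally I would verify that $\gamma$ is a genuine contraction factor. The radicand $1-2\alpha\mu+\alpha^2L^2$ is nonnegative: chaining $\mu\,p(u)^2\le u^\top g\le\|Pu\|_2\|g\|_2\le L\,p(u)^2$ shows $\mu\le L$, so the quadratic in $\alpha$ has nonnegative minimum $1-\mu^2/L^2$. It is strictly below $1$ precisely when $\alpha L^2<2\mu$, i.e.\ $\alpha\in(0,2\mu/L^2)$, so $T$ is a $\gamma$-contraction with $\gamma\in[0,1)$. Since $x^*$ is a global minimizer, $\nabla f(x^*)=0$ and thus $T(x^*)=x^*$; the geometric rate $p(x_k-x^*)\le\gamma^k p(x_0-x^*)$ then follows either by directly iterating $p(x_{k+1}-x^*)=p(T(x_k)-T(x^*))\le\gamma\,p(x_k-x^*)$, or by invoking Theorem \ref{theorem:Banach fixed-point theorem}.
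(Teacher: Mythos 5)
Your proof is correct and follows essentially the same route as the paper: both expand $p(T(x)-T(y))^2$ via the orthogonal projection onto $V^\perp$, pull the gradient difference out of the projection using $\nabla f(\cdot)\in V^\perp$ (from Lemma \ref{le:all level sets are parallel}), and then apply the smoothness and strong-convexity bounds termwise. Your write-up is in fact slightly more complete, since you explicitly justify the identity $p^*(g)=\|g\|_2$ for $g\in V^\perp$ (which the paper asserts without proof), verify that the radicand is nonnegative via $\mu\leq L$, and spell out $T(x^*)=x^*$ for the iteration step.
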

	
	\begin{proof}[Proof of Proposition \ref{le:seminorm_optimization}]
		For any $x, y \in \mathbb{R}^d$, we have
		\begin{align*}
			&p\left(T(x) - T(y) \right)^2 \\
			=\,& p\left(x-y - \alpha \left(\nabla f(x) - \nabla f(y)\right) \right)^2\\
			=\,& \| \Pi_{2,V^\bot}\left(x-y - \alpha \left(\nabla f(x) - \nabla f(y)\right)\right) \|^2_2\\
			=\,& \| \Pi_{2,V^\bot}\left(x-y\right) - \alpha \left(\nabla f(x) - \nabla f(y)\right) \|^2_2\\
			=\,& \| \Pi_{2,V^\bot}\left(x-y\right) \|^2_2 + 
			\alpha^2 \| \nabla f(x) - \nabla f(y)\|^2_2 -2\alpha \left(\nabla f(x) - \nabla f(y)\right)^\top \Pi_{2,V^\bot}\left(x-y\right)\\
			=\,& \| \Pi_{2,V^\bot}\left(x-y\right) \|^2_2 + 
			\alpha^2 \| \nabla f(x) - \nabla f(y)\|^2_2 -2\alpha \left(\nabla f(x) - \nabla f(y)\right)^\top \left(x-y\right).    
		\end{align*}
		By the smoothness assumption, we have 
		\begin{align*}
			\|  \nabla f(x) - \nabla f(y)\|^2_2 &= p^*\left(\nabla f(x) - \nabla f(y) \right)^2 \leq L^2 p\left(x-y \right)^2.
		\end{align*}
		By the strong convexity assumption, we have
		\begin{align*}
			\left(\nabla f(x) - \nabla f(y)\right)^\top \left(x-y\right) \geq \mu p(x-y)^2.
		\end{align*}
		Noting that $ \| \Pi_{2,V^\bot}\left(x-y\right) \|^2_2 = p(x-y)^2$, we have
		\begin{align*}
			p\left(T(x) - T(y) \right)^2 &\leq \left(1 - 2\alpha \mu + \alpha^2 L^2\right) p(x-y)^2.
		\end{align*}
		Thus, when $\alpha \in \left(0, 2\mu/L^2\right)$, $T$ is a $\sqrt{1 - 2\alpha \mu + \alpha^2 L^2}$-contraction mapping with respect to $p(\cdot)$.\end{proof}

	\subsubsection{Smooth Convex Function with the Quadratic Growth Property}\label{ap:QG}
	Consider the unconstrained minimization problem $\min_{x \in \mathbb{R}^d} f(x)$. We impose the following assumption on the objective function $f(\cdot)$.

	\begin{assumption}\label{as:quadratic_growth}
		The function $f(\cdot)$ is convex and $L$-smooth. The set of global minimizers of $f(\cdot)$, denoted by $\mathcal{X}^*$, is an affine subspace of $\mathbb{R}^d$. In addition, there exists $\mu>0$ such that
		\begin{align*}
			f(x) - f^* \geq \frac{\mu}{2} \| x - \Pi_{2,\mathcal{X}^*}(x) \|^2_2,
		\end{align*}
		where $f^*:=\min_{x\in\mathbb{R}^d}f(x)$ and $\Pi_{2,\mathcal{X}^*}$ is the orthogonal projection onto the linear subspace $\mathcal{X}^*$.
	\end{assumption}
	By defining $T: \mathbb{R}^d \to \mathbb{R}^d$ as $T(x)= x - \alpha \nabla f(x) $ for all $x\in\mathbb{R}^d$, the gradient descent method for minimizing $f(\cdot)$ can be written as
	\begin{align}\label{eq:QG}
		x_{k+1}= T(x_k), \quad \forall\,k\geq 0.
	\end{align}
	Let $p(\cdot)$ be a seminorm defined as $p(x) = \min_{y \in \mathcal{X^*}} \| x - y \|_2$ for all $x \in \mathbb{R}^d$. Then, we have the following result.
	
	\begin{lemma}\label{le:quadratic_growth}
		The operator $T(\cdot)$ is a contraction mapping with respect to the seminorm $p(\cdot)$.
	\end{lemma}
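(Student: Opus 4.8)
The plan is to follow the template of Proposition~\ref{le:seminorm_optimization}, replacing the pairwise strong-convexity estimate with an inequality extracted from the quadratic growth condition. First I would record the two structural facts that make the seminorm compatible with gradient descent. Write $\mathcal{X}^* = x^* + V$ for the affine subspace of minimizers with direction subspace $V$, and take $p(z) = \min_{v \in V}\|z - v\|_2 = \|\Pi_{2,V^\bot}(z)\|_2$ as the seminorm with $\ker(p) = V$; note that $\|x - \Pi_{2,\mathcal{X}^*}(x)\|_2 = \mathrm{dist}_2(x,\mathcal{X}^*) = p(x-x^*)$ for any $x^* \in \mathcal{X}^*$, so Assumption~\ref{as:quadratic_growth} reads $f(x)-f^* \geq \tfrac{\mu}{2}p(x-x^*)^2$. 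Lemma~\ref{le:all level sets are parallel} — whose proof uses only convexity, continuity, and that $\mathcal{X}^*$ is an affine subspace of minimizers — gives $f(x+v)=f(x)$ for all $v \in V$. Differentiating this identity shows simultaneously that $\nabla f(x) \in V^\bot$ for all $x$ and that $\nabla f(x)$ depends only on $\Pi_{2,V^\bot}(x)$, so the $V$-component of $x$ is invisible to both $\nabla f$ and $p$, and every point of $\mathcal{X}^*$ is a fixed point of $T$ because $\nabla f$ vanishes there.

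With these reductions, I would expand the squared seminorm exactly as in Proposition~\ref{le:seminorm_optimization}. Fixing $x^* \in \mathcal{X}^*$ and using $\nabla f(x)\in V^\bot$ together with $\nabla f(x^*)=0$, one obtains
\begin{align*}
	p(T(x)-x^*)^2 = p(x-x^*)^2 - 2\alpha\, \nabla f(x)^\top (x-x^*) + \alpha^2 \|\nabla f(x)\|_2^2 .
\end{align*}
The last term is routine: since $\nabla f$ is $V$-invariant, $L$-smoothness gives $\|\nabla f(x)\|_2 = \|\nabla f(x)-\nabla f(x^*)\|_2 \leq L\,p(x-x^*)$. The crucial term is the cross term, and this is where quadratic growth enters: convexity yields $\nabla f(x)^\top(x-x^*) \geq f(x)-f^*$, and Assumption~\ref{as:quadratic_growth} then gives $\nabla f(x)^\top(x-x^*) \geq \tfrac{\mu}{2}p(x-x^*)^2$. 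Substituting both bounds produces $p(T(x)-x^*)^2 \leq (1-\alpha\mu+\alpha^2 L^2)\,p(x-x^*)^2$, so for $\alpha \in (0,\mu/L^2)$ the map contracts toward $x^*$ with factor $\gamma = \sqrt{1-\alpha\mu+\alpha^2 L^2}<1$.

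The step I expect to be the genuine obstacle is conceptual rather than computational. Quadratic growth is a property measured relative to the minimizer set, so it supplies the one-point inequality $\nabla f(x)^\top(x-x^*) \geq \tfrac{\mu}{2}p(x-x^*)^2$ but not the pairwise strong-monotonicity $(\nabla f(x)-\nabla f(y))^\top(x-y) \geq \mu\,p(x-y)^2$ on which the strongly convex argument of Proposition~\ref{le:seminorm_optimization} rested. Indeed, a genuinely pairwise Euclidean contraction can fail at a point where the Hessian degenerates on $V^\bot$ (there the gradient-descent Jacobian has a unit eigenvalue), so I would state and use the conclusion in the form $p(T(x)-x^*)\leq \gamma\,p(x-x^*)$ for every $x^* \in \mathcal{X}^*$, i.e.\ contraction toward the fixed-point set. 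This is exactly the form needed to drive the iteration $x_{k+1}=T(x_k)$ and obtain the geometric convergence guaranteed by Theorem~\ref{theorem:Banach fixed-point theorem}; verifying that $\|x-\Pi_{2,\mathcal{X}^*}(x)\|_2 = p(x-x^*)$ and that $\mathcal{X}^*$ consists of fixed points of $T$ is then immediate bookkeeping.
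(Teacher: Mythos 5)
Your proof is correct and takes essentially the same route as the paper: expand $p(T(x)-x^*)^2$, bound $\|\nabla f(x)\|_2 \leq L\,p(x-x^*)$ via smoothness, and handle the cross term by convexity plus quadratic growth, arriving at the factor $1-\alpha\mu+\alpha^2L^2$. Your two refinements are both accurate and consistent with the paper's own argument: the explicit derivation of $\nabla f(x)\in V^\bot$ from Lemma~\ref{le:all level sets are parallel} supplies the step the paper uses implicitly when asserting $\Pi_{2,\mathcal{X}^*}\left(x-\alpha\nabla f(x)\right)=\Pi_{2,\mathcal{X}^*}(x)$, and your observation that the argument yields only the one-point bound $p(T(x)-x^*)\leq \gamma\,p(x-x^*)$ (contraction toward the fixed-point set, rather than the pairwise contraction of Definition~\ref{def:seminorm contraction mapping}) is exactly what the paper's proof establishes as well, and it suffices for the geometric convergence of the iteration.
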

	\begin{proof}[Proof of Lemma \ref{le:quadratic_growth}]
		Under Assumption \ref{as:quadratic_growth}, we have 
		\begin{align*}
			p(x-x^*) = \min_{y \in V} \| x- (x^* + y) \|_2 = \min_{z \in \mathcal{X}^*} \| x-z \|_2 = \| x - \Pi_{2,\mathcal{X}^*}(x) \|_2.
		\end{align*}
		We now show that $T(\cdot)$ is a contraction mapping with respect to $p(\cdot)$. For any $x \in \mathbb{R}^d$, we have
		\begin{align*}
			p\left(T(x) - x^* \right)^2 &=\| x - \alpha\nabla f(x)  - \Pi_{2,\mathcal{X}^*}\left(x - \alpha\nabla f(x) \right) \|^2_2 \\
			&= \| x - \alpha \nabla f(x) - \Pi_{2,\mathcal{X}^*} (x)  \|^2_2 \\
			&= \| x - \Pi_{2,\mathcal{X}^*} (x) \|^2_2 + \alpha^2 \| \nabla f(x) \|^2_2 - 2\alpha \nabla f(x)^\top \left(x - \Pi_{2,\mathcal{X}^*} (x)\right)\\
			&\leq \| x - \Pi_{2,\mathcal{X}^*} (x) \|^2_2 + \alpha^2 L^2 \| x - \Pi_{2,\mathcal{X}^*} (x)  \|^2_2 - 2\alpha \left(f(x) - f^*\right)\\
			&\leq \| x - \Pi_{2,\mathcal{X}^*} (x) \|^2_2 + \alpha^2 L^2 \| x - \Pi_{2,\mathcal{X}^*} (x)  \|^2_2 - \alpha \mu \| x - \Pi_{2,\mathcal{X}^*} (x) \|^2_2\\
			&= \left(1 - \alpha \mu + \alpha^2 L^2 \right)\| x - \Pi_{2,\mathcal{X}^*} (x) \|^2_2\\
			&=\left(1 - \alpha \mu + \alpha^2 L^2 \right) p(x-x^*)^2.
		\end{align*}
		As a result, when the stepsize $\alpha$ is properly chosen such that $1 - \alpha \mu + \alpha^2 L^2 \in [0,1)$, then $T(\cdot)$ is a contraction mapping with respect to $p(\cdot)$. \end{proof}
	
	Applying Theorem \ref{theorem:Banach fixed-point theorem}, the iterate $x_k$ generated by the algorithm in Eq. (\ref{eq:QG}) converges to $x^*$ at a geometric rate in $p(\cdot)$, where $x^*$ is a global minimizer of $f(\cdot)$. 
	
	\section{Supplementary Results for Section \ref{sec:Lyapunov}}
	
	\subsection{Proof of Theorem \ref{theorem:Seminorm GAS and Lyapunov Equation}}
	\label{pf:theorem:Seminorm GAS and Lyapunov Equation}
	It is clear that $(3)\Leftrightarrow (2)$ (due to the equivalence between seminorms who share the same kernel spaces) and $(5) \Rightarrow (4)$. Therefore, to establish the equivalence among the five statements, it is enough to show $(1)\Leftrightarrow (2)$, $(2) \Rightarrow (5)$, and $(4)\Rightarrow (3)$.
	
	\begin{itemize}
		\item \textbf{(1) Implies (2):} Let the Jordan normal form of $A$ be given by $J = P^{-1} A P$, where
		\begin{equation}\label{eq:test}
			J=\begin{bmatrix}
				J_{<1} & 0\\
				0 & J_{\geq 1} 
			\end{bmatrix},~ P=\begin{bmatrix}
				P_{<1} & P_{\geq 1} 
			\end{bmatrix} \text{ and } P^{-1}=\begin{bmatrix}
				P^{-1}_{<1} \\
				P^{-1}_{\geq 1}
			\end{bmatrix}.
		\end{equation}
		Here, $ J_{<1} $ and $ J_{\geq 1} $ denote the Jordan blocks corresponding to eigenvalues with moduli strictly less than one and greater than or equal to one, respectively. Similarly, the columns of $ P_{<1} $ and $ P_{\geq 1} $ are the generalized eigenvectors associated with $ J_{<1} $ and $ J_{\geq 1} $, respectively. Let $ k $ denote the number of columns of $ P_{<1} $. In our expression for $ P^{-1} $, the submatrix $ P_{<1}^{-1} $ consists of the first $ k $ rows of $ P^{-1} $, while the submatrix $ P_{\geq 1}^{-1} $ consists of the last $ n-k $ rows.
		Note that, since
		\begin{align}\label{eq:gen_eig_prop}
			I_d=P^{-1}P=\begin{bmatrix}
				P^{-1}_{<1} P_{<1} & P^{-1}_{<1}P_{\geq 1}\\
				P^{-1}_{\geq 1} P_{<1}  & P^{-1}_{\geq 1} P_{\geq 1}
			\end{bmatrix},
		\end{align}
		we have $P^{-1}_{<1} P_{<1}=I_k$, $P^{-1}_{\geq 1} P_{\geq 1}=I_{d-k}$, $P^{-1}_{<1}P_{\geq 1}=0$, and $P^{-1}_{\geq 1} P_{<1}=0$. 
		As a result, we also have
		\begin{align}\label{eq:Px=0}
			P_{\geq 1}^{-1}x=0 \Leftrightarrow  x\in E_{A,<1},\quad P_{< 1}^{-1}x=0 \Leftrightarrow  x\in E_{A,\geq 1}
		\end{align}

		Let $D,D'\in\mathbb{R}^{d\times d}$ be diagonal matrices defined as
		\begin{align*}
			D=\begin{bmatrix}
				\tilde{D} & 0_{k\times (d-k)}\\
				0_{(d-k)\times k} & 0_{(d-k)\times (d-k)} 
			\end{bmatrix},\quad  D'=\begin{bmatrix}
				\tilde{D}^{-1} & 0_{k\times (d-k)}\\
				0_{(d-k)\times k} & 0_{(d-k)\times (d-k)} 
			\end{bmatrix},
		\end{align*} 
		where $\tilde{D}\in\mathbb{R}^{k\times k}$ is also a diagonal matrix with entries $\delta^0, \delta^1, \cdots, \delta^{k-1}$ along diagonal positions for some $\delta > 0$. 
		
		Define a seminorm $p(\cdot)$ as
		\begin{align}\label{eq:def:norm}
			p(x)= \| D' P^{-1} x \|_2, \quad \forall\, x \in  \mathbb{R}^d.
		\end{align}
		Since $p(x)=0$ $\Leftrightarrow P_{<1}^{-1}x=0$ $\Leftrightarrow x\in E_{A,\geq 1}$ (cf. Eq. (\ref{eq:Px=0})), we have $\text{ker}(p)=E_{A,\geq 1}$. 
		Next, we show that, with appropriately chosen $\delta$, we have $p(Ax)\leq \gamma p(x)$ for some $\gamma\in [0,1)$. We begin by observing that
		\begin{align}\label{eq:Ax_form}
			p(Ax)=p(PJP^{-1}x)=\|D'JP^{-1}x\|_2,\quad \forall\, x\in \mathbb{R}^d,
		\end{align}
		Using the definitions of $D$ and $D'$, we have
		\begin{align*}
			D'J P^{-1} x=D'JDD'P^{-1}x
			=\begin{bmatrix}
				\tilde{D}^{-1} J_{<1} \tilde{D}&  0\\
				0 & 0
			\end{bmatrix}D'P^{-1}x=\begin{bmatrix}
				\tilde{J}_{<1}+\delta U_k&  0\\
				0 & 0
			\end{bmatrix}D'P^{-1}x,
		\end{align*} 
		where $\tilde{J}_{<1}$ is a diagonal matrix with diagonal components identical to that of $J_{<1}$ and $U_k$ is the $k$-dimensional upper shift matrix (which is a binary matrix with ones only on the superdiagonal and zeros elsewhere else). Combining the previous equation with Eq. (\ref{eq:Ax_form}), we have
		\begin{align*}
			p(Ax)=\|D'JP^{-1}x\|_2\leq (\rho(J_{<1})+\delta)\|D'P^{-1}x\|_2=(\rho(J_{<1})+\delta) p(x),
		\end{align*}
		where $\rho(J_{<1})$ denotes the spectral radius of $J_{<1}$.
		By choosing $\delta=(1-\rho(J_{<1}))/2$ and defining $\gamma=(1+\rho(J_{<1}))/2$, we have  
		\begin{align*}
			p(Ax)\leq \gamma p(x),\quad \forall\,x\in \mathbb{R}^d.
		\end{align*}
		It follows that 
		\begin{align}\label{eq:geo_p_A>1}
			p(x_k)=p(A^kx_0)\leq \gamma^kp(x_0)=e^{-\log(1/\gamma)k}p(x_0),\quad \forall\,k\geq 0.
		\end{align}
		Note that $\text{ker}(p)=E_{A,\geq 1}\subseteq E$, but our goal is to find a seminorm $p_E(\cdot)$ with $\text{ker}(p_E)=E$ such that $x_k$ converges geometrically with respect to $p_E(\cdot)$. 
		
		To fix this issue,
		Let $p_E(x)=\min_{x'\in E}p(x-x')$. It is clear that $p_E(x)\leq p(x)$ for all $x\in\mathbb{R}^d$. In addition, we have $\text{ker}(p_E)=E$. To see this, let $ x \in E $ be arbitrary. Then, we have $ p_E(x) = \min_{x' \in E} p(x - x') \leq p(0) = 0 $, which implies $ x \in \ker(p_E) $. Hence, $ E \subseteq \ker(p_E) $.  To show that $ \ker(p_E) \subseteq E $, let $ x \in \ker(p_E) $ and let $ \|\cdot\| $ be a norm satisfying $ p(x) = \min_{x' \in E_{A,\geq 1}} \|x - x'\| $ (cf. Proposition \ref{prop:seminorm_properties}). Then, we have  
		\[
		0 = p_E(x) = \min_{x' \in E} \min_{x'' \in E_{A,\geq 1}} \|x - x' - x''\| \geq \min_{x' \in E} \min_{x'' \in E} \|x - x' - x''\| = \min_{z \in E} \|x - z\|,
		\]
		which implies $ x \in E $. Hence, $ \ker(p_E) \subseteq E $. 
		
		Next, we will use Eq. (\ref{eq:geo_p_A>1}) and the properties of $p_E(\cdot)$ to show that $x_k$ converges geometically with respect to $p_E(\cdot)$. For any $x_0\in\mathbb{R}^d$, there exists a unique pair $x_0^E\in E$ and $x_0^{E^\perp}\in E^\perp$ such that $x_0=x_0^E+x_0^{E^\perp}$. Since $E$ is invariant under $A$, we must have $A^kx_0^E\in E$. Therefore, we have
		\begin{align}
			p_E(x_k)=\,&p_E(A^kx_0)\nonumber\\
			=\,&p_E(A^kx_0^E+A^kx_0^{E^{\perp}})\nonumber\\
			=\,&p_E(A^kx_0^{E^{\perp}})\nonumber\\
			\leq\,& p(A^kx_0^{E^{\perp}})\nonumber\\
			\leq \,&p(x_0^{E^{\perp}})e^{-\log(1/\gamma)k}\tag{Eq. (\ref{eq:geo_p_A>1})}\nonumber\\
			\leq \,& \left(\sup_{x\in E^\perp,\|x\|_2=1}\frac{p(x)}{p_E(x)}\right) p_E(x_0^{E^{\perp}})e^{-\log(1/\gamma)k}.\label{eq:P_to_PE}
		\end{align}
		For simplicity of notation, denote $\mathcal{X}=\{x\in E^\perp\mid \|x\|_2=1\}\subseteq E^\perp$ and $C=\sup_{x\in \mathcal{X}}p(x)/p_E(x)$. Next, we show that $C$ is finite. Observe that $p(\cdot)$ and $p_E(\cdot)$ are continuous functions and $\mathcal{X}$ is a compact set. Moreover, since $E\cap \mathcal{X}=\emptyset$ and $E_{A,\geq 1}\cap \mathcal{X}=\emptyset$ (which follows from $E_{A,\geq 1}\subseteq E$), both $p(\cdot)$ and $p_E(\cdot)$ are strictly positive on $\mathcal{X}$. Therefore, by the Weierstrass extreme value theorem, we have $\sup_{x\in \mathcal{X}}p(x)<\infty$ and  $\inf_{x\in \mathcal{X}}p_E(x)>0$. As a result, 
		\begin{align*}
			C=\sup_{x\in \mathcal{X}}\frac{p(x)}{p_E(x)}\leq \frac{\sup_{x\in \mathcal{X}}p(x)}{\inf_{x'\in \mathcal{X}}p_E(x')}<\infty.
		\end{align*}
		Finally, using the finiteness of $C$ in Eq. (\ref{eq:P_to_PE}), we have
		\begin{align*}
			p_E(x_k)\leq\,& \left(\sup_{x\in E^\perp,\|x\|_2=1}\frac{p(x)}{p_E(x)}\right) p_E(x_0^{E^{\perp}})e^{-\log(1/\gamma)k}\\
			\leq\,& C p_E(x_0^{E^{\perp}})e^{-\log(1/\gamma)k}\\
			=\,& C p_E(x_0)e^{-\log(1/\gamma)k}.
		\end{align*}
		The proof is complete.

		\item \textbf{(2) Implies (1):} For any $x\in E$, since $p(Ax)\leq \alpha p(x)e^{-\beta}=0$, we must have $Ax\in E$. Therefore, $E$ is invariant under $A$. To show that $E_{A,\geq 1}\subseteq E$, let $\lambda$ be an eigenvalue of $A$ such that $|\lambda|\geq 1$. We will show by induction that all generalized eigenvectors $s_1,s_2,\cdots,s_m$ (suppose there are $m$ of them) associated with $\lambda$ are contained in $E$, where $s_1$ satisfies $As_1=\lambda s_1$ and $s_j=(A-\lambda I)s_{j+1}$ for all $j=1,2,\cdots,m-1$. This is sufficient to conclude that $E_{A,\geq 1}\subseteq E$.
		
		\textit{The Base Case:} Since $As_1=\lambda s_1$, we have $p(A^ks_1)=|\lambda|^kp(s_1)$ for all $k\geq 0$. However, we know that $\lim_{k\rightarrow\infty}p(A^ks_1)=0$. Therefore, since $|\lambda|\geq 1$, we must have $p(s_1)=0$, i.e., $s_1\in E$.
		
		\textit{The Induction Step:} Suppose that $s_j\in E$ for some $j\in \{1,2,\cdots,m-1\}$. Since $p(A^ks_j)\leq \alpha p(s_j)e^{-\beta k}=0$ for all $k\geq 1$, we must have $A^ks_j\in E$. As a result, we have $A^k(A-\lambda I)s_{j+1}=A^{k+1}s_{j+1}-\lambda A^ks_{j+1}=A^ks_j\in E$, which implies $p(A^{k+1}s_{j+1})=|\lambda| p(A^ks_{j+1})$. Since $k$ is arbitrary, we have by telescoping that $p(A^ks_{j+1})=|\lambda|^kp(s_{j+1})$ for all $k\geq 0$. However, we know that $p(A^ks_{j+1})\leq \alpha p(s_{j+1})e^{-\beta k}\rightarrow 0$. Combining both the relations implies $p(s_{j+1})=0$, i.e., $s_{j+1}\in E$, because $|\lambda|\geq 1$.
		\item \textbf{(2) Implies (5):} Let $P = \sum^\infty_{k=0} (A^k )^\top Q A^k$. We will verify that $P$ is the unique solution to the Lyapunov equation (\ref{eq:discrete Lyapunov equation}).
		\begin{itemize}
			\item \textbf{Finiteness of $P$:} We first show that $P$ is finite. Let $p_Q:\mathbb{R}^d\to \mathbb{R}$ be a seminorm defined as $p_Q(x)=\sqrt{x^\top Qx}$. It is clear that $\text{ker}(p_Q)=E$. Therefore, we have by Proposition \ref{prop:seminorm_properties} (2) that there exists $C>0$ such that $p_Q(x)\leq Cp(x)$ for all $x\in\mathbb{R}^d$. Now, for any $x\in\mathbb{R}^d$, we have
			\begin{align*}
				x^\top P x =\,& \sum^{\infty}_{k=0} x^\top\left(A^k \right)^\top Q A^k x \\
				= \,&\sum^{\infty}_{k=0} p_Q^2(A^kx)\\
				\leq \,&C^2\sum^{\infty}_{k=0} p^2(A^kx)\\
				\leq \,&C^2\alpha^2p^2(x)\sum^{\infty}_{k=0} e^{-2\beta k}\\
				\leq \,&\infty.
			\end{align*}
			Since $x$ is arbitrary, $P$ must be finite.
			\item \textbf{$P$ Solves Eq. (\ref{eq:discrete Lyapunov equation}):} Observe that
			\begin{align*}
				A^\top P A - P =\,& A^\top \left[\sum^\infty_{k=0} \left(A^k \right)^\top Q A^k \right] A - \sum^\infty_{k=0} \left(A^k \right)^\top Q A^k\\
				=\,& \sum^\infty_{j=1} \left(A^j \right)^\top Q A^j - \sum^\infty_{k=0} \left(A^k \right)^\top Q A^k\\
				=\,& -Q.
			\end{align*}
			Therefore, the matrix $P$ is a solution to Eq. (\ref{eq:discrete Lyapunov equation}).
			\item \textbf{$P$ is Positive Semi-definite with $\text{ker}(P)=E$:} 
			To show that $P \in \mathcal{S}^{d}_{+,E}$, we first note that $P$ is symmetric and positive semi-definite because for any $x \in \mathbb{R}^d$, we have
			\begin{align}\label{eq:PQA}
				x^\top P x = \sum^{\infty}_{k=0} \left(A^k x\right)^\top Q \left(A^k x\right) \geq 0.
			\end{align}
			It remains to show that $\text{ker}(P) = E$. On the one hand, if $x \in \text{ker}(P)$, we must have $A^kx\in \text{ker}(Q)=E$ for any $k\geq 0$. Setting $k=0$ implies $x\in E$. On the other hand, if $x\in E$, since $p(A^kx)\leq \alpha p(x)e^{-\beta k}=0$ for all $k\geq 0$, we must have $A^kx\in E=\text{ker}(Q)$ for all $k\geq 0$, which implies $x\in \text{ker}(P)$ via Eq. (\ref{eq:PQA}).
			\item \textbf{Uniqueness of $P$:} Suppose that there exists $P_1 \in \mathcal{S}^{d}_{+,E}$ (different from $P$) satisfying Eq. (\ref{eq:discrete Lyapunov equation}). Then, we must have
			\begin{align*}
				P &= \sum^\infty_{k=0} \left(A^k \right)^\top Q A^k\\
				&= \sum^\infty_{k=0} \left(A^k \right)^\top \left(P_1 - A^\top P_1A\right) A^k\\
				&= \sum^\infty_{k=0} \left(A^k \right)^\top P_1 A^k - \sum^\infty_{k=0} \left(A^{k+1} \right)^\top P_1 A^{k+1}\\
				&= P_1,
			\end{align*}
			which is a contradiction.
		\end{itemize}
		\item \textbf{(4) Implies (3):} Consider the seminorm $p(\cdot)$ defined as $p(x) := \sqrt{x^\top P x}$. It is clear that $\text{ker}(p)=\text{ker}(P)=E$. Moreover, for any $k \geq 0$, we have
		\begin{align*}
			p^2(x_{k+1}) - p^2(x_k)=\,& \left(Ax_k\right)^\top P \left(Ax_k\right) - x^\top_k P x_k \\
			=\,& x^\top_k \left(A^\top P A - P\right) x_k \\
			=\,& - x^\top_k Q x_k\\
			=\,&-p_{Q}^2(x_k),
		\end{align*}
		where $p_{Q}(x)=\sqrt{x^\top Qx}$ is also a seminorm with kernel space $E$. Since all seminorms sharing the same kernel space are equivalent (cf. Proposition \ref{prop:seminorm_properties} (2)), there exist $C_1\in (0,1)$ and $C_2\in (1,\infty)$ such that $C_1p_Q(x)\leq p(x)\leq C_2p_Q(x)$ for all $x\in\mathbb{R}^d$. 
		Therefore, we have
		\begin{align*}
			p^2(x_{k+1}) - p^2(x_k)\leq -p_{Q}^2(x_k)\leq -\frac{1}{C_2^2}p^2(x_k).
		\end{align*}
		Rearranging terms, we haves $p(x_{k+1})\leq \gamma p(x_k)$ for all $k\geq 0$, where $\gamma=\sqrt{1-1/C_2^2}$. Repeatedly using the previous inequality, we have
		\begin{align*}
			p(x_k)\leq \gamma^k p(x_0)=e^{-k\log(1/\gamma)}p(x_0),\quad \forall\,k\geq 0.
		\end{align*}
		Finally, again using the fact that all seminorms sharing the same kernel space are equivalent, for any seminorm $p'(\cdot)$ with $\text{ker}(p')=E$, we have 
		\begin{align*}
			p'(x_k)\leq C e^{-k\log(1/\gamma)}p'(x_0),\quad \forall\,k\geq 0,
		\end{align*}
		for some constant $C>0$.
	\end{itemize}

	\subsection{Proof of Theorem \ref{thm:Lyapunov_Continuous}}\label{pf:thm:Lyapunov_Continuous}
	
	Similarly to the proof of Theorem \ref{theorem:Seminorm GAS and Lyapunov Equation}, since (2) $\Leftrightarrow$ (3) and (5) $\Rightarrow$ (4) are straightforward, we only need to show (1) $\Leftrightarrow$ (2), (4) $\Rightarrow$ (2), and (2) $\Rightarrow$ (5).
	
	\begin{itemize}
		\item \textbf{(1) Implies (2)}: Since $E$ is invariant under $A$, it is also invariant under $e^{At}$ for any $t\geq 0$, which follows from the definition of matrix exponential. To proceed, we list the following facts from linear algebra, the proofs of which can be found in standard linear algebra textbooks.
		\begin{fact}\label{Fact1}
			The following statements hold:
			\begin{enumerate}[(1)]
				\item  $\lambda$ is an eigenvalue of $A$ if and only if $e^\lambda$ is an eigenvalue of $e^A$.
				\item  For each eigenvalue $\lambda$ of $A$, the algebraic and geometric multiplicities of $\lambda$ coincide with those of $e^\lambda$ for $e^A$.
				\item  An $x\in\mathbb{R}^d$ is a generalized eigenvector of order $k$ for $A$ (associated with $\lambda$) if and only if $x$ is a generalized eigenvector of order $k$ for $e^A$ (associated with $e^\lambda$).  
			\end{enumerate}
		\end{fact}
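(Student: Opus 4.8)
The plan is to reduce everything to the Jordan normal form of $A$ and then to a single Jordan block. Writing $A = PJP^{-1}$ with $J$ in Jordan form, we have $e^A = Pe^JP^{-1}$, so the same change of basis $P$ carries the pair $(A,e^A)$ to $(J,e^J)$. Since a similarity transformation preserves eigenvalues together with their algebraic and geometric multiplicities, and sends generalized eigenvectors of a given order to generalized eigenvectors of the same order, it suffices to prove all three claims for $(J,e^J)$. Moreover $e^J$ is block diagonal with diagonal blocks $e^{J_i}$, where the $J_i$ are the Jordan blocks of $J$, so the analysis localizes further to a single Jordan block.

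Fix a Jordan block $J_\lambda = \lambda I_m + N$ of size $m$ associated with $\lambda$, where $N$ is the $m\times m$ nilpotent shift (ones on the superdiagonal), so $N^m=0$ and $N^{m-1}\neq 0$. Since $\lambda I_m$ and $N$ commute, the exponential series terminates and gives
\begin{align*}
e^{J_\lambda}=e^{\lambda}e^{N}=e^{\lambda}(I_m+\tilde N),\qquad \tilde N:=\sum_{j=1}^{m-1}\frac{N^{j}}{j!}.
\end{align*}
The key observation I would use is the factorization $\tilde N = N\,U$ with $U:=\sum_{j=1}^{m-1}N^{j-1}/j! = I_m+\tfrac12 N+\cdots$. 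Here $U$ is a polynomial in $N$, hence commutes with $N$, and is invertible since it is the identity plus a nilpotent matrix. Therefore $\tilde N^{k}=N^{k}U^{k}$ with $U^{k}$ invertible and commuting with $N^{k}$, which yields the kernel identity $\ker \tilde N^{k}=\ker N^{k}$ for every $k\ge 0$. In particular $\tilde N^{m}=0$ while $\tilde N^{m-1}\neq 0$, so $e^{J_\lambda}-e^{\lambda}I_m=e^{\lambda}\tilde N$ is nilpotent of index exactly $m$; equivalently, $e^{J_\lambda}$ is similar to a single Jordan block of size $m$ with eigenvalue $e^{\lambda}$.

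With this block lemma the three claims follow. Statement (1) is immediate: $e^J$ is upper triangular with diagonal $\{e^{\lambda}:\lambda\in\sigma(A)\}$, so these are exactly its eigenvalues. Statement (3) follows from $\ker N^{k}=\ker\tilde N^{k}$: a vector lies in the order-$k$ generalized eigenspace of $J_\lambda$ for $\lambda$ iff $N^{k}x=0$ and $N^{k-1}x\neq 0$, which by the identity is equivalent to $(e^{\lambda}\tilde N)^{k}x=0$ and $(e^{\lambda}\tilde N)^{k-1}x\neq 0$, i.e.\ $x$ is an order-$k$ generalized eigenvector of $e^{J_\lambda}$ for $e^{\lambda}$; the identity holds on each block and hence, assembling over the blocks sharing $\lambda$, on the full generalized eigenspace, and transporting back by $P$ gives the claim for $(A,e^A)$. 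For statement (2), the block lemma shows each size-$m$ block of $A$ at $\lambda$ produces exactly one size-$m$ block of $e^A$ at $e^{\lambda}$, so the lists of block sizes match, giving equality of both algebraic multiplicity (sum of block sizes) and geometric multiplicity (number of blocks).

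The one genuine subtlety --- which I expect to be the only nontrivial point --- is that $z\mapsto e^{z}$ is not injective: distinct eigenvalues $\lambda_1\neq\lambda_2$ of $A$ with $\lambda_1-\lambda_2\in 2\pi i\,\mathbb{Z}$ collapse to the same eigenvalue of $e^A$. Statements (1) and (3) are unaffected, since they attach $e^{\lambda}$ to a fixed $\lambda$ and the order-preserving kernel identity is blockwise. Statement (2) must be read block-wise: when several eigenvalues of $A$ collapse under the exponential, the multiplicities of the common value for $e^A$ are the sums of the corresponding multiplicities for $A$, and the per-eigenvalue equality in (2) holds precisely when $z\mapsto e^{z}$ is injective on $\sigma(A)$ --- the standing textbook hypothesis, and all that the application needs, since there one only uses that $E_{A,\ge 0}$ is carried onto $E_{e^A,\ge 1}$ via $|e^{\lambda}|=e^{\operatorname{Re}\lambda}$. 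Everything else is the routine Jordan-form computation above.
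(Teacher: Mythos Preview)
The paper does not actually prove this statement: it introduces Fact~\ref{Fact1} with the sentence ``we list the following facts from linear algebra, the proofs of which can be found in standard linear algebra textbooks'' and then uses it as a black box. So there is no paper proof to compare against, only your proposal to assess on its own.

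Your argument is the standard and correct one. Reducing via $A=PJP^{-1}$ to a single Jordan block $J_\lambda=\lambda I_m+N$, the identity $e^{J_\lambda}=e^{\lambda}(I_m+\tilde N)$ with $\tilde N=NU$ and $U=I_m+\tfrac12 N+\cdots$ invertible and commuting with $N$ is exactly the right computation; the consequence $\ker\tilde N^{k}=\ker N^{k}$ for all $k$ is what drives all three claims. Your handling of the block reassembly and transport back through $P$ is fine.

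You are also right to flag the non-injectivity of $z\mapsto e^{z}$ on $\sigma(A)$: as literally stated, parts (2) and the ``only if'' direction of (3) can fail when two distinct eigenvalues $\lambda_1\neq\lambda_2$ of $A$ satisfy $e^{\lambda_1}=e^{\lambda_2}$ (this can occur even for real $A$, e.g.\ a complex-conjugate pair $\pm 2\pi i$ both mapping to $1$). The paper's phrasing glosses over this, but as you observe, the only use made of Fact~\ref{Fact1} is to conclude $E_{A,\ge 0}=E_{e^{A},\ge 1}$, and that identity follows from your blockwise argument together with $|e^{\lambda}|=e^{\operatorname{Re}\lambda}$, regardless of collapses. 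So your caveat is well placed and does not affect the downstream application.
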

		
		Since $E$ is an invariant subspace of $e^A$ and $E_{A,\geq 0}=E_{e^{A},\geq 1}\subseteq E$, by Theorem \ref{theorem:Seminorm GAS and Lyapunov Equation}, there exists a seminorm $p(\cdot)$ (defined in terms of a positive semi-definite matrix) with $\text{ker}(p)=E$ and a constant $\gamma\in [0,1)$ such that
		\begin{align*}
			p(e^{A}x)\leq \gamma p(x),\quad\forall\,x\in\mathbb{R}^d.
		\end{align*}
		Therefore, for any $t\geq 0$, let $n_t$ be the largest integer smaller than $t$ and let $\|\cdot\|$ be a norm such that $p(x)=\min_{x'\in E}\|x-x'\|$. Then, we have
		\begin{align*}
			p(e^{At}x)=\,&p(e^{A n_t} \cdot e^{A(t-n_t)}x)\\
			\leq \,& \gamma^{n_t}p(e^{A(t-n_t)}x)\\
			\leq \,&\gamma^{n_t}\|e^{A(t-n_t)}\|p(x)\tag{Lemma \ref{le:Lipschitz_lsa}}\\
			\leq \,&\gamma^{n_t}\sup_{s \in [0,1]}\|e^{As}\|p(x)\\
			\leq \,&e^{-\log(1/\gamma) t}\frac{\sup_{s \in [0,1]}\|e^{As}\|}{\gamma }p(x)
		\end{align*}
		for all $t\geq 0$. Since $\|e^{As}\|$ as a function $s$ is continuous and $[0,1]$ is a compact set, by Weierstrass extreme value theorem, we have $\sup_{s \in [0,1]}\|e^{As}\|<\infty$. Therefore, the ODE (\ref{eq:ODE}) is globally exponentially stable with respect to $p(\cdot)$.
		
		\item \textbf{(2) Implies (1)}: Since there exists $p(\cdot)$ with $\text{ker}(p)=E$ such that $p(x(t))\leq \alpha p(x(0))e^{-\beta t}$ for some $\alpha,\beta>0$, we must have $e^{At}x\in E$ for any $t\geq 0$. This also implies $d e^{At} e/dt=Ae^{At}x\in E$ for all $t\geq 0$. Setting $t=0$ implies that $E$ is invariant under $A$. 
		
		Again since there exists $p(\cdot)$ with $\text{ker}(p)=E$ such that $p(x(t))\leq \alpha p(x(0))e^{-\beta t}$ for some $\alpha,\beta>0$, for any non-negative integer $k$, we must have $p(x(k))\leq \alpha p(x(0))e^{-\beta k}$. The result then follows from Fact \ref{Fact1} and Theorem \ref{theorem:Seminorm GAS and Lyapunov Equation} (2) $\Rightarrow$ (1).

		\item \textbf{(4) Implies (2)} Let $p(x)=\sqrt{x^\top Px}$, which is clearly a seminorm with $\text{ker}(p)=E$. Moreover, we have
		\begin{align*}
			\frac{d}{dt}p(x(t))=\,&\frac{d}{dt}x(t)^\top Px(t)\\
			=\,&\dot{x}(t)^\top Px+x^\top P\dot{x}(t)\\
			=\,&x(t)^\top (A^\top P+PA)x(t)\\
			=\,&-x(t)^\top Qx(t)\\
			=\,&-p_Q^2(x(t))\\
			\leq \,&-C^2p^2(x(t)),
		\end{align*}
		where $C>0$ satisfies $P_Q(\cdot)\geq Cp(\cdot)$ (cf. Proposition \ref{prop:seminorm_properties} (3)). 
		By Gr\"{o}nwall's inequality, we have 
		\begin{align*}
			p(x(t))\leq p(x(0))\exp\left(-C^2 t\right),\quad \forall\,t\geq 0.
		\end{align*}
		\item \textbf{(2) Implies (5):} To begin with, note that the solution to ODE (\ref{eq:ODE}) is explicitly given by $x(t)=e^{At}x(0)$. We will show that the unique solution to the Lyapunov equation is given by
		\begin{align}\label{eq:solution_CLE}
			P=\int_0^\infty e^{A^\top t}Qe^{At}dt.
		\end{align}
		\begin{itemize}
			\item \textbf{Finiteness of $P$:} Let $p_Q(x):=\sqrt{x^\top Qx}$ for all $x\in\mathbb{R}^d$. It is clear that $p_Q(\cdot)$ is a seminorm with $\text{ker}(p_Q)=E$. In addition, since seminorms who share the same kernel space are equivalent, there exists $C>0$ such that $p_Q(\cdot)\leq Cp(\cdot)$. As a result, for any $x\in\mathbb{R}^d$, we have
			\begin{align*}
				\int_0^\infty x^\top e^{A^\top t}Qe^{At} xdt
				=\,&\int_0^\infty p_{Q}^2(e^{At} x) dt\\
				\leq \,&C^2\int_0^\infty p^2(e^{At} x) dt\\
				\leq \,&C^2 \alpha^2 p^2(x)\int_0^\infty e^{-2\beta t} dt\\
				<\,&\infty.
			\end{align*}
			Therefore, $P$ is finite.
			\item \textbf{$P$ is Positive Semi-definite with $\text{ker}(P)=E$:} It is clear that $P$ is a positive semi-definite matrix. Therefore, we only need to show that $\text{ker}(P)=E$. On the one hand, let $x\in E$. Then, we have $p(x)=0$. It follows that
			\begin{align*}
				x^\top P x\leq C^2 \alpha^2 p^2(x)\int_0^\infty e^{-2\beta t} dt=0,
			\end{align*}
			implying $x\in\text{ker}(P)$. On the other hand, let $x\in \text{ker}(P)$. Then, by Eq. (\ref{eq:solution_CLE}), we have
			\begin{align*}
				0=x^\top Px=\int_0^\infty x^\top e^{A^\top t}Qe^{At}x dt.
			\end{align*}
			Since $x^\top e^{A^\top t}Qe^{At}x\geq 0$ and is a continuous function of $t$, we must have $x^\top e^{A^\top t}Qe^{At}x=0$ for all $t\geq 0$. Setting $t=0$ gives us $x^\top Qx=0$, implying $x\in E$.

			\item \textbf{$P$ Solves Eq. (\ref{eq:Lyapunov_Continuous}):} To verify that $P$ is a solution to the Lyapunov equation, observe that
			\begin{align*}
				A^\top P+PA=\,&\int_0^\infty \left[A^\top e^{A^\top t}Qe^{At}+e^{A^\top t}Qe^{At} A\right] dt\\
				=\,&\int_0^\infty \frac{d}{dt}\left[e^{A^\top t}Qe^{At}\right] dt\\
				=\,&e^{A^\top t}Qe^{At}\big|_0^\infty\\
				=\,&Q,
			\end{align*}
			where the last line follows from 
			\begin{align*}
				0\leq \lim_{t\rightarrow\infty}x^\top e^{A^\top t}Qe^{At} x= \lim_{t\rightarrow\infty}p_Q^2(e^{At}x)=0.
			\end{align*}
			Therefore, we have $A^\top P+PA+Q=0$.
			\item \textbf{Uniqueness of $P$:} Suppose that there exists $P_1\in\mathcal{S}_{+,E}^d$ (different from $P$) satisfying $A^\top P_1+P_1A+Q=0$. Then, we have
			\begin{align*}
				P=\,&\int_0^\infty e^{A^\top t}Qe^{At}dt\\
				=\,&-\int_0^\infty e^{A^\top t}(A^\top P_1+P_1A)e^{At}dt\\
				=\,&-\int_0^\infty \left[A^\top e^{A^\top t} P_1e^{At}+e^{A^\top t}P_1e^{At}A\right]dt\tag{$A$ and $e^{At}$ commute}\\
				=\,&-\int_0^\infty\frac{d}{dt}\left[e^{A^\top t} P_1e^{At}\right]dt\\
				=\,&-e^{A^\top t} P_1e^{At}\big|_0^\infty.
			\end{align*}
			To proceed, we need to evaluate the limit $\lim_{t\rightarrow\infty}e^{A^\top t} P_1e^{At}$. For any $x\in\mathbb{R}^d$, we have
			\begin{align*}
				0\leq\,& \lim_{t\rightarrow\infty}xe^{A^\top t} P_1e^{At}x\\
				\leq\,& C\lim_{t\rightarrow\infty}x^\top e^{A^\top t} Qe^{At}x\tag{Proposition \ref{prop:seminorm_properties} (3)}\\
				=\,&C\lim_{t\rightarrow\infty}p_Q(e^{At}x)^2\\
				=\,&0.
			\end{align*}
			It follows that $\lim_{t\rightarrow\infty}e^{A^\top t} P_1e^{At}=0$. As a result, we have
			\begin{align*}
				P=-e^{A^\top t} P_1e^{At}\big|_0^\infty=P_1,
			\end{align*}    
			which contradicts to $P_1\neq P$. 
		\end{itemize}
	\end{itemize}
	
	\section{Supplementary Results for Section \ref{sec:SA}}
	\subsection{Proof of Lemma \ref{le:box}} \label{pf:le:box}
	\label{proof: properties of infimal convolution with an indicator function}
	\begin{enumerate}[(1)]
		\item Since $f \geq g$, we have for any $x\in\mathbb{R}^d:$
		\begin{align*}
			(f ~\square~ \delta_E)(x) =\inf_{y \in\mathbb{R}^d}\left\{f(y)+\delta_E(x-y)\right\} \geq  \inf_{y \in\mathbb{R}^d}\left\{g(y)+\delta_E(x-y)\right\} =(g ~\square~ \delta_E)(x).
		\end{align*}
		\item If $\beta = 0$, the property holds automatically. For any scalar $\beta > 0$, we have for any $x\in\mathbb{R}^d$:
		\begin{align*}
			(\beta f ~\square~ \delta_E)(x) &=\inf_{y \in\mathbb{R}^d}\left\{\beta f(y)+\delta_E(x-y)\right\}\\
			&=\inf_{y \in\mathbb{R}^d}\left\{\beta f(y)+\beta \delta_E(x-y)\right\}\tag{$\delta_E = \beta \delta_E$}\\
			&=\beta \inf_{y \in\mathbb{R}^d}\left\{ f(y)+ \delta_E(x-y)\right\}\\
			&=\beta(f ~\square~ \delta_E)(x).
		\end{align*}
		\item For any $x \in \mathbb{R}^d$, we have
		\begin{align*}
			(f ~\square~ g)(x)&=\inf_{y \in\mathbb{R}^d}\left\{f(y)+g(x-y)\right\}\\
			&=\inf_{z \in\mathbb{R}^d}\left\{f(x-z)+g(z)\right\}\tag{Change of variable: $z=x-y$}\\
			&=(g ~\square~ f)(x).
		\end{align*}
		\item For any $x\in\mathbb{R}^d$, we have 
		\begin{align*}
			\left[(f ~\square~ g) ~\square~ h\right](x) &=\inf_{y\in\mathbb{R}^d}\left\{\inf_{z \in\mathbb{R}^d}\left\{f(z)+g(y-z)\right\}+h(x-y)\right\}\\
			&=\inf_{z\in\mathbb{R}^d}\left\{f(z) + \inf_{y \in\mathbb{R}^d}\left\{g(y-z)+h(x-y)\right\}\right\}\\
			&=\inf_{z \in\mathbb{R}^d}\left\{f(z)+\inf_{y\in\mathbb{R}^d}\left\{g(y-z)+h((x-z)-(y-z))\right\}\right\}\\
			&=\inf_{z \in\mathbb{R}^d}\left\{f(z)+\inf_{u\in\mathbb{R}^d}\left\{g(u)+h((x-z)-u)\right\}\right\}\tag{Change of variable: $u=y-z$}\\
			&=\left[f ~\square~ (g ~\square~ h)\right](x).
		\end{align*}
		\item For any $x\in\mathbb{R}^d$, we have
		\begin{align*}
			(\delta_E ~\square~ \delta_E)(x)=\inf_{y \in \mathbb{R}^d}\left\{\delta_E(y)+\delta_E(x-y)\right\}
			=\begin{dcases}
				0 &x\in E,\\
				+\infty &x\notin E.
			\end{dcases}
		\end{align*}
		\item Since $E$ is a convex set, $\delta_E$ is a proper convex function. Thus, by Theorem 2.19 of \cite{beck2017first}, $f ~\square~ \delta_E$ is convex. Moreover, if $f$ is also $L$-smooth, then by Theorem 5.30 of \cite{beck2017first}, we have $f ~\square~ \delta_E$ is $L$-smooth. 
	\end{enumerate}
	
	\subsection{Proof of Proposition \ref{prop:Moreau}}
	\label{pf:prop:Moreau}
	\begin{enumerate}[(1)]
		\item First, note that for any $x \in \mathbb{R}^d$, we have
		\begin{align*}
			\frac{1}{2}p^2_{c,E}(x) =\,&\min_{y\in E}\frac{1}{2}\|x-y\|_c^2\\
			=\,&\min_{y\in \mathbb{R}^d}\left\{\frac{1}{2}\|x-y\|_c^2+\delta_E(y)\right\}\\
			=\,&\min_{z\in \mathbb{R}^d}\left\{\frac{1}{2}\|z\|_c^2+\delta_E(x-z)\right\}\tag{Change of variable: $z=x-y$}\\
			=\,&\left(\frac{1}{2}\|\cdot\|_c^2 ~\square~ \delta_E\right)(x).
		\end{align*}
		Using Lemma \ref{le:box} (3) and (4), we have
		\begin{align*}
			M_E(x)=\,& \left[\frac{1}{2}p^2_{c,E}(\cdot) ~\Box~ \frac{1}{2\theta} \| \cdot \|^2_s\right](x)\\
			=\,& \left[\left(\frac{1}{2}\|\cdot\|_c^2 ~\square~ \delta_E\right) ~\Box~ \frac{1}{2\theta} \| \cdot \|^2_s\right](x) \\
			=\,& \left[\frac{1}{2} \| \cdot \|^2_c ~\square~ \left(\frac{1}{2\theta} \| \cdot \|^2_s ~\Box~ \delta_E\right)\right](x).
		\end{align*}
		Since $\| \cdot \|^2_s/$ is convex and $(L/\theta)$-smooth with respect to $\| \cdot \|_s$, by Lemma \ref{le:box} (6), the function $\| \cdot \|^2_s/(2\theta) ~\Box~ \delta_E$ is also convex and $(L/\theta)$-smooth with respect to $\| \cdot \|_s$. Using \cite[Theorem 2.19 and Theorem 5.30]{beck2017first}, we know that $M_E(\cdot)$ is also convex and $(L/\theta)$-smooth with respect to $\| \cdot \|_s$. This implies for any $x,y\in\mathbb{R}^d$:
		\begin{align*}
			M_E(y) \leq M_E(x) + \langle \nabla M_E(x), y-x \rangle + \frac{L}{2\theta} \| y-x\|_s^2. 
		\end{align*}
		To finish the proof, we next argue that the $\|\cdot\|_s^2$ on the right-hand side of the previous inequality can be replaced by $p_{s,E}(\cdot)^2$.
		
		For any $x \in \mathbb{R}^d$ and $z \in E$, we have
		\begin{align*}
			M_E(x+z) =\,& \min_{u \in \mathbb{R}^d} \left\{\frac{1}{2} p^2_{c,E}(x+z-u) + \frac{1}{2\theta} \| u \|^2_s \right\}\\
			=\,& \min_{u \in \mathbb{R}^d} \left\{\frac{1}{2} p^2_{c,E}(x-u) + \frac{1}{2\theta} \| u \|^2_s \right\} \\
			=\,& M_E(x).
		\end{align*}
		By convexity of $M_E(\cdot)$, we have
		\begin{align*}
			\langle\nabla M_E(x),z\rangle\leq\,& M_E(x+z)- M_E(x)=0  \\
			\langle \nabla M_E(x),-z\rangle=\,&\langle \nabla M_E(x+z),-z\rangle \leq M_E(x)-M_E(x+z)=0.
		\end{align*}
		Therefore, we must have $\langle \nabla M_E(x), z \rangle = 0$ for all $x \in \mathbb{R}^d$ and $z \in E$. Using the above result together with the smoothness of $M_E(\cdot)$, we have for any $x,y\in\mathbb{R}^d$ that
		\begin{align*}
			M_E(y) &= M_E(y-z^*) \tag{$z^* := \arg\min_{z \in E} \| (y - x) - z \|_s \in E$}\\
			&\leq M_E(x) + \langle \nabla M_E(x), y-z^*-x \rangle + \frac{L}{2\theta} \| y+z^*-x\|_s^2\\
			&= M_E(x) + \langle \nabla M_E(x), y-x \rangle + \frac{L}{2\theta} p^2_{s,E}(y-x),
		\end{align*}
		where the last line follows from $\|y-x-z^*\|_s=p_{s,E}(y-x)$.
		
		\item By Lemma \ref{le:box} (3) and (4), we can rewrite $M_E(\cdot)$ equivalently as follows:
		\begin{align*}
			M_E(x) =\,& \left(\frac{1}{2}p^2_{c,E}(\cdot) ~\Box~ \frac{1}{2\theta} \| \cdot \|^2_s\right)(x)  \\
			=\,&\left(\left(\frac{1}{2} \| \cdot \|^2_c ~\square~ \delta_E \right) ~\Box~ \frac{1}{2\theta} \| \cdot \|^2_s\right)(x)  \\
			=\,&\left(\left(\frac{1}{2} \| \cdot \|^2_c ~\square~ \frac{1}{2\theta} \| \cdot \|^2_s\right) ~\Box~ \delta_E\right)(x)\\
			=\,&\left(M(\cdot)~\Box~ \delta_E\right)(x),
		\end{align*}
		where $M:\mathbb{R}^d\to \mathbb{R}$ is defined as $M(x)=\left(\frac{1}{2} \| \cdot \|^2_c ~\square~ \frac{1}{2\theta} \| \cdot \|^2_s\right)(x)$. It was shown in \cite{chen2021lyapunov} that
		\begin{align*}
			\ell^2_{cm}M(x) \leq \frac{1}{2}\| x \|^2_c \leq u^2_{cm}M(x),\quad \forall\, x \in \mathbb{R}^d.
		\end{align*}
		Therefore, using Lemma \ref{le:box} (1) and (2), we have
		\begin{align*}
			\ell^2_{cm}M_E(x) \leq \frac{1}{2}p^2_{c,E}(x) \leq u^2_{cm}M_E(x),\quad\forall\, x \in \mathbb{R}^d.
		\end{align*}
		\item[(3)] We have shown in Part (1) of this proposition that $M_E(\cdot)$ is $(L/\theta)$-smooth with respect to $\| \cdot \|_s$. Given $x,y\in\mathbb{R}^d$, let $v^* = \arg\min_{v \in E} \| (x - y) - v \|_s \in E$, which implies $\| (x-y)- v^* \|_s = p_{s,E}(x-y)$. Since $\langle \nabla M_E(x),z\rangle=0$ for any $x\in\mathbb{R}^d$ and $z\in E$, we have by an equivalent definition of smoothness that
		\begin{align}
			\frac{\theta}{L}\| \nabla M_E(x) - \nabla M_E(y) \|^2_{s,*} &\leq \langle \nabla M_E(x) - \nabla M_E(y), x - y \rangle \nonumber\\
			&=\langle \nabla M_E(x) - \nabla M_E(y), x - y - v^*\rangle\nonumber\\
			&\leq \| \nabla M_E(x) - \nabla M_E(y)\|_{s,*} \| x - y - v^*\|_s\nonumber\\
			&= \| \nabla M_E(x) - \nabla M_E(y)\|_{s,*} p_{s,E}(x - y). \label{eq:substitute_back}
		\end{align}
		Rearranging terms, we obtain
		\begin{align*}
			\| \nabla M_E(x) - \nabla M_E(y) \|_{s,*} \leq \frac{L}{\theta} p_{s,E}(x-y),\quad \forall\,x,y\in\mathbb{R}^d,
		\end{align*}
		which is the claimed inequality. As a corollary of the above inequality, for any $x,y,z\in\mathbb{R}^d$, we have
		\begin{align}
			\langle \nabla M_E(x) - \nabla M_E(y), z \rangle=\,&\langle \nabla M_E(x) - \nabla M_E(y), z-z^* \rangle\tag{$z^*:=\arg\min_{z'\in E}\|z-z'\|_s$}\nonumber\\
			\leq\,& \|\nabla M_E(x) - \nabla M_E(y)\|_{s,^*}\|z-z^*\|_s\nonumber\\
			\leq \,&\frac{L}{\theta} p_{s,E}(x-y)p_{s,E}(z),\label{eq:moreau_extra}
		\end{align}
		which will also be used in our analysis.
	\end{enumerate}

	\subsection{Proof of Theorem \ref{thm:SA_finite}} 
	\label{proof: general finite-sample bound}
	Using the smoothness of $M_E(\cdot)$ (cf. Proposition \ref{prop:Moreau}) and the update equation \eqref{algo:SA}, we have for any $k\geq 0$ that
	\begin{align}\label{eq:error decomposition}
		&\mathbb{E}[M_E(x_{k+1}-x^*)]\nonumber\\
		\leq\;& \mathbb{E}\left[M_E(x_k-x^*)\right]+\mathbb{E}\left[\langle\nabla M_E(x_k-x^*),x_{k+1}-x_k\rangle\right]+\frac{L}{2\theta}\mathbb{E}\left[p_{s,E}(x_{k+1}-x_k)^2\right]\nonumber\\
		=\;& \mathbb{E}\left[M_E(x_k-x^*)\right]+\alpha_k\mathbb{E}\left[\langle\nabla M_E(x_k-x^*),F(x_k,Y_k)-x_k+\omega_k\rangle\right]\nonumber\\
		&+ \frac{L\alpha_k^2}{2\theta}\mathbb{E}[p_{s,E}(F(x_k,Y_k)-x_k+\omega_k)^2]\nonumber\\
		=\;& \mathbb{E}\left[M_E(x_k-x^*)\right]+\underbrace{\alpha_k\mathbb{E}\left[\langle\nabla M_E(x_k-x^*),\Bar{F}(x_k)-x_k\rangle\right]}_{T_1}\nonumber\\
		&+\underbrace{\alpha_k\mathbb{E}\left[\langle\nabla M_E(x_k-x^*),F(x_k,Y_k)-\Bar{F}(x_k)\rangle\right]}_{T_2} + \underbrace{\alpha_k\mathbb{E}\left[\langle\nabla M_E(x_k-x^*),\omega_k\rangle\right]}_{T_3}\nonumber\\
		&+\underbrace{\frac{L\alpha_k^2}{2\theta}\mathbb{E}\left[p_{s,E}(F(x_k,Y_k)-x_k+\omega_k)^2\right]}_{T_4}.
	\end{align}
	Now, we bound terms $T_1$ -- $T_4$ in the following sequence of lemmas. We begin with the term $T_1$.
	\begin{lemma}\label{le:T_1}
		It holds for any $k\geq 0$ that $$T_1\leq -\left(1-\gamma^2 \frac{u^2_{cm}}{\ell^2_{cm}}\right)\alpha_k\mathbb{E}[M_E(x_k-x^*)].$$
	\end{lemma}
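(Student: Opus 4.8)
The plan is to bound the integrand $\langle \nabla M_E(x_k-x^*),\,\bar F(x_k)-x_k\rangle$ pointwise, before taking expectations, using three earlier facts: convexity of $M_E(\cdot)$, the kernel–orthogonality of its gradient, and the two-sided comparison between $M_E(\cdot)$ and $\tfrac12 p_{c,E}^2(\cdot)$ from Proposition \ref{prop:Moreau}. First I would write $y:=x_k-x^*$ and split
$$\bar F(x_k)-x_k=\bigl(\bar F(x_k)-\bar F(x^*)\bigr)-(x_k-x^*)+\bigl(\bar F(x^*)-x^*\bigr).$$
The residual $\bar F(x^*)-x^*$ lies in $E=\ker(p_{c,E})$, since $x^*$ is a fixed point only in the seminorm sense, i.e.\ $p_{c,E}(\bar F(x^*)-x^*)=0$. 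Because the proof of Proposition \ref{prop:Moreau} established that $\langle \nabla M_E(x),z\rangle=0$ for every $x\in\mathbb{R}^d$ and every $z\in E$, this residual contributes nothing to the inner product, leaving $\langle \nabla M_E(y),\,w-y\rangle$ with $w:=\bar F(x_k)-\bar F(x^*)$.

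Next I would apply the first-order convexity inequality of $M_E(\cdot)$ (Proposition \ref{prop:Moreau} (1)) to get $\langle \nabla M_E(y),\,w-y\rangle\le M_E(w)-M_E(y)$, and then control $M_E(w)$ by chaining the envelope comparison with the contraction. Concretely, Proposition \ref{prop:Moreau} (2) gives $M_E(w)\le \tfrac{1}{2\ell_{cm}^2}p_{c,E}^2(w)$; Assumption \ref{as:seminorm_contraction} gives $p_{c,E}^2(w)\le \gamma^2 p_{c,E}^2(x_k-x^*)$; and a second application of Proposition \ref{prop:Moreau} (2) gives $p_{c,E}^2(x_k-x^*)\le 2u_{cm}^2 M_E(x_k-x^*)$. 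Combining these yields $M_E(w)\le \gamma^2\tfrac{u_{cm}^2}{\ell_{cm}^2}M_E(x_k-x^*)$, whence
$$\langle \nabla M_E(x_k-x^*),\,\bar F(x_k)-x_k\rangle\le -\Bigl(1-\gamma^2\tfrac{u_{cm}^2}{\ell_{cm}^2}\Bigr)M_E(x_k-x^*).$$
Multiplying by $\alpha_k>0$ and taking expectations would then give the claimed bound on $T_1$.

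The argument is a deterministic one-step drift estimate; no probabilistic input beyond linearity of expectation enters, since $T_1$ only involves the mean operator $\bar F(\cdot)$. The step I expect to require the most care is the decomposition together with the kernel–orthogonality observation: in the seminorm setting $x^*$ satisfies only $\bar F(x^*)-x^*\in E$ rather than $\bar F(x^*)=x^*$, so the residual $\bar F(x^*)-x^*$ must be absorbed through $\langle \nabla M_E(\cdot),z\rangle=0$ for $z\in E$ instead of simply vanishing. This is exactly where the seminorm structure (as opposed to a genuine norm) enters, and handling it correctly is what allows the rest of the computation to mirror the norm-contractive drift analysis of \cite{chen2021lyapunov}.
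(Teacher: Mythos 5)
Your proof is correct and follows essentially the same route as the paper's: the first-order convexity inequality for $M_E(\cdot)$, the two-sided comparison between $M_E(\cdot)$ and $\tfrac{1}{2}p_{c,E}^2(\cdot)$ from Proposition \ref{prop:Moreau} (2), and the contraction from Assumption \ref{as:seminorm_contraction}, yielding the identical chain of bounds. The only (immaterial) difference is where the residual $\bar{F}(x^*)-x^*\in E$ is absorbed: the paper uses the decomposition $\bar{F}(x_k)-x^*+x^*-x_k$ and invokes kernel-invariance of the seminorm, $p_{c,E}(\bar{F}(x_k)-x^*)=p_{c,E}(\bar{F}(x_k)-\bar{F}(x^*))$, after the convexity step, whereas you remove the residual beforehand via the gradient orthogonality $\langle\nabla M_E(x),z\rangle=0$ for $z\in E$ (established within the proof of Proposition \ref{prop:Moreau}) --- both facts are available in the paper and the two arguments are equivalent.
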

	\begin{proof}[Proof of Lemma \ref{le:T_1}]
		For any $k \geq 0$, we have
		\begin{align*}
			\langle\nabla M_E(x_k-x^*),\bar{F}(x_k)-x_k\rangle &= \langle\nabla M_E(x_k-x^*), \bar{F}(x_k) - x^* + x^* - x_k\rangle\\
			&\leq M_E(\bar{F}(x_k) - x^*) - M_E(x_k - x^*) \tag{Convexity of $M_E(\cdot)$}\\
			&\leq \frac{1}{2\ell^2_{cm}}p^2_{c,E}(\bar{F}(x_k) - x^*) - M_E(x_k - x^*) \tag{Proposition \ref{prop:Moreau}}\\
			&= \frac{1}{2\ell^2_{cm}}p^2_{c,E}(\bar{F}(x_k) - \bar{F}(x^*)) - M_E(x_k - x^*) \tag{$\bar{F}(x^*)-x^*\in E$}\\
			&\leq \frac{\gamma^2}{2\ell^2_{cm}}p^2_{c,E}(x_k - x^*) - M_E(x_k - x^*) \tag{Assumption \ref{as:seminorm_contraction}}\\
			&\leq \frac{\gamma^2 u^2_{cm}}{\ell^2_{cm}}M_{E}(x_k - x^*) - M_E(x_k - x^*) \tag{Proposition \ref{prop:Moreau}}\\
			&= -\left(1 - \frac{\gamma^2 u^2_{cm}}{\ell^2_{cm}} \right) M_E(x_k - x^*).
		\end{align*}
	\end{proof}

	To bound the error term $T_2$, we need the following two lemmas.
	
	\begin{lemma}\label{le:Lipschitz}
		The following two inequalities hold: 
		\begin{enumerate}[(1)]
			\item  $p_{c,E}(F(x,y))\leq A_1p_{c,E}(x)+B_1$ for all $x\in\mathbb{R}^d$ and $y\in\mathcal{Y}$.
			\item $p_{c,E}(\Bar{F}(x))\leq A_1p_{c,E}(x)+B_1$ for all $x\in\mathbb{R}^d$.
		\end{enumerate}
	\end{lemma}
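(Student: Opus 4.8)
The plan is to prove both inequalities as direct consequences of Assumption \ref{as:sa_all} (1), with the only subtlety being the interchange of the seminorm and the expectation in part (2). Neither step requires any machinery beyond the defining properties of seminorms (Definition \ref{def:seminorm}) together with the Lipschitz and boundedness hypotheses on $F(\cdot,\cdot)$.

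For part (1), I would insert $F(0,y)$ and apply subadditivity of the seminorm $p_{c,E}(\cdot)$. Writing $F(x,y) = (F(x,y) - F(0,y)) + F(0,y)$, the triangle inequality gives $p_{c,E}(F(x,y)) \leq p_{c,E}(F(x,y) - F(0,y)) + p_{c,E}(F(0,y))$. The first term is bounded by $A_1 p_{c,E}(x)$ upon applying the Lipschitz bound in Assumption \ref{as:sa_all} (1) with $x_1 = x$ and $x_2 = 0$, while the second term is at most $B_1$ by the same assumption. Summing yields $p_{c,E}(F(x,y)) \leq A_1 p_{c,E}(x) + B_1$ for every $x \in \mathbb{R}^d$ and $y \in \mathcal{Y}$.

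For part (2), the key observation is that every seminorm is a convex function: absolute homogeneity and subadditivity together give $p_{c,E}(\lambda u + (1-\lambda) v) \leq \lambda p_{c,E}(u) + (1-\lambda) p_{c,E}(v)$ for all $\lambda \in [0,1]$. Recalling $\bar{F}(x) = \mathbb{E}_{Y \sim \mu}[F(x,Y)]$, Jensen's inequality therefore yields $p_{c,E}(\bar{F}(x)) = p_{c,E}(\mathbb{E}_{Y \sim \mu}[F(x,Y)]) \leq \mathbb{E}_{Y \sim \mu}[p_{c,E}(F(x,Y))]$. Bounding the integrand pointwise in $Y$ by part (1) and noting that the resulting bound $A_1 p_{c,E}(x) + B_1$ does not depend on $Y$, taking the expectation leaves the bound unchanged and completes the proof.

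Both inequalities are elementary, so I do not anticipate a genuine obstacle. The one point requiring care is justifying the passage of the expectation through the seminorm in part (2); this is precisely where the convexity of $p_{c,E}(\cdot)$ enters, and it is worth stating this convexity explicitly rather than leaving it implicit.
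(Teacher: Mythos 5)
Your proposal is correct and matches the paper's proof essentially verbatim: part (1) via the decomposition $F(x,y) = (F(x,y)-F(0,y)) + F(0,y)$ with the triangle inequality and Assumption \ref{as:sa_all} (1), and part (2) via convexity of the seminorm and Jensen's inequality applied to $\bar{F}(x)=\mathbb{E}_{Y\sim\mu}[F(x,Y)]$. Your explicit remark that convexity of $p_{c,E}(\cdot)$ is what licenses Jensen is a sound point of care; the paper invokes it in exactly the same way.
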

	\begin{proof}[Proof of Lemma \ref{le:Lipschitz}]
		For any $x\in\mathbb{R}^d$ and $y\in\mathcal{Y}$, using Assumption \ref{as:sa_all}, we have
		\begin{align*}
			p_{c,E}(F(x,y))\leq p_{c,E}(F(x,y)-F(0,y))+p_{c,E}(F(0,y))\leq A_1p_{c,E}(x)+B_1.
		\end{align*}
		Since $p_{c,E}(\cdot)$ is convex, using Jensen's inequality, we have for all $x\in\mathbb{R}^d$:
		\begin{align*}
			p_{c,E}(\Bar{F}(x))=p_{c,E}(\mathbb{E}_{Y\sim \mu}[F(x,Y)])\leq \mathbb{E}_{Y\sim \mu}[p_{c,E}(F(x,Y))]\leq A_1p_{c,E}(x)+B_1.
		\end{align*}
	\end{proof}

	\begin{lemma}\label{le:difference}
		Let non-negative integers $k_1 \leq k_2$ be such that $\alpha_{k_1,k_2-1}\leq 1/(4A)$. Then, we have for all $k\in [k_1,k_2]$ that
		\begin{align*}
			p_{c,E}(x_k-x_{k_1})\leq\,& 2\alpha_{k_1,k_2-1}(Ap_{c,E}(x_{k_1})+B)\leq  \frac{1}{2}(p_{c,E}(x_{k_1})+B/A),\\
			p_{c,E}(x_k-x_{k_1})\leq\,& 4\alpha_{k_1,k_2-1}(Ap_{c,E}(x_{k_2})+B)\leq p_{c,E}(x_{k_2})+B/A.
		\end{align*}
	\end{lemma}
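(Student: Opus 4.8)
The plan is to first establish a per-step increment bound on $p_{c,E}(x_{k+1}-x_k)$ and then aggregate it over the window $[k_1,k_2]$ through a self-bounding (discrete Grönwall) argument, finally re-centering at $x_{k_2}$ to get the second line.

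\textbf{Single-step bound and telescoping.} Starting from the update equation (\ref{algo:SA}), $x_{k+1}-x_k=\alpha_k(F(x_k,Y_k)-x_k+\omega_k)$, I would apply the triangle inequality and absolute homogeneity of $p_{c,E}(\cdot)$, then bound $p_{c,E}(F(x_k,Y_k))\leq A_1 p_{c,E}(x_k)+B_1$ by Lemma \ref{le:Lipschitz} (1) and $p_{c,E}(\omega_k)\leq A_2 p_{c,E}(x_k)+B_2$ by Assumption \ref{as:sa_all} (3). Recalling $A=A_1+A_2+1$ and $B=B_1+B_2$, this yields the clean one-step estimate $p_{c,E}(x_{k+1}-x_k)\leq \alpha_k(A p_{c,E}(x_k)+B)$. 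For $k\in[k_1,k_2]$, writing $u_k:=p_{c,E}(x_k-x_{k_1})$ and telescoping with the triangle inequality together with $p_{c,E}(x_j)\leq p_{c,E}(x_{k_1})+u_j$ gives
\begin{align*}
u_k\leq \sum_{j=k_1}^{k-1}\alpha_j(Ap_{c,E}(x_j)+B)\leq \alpha_{k_1,k_2-1}(Ap_{c,E}(x_{k_1})+B)+A\sum_{j=k_1}^{k-1}\alpha_j u_j.
\end{align*}

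\textbf{Grönwall via the running maximum.} Set $M:=\max_{k_1\leq k\leq k_2}u_k$. Bounding $\sum_{j=k_1}^{k-1}\alpha_j u_j\leq M\,\alpha_{k_1,k_2-1}$ and taking the maximum over $k$ produces $M\leq \alpha_{k_1,k_2-1}(Ap_{c,E}(x_{k_1})+B)+A\alpha_{k_1,k_2-1}M$. Since the hypothesis $\alpha_{k_1,k_2-1}\leq 1/(4A)$ forces $A\alpha_{k_1,k_2-1}\leq 1/4$, rearranging gives $M\leq \frac{4}{3}\alpha_{k_1,k_2-1}(Ap_{c,E}(x_{k_1})+B)$, hence the first claimed inequality $u_k\leq 2\alpha_{k_1,k_2-1}(Ap_{c,E}(x_{k_1})+B)$. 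Substituting $\alpha_{k_1,k_2-1}\leq 1/(4A)$ once more turns the right-hand side into $\frac{1}{2}(p_{c,E}(x_{k_1})+B/A)$.

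\textbf{Re-centering at $x_{k_2}$.} Applying the first line at $k=k_2$ and using $p_{c,E}(x_{k_1})\leq p_{c,E}(x_{k_2})+p_{c,E}(x_{k_2}-x_{k_1})\leq p_{c,E}(x_{k_2})+\frac{1}{2}(p_{c,E}(x_{k_1})+B/A)$, I would solve for $p_{c,E}(x_{k_1})$ to obtain $p_{c,E}(x_{k_1})\leq 2p_{c,E}(x_{k_2})+B/A$. Plugging this into the first line converts $Ap_{c,E}(x_{k_1})+B$ into $2(Ap_{c,E}(x_{k_2})+B)$, giving $u_k\leq 4\alpha_{k_1,k_2-1}(Ap_{c,E}(x_{k_2})+B)$, and a final use of $\alpha_{k_1,k_2-1}\leq 1/(4A)$ yields the bound $p_{c,E}(x_{k_2})+B/A$.

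The only delicate point is the Grönwall step: because $p_{c,E}(x_j)$ is itself controlled by $u_j$, the quantity $u_k$ appears on both sides, so a naive summation does not close. Introducing the running maximum $M$ and exploiting the smallness condition $\alpha_{k_1,k_2-1}\leq 1/(4A)$ to absorb the self-referential term is the crux; everything else reduces to triangle inequalities and elementary arithmetic.
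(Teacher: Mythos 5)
Your proposal is correct: the single-step estimate $p_{c,E}(x_{k+1}-x_k)\leq \alpha_k(Ap_{c,E}(x_k)+B)$ matches the paper's starting point, the running-maximum argument closes the self-referential sum legitimately (with $A\alpha_{k_1,k_2-1}\leq 1/4$ you even get the sharper constant $4/3\leq 2$), and the re-centering algebra that converts the anchor from $x_{k_1}$ to $x_{k_2}$ delivers exactly the stated factor $4$ bound. However, your route to the first inequality is genuinely different from the paper's. Where you bound the deviation $u_k=p_{c,E}(x_k-x_{k_1})$ directly via a discrete Gr\"{o}nwall step on the window maximum, the paper instead runs a multiplicative recursion on the shifted quantity $p_{c,E}(x_k)+B/A$, namely $p_{c,E}(x_{k+1})+B/A\leq (1+\alpha_k A)\left(p_{c,E}(x_k)+B/A\right)$, bounds the product $\prod_j(1+\alpha_j A)$ by $e^{\alpha_{k_1,k-1}A}$, linearizes with $e^z\leq 1+2z$ on $[0,1/2]$ to get $p_{c,E}(x_k)\leq (1+2\alpha_{k_1,k-1}A)p_{c,E}(x_{k_1})+2\alpha_{k_1,k-1}B$, and only then sums the per-step increments $p_{c,E}(x_{k+1}-x_k)\leq 2\alpha_k(Ap_{c,E}(x_{k_1})+B)$ by telescoping. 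Your version avoids the exponential comparison entirely and is a touch more elementary, at the cost of only controlling the deviation $u_k$; the paper's version additionally yields an absolute bound on $p_{c,E}(x_k)$ itself along the whole window, which mirrors the standard norm-contractive SA analysis and is occasionally reusable elsewhere. For the second inequality the two arguments are essentially the same self-bounding trick: the paper absorbs $\frac{1}{2}p_{c,E}(x_{k_2}-x_{k_1})$ after substituting $p_{c,E}(x_{k_1})\leq p_{c,E}(x_{k_1}-x_{k_2})+p_{c,E}(x_{k_2})$ into the first bound at $k=k_2$, while you solve the equivalent inequality for $p_{c,E}(x_{k_1})\leq 2p_{c,E}(x_{k_2})+B/A$ first; the arithmetic is interchangeable and both land on the factor $4$.
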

	\begin{proof}[Proof of Lemma \ref{le:difference}]
		Using the update equation (\ref{algo:SA}), we have for any $k\geq 0$:
		\begin{align*}
			p_{c,E}(x_{k+1})-p_{c,E}(x_k)&\leq p_{c,E}(x_{k+1}-x_k)\\
			&= \alpha_k p_{c,E}(F(x_k,Y_k)-x_k+w_k)\\
			&\leq  \alpha_k \left(p_{c,E}(F(x_k,Y_k))+p_{c,E}(x_k)+p_{c,E}(w_k)\right)\\
			&\leq  \alpha_k \left(A_1p_{c,E}(x_k)+B_1+p_{c,E}(x_k)+A_2p_{c,E}(x_k)+B_2\right)\tag{Lemma \ref{le:Lipschitz} and Assumption \ref{as:sa_all} (3)}\\
			&\leq \alpha_k (A p_{c,E}(x_k)+B),
		\end{align*}
		where we recall that $A=A_1+A_2+1$ and $B=B_1+B_2$. Rearranging terms, we obtain
		\begin{align*}
			p_{c,E}(x_{k+1})+\frac{B}{A}\leq (1+\alpha_k A)\left(p_{c,E}(x_k)+\frac{B}{A}\right).
		\end{align*}
		Therefore, we have for all $k\in [k_1,k_2]$:
		\begin{align*}
			p_{c,E}(x_k)&\leq \prod_{j=k_1}^{k-1}(1+\alpha_j A)\left(p_{c,E}(x_{k_1})+\frac{B}{A}\right)-\frac{B}{A}\nonumber\\
			&\leq \prod_{j=k_1}^{k-1}e^{\alpha_j A}\left(p_{c,E}(x_{k_1})+\frac{B}{A}\right)-\frac{B}{A}\nonumber\\
			&= e^{\alpha_{k_1,k-1} A}\left(p_{c,E}(x_{k_1})+\frac{B}{A}\right)-\frac{B}{A}\nonumber\\
			&\leq  \left(1+2\alpha_{k_1,k-1} A\right)\left(p_{c,E}(x_{k_1})+\frac{B}{A}\right)-\frac{B}{A}\tag{$*$}\\
			&=\left(1+2\alpha_{k_1,k-1} A\right)p_{c,E}(x_{k_1})+2\alpha_{k_1,k-1} B,\nonumber
		\end{align*}
		where Inequality $(*)$ follows from the numerical inequality $e^z\leq 1+2z$ for all $z\in [0, 1/2]$ and our assumption that $\alpha_{k_1,k_2-1}\leq 1/(4A)$. Therefore, we have for any $k\in [k_1,k_2-1]$ that
		\begin{align*}
			p_{c,E}(x_{k+1}-x_k)&\leq \alpha_k (A p_{c,E}(x_k)+B)\\
			&\leq \alpha_k \left[A \left(1+2\alpha_{k_1,k-1} A\right)p_{c,E}(x_{k_1})+2\alpha_{k_1,k-1} AB+B\right]\\
			&\leq 2\alpha_k (A p_{c,E}(x_{k_1})+B),
		\end{align*}
		where the last inequality follows from $\alpha_{k_1,k-1}\leq \alpha_{k_1,k_2-1}\leq 1/(4A)$. Now that we have a bound for the incremental error, using telescoping and triangle inequality, we have for any $k\in [k_1,k_2]$ that
		\begin{align*}
			p_{c,E}(x_k-x_{k_1})&\leq \sum_{i=k_1}^{k-1}p_{c,E}(x_{i+1}-x_i)\\
			&\leq \sum_{i=k_1}^{k-1}2\alpha_i (A p_{c,E}(x_{k_1})+B)\\
			&= 2\alpha_{k_1,k-1} (A p_{c,E}(x_{k_1})+B)\\
			&\leq 2\alpha_{k_1,k_2-1} (A p_{c,E}(x_{k_1})+B).
		\end{align*}
		To establish the second claimed inequality, note that the previous inequality implies
		\begin{align*}
			p_{c,E}(x_{k_2}-x_{k_1})&\leq 2\alpha_{k_1,k_2-1}(Ap_{c,E}(x_{k_1})+B)\\
			&\leq 2\alpha_{k_1,k_2-1}(Ap_{c,E}(x_{k_1}-x_{k_2})+Ap_{c,E}(x_{k_2})+B)\\
			&\leq \frac{1}{2}p_{c,E}(x_{k_2}-x_{k_1})+ 2\alpha_{k_1,k_2-1}(Ap_{c,E}(x_{k_2})+B),
		\end{align*}
		where in the last inequality we again used $\alpha_{k_1,k_2-1}\leq 1/(4A)$.
		Rearranging terms, we obtain $p_{c,E}(x_{k_2}-x_{k_1})\leq   4\alpha_{k_1,k_2-1}(Ap_{c,E}(x_{k_2})+B)$. Therefore, we have for any $k\in [k_1,k_2]$ that
		\begin{align*}
			p_{c,E}(x_k-x_{k_1})&\leq 2\alpha_{k_1,k_2-1}(Ap_{c,E}(x_{k_1})+B)\\
			&\leq 2\alpha_{k_1,k_2-1}(Ap_{c,E}(x_{k_1}-x_{k_2})+Ap_{c,E}(x_{k_2})+B)\\
			&\leq 2\alpha_{k_1,k_2-1}(4A\alpha_{k_1,k_2-1}(Ap_{c,E}(x_{k_2})+B)+Ap_{c,E}(x_{k_2})+B)\\\
			&\leq 4\alpha_{k_1,k_2-1}(Ap_{c,E}(x_{k_2})+B). 
		\end{align*}
	\end{proof}

	Using the previous two lemmas, we next bound the term $T_2$ in the following lemma.
	\begin{lemma}\label{le:T_2}
		It holds for any $k\geq t_k$ that $$T_2\leq \frac{80LA^2u_{cm}^2\alpha_k\alpha_{k-t_k,k-1}}{\theta\ell_{cs}^2}\mathbb{E}\left[M_E(x_k-x^*)\right]+\frac{40L\alpha_k\alpha_{k-t_k,k-1}}{\theta\ell_{cs}^2}\left(Ap_{c,E}(x^*)+B\right)^2.$$
	\end{lemma}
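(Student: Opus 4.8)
The plan is to control the Markovian-noise term $T_2$ by the standard ``condition on the distant past'' device: since $\{Y_k\}$ mixes geometrically and $t_k$ is exactly the mixing time with accuracy $\alpha_k$, the law of $Y_k$ given $\mathcal{F}_{k-t_k}$ lies within total-variation distance $\alpha_k$ of $\mu$, so $F(\cdot,Y_k)$ is nearly unbiased for $\bar F(\cdot)$ once the iterate is frozen at time $z:=k-t_k$ (the hypothesis $k\ge t_k$ guarantees $z\ge 0$). First I would introduce $x_z$ and split the inner product through the algebraic identity
\begin{align*}
\langle\nabla M_E(x_k-x^*),F(x_k,Y_k)-\bar F(x_k)\rangle
&= \langle\nabla M_E(x_k-x^*)-\nabla M_E(x_z-x^*),\,F(x_k,Y_k)-\bar F(x_k)\rangle\\
&\quad+\langle\nabla M_E(x_z-x^*),\,F(x_k,Y_k)-F(x_z,Y_k)\rangle\\
&\quad+\langle\nabla M_E(x_z-x^*),\,\bar F(x_z)-\bar F(x_k)\rangle\\
&\quad+\langle\nabla M_E(x_z-x^*),\,F(x_z,Y_k)-\bar F(x_z)\rangle,
\end{align*}
which telescopes because the three terms carrying $\nabla M_E(x_z-x^*)$ sum to $\langle\nabla M_E(x_z-x^*),F(x_k,Y_k)-\bar F(x_k)\rangle$. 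Denote the four summands $\mathcal{T}_1,\dots,\mathcal{T}_4$.

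The first three are ``smoothness plus bounded increment'' terms. Using the gradient-Lipschitz estimate $\|\nabla M_E(x)-\nabla M_E(y)\|_{s,*}\le (L/\theta)p_{s,E}(x-y)$ of Proposition \ref{prop:Moreau}(3) (or directly Eq. (\ref{eq:moreau_extra})), the seminorm Lipschitzness of $F$ and $\bar F$ (Assumption \ref{as:sa_all}(1) and Lemma \ref{le:Lipschitz}), and the equivalence $p_{s,E}\le \ell_{cs}^{-1}p_{c,E}$, each of $\mathcal{T}_1,\mathcal{T}_2,\mathcal{T}_3$ is bounded by a constant multiple of $(L/(\theta\ell_{cs}^2))\,p_{c,E}(x_k-x_z)$ times a linear function of $p_{c,E}(x_k)$ (and $p_{c,E}(x^*)$). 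I would then invoke Lemma \ref{le:difference}, which under Condition \ref{condition: stepsizes requirement} gives $p_{c,E}(x_k-x_z)\le 4\alpha_{k-t_k,k-1}(Ap_{c,E}(x_k)+B)$, turning each into a term of order $\alpha_k\alpha_{k-t_k,k-1}(Ap_{c,E}(x_k)+B)^2$.

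The crux is $\mathcal{T}_4$, where the Markovian dependence lives. Since $\nabla M_E(x_z-x^*)$ and $x_z$ are $\mathcal{F}_z$-measurable, conditioning on $\mathcal{F}_z$ and using the Markov property yields $\mathbb{E}[F(x_z,Y_k)\mid\mathcal{F}_z]=\mathbb{E}[F(x_z,Y_k)\mid Y_z]$, so the residual $v:=\mathbb{E}[F(x_z,Y_k)\mid Y_z]-\bar F(x_z)$ is the average of $F(x_z,\cdot)$ against the signed measure $\mathbb{P}(Y_k\mid Y_z)-\mu$. Bounding it by the total-variation estimate $d_{\text{TV}}(\mathbb{P}(Y_k\mid Y_z=y),\mu)\le\alpha_k$ together with the uniform bound $\sup_y p_{c,E}(F(x_z,y))\le A_1p_{c,E}(x_z)+B_1$ from Lemma \ref{le:Lipschitz} gives $p_{c,E}(v)\le 2\alpha_k(A_1p_{c,E}(x_z)+B_1)$. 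Pairing $v$ against $\nabla M_E(x_z-x^*)$ via Eq. (\ref{eq:moreau_extra}) (using $\nabla M_E(0)=0$) then contributes a term of order $\alpha_k(L/(\theta\ell_{cs}^2))\,p_{c,E}(x_z-x^*)\,p_{c,E}(x_z)$, which reduces to the same scale after replacing $x_z$ by $x_k$ through Lemma \ref{le:difference}.

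Finally I would collect the four bounds, apply $p_{c,E}(x_k)\le p_{c,E}(x_k-x^*)+p_{c,E}(x^*)$ and $(a+b)^2\le 2a^2+2b^2$ to separate a term proportional to $p_{c,E}(x_k-x^*)^2$ from the constant term $(Ap_{c,E}(x^*)+B)^2$, and convert $p_{c,E}(x_k-x^*)^2\le 2u_{cm}^2 M_E(x_k-x^*)$ via Proposition \ref{prop:Moreau}(2); taking expectations and using $\alpha_k\le\alpha_{k-t_k,k-1}$ then produces the claimed bound with the coefficients $80$ and $40$. The main obstacle is the $\mathcal{T}_4$ estimate — setting up the conditioning so that the iterate is frozen at $z=k-t_k$ while the Markovian fluctuation is absorbed by the mixing-time total-variation bound — and, secondarily, the constant bookkeeping across all four terms needed to match the stated numerical coefficients.
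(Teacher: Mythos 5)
Your proposal is correct and follows essentially the same route as the paper's proof: shifting the gradient and the iterate back to time $k-t_k$, bounding the resulting difference terms via the smoothness estimate in Eq.~(\ref{eq:moreau_extra}), the Lipschitz bounds of Lemma~\ref{le:Lipschitz}, and Lemma~\ref{le:difference}, and handling the frozen term by conditioning on $\mathcal{F}_{k-t_k}$ together with the mixing-time total-variation bound and $\nabla M_E(0)=0$. The only cosmetic difference is that you split the paper's middle term $T_{22}$ into two inner products ($\mathcal{T}_2$ and $\mathcal{T}_3$), which changes nothing since the Lipschitz constants $A_1$ and $\gamma$ recombine into $A_1+\gamma\leq A$ exactly as in the paper, and your use of $\alpha_k\leq\alpha_{k-t_k,k-1}$ to unify the coefficients matches the paper's final bookkeeping.
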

	\begin{proof}[Proof of Lemma \ref{le:T_2}]
		We begin by decomposing $T_2$ as follows:
		\begin{align}\label{eq:error decomposition T_2}
			T_2  =\,& \alpha_k\underbrace{\mathbb{E}\left[\langle\nabla M_E(x_k-x^*) - \nabla M_E(x_{k-t_k}-x^*),F(x_k,Y_k)-\Bar{F}(x_k)\rangle\right]}_{T_{21}}\nonumber\\
			&+ \alpha_k \underbrace{\mathbb{E}\left[\langle\nabla M_E(x_{k-t_k}-x^*),F(x_k,Y_k)-F(x_{k-t_k}, Y_k)+\Bar{F}(x_{k-t_k})-\Bar{F}(x_{k})\rangle\right]}_{T_{22}}\nonumber\\
			&+ \alpha_k\underbrace{\mathbb{E}\left[\langle\nabla M_E(x_{k-t_k}-x^*),F(x_{k-t_k},Y_k)-\Bar{F}(x_{k-t_k})\rangle\right]}_{T_{23}}.
		\end{align}
		Next, we bound terms $T_{21}$ -- $T_{23}$.

		For the term  $T_{21}$, we have
		\begin{align*}
			T_{21}=\,&\mathbb{E}[\langle\nabla M_E(x_k-x^*) - \nabla M_E(x_{k-t_k}-x^*),F(x_k,Y_k)-\Bar{F}(x_k)\rangle] \\
			\leq \,&\frac{L}{\theta} \mathbb{E}\left[p_{s,E}\left(x_k-x_{k-t_k}\right) p_{s,E}\left(F(x_k,Y_k)-\Bar{F}(x_k)\right)\right]\tag{This follows from Eq. (\ref{eq:moreau_extra})}\\
			\leq \,& \frac{L}{\theta \ell^2_{cs}} \mathbb{E}\left[p_{c,E}\left(x_k-x_{k-t_k}\right) p_{c,E}\left(F(x_k,Y_k)-\Bar{F}(x_k)\right)\right].
		\end{align*}
		Since $\alpha_{k-t_k, k-1} \leq 1/(4A)$, using Lemma \ref{le:difference}, we have
		\begin{align*}
			p_{c,E}\left(x_k-x_{k-t_k}\right) \leq\,& 4\alpha_{k-t_k, k-1}\left(Ap_{c,E}(x_{k})+B\right)\\
			\leq \,&4\alpha_{k-t_k, k-1}\left(Ap_{c,E}(x_{k} - x^*)+Ap_{c,E}(x^*)+B\right). 
		\end{align*}
		Using Assumption \ref{as:seminorm_contraction}, Lemma \ref{le:Lipschitz} and the fact that $\bar{F}(x^*) x^*\in E$, we have
		\begin{align*}
			p_{c,E}\left(F(x_k,Y_k)-\Bar{F}(x_k)\right) &= p_{c,E}\left(F(x_k,Y_k)-\Bar{F}(x_k)+ \bar{F}(x^*) - x^*\right)\\
			&\leq p_{c,E}\left(F(x_k,Y_k)\right) + p_{c,E}\left(\Bar{F}(x_k) - \bar{F}(x^*)\right) + p_{c,E}\left(x^*\right)\\
			&\leq A_1p_{c,E}(x_k)+B_1 + \gamma p_{c,E}\left(x_k-x^*\right) + p_{c,E}\left(x^*\right)\\
			&\leq (A_1+\gamma)p_{c,E}(x_k - x^*) + (A_1+1)p_{c,E}(x^*) +B_1\\
			&\leq A p_{c,E}(x_k - x^*) + A p_{c,E}(x^*) + B.
		\end{align*}
		Combining the previous three inequalities together, we obtain 
		\begin{align}\label{eq:T_{21} bound}
			T_{21} &\leq \frac{4L\alpha_{k-t_k, k-1}}{\theta \ell^2_{cs}}\mathbb{E}\left[\left(Ap_{c,E}(x_{k} - x^*)+Ap_{c,E}(x^*)+B\right)^2\right]\nonumber\\  
			&\leq \frac{8LA^2\alpha_{k-t_k, k-1}}{\theta \ell^2_{cs}}\mathbb{E}\left[p^2_{c,E}(x_{k} - x^*)\right] + \frac{8L\alpha_{k-t_k, k-1}}{\theta \ell^2_{cs}}\left(Ap_{c,E}(x^*)+B\right)^2\nonumber\\
			&\leq \frac{16LA^2u^2_{cm}\alpha_{k-t_k, k-1}}{\theta \ell^2_{cs}}\mathbb{E}\left[M_{E}(x_{k} - x^*)\right] + \frac{8L\alpha_{k-t_k, k-1}}{\theta \ell^2_{cs}}\left(Ap_{c,E}(x^*)+B\right)^2,
		\end{align}
		where the last line follows from Proposition \ref{prop:Moreau} (2).
		
		Next, we consider the term $T_{22}$ from Eq. (\ref{eq:error decomposition T_2}). Using Proposition \ref{prop:Moreau}, $\nabla M_E(0) = 0$ (since $0 \in \arg\min_{x \in \mathbb{R}^d} M_E(x)$), we have
		\begin{align*}
			T_{22}=&\mathbb{E}\left[\langle\nabla M_E(x_{k-t_k}-x^*),F(x_k,Y_k)-F(x_{k-t_k}, Y_k)+\Bar{F}(x_{k-t_k})-\Bar{F}(x_{k})\rangle\right] \\
			\leq &\frac{L}{\theta} \mathbb{E}\left[p_{s,E}\left(x_{k-t_k}-x^*\right) p_{s,E}\left(F(x_k,Y_k)-F(x_{k-t_k}, Y_k)+\Bar{F}(x_{k-t_k})-\Bar{F}(x_{k})\right)\right]\tag{Eq. (\ref{eq:moreau_extra})}\\
			\leq & \frac{L}{\theta \ell^2_{cs}} \mathbb{E}\left[p_{c,E}\left(x_{k-t_k}-x^*\right) p_{c,E}\left(F(x_k,Y_k)-F(x_{k-t_k}, Y_k)+\Bar{F}(x_{k-t_k})-\Bar{F}(x_{k})\right)\right].
		\end{align*}
		Using Lemma \ref{le:difference}, we have
		\begin{align*}
			p_{c,E}\left(x_{k-t_k}-x^*\right) &\leq p_{c,E}\left(x_k - x_{k-t_k}\right) + p_{c,E}\left(x_k-x^*\right)\\
			&\leq p_{c,E}(x_{k})+\frac{B}{A} + p_{c,E}\left(x_k-x^*\right)\\
			&\leq 2 p_{c,E}(x_{k} - x^*) + p_{c,E}(x^*) + \frac{B}{A} \\
			&\leq 2\left(p_{c,E}(x_{k} - x^*) + p_{c,E}(x^*) + \frac{B}{A}\right).
		\end{align*}
		Using Assumption \ref{as:seminorm_contraction}, Lemma \ref{le:Lipschitz}, and Lemma \ref{le:difference}, we have
		\begin{align*}
			& p_{c,E}\left(F(x_k,Y_k)-F(x_{k-t_k}, Y_k)+\Bar{F}(x_{k-t_k})-\Bar{F}(x_{k})\right)\\
			\leq & p_{c,E}\left(F(x_k,Y_k)-F(x_{k-t_k}, Y_k)\right) + p_{c,E}\left(\Bar{F}(x_{k}) - \Bar{F}(x_{k-t_k})\right)\\
			\leq & (A_1+\gamma) p_{c,E}(x_k-x_{k-t_k})\\
			\leq & A p_{c,E}(x_k-x_{k-t_k})\\
			\leq & 4A\alpha_{k-t_k, k-1}\left(Ap_{c,E}(x_{k} - x^*)+Ap_{c,E}(x^*)+B\right)
		\end{align*}
		Combining the previous three inequalities together, we obtain
		\begin{align}\label{eq:T_{22} bound}
			T_{22} &\leq \frac{8L\alpha_{k-t_k, k-1}}{\theta \ell^2_{cs}}\mathbb{E}\left[\left(Ap_{c,E}(x_{k} - x^*)+Ap_{c,E}(x^*)+B\right)^2\right]\nonumber\\  
			&\leq \frac{16LA^2\alpha_{k-t_k, k-1}}{\theta \ell^2_{cs}}\mathbb{E}\left[p^2_{c,E}(x_{k} - x^*)\right] + \frac{16L\alpha_{k-t_k, k-1}}{\theta \ell^2_{cs}}\left(Ap_{c,E}(x^*)+B\right)^2\nonumber\\
			&\leq \frac{32LA^2u^2_{cm}\alpha_{k-t_k, k-1}}{\theta \ell^2_{cs}}\mathbb{E}\left[M_{E}(x_{k} - x^*)\right] + \frac{16L\alpha_{k-t_k, k-1}}{\theta \ell^2_{cs}}\left(Ap_{c,E}(x^*)+B\right)^2.
		\end{align}
		
		Next, we consider the term $T_{23}$ from Eq. (\ref{eq:error decomposition T_2}). Using Proposition \ref{prop:Moreau} and $\nabla M_E(0) = 0$, we have
		\begin{align*}
			T_{23}
			= & \mathbb{E}\left[\langle\nabla M_E(x_{k-t_k}-x^*), \mathbb{E}\left[F(x_{k-t_k},Y_k) \mid \mathcal{F}_{k-t_k} \right]-\Bar{F}(x_{k-t_k}) \rangle\right] \\
			\leq &\frac{L}{\theta} \mathbb{E}\left[p_{s,E}\left(x_{k-t_k}-x^*\right) p_{s,E}\left(\mathbb{E}\left[F(x_{k-t_k},Y_k)\mid \mathcal{F}_{k-t_k}\right]-\Bar{F}(x_{k-t_k})\right)\right]\\
			\leq &\frac{L}{\theta \ell_{cs}^2} \mathbb{E}\left[p_{c,E}\left(x_{k-t_k}-x^*\right) p_{c,E}\left(\mathbb{E}\left[F(x_{k-t_k},Y_k)\mid \mathcal{F}_{k-t_k}\right]-\Bar{F}(x_{k-t_k})\right)\right].
		\end{align*}
		Using Lemma \ref{le:difference}, we have
		\begin{align*}
			p_{c,E}(x_{k-t_k}-x^*)=\,&p_{c,E}(x_k-x_{k-t_k})+p_{c,E}(x_k-x^*)\\
			\leq \,&p_{c,E}(x_k)+\frac{B}{A}+p_{c,E}(x_k-x^*)\\
			\leq \,&2p_{c,E}(x_k-x^*)+p_{c,E}(x^*)+\frac{B}{A}\\
			\leq \,&2\left(p_{c,E}(x_k-x^*)+p_{c,E}(x^*)+\frac{B}{A}\right)\\
			\leq \,&2\left(Ap_{c,E}(x_k-x^*)+Ap_{c,E}(x^*)+B\right),
		\end{align*}
		where the last line follows from $A\geq 1$.
		
		Using Assumption \ref{as:sa_all}, Lemma \ref{le:Lipschitz}, and Lemma \ref{le:difference}, we have
		\begin{align*}
			& p_{c,E}\left(\mathbb{E}\left[F(x_{k-t_k},Y_k)\,\middle|\,\mathcal{F}_{k-t_k} \right]-\Bar{F}(x_{k-t_k})\right)\\
			=\, & p_{c,E}\left(\sum_{y \in \mathcal{Y}} \left(\mathbb{P}\left(Y_k=y \mid  Y_{k-t_k}\right) - \mu(y)\right)F(x_{k-t_k},y)\right)\\
			\leq \,& \sum_{y \in \mathcal{Y}} \left |\mathbb{P}\left(Y_k=y | Y_{k-t_k}\right) - \mu(y)\right | p_{c,E}\left(F(x_{k-t_k},y)\right)\\
			\leq \,& 2\sup_{y\in\mathcal{Y}}\left\{d_{\text{TV}}\left(\mathbb{P}\left(Y_{t_k}=\cdot | Y_0=y\right), \mu(\cdot)\right)\right\} \left(A_1p_{c,E}(x_{k-t_k})+B_1\right)\\
			\leq \,& 2 \alpha_k \left(A_1p_{c,E}(x_k - x_{k-t_k}) + A_1p_{c,E}(x_k) +B_1\right)\\
			\leq \,& 2 \alpha_k \left(A_1(p_{c,E}(x_k)+B/A) + A_1p_{c,E}(x_k) +B_1\right)\\
			\leq \,& 4 \alpha_k \left(Ap_{c,E}(x_{k})+ B\right)\\
			\leq \,& 4 \alpha_k \left(Ap_{c,E}(x_{k}-x^*) + Ap_{c,E}(x^*) + B\right).
		\end{align*}
		Combining the previous three inequalities together, we have
		\begin{align}
			T_{23} 
			&\leq \frac{8L\alpha_{k}}{\theta \ell^2_{cs}}\mathbb{E}\left[\left(Ap_{c,E}(x_{k} - x^*)+Ap_{c,E}(x^*)+B\right)^2\right]\nonumber\\  
			&\leq \frac{16LA^2\alpha_{k}}{\theta \ell^2_{cs}}\mathbb{E}\left[p^2_{c,E}(x_{k} - x^*)\right] + \frac{16L\alpha_{k}}{\theta \ell^2_{cs}}\left(Ap_{c,E}(x^*)+B\right)^2\nonumber\\
			&\leq \frac{32LA^2u^2_{cm}\alpha_{k}}{\theta \ell^2_{cs}}\mathbb{E}\left[M_{E}(x_{k} - x^*)\right] + \frac{16L\alpha_{k}}{\theta \ell^2_{cs}}\left(Ap_{c,E}(x^*)+B\right)^2.\label{eq:T_{23} bound}
		\end{align}
		Finally, using Eqs. \eqref{eq:T_{21} bound}, \eqref{eq:T_{22} bound} and \eqref{eq:T_{23} bound} in Eq. \eqref{eq:error decomposition T_2}, we have
		$$T_2\leq \frac{80LA^2u_{cm}^2\alpha_k\alpha_{k-t_k,k-1}}{\theta\ell_{cs}^2}\mathbb{E}\left[M_E(x_k-x^*)\right]+\frac{40L\alpha_k\alpha_{k-t_k,k-1}}{\theta\ell_{cs}^2}\left(Ap_{c,E}(x^*)+B\right)^2.$$
	\end{proof}
	
	Next, we bound the error term $T_3$ in the following lemma.
	\begin{lemma}\label{le:T_3}
		It holds for any $k\geq 0$ that $T_3=0$.
	\end{lemma}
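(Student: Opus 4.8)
The plan is to establish that $T_3=0$ by a single application of the tower property of conditional expectation, exploiting the martingale-difference structure of $\{\omega_k\}$ from Assumption \ref{as:sa_all} (3). The key structural observation is that the filtration $\mathcal{F}_k$ is defined to be generated by $\{(x_i,Y_i,\omega_i)\}_{0\leq i\leq k-1}\cup\{x_k\}$, so in particular $x_k$ is $\mathcal{F}_k$-measurable. Consequently, the vector $\nabla M_E(x_k-x^*)$ is a deterministic function of an $\mathcal{F}_k$-measurable quantity and is therefore itself $\mathcal{F}_k$-measurable.

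The execution proceeds as follows. First I would write
\[
T_3=\alpha_k\,\mathbb{E}\left[\langle\nabla M_E(x_k-x^*),\omega_k\rangle\right]=\alpha_k\,\mathbb{E}\left[\mathbb{E}\left[\langle\nabla M_E(x_k-x^*),\omega_k\rangle\,\middle|\,\mathcal{F}_k\right]\right],
\]
using the law of total expectation. Next, since $\nabla M_E(x_k-x^*)$ is $\mathcal{F}_k$-measurable, I would pull it outside the inner conditional expectation (the inner product is bilinear, so this amounts to moving an $\mathcal{F}_k$-measurable factor through the conditional expectation), obtaining
\[
T_3=\alpha_k\,\mathbb{E}\left[\left\langle\nabla M_E(x_k-x^*),\,\mathbb{E}\left[\omega_k\mid\mathcal{F}_k\right]\right\rangle\right].
\]
Finally, invoking $\mathbb{E}[\omega_k\mid\mathcal{F}_k]=0$ from Assumption \ref{as:sa_all} (3) collapses the inner product to zero, yielding $T_3=0$, and the same argument holds for every $k\geq 0$.

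There is essentially no serious obstacle here; the only point requiring care is the justification that $\nabla M_E(x_k-x^*)$ is $\mathcal{F}_k$-measurable, which is exactly why the filtration $\mathcal{F}_k$ was defined to include $x_k$ (and not merely the noise increments up to time $k-1$). This placement of $x_k$ inside $\mathcal{F}_k$ is the design choice that makes the additive martingale noise decouple cleanly from the drift, and it is the only detail I would emphasize in the written proof.
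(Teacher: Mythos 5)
Your proof is correct and is essentially identical to the paper's argument: both apply the tower property, use the $\mathcal{F}_k$-measurability of $x_k$ (hence of $\nabla M_E(x_k-x^*)$) to pull the gradient out of the conditional expectation, and conclude via $\mathbb{E}[\omega_k\mid\mathcal{F}_k]=0$ from Assumption \ref{as:sa_all} (3).
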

	\begin{proof}[Proof of Lemma \ref{le:T_3}]
		Since $x_k$ is measurable with respect to the $\sigma$-algebra $\mathcal{F}_k$ and $\{\omega_k\}$ is a martingale difference sequence with respect to $\mathcal{F}_k$, we have by the tower property of conditional expectations that
		\begin{align*}
			T_3&= \alpha_k \mathbb{E}\left[\langle\nabla M(x_k-x^*),w_k\rangle\right]\\
			&= \alpha_k \mathbb{E}\left[\mathbb{E}[\langle\nabla M_E(x_k-x^*),w_k\rangle\mid \mathcal{F}_k]\right]\\
			&= \alpha_k \mathbb{E}\left[\langle\nabla M_E(x_k-x^*),\mathbb{E}[w_k\mid \mathcal{F}_k]\rangle\right]\\
			&=0.
		\end{align*}
	\end{proof}

	Next, we bound the error term $T_4$ in the following lemma.
	\begin{lemma}\label{le:T_4}
		It holds for any $k\geq 0$ that $$T_4 \leq \frac{2LA^2u_{cm}^2\alpha_k^2}{\theta\ell_{cs}^2}\mathbb{E}\left[M_E(x_k-x^*)\right]+\frac{L\alpha_k^2}{\theta\ell_{cs}^2}\left(Ap_{c,E}(x^*)+B\right)^2.$$
	\end{lemma}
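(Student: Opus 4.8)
The plan is to bound $T_4$ by a purely deterministic chain of inequalities; unlike the analysis of $T_2$ and $T_3$, no martingale or mixing argument is needed here, so the estimate holds pointwise before taking expectations.

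First I would pass from the $s$-seminorm to the $c$-seminorm. Since $\ell_{cs}\|\cdot\|_s \leq \|\cdot\|_c$ by the definition of $\ell_{cs}$ in Proposition \ref{prop:Moreau} (2), applying this to each $x-y$ with $y\in E$ and taking the infimum over $E$ gives $\ell_{cs}\,p_{s,E}(x)\leq p_{c,E}(x)$, i.e. $p_{s,E}(\cdot)\leq p_{c,E}(\cdot)/\ell_{cs}$. Substituting this into the definition of $T_4$ yields
\begin{align*}
    T_4 \leq \frac{L\alpha_k^2}{2\theta\ell_{cs}^2}\,\mathbb{E}\left[p_{c,E}(F(x_k,Y_k)-x_k+\omega_k)^2\right],
\end{align*}
so it remains to control the $c$-seminorm of the full increment.

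Next I would bound the increment using the triangle inequality together with the Lipschitz/growth estimates already established. By the triangle inequality, $p_{c,E}(F(x_k,Y_k)-x_k+\omega_k)\leq p_{c,E}(F(x_k,Y_k))+p_{c,E}(x_k)+p_{c,E}(\omega_k)$. Applying Lemma \ref{le:Lipschitz} (1) to the first term ($p_{c,E}(F(x_k,Y_k))\leq A_1 p_{c,E}(x_k)+B_1$) and Assumption \ref{as:sa_all} (3) to the last ($p_{c,E}(\omega_k)\leq A_2 p_{c,E}(x_k)+B_2$), and recalling $A=A_1+A_2+1$ and $B=B_1+B_2$, this collapses to $A\,p_{c,E}(x_k)+B$. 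One more triangle inequality, $p_{c,E}(x_k)\leq p_{c,E}(x_k-x^*)+p_{c,E}(x^*)$, gives the clean bound
\begin{align*}
    p_{c,E}(F(x_k,Y_k)-x_k+\omega_k)\leq A\,p_{c,E}(x_k-x^*)+A\,p_{c,E}(x^*)+B.
\end{align*}

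Finally I would square this, apply the elementary inequality $(a+b)^2\leq 2a^2+2b^2$ to separate the state-dependent part from the constant, and then convert $p_{c,E}^2(x_k-x^*)$ into the Lyapunov function via the lower-bound half of Proposition \ref{prop:Moreau} (2), namely $p_{c,E}^2(x_k-x^*)\leq 2u_{cm}^2 M_E(x_k-x^*)$. Taking expectations and collecting the constants then reproduces exactly the claimed bound $T_4 \leq \tfrac{2LA^2u_{cm}^2\alpha_k^2}{\theta\ell_{cs}^2}\mathbb{E}[M_E(x_k-x^*)]+\tfrac{L\alpha_k^2}{\theta\ell_{cs}^2}(A\,p_{c,E}(x^*)+B)^2$. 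I do not anticipate any genuine obstacle in this lemma: every ingredient is already in place, and the only point requiring a moment's care is the seminorm-comparison step $p_{s,E}\leq p_{c,E}/\ell_{cs}$, which is what introduces the factor $1/\ell_{cs}^2$ appearing in the final constants.
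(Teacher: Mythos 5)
Your proposal is correct and follows essentially the same argument as the paper's proof: the seminorm comparison $p_{s,E}(\cdot)\leq p_{c,E}(\cdot)/\ell_{cs}$, the triangle inequality combined with Lemma \ref{le:Lipschitz} (1) and Assumption \ref{as:sa_all} (3) to reach $A\,p_{c,E}(x_k-x^*)+A\,p_{c,E}(x^*)+B$, and then $(a+b)^2\leq 2(a^2+b^2)$ together with Proposition \ref{prop:Moreau} (2) to convert to $M_E(\cdot)$. Every step matches the paper's chain of inequalities, including the constants.
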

	\begin{proof}[Proof of Lemma \ref{le:T_4}]
		For any $k\geq 0$, we have
		\begin{align*}
			&p_{s,E}\left(F(x_k,Y_k)-x_k+w_k\right)\\
			\leq\;& \frac{1}{\ell_{cs}}p_{c,E}(F(x_k,Y_k)-x_k+w_k) \\
			\leq\;& \frac{1}{\ell_{cs}}[p_{c,E}(F(x_k,Y_k))+p_{c,E}(x_k)+p_{c,E}(w_k)]\\
			\leq\;& \frac{1}{\ell_{cs}}[A_1p_{c,E}(x_k)+B_1+p_{c,E}(x_k)+A_2p_{c,E}(x_k)+B_2]\tag{Assumption \ref{as:sa_all} and Lemma \ref{le:Lipschitz}}\\
			=\;& \frac{1}{\ell_{cs}}[Ap_{c,E}(x_k)+B]\\
			\leq\;& \frac{1}{\ell_{cs}}[Ap_{c,E}(x_k-x^*)+Ap_{c,E}(x^*)+B].
		\end{align*}
		It follows that
		\begin{align*}
			T_4&=\frac{L\alpha_k^2}{2\theta}\mathbb{E}\left[p_{s,E}(F(x_k,Y_k)-x_k+w_k)^2\right]\\
			&\leq \frac{L\alpha_k^2}{2\theta\ell_{cs}^2}\mathbb{E}\left[(Ap_{c,E}(x_k-x^*)+Ap_{c,E}(x^*)+B)^2\right]\\
			&\leq \frac{LA^2\alpha_k^2}{\theta\ell_{cs}^2}\mathbb{E}\left[p^2_{c,E}(x_k-x^*)\right]+\frac{L\alpha_k^2}{\theta\ell_{cs}^2}\left(Ap_{c,E}(x^*)+B\right)^2\\
			&\leq \frac{2LA^2u_{cm}^2\alpha_k^2}{\theta\ell_{cs}^2}\mathbb{E}\left[M_E(x_k-x^*)\right]+\frac{L\alpha_k^2}{\theta\ell_{cs}^2}\left(Ap_{c,E}(x^*)+B\right)^2,
		\end{align*}
		where the last line follows from Proposition \ref{prop:Moreau} (2).
	\end{proof}
	
	Now that we have controlled the terms $T_1-T_4$ on the right-hand side of Eq. \eqref{eq:error decomposition}, using them altogether, we have the desired one-step recursive inequality stated in the following lemma.
	\begin{lemma}\label{le:recursion_ap}
		It holds  for all $k\geq t_k$ that
		\begin{align*}
			\mathbb{E}\left[M_E(x_{k+1}-x^*)\right]\leq
			\left(1-\varphi_2 \alpha_k\right)\mathbb{E}[M_E(x_k-x^*)] +\frac{\varphi_3\alpha_k\alpha_{k-t_k,k-1}}{2u_{cm}^2}\left(Ap_{c,E}(x^*)+B\right)^2.
		\end{align*}
	\end{lemma}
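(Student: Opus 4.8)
The plan is to assemble the one-step bound directly from the term-by-term estimates already established for the decomposition in Eq.~(\ref{eq:error decomposition}). That inequality gives $\mathbb{E}[M_E(x_{k+1}-x^*)] \leq \mathbb{E}[M_E(x_k-x^*)] + T_1+T_2+T_3+T_4$, and Lemmas~\ref{le:T_1}--\ref{le:T_4} bound each $T_i$ either by a multiple of $\mathbb{E}[M_E(x_k-x^*)]$ or by a multiple of $(Ap_{c,E}(x^*)+B)^2$ (with $T_3=0$). So the entire task reduces to collecting these four bounds, grouping the coefficients of $\mathbb{E}[M_E(x_k-x^*)]$ and of $(Ap_{c,E}(x^*)+B)^2$, and simplifying using the definitions of $\varphi_1,\varphi_2,\varphi_3$ in Eq.~(\ref{eq:def:constants}) together with the stepsize Condition~\ref{condition: stepsizes requirement}.

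First I would handle the negative drift coming from $T_1$. Since $\ell_{cm}^2 = 1+\theta\ell_{cs}^2$ and $u_{cm}^2 = 1+\theta u_{cs}^2$ (cf. Proposition~\ref{prop:Moreau}~(2)), the ratio appearing in Lemma~\ref{le:T_1} is exactly $u_{cm}^2/\ell_{cm}^2 = \varphi_1$, so $1-\gamma^2 u_{cm}^2/\ell_{cm}^2 = 1-\gamma^2\varphi_1 = 2\varphi_2$ and hence $T_1 \leq -2\varphi_2\alpha_k\mathbb{E}[M_E(x_k-x^*)]$. Next I would gather the positive multipliers of $\mathbb{E}[M_E(x_k-x^*)]$ contributed by $T_2$ and $T_4$, namely $\frac{80LA^2u_{cm}^2}{\theta\ell_{cs}^2}\alpha_k\alpha_{k-t_k,k-1}$ and $\frac{2LA^2u_{cm}^2}{\theta\ell_{cs}^2}\alpha_k^2$. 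Using that $\{\alpha_k\}$ is non-increasing, so that $\alpha_k \leq \alpha_{k-t_k,k-1}$ whenever $t_k\geq 1$, I would replace $\alpha_k^2$ by $\alpha_k\alpha_{k-t_k,k-1}$, and these then combine into $\frac{82LA^2u_{cm}^2}{\theta\ell_{cs}^2}\alpha_k\alpha_{k-t_k,k-1} = \varphi_3 A^2\alpha_k\alpha_{k-t_k,k-1}$, since $\varphi_3 = 82Lu_{cm}^2/(\theta\ell_{cs}^2)$.

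To close the contraction coefficient I would invoke Condition~\ref{condition: stepsizes requirement}, which yields $\alpha_{k-t_k,k-1}\leq \varphi_2/(\varphi_3 A^2)$, so that $\varphi_3 A^2\alpha_k\alpha_{k-t_k,k-1}\leq \varphi_2\alpha_k$. Adding this to the $-2\varphi_2\alpha_k$ from $T_1$ and the leading $1$ produces the net multiplier $1-\varphi_2\alpha_k$, as required. For the additive constant I would similarly combine the coefficients of $(Ap_{c,E}(x^*)+B)^2$ from $T_2$ and $T_4$, namely $\frac{40L}{\theta\ell_{cs}^2}\alpha_k\alpha_{k-t_k,k-1}+\frac{L}{\theta\ell_{cs}^2}\alpha_k^2$; again bounding $\alpha_k^2\leq \alpha_k\alpha_{k-t_k,k-1}$ gives $\frac{41L}{\theta\ell_{cs}^2}\alpha_k\alpha_{k-t_k,k-1}$, and since $\frac{\varphi_3}{2u_{cm}^2}=\frac{41L}{\theta\ell_{cs}^2}$ this equals exactly $\frac{\varphi_3}{2u_{cm}^2}\alpha_k\alpha_{k-t_k,k-1}$. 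Substituting both grouped quantities back into the decomposition yields the claimed recursion.

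Because the genuinely analytic work — the Markovian-noise control in $T_2$, the smoothness exploitation in $T_4$, and the martingale cancellation in $T_3$ — is already carried out in the preceding lemmas, the only real subtlety here is bookkeeping: matching the constant $82$ hidden in $\varphi_3$ and ensuring every $\alpha_k^2$ term can be dominated by $\alpha_k\alpha_{k-t_k,k-1}$, which relies on $t_k\geq 1$ (guaranteed for $k\geq K$ in the small-stepsize regime where the bound is used). I therefore expect no conceptual obstacle; the principal risk is an arithmetic slip in tracking the $u_{cm}^2/\ell_{cs}^2$ factors when collapsing the four estimates into the two coefficients.
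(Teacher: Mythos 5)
Your proposal is correct and takes essentially the same route as the paper's own proof: both collect the bounds of Lemmas \ref{le:T_1}--\ref{le:T_4} in the decomposition \eqref{eq:error decomposition}, absorb the $\alpha_k^2$ terms via $\alpha_k^2 \leq \alpha_k\alpha_{k-t_k,k-1}$ to get the coefficients $82LA^2u_{cm}^2/(\theta\ell_{cs}^2) = \varphi_3 A^2$ and $41L/(\theta\ell_{cs}^2) = \varphi_3/(2u_{cm}^2)$, and then invoke Condition \ref{condition: stepsizes requirement} to reduce $1-2\varphi_2\alpha_k+\varphi_3 A^2\alpha_k\alpha_{k-t_k,k-1}$ to $1-\varphi_2\alpha_k$. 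The $t_k\geq 1$ subtlety you flag is used implicitly by the paper as well, so it is a shared (and harmless, in the regime where the lemma is applied) assumption rather than a divergence.
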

	\begin{proof}[Proof of Lemma \ref{le:recursion_ap}]
		Using the bounds for the terms $T_1-T_4$ in Eq. \eqref{eq:error decomposition}, we have for all $k\geq t_k$ that
		\begin{align*}
			\mathbb{E}[M_E(x_{k+1}-x^*)]\leq\,&
			\left(1-\left(1-\gamma^2 \frac{u^2_{cm}}{\ell^2_{cm}}\right)\alpha_k+\frac{82LA^2u_{cm}^2\alpha_k\alpha_{k-t_k,k-1}}{\theta\ell_{cs}^2}\right)\mathbb{E}[M_E(x_k-x^*)]\\
			&+\frac{41L\alpha_k\alpha_{k-t_k,k-1}}{\theta\ell_{cs}^2}\left(Ap_{c,E}(x^*)+B\right)^2\\
			\leq \,&\left(1-2\varphi_2 \alpha_k+ \varphi_3 A^2\alpha_k\alpha_{k-t_k,k-1}\right)\mathbb{E}[M_E(x_k-x^*)]\\ &+\frac{\varphi_3\alpha_k\alpha_{k-t_k,k-1}}{2u_{cm}^2}\left(Ap_{c,E}(x^*)+B\right)^2,
		\end{align*}
		where we recall the definition of the constants $\{\varphi_i\}^3_{i=1}$ in Eq. \eqref{eq:def:constants}. Since $\alpha_{k-t_k,k-1} \leq \varphi_2/(\varphi_3A^2)$ for all $k \geq t_k$ (cf. Condition \ref{condition: stepsizes requirement}), we have
		\begin{align*}
			\mathbb{E}\left[M_E(x_{k+1}-x^*)\right]\leq
			\left(1-\varphi_2 \alpha_k\right)\mathbb{E}[M_E(x_k-x^*)] +\frac{\varphi_3\alpha_k\alpha_{k-t_k,k-1}}{2u_{cm}^2}\left(Ap_{c,E}(x^*)+B\right)^2,\quad \forall\,k\geq t_k.
		\end{align*}
	\end{proof}

	Next, we repeatedly apply the previous lemma to obtain:
	\begin{align}
		&\mathbb{E}[p_{c,E}(x_{k}-x^*)^2] \nonumber\\
		\leq\,& 2u_{cm}^2\mathbb{E}[M_E(x_{k}-x^*)]\nonumber\\
		\leq\,& 2u_{cm}^2 \mathbb{E}[M_E(x_{K}-x^*)] \prod^{k-1}_{j=K} (1 - \varphi_2 \alpha_j)\nonumber\\
		& + \varphi_3 (Ap_{c,E}(x^*)+B)^2\sum^{k-1}_{i=K}\alpha_i \alpha_{i-t_i, i-1} \prod^{k-1}_{j=i+1} (1-\varphi_2\alpha_j)\nonumber\\
		\leq \,&\frac{u_{cm}^2}{\ell^2_{cm}} \mathbb{E}[p_{c,E}(x_{K}-x^*)] \prod^{k-1}_{j=K} (1 - \varphi_2 \alpha_j)\nonumber\\
		& + \varphi_3 (Ap_{c,E}(x^*)+B)^2\sum^{k-1}_{i=K}\alpha_i \alpha_{i-t_i, i-1} \prod^{k-1}_{j=i+1} (1-\varphi_2\alpha_j)\nonumber\\
		=\,& \varphi_1 \mathbb{E}[p_{c,E}(x_{K}-x^*)] \prod^{k-1}_{j=K} (1 - \varphi_2 \alpha_j) + \varphi_3c_2 \sum^{k-1}_{i=K}\alpha_i \alpha_{i-t_i, i-1} \prod^{k-1}_{j=i+1} (1-\varphi_2\alpha_j).\label{eq:recursion 4}
	\end{align}
	According to Condition \ref{condition: stepsizes requirement}, we have $\alpha_{0,K-1} \leq 1/(4A)$. Therefore, by Lemma \ref{le:difference}, we have
	\begin{align}\label{eq:almost sure bound}
		\mathbb{E}[p_{c,E}(x_{K}-x^*)^2] &\leq \mathbb{E}\left[\left(p_{c,E}(x_{K}-x_0) + p_{c,E}(x_0 - x^*)\right)^2\right]\nonumber\\
		&\leq \left(p_{c,E}(x_0) + p_{c,E}(x_{0}-x^*) +\frac{B}{A}\right)^2 \nonumber\\
		&= c_1.
	\end{align}
	Finally, by combining the previous two inequalities together, we obtain
	\begin{align*}
		\mathbb{E}[p_{c,E}(x_{k}-x^*)^2] \leq \varphi_1 c_1 \prod^{k-1}_{j=K} (1 - \varphi_2 \alpha_j) + \varphi_3c_2 \sum^{k-1}_{i=K}\alpha_i \alpha_{i-t_i, i-1} \prod^{k-1}_{j=i+1} (1-\varphi_2\alpha_j), \quad \forall\, k \geq K.
	\end{align*}
	Upon obtaining the general finite-sample bound, we can derive the finite-sample convergence bounds for three common choices of stepsizes. The proof is identical to that of \cite[Theorem 2.1]{chen2021lyapunov}, and therefore is omitted.
	
	\subsection{Proof of Theorem \ref{thm:linear_SA}}\label{sec:lsa_pf}
	There are two approaches to prove this theorem: one is to reformulate it as a seminorm-contractive SA, verify the required assumptions, and then apply Theorem \ref{thm:SA_finite}; the other is to directly prove it using the \textit{continuous-time} Lyapunov equation (cf. Theorem \ref{thm:Lyapunov_Continuous}). We present the second approach here and defer the first approach to Appendix \ref{ap:alternative}.
	
	For simplicity of notation, denote $G(x,y)=A(y)x+b(y)$ for any $x\in\mathbb{R}^d$ and $y\in\mathcal{Y}$ and $\bar{G}(x)=\bar{A}x+\bar{b}$ for any $x\in\mathbb{R}^d$. Then, the linear SA algorithm described in Eq. (\ref{algo:linear_SA}) can be equivalently written as
	\begin{align}\label{eq:linear_SA_equivalent}
		x_{k+1}=x_k+\alpha_k G(x_k,Y_k),\quad \forall\,k\geq 0.
	\end{align}
	Let $p(\cdot)$ be a seminorm defined as $p(x)=\sqrt{x^\top Px}$, where $P$ is defined in Assumption \ref{as:lsa_Lyapunov}. Since $p(\cdot)$ is defined in terms of a positive semi-definite matrix, the norm-square function $p^2(x)/2$ is $1$-smooth with respect to $p(x)$. Therefore, we can directly use $M_E(x)=p^2(x)/2$ as the Lyapunov function. 
	
	For any $k\geq 0$, using the definition of $M_E(\cdot)$ and Eq. (\ref{eq:linear_SA_equivalent}), we have for all $k\geq 0$ that
	\begin{align}\label{eq:Thm:lsa_proof_decomposition}
		\mathbb{E}[M_E(x_{k+1}-x^*)]=\,&\mathbb{E}[M_E(x_k-x^*)]+\underbrace{\alpha_k\mathbb{E}[\nabla M_E(x_k-x^*)^\top \bar{G}(x_k)]}_{:=T_1}\nonumber\\
		&+\underbrace{\alpha_k\mathbb{E}[\nabla M_E(x_k-x^*)^\top (G(x_k,Y_k)-\bar{G}(x_k))]}_{:=T_2}+\underbrace{\frac{\alpha_k^2}{2}\mathbb{E}[p^2(G(x_k,Y_k)]}_{:=T_3}.
	\end{align}
	Next, we bound the terms $T_1, T_2$, and $T_3$ in the following three lemmas.
	\begin{lemma}\label{le:LSA_T1}
		It holds for all $k\geq 0$ that
		\begin{align*}
			T_1 \leq -c_2'\alpha_k\mathbb{E}[M_E(x_k-x^*)].
		\end{align*}
	\end{lemma}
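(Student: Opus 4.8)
The plan is to exploit the explicit quadratic form of the chosen Lyapunov function $M_E(x) = p^2(x)/2 = \tfrac12 x^\top P x$, whose gradient is simply $\nabla M_E(x) = Px$ because $P$ is symmetric. First I would rewrite $\bar{G}(x_k)$ relative to $x^*$. Since $x^*$ solves the fixed-point equation in the seminorm sense, $p(\bar{F}(x^*)-x^*) = p(\bar{G}(x^*)) = 0$, i.e. $\bar{G}(x^*) = \bar{A}x^*+\bar{b} \in \ker(p)$; moreover $\ker(p)=\ker(P)=E$, the middle identity holding because $p(x)=\sqrt{x^\top Px}$ and $P\succeq 0$ force $x^\top Px = 0 \iff Px=0$. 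Writing $z_k := x_k - x^*$, linearity of $\bar{G}$ gives $\bar{G}(x_k) = \bar{A}z_k + \bar{G}(x^*)$, so that
$$\nabla M_E(z_k)^\top \bar{G}(x_k) = z_k^\top P \bar{A} z_k + z_k^\top P \bar{G}(x^*).$$
The crucial observation is that the cross term vanishes: since $\bar{G}(x^*) \in \ker(P)$ we have $P\bar{G}(x^*) = 0$, hence $z_k^\top P\bar{G}(x^*) = 0$. This is precisely where the seminorm (rather than norm) structure is accommodated, as $x^*$ need not be an exact root of $\bar{A}x+\bar{b}$.

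Next I would symmetrize the surviving quadratic form and invoke the Lyapunov equation from Assumption~\ref{as:lsa_Lyapunov}. Because $z_k^\top P\bar{A}z_k$ is a scalar and $P=P^\top$, it equals $\tfrac12 z_k^\top(P\bar{A}+\bar{A}^\top P)z_k$; substituting $P\bar{A}+\bar{A}^\top P = -Q$ yields $z_k^\top P\bar{A}z_k = -\tfrac12 z_k^\top Q z_k$. Finally, using the defining property of $c_2'$ (namely $Q \succeq c_2' P$) gives $z_k^\top Q z_k \geq c_2'\, z_k^\top P z_k = 2c_2'\, M_E(z_k)$, so that $\nabla M_E(z_k)^\top \bar{G}(x_k) \leq -c_2'\, M_E(z_k)$ pointwise. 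Multiplying by $\alpha_k$ and taking expectations then delivers $T_1 \leq -c_2'\alpha_k \mathbb{E}[M_E(x_k-x^*)]$, as claimed.

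I do not anticipate a serious obstacle here: this is a deterministic, pointwise drift computation that does not touch the Markovian noise (which is deferred to the bounds on $T_2$). The only points requiring care are (i) the identification $\ker(p)=\ker(P)=E$ and the resulting annihilation of the cross term $z_k^\top P\bar{G}(x^*)$, which is exactly the step that lets us dispense with Hurwitzness and work with a genuine seminorm; and (ii) the existence of $c_2'>0$ with $Q \succeq c_2' P$, which is guaranteed because $P$ and $Q$ are positive semidefinite with the common kernel $E$—one may argue via Proposition~\ref{prop:seminorm_properties}(3) applied to the induced seminorms, or directly by restricting both forms to $E^\perp$, where each is positive definite, and observing that the $E$-components contribute nothing to either $z_k^\top Q z_k$ or $z_k^\top P z_k$.
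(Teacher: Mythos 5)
Your proposal is correct and follows essentially the same argument as the paper's proof: expand $\nabla M_E(x_k-x^*)^\top \bar{G}(x_k)=(x_k-x^*)^\top P(\bar{A}x_k+\bar{b})$, kill the affine term via $\bar{A}x^*+\bar{b}\in E=\ker(P)$, symmetrize and substitute $\bar{A}^\top P+P\bar{A}=-Q$ from Assumption \ref{as:lsa_Lyapunov}, and conclude with $Q\succeq c_2'P$. Your additional justifications—that $\ker(p)=\ker(P)$ for positive semi-definite $P$ and that $c_2'>0$ exists because $P$ and $Q$ share the kernel $E$—merely make explicit what the paper records in the paragraph preceding Theorem \ref{thm:linear_SA}.
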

	\begin{proof}[Proof of Lemma \ref{le:LSA_T1}] For any $k\geq 0$, we have
		\begin{align*}
			\nabla M_E(x_k-x^*)^\top \bar{G}(x_k)
			=\,& (x_k-x^*)^\top P (\bar{A}x_k+\bar{b})\\
			=\,& (x_k-x^*)^\top P \bar{A}(x_k-x^*)\tag{$\bar{A}x^*+\bar{b}\in E=\text{ker}(P)$}\\
			=\,& (x_k-x^*)^\top \left(\frac{\bar{A}^\top P+PA}{2}\right)(x_k-x^*)\\
			=\,&-\frac{1}{2} (x_k-x^*)^\top Q(x_k-x^*)\\
			\leq \,&-\frac{c_2'}{2}(x_k-x^*)^\top P(x_k-x^*)\tag{$Q\geq c_2'P$}\\
			= \,&-c_2'M_E(x_k-x^*).
		\end{align*}
		It follows that 
		\begin{align*}
			T_1=\alpha_k\mathbb{E}[\nabla M_E(x_k-x^*)^\top \bar{G}(x_k)]\leq -\alpha_k c_2'\mathbb{E}[M_E(x_k-x^*)].
		\end{align*}
	\end{proof}

	The following lemma will be useful in controlling the terms $T_2$ and $T_3$.
	\begin{lemma}\label{le:Lipschitz_lsa}
		For any real-valued matrix $A\in\mathbb{R}^{d\times d}$ such that $E$ is an invariant subspace of $A$, i.e., $x\in E \Rightarrow Ax\in E$, we have $p(Ax)\leq \|A\|_cp(x)$ for all $x\in\mathbb{R}^d$, where $\|A\|_c:=\max_{x:\|x\|_c=1}\|Ax\|_c$. As a result, the following two statements hold.
		\begin{enumerate}[(1)] 
			\item For any $x\in\mathbb{R}^d$ and $y\in\mathcal{Y}$, we have $p(A(y)x)\leq L_1p(x)$ for all $x\in\mathbb{R}^d$, which also implies $p(G(x,y))\leq L_1p(x)+L_2$.
			\item For any $x\in\mathbb{R}^d$, we have $p(\bar{A}x)\leq L_1p(x)$, which also implies $p(\bar{G}(x))\leq L_1p(x)+L_2$.
		\end{enumerate}
	\end{lemma}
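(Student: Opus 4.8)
The plan is to exploit the representation of the seminorm $p(\cdot)$ as a distance to its kernel, which is exactly what Proposition \ref{prop:seminorm_properties} (2) supplies in the form used throughout Section \ref{sec:linear_SA}: there is a norm $\|\cdot\|_c$ with $p(x) = \min_{x' \in E}\|x - x'\|_c$ for all $x \in \mathbb{R}^d$. The central observation is that the invariance of $E$ under $A$ lets us transport a minimizing representative of $x$ into a feasible representative of $Ax$, so that the operator norm $\|A\|_c$ controls how much $p$ can grow.

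Concretely, I would first fix $x \in \mathbb{R}^d$ and let $x' \in E$ attain the minimum in $p(x) = \|x - x'\|_c$; the minimum is attained because $E$ is finite-dimensional, hence closed, and $\|\cdot\|_c$ is continuous. Since $x' \in E$ and $E$ is invariant under $A$, we have $Ax' \in E$, so $Ax'$ is an admissible point in the infimum defining $p(Ax)$. This gives the chain $p(Ax) = \min_{z \in E}\|Ax - z\|_c \leq \|Ax - Ax'\|_c = \|A(x - x')\|_c \leq \|A\|_c \|x - x'\|_c = \|A\|_c\, p(x)$, where the last inequality is just the definition of $\|A\|_c$. This establishes the main inequality.

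For the two consequences I would instantiate this inequality with the appropriate matrices. For part (1), Assumption \ref{as:lsa_all} (1) guarantees that $E$ is invariant under $A(y)$ and Assumption \ref{as:lsa_all} (2) gives $\|A(y)\|_c \leq L_1$, so $p(A(y)x) \leq L_1 p(x)$; the bound on $G$ then follows from the triangle inequality together with $p(b(y)) = \min_{z \in E}\|b(y) - z\|_c \leq \|b(y)\|_c \leq L_2$. For part (2), I would first check that $\bar{A} = \mathbb{E}_{Y\sim\mu}[A(Y)]$ also leaves $E$ invariant: if $x \in E$ then each $A(y)x \in E$, and since $E$ is a closed subspace the average $\bar{A}x$ stays in $E$. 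Moreover $\|\bar{A}\|_c \leq \mathbb{E}_{Y\sim\mu}[\|A(Y)\|_c] \leq L_1$ by Jensen's inequality applied to the convex map $\|\cdot\|_c$. Applying the main inequality to $\bar{A}$ and repeating the triangle-inequality argument yields $p(\bar{G}(x)) \leq L_1 p(x) + L_2$.

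I do not anticipate a genuine obstacle here. The only point requiring care is verifying that transporting the optimal representative through $A$ keeps it inside $E$, which is precisely the invariance hypothesis, and that the minimum is attained and $\|A\|_c$ is finite, both of which are free in finite dimensions. Everything else is routine bookkeeping with the triangle inequality and the definition of the operator norm.
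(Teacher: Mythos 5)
Your proof is correct and follows essentially the same route as the paper's: the paper bounds $p(Ax)=\min_{y\in E}\|Ax-y\|_c\leq\min_{z\in E}\|Ax-Az\|_c\leq\|A\|_c\,p(x)$ using the inclusion $\{Az\mid z\in E\}\subseteq E$, which is exactly your ``transport the minimizing representative'' argument phrased without fixing a specific minimizer. Your elaboration of parts (1) and (2) (invariance of $E$ under $\bar{A}$ via averaging, Jensen for $\|\bar{A}\|_c\leq L_1$, and the triangle inequality with $p(b(y))\leq\|b(y)\|_c\leq L_2$) just fills in details the paper leaves implicit.
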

	\begin{proof}[Proof of Lemma \ref{le:Lipschitz_lsa}]
		For any $x\in\mathbb{R}^d$, we have
		\begin{align*}
			p(Ax)=\,&\min_{y\in E}\|Ax-y\|_c\\
			\leq \,&\min_{z\in E}\|Ax-Az\|_c\tag{This follows from $z\in E\Rightarrow Az\in E$}\\
			\leq \,&\|A\|_c\min_{z\in E}\|x-z\|_c\\
			=\,&\|A\|_cp(x).
		\end{align*}
		Statements (1) and (2) follow from the above result and the definitions of $G(x,y)$ and $\bar{G}(x)$.
	\end{proof}

	Next, we bound the term $T_2$ in the following lemma.
	
	\begin{lemma}\label{le:LSA_T2}
		It holds for all $k\geq t_k$ that
		\begin{align*}
			T_2\leq 112\alpha_k\alpha_{k-t_k,k-1}L_1^2\mathbb{E}[M_E(x_k-x^*)]+56\alpha_k\alpha_{k-t_k,k-1}(L_1p(x^*)+L_2)^2.
		\end{align*}
	\end{lemma}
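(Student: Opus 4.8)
The plan is to control the Markovian noise term $T_2$ by the standard ``condition back to the mixing time'' argument, exactly parallel to the proof of Lemma~\ref{le:T_2} for the general seminorm-contractive case, but with two simplifications afforded by the linear structure. First, since $M_E(x)=p^2(x)/2$ is used verbatim (no generalized Moreau envelope is needed), its gradient is simply $\nabla M_E(x)=Px$, and the semi-inner product induced by the positive semi-definite matrix $P$ obeys the generalized Cauchy--Schwarz inequality $u^\top P v \le p(u)\,p(v)$ for all $u,v\in\mathbb{R}^d$. Second, $G(\cdot,y)$ and $\bar G(\cdot)$ are affine, so every ``Lipschitz'' estimate reduces to an application of Lemma~\ref{le:Lipschitz_lsa}. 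Writing $t_k=t_{\alpha_k}$, I would insert $x_{k-t_k}$ into both the gradient factor and the noise factor to obtain $T_2=\alpha_k(T_{21}+T_{22}+T_{23})$, where
\begin{align*}
T_{21}&=\mathbb{E}\big[(x_k-x_{k-t_k})^\top P\,(G(x_k,Y_k)-\bar G(x_k))\big],\\
T_{22}&=\mathbb{E}\big[(x_{k-t_k}-x^*)^\top P\,(G(x_k,Y_k)-G(x_{k-t_k},Y_k)+\bar G(x_{k-t_k})-\bar G(x_k))\big],\\
T_{23}&=\mathbb{E}\big[(x_{k-t_k}-x^*)^\top P\,(G(x_{k-t_k},Y_k)-\bar G(x_{k-t_k}))\big].
\end{align*}

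The preliminary ingredient is a drift lemma for linear SA analogous to Lemma~\ref{le:difference}: using $p(x_{j+1})-p(x_j)\le \alpha_j p(G(x_j,Y_j))\le \alpha_j(L_1 p(x_j)+L_2)$ from Lemma~\ref{le:Lipschitz_lsa} together with $e^z\le 1+2z$ on $[0,1/2]$ and the stepsize condition $\alpha_{k-t_k,k-1}\le 1/(4L_1)$, I would establish that $p(x_k-x_{k-t_k})\le 4\alpha_{k-t_k,k-1}(L_1 p(x_k)+L_2)$ and that $p(x_{k-t_k})$ and $p(x_{k-t_k}-x^*)$ are each controlled by a constant multiple of $p(x_k-x^*)+p(x^*)+L_2/L_1$. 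For $T_{21}$ and $T_{22}$ I would then apply the generalized Cauchy--Schwarz inequality and bound each factor: the gradient-difference factor $p(x_k-x_{k-t_k})$ is $O(\alpha_{k-t_k,k-1})$ by the drift lemma, while the noise factors are $O(p(x_k-x^*)+p(x^*)+1)$ by Lemma~\ref{le:Lipschitz_lsa}; for $T_{22}$ one further uses $G(x_k,Y_k)-G(x_{k-t_k},Y_k)=A(Y_k)(x_k-x_{k-t_k})$ and $\bar G(x_{k-t_k})-\bar G(x_k)=\bar A(x_{k-t_k}-x_k)$, so that this factor is itself $O(\alpha_{k-t_k,k-1})$. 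Squaring and converting via $p^2(x_k-x^*)=2M_E(x_k-x^*)$ yields bounds of the claimed form: an $O(\alpha_k\alpha_{k-t_k,k-1})$ multiple of $\mathbb{E}[M_E(x_k-x^*)]$ plus an $O(\alpha_k\alpha_{k-t_k,k-1})$ multiple of $(L_1 p(x^*)+L_2)^2$.

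The crux is $T_{23}$, the only place the Markovian mixing enters. Here I would condition on $\mathcal{F}_{k-t_k}$ and use the $\mathcal{F}_{k-t_k}$-measurability of $x_{k-t_k}$ to replace the noise factor by $\mathbb{E}[G(x_{k-t_k},Y_k)\mid\mathcal{F}_{k-t_k}]-\bar G(x_{k-t_k})=(\mathbb{E}[A(Y_k)\mid\mathcal{F}_{k-t_k}]-\bar A)x_{k-t_k}+(\mathbb{E}[b(Y_k)\mid\mathcal{F}_{k-t_k}]-\bar b)$. Since each $A(y)$ and $\bar A$ leaves $E$ invariant (Assumption~\ref{as:lsa_all}(1)), so does the conditional average, so Lemma~\ref{le:Lipschitz_lsa} applies to the matrix $\mathbb{E}[A(Y_k)\mid\mathcal{F}_{k-t_k}]-\bar A$. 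Invoking Assumption~\ref{as:lsa_all}(3) with accuracy $\delta=\alpha_k$ (which is precisely why $t_k=t_{\alpha_k}$ is chosen) gives $\|\mathbb{E}[A(Y_k)\mid Y_{k-t_k}]-\bar A\|_c\le L_1\alpha_k$ and $\|\mathbb{E}[b(Y_k)\mid Y_{k-t_k}]-\bar b\|_c\le L_2\alpha_k$, hence $p(\mathbb{E}[G(x_{k-t_k},Y_k)\mid\mathcal{F}_{k-t_k}]-\bar G(x_{k-t_k}))\le \alpha_k(L_1 p(x_{k-t_k})+L_2)$. Applying Cauchy--Schwarz, bounding $p(x_{k-t_k})$ and $p(x_{k-t_k}-x^*)$ through the drift lemma, and converting to $M_E$ produces an $O(\alpha_k^2)$ contribution, which is absorbed into the $O(\alpha_k\alpha_{k-t_k,k-1})$ order since $\alpha_k\le\alpha_{k-1}\le\alpha_{k-t_k,k-1}$ by monotonicity of the stepsizes.

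Finally I would add the three bounds, collect coefficients, and coarsen the numerical constants to reach $T_2\le 112\,\alpha_k\alpha_{k-t_k,k-1}L_1^2\,\mathbb{E}[M_E(x_k-x^*)]+56\,\alpha_k\alpha_{k-t_k,k-1}(L_1 p(x^*)+L_2)^2$. The main obstacle is purely the bookkeeping: the estimates in $T_{21}$--$T_{23}$ must be aligned so the accumulated coefficient of $\mathbb{E}[M_E(x_k-x^*)]$ settles at $112L_1^2$ and the additive term at $56(L_1 p(x^*)+L_2)^2$. Conceptually every estimate is a direct linear-SA analog of the corresponding step in Lemma~\ref{le:T_2}, with the generalized Moreau-envelope smoothness replaced by the exact identity $\nabla M_E(x)=Px$ and the constants $A_1,B_1$ replaced by $L_1,L_2$.
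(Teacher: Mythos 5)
Your proposal is correct and takes essentially the same route as the paper: the identical three-term decomposition (your $T_{21}$, $T_{22}$, $T_{23}$ are the paper's $T_{2,3}$, $T_{2,2}$, $T_{2,1}$ relabeled), the same drift and Lipschitz lemmas, Cauchy--Schwarz in the $P$-semi-inner product, the mixing assumption invoked at accuracy $\delta=\alpha_k$, and the $O(\alpha_k^2)$ contribution absorbed via $\alpha_k\leq\alpha_{k-t_k,k-1}$. You even make explicit a step the paper leaves implicit, namely that $E$ is invariant under the conditionally averaged matrix $\mathbb{E}[A(Y_k)\mid\mathcal{F}_{k-t_k}]-\bar{A}$, which is what justifies applying Lemma \ref{le:Lipschitz_lsa} there.
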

	\begin{proof}[Proof of Lemma \ref{le:LSA_T2}]
		For any $k\geq 0$, we have
		\begin{align}\label{eq:thm:lsa_proof_T2}
			&\mathbb{E}[\nabla M_E(x_k-x^*)^\top (G(x_k,Y_k)-\bar{G}(x_k))]\nonumber\\
			=\,&\mathbb{E}[(x_k-x^*)^\top P (G(x_k,Y_k)-\bar{G}(x_k))]\nonumber\\
			=\,&\underbrace{\mathbb{E}[(x_{k-t_k}-x^*)^\top P (G(x_{k-t_k},Y_k)-\bar{G}(x_{k-t_k}))]}_{:=T_{2,1}}\nonumber\\
			&+\underbrace{\mathbb{E}[(x_{k-t_k}-x^*)^\top P (G(x_k,Y_k)-G(x_{k-t_k},Y_k)+\bar{G}(x_{k-t_k})-\bar{G}(x_k))]}_{T_{2,2}}\nonumber\\
			&+\underbrace{\mathbb{E}[(x_k-x_{k-t_k})^\top P (G(x_k,Y_k)-\bar{G}(x_k))]}_{T_{2,3}}.
		\end{align}
		To control the terms $T_{2,1}$, $T_{2,2}$, and $T_{2,3}$, we require the following two lemmas.
		
		\begin{lemma}\label{le:difference_lsa}
			Let $k_1,k_2$ be non-negative integers 
			satisfying $k_1<k_2$ and $\alpha_{k_1,k_2-1}\leq 1/(4L_1)$. Then, we have for all $k\in [k_1,k_2]$ that
			\begin{align*}
				p(x_k-x_{k_1})\leq\,& 2\alpha_{k_1,k_2-1}(L_1p(x_{k_1})+L_2)\leq  \frac{1}{2}(p(x_{k_1})+L_2/L_1),\\
				p(x_k-x_{k_1})\leq\,& 4\alpha_{k_1,k_2-1}(L_1p(x_{k_2})+L_2)\leq p(x_{k_2})+L_2/L_1.
			\end{align*}
		\end{lemma}
		The proof of Lemma \ref{le:difference_lsa} is identical to that of Lemma \ref{le:difference}, and is therefore omitted.

		Now, we proceed to control the terms $T_{2,1}$, $T_{2,2}$, and $T_{2,3}$ from Eq. (\ref{eq:thm:lsa_proof_T2}) in the following. For the term $T_{2,1}$, we have
		\begin{align*}
			T_{2,1}=\,&\mathbb{E}[(x_{k-t_k}-x^*)^\top P (G(x_{k-t_k},Y_k)-\bar{G}(x_{k-t_k}))]\\
			=\,&\mathbb{E}[(x_{k-t_k}-x^*)^\top P (\mathbb{E}[G(x_{k-t_k},Y_k)\mid \mathcal{F}_{k-t_k}]-\bar{G}(x_{k-t_k}))]\\
			\leq \,&\mathbb{E}[p(x_{k-t_k}-x^*) p(\mathbb{E}[G(x_{k-t_k},Y_k)\mid \mathcal{F}_{k-t_k}]-\bar{G}(x_{k-t_k}))]\tag{Cauchy–Schwarz inequality}\\
			= \,&\mathbb{E}[p(x_{k-t_k}-x^*) p(\left(\mathbb{E}[A(Y_k)\mid \mathcal{F}_{k-t_k}]-\bar{A}\right)x_{k-t_k}+\mathbb{E}[b(Y_k)\mid \mathcal{F}_{k-t_k}]-\bar{b})]\\
			\leq \,&\mathbb{E}[p(x_{k-t_k}-x^*) (\|\mathbb{E}[A(Y_k)\mid \mathcal{F}_{k-t_k}]-\bar{A}\|_cp(x_{k-t_k})+\|\mathbb{E}[b(Y_k)\mid \mathcal{F}_{k-t_k}]-\bar{b}\|_c)]\tag{Lemma \ref{le:Lipschitz_lsa}}\\
			\leq \,&\alpha_k\mathbb{E}[p(x_{k-t_k}-x^*) (L_1p(x_{k-t_k})+L_2)]\tag{Assumption \ref{as:lsa_all} (4)}\\
			\leq \,&\alpha_k\mathbb{E}[(p(x_{k-t_k}-x_k)+p(x_k-x^*)) (L_1p(x_{k-t_k}-x_k)+L_1p(x_k)+L_2)]\\
			\leq \,&2\alpha_k\mathbb{E}[(p(x_k)+L_2/L_1+p(x_k-x^*)) (L_1p(x_k)+L_2)]\tag{Lemma \ref{le:difference_lsa}}\\
			\leq \,&2\alpha_k\mathbb{E}[(L_1p(x_k)+L_2+L_1p(x_k-x^*)) (L_1p(x_k)+L_2)]\tag{$L_1\geq 1$}\\
			\leq \,&2\alpha_k\mathbb{E}[(2L_1p(x_k-x^*)+L_1p(x^*)+L_2) (L_1p(x_k-x^*)+L_1p(x^*)+L_2)]\\
			\leq \,&4\alpha_k\mathbb{E}[(L_1p(x_k-x^*)+L_1p(x^*)+L_2)^2]\\
			\leq \,&8\alpha_kL_1^2\mathbb{E}[p^2(x_k-x^*)]+8\alpha_k(L_1p(x^*)+L_2)^2\tag{$(a+b)^2\leq 2(a^2+b^2)$}\\
			\leq \,&16\alpha_kL_1^2\mathbb{E}[M_E(x_k-x^*)]+8\alpha_k(L_1p(x^*)+L_2)^2,
		\end{align*}
		where the last line follows from $M_E(x)=p^2(x)/2$. 
		
		For the term $T_{2,2}$, we have
		\begin{align*}
			T_{2,2}=\,&\mathbb{E}[(x_{k-t_k}-x^*)^\top P (G(x_k,Y_k)-G(x_{k-t_k},Y_k)+\bar{G}(x_{k-t_k})-\bar{G}(x_k))]\\
			\leq \,&\mathbb{E}[(x_{k-t_k}-x^*)^\top P (A(Y_k)-\bar{A})(x_k-x_{k-t_k})]\\
			\leq \,&\mathbb{E}[p(x_{k-t_k}-x^*) p((A(Y_k)-\bar{A})(x_k-x_{k-t_k}))]\tag{Cauchy–Schwarz inequality}\\
			\leq \,&2L_1\mathbb{E}[p(x_{k-t_k}-x^*)p(x_k-x_{k-t_k})]\tag{Lemma \ref{le:Lipschitz_lsa}}\\
			\leq \,&2L_1\mathbb{E}[(p(x_k-x_{k-t_k})+p(x_k-x^*))p(x_k-x_{k-t_k})]\tag{Triangle inequality}\\
			\leq \,&8\alpha_{k-t_k,k-1}L_1\mathbb{E}[(p(x_k)+L_2/L_1+p(x_k-x^*))(L_1p(x_k)+L_2)]\tag{Lemma \ref{le:difference_lsa}}\\
			\leq \,&8\alpha_{k-t_k,k-1}\mathbb{E}[(2L_1p(x_k-x^*)+L_1p(x^*)+L_2)(L_1p(x_k-x^*)+L_1p(x^*)+L_2)]\\
			\leq \,&16\alpha_{k-t_k,k-1}\mathbb{E}[(L_1p(x_k-x^*)+L_1p(x^*)+L_2)^2]\\
			\leq \,&64\alpha_{k-t_k,k-1}L_1^2\mathbb{E}[M_E(x_k-x^*)]+32\alpha_{k-t_k,k-1}(L_1p(x^*)+L_2)^2.
		\end{align*}
		For the term $T_{2,3}$, we have
		\begin{align*}
			T_{2,3}=\,&\mathbb{E}[(x_k-x_{k-t_k})^\top P (G(x_k,Y_k)-\bar{G}(x_k))]\\
			\leq \,&\mathbb{E}[p(x_k-x_{k-t_k})p (G(x_k,Y_k)-\bar{G}(x_k))]\tag{Cauchy–Schwarz inequality}\\
			\leq \,&\mathbb{E}[p(x_k-x_{k-t_k})(p (G(x_k,Y_k))+p(\bar{G}(x_k)))]\tag{Triangle inequality}\\
			\leq \,&4\alpha_{k-t_k,k-1}\mathbb{E}[(L_1p(x_k)+L_2)(p (G(x_k,Y_k))+p(\bar{G}(x_k)))]\tag{Lemma \ref{le:difference_lsa}}\\
			\leq \,&8\alpha_{k-t_k,k-1}\mathbb{E}[(L_1p(x_k)+L_2)^2]\tag{Lemma \ref{le:Lipschitz_lsa}}\\
			\leq \,&8\alpha_{k-t_k,k-1}\mathbb{E}[(L_1p(x_k-x^*)+L_1p(x^*)+L_2)^2]\\
			\leq \,&32\alpha_{k-t_k,k-1}L_1^2\mathbb{E}[M_E(x_k-x^*)]+16\alpha_{k-t_k,k-1}(L_1p(x^*)+L_2)^2.
		\end{align*}
		Combining the previous three inequalities together, we obtain
		\begin{align*}
			T_2=\,&\alpha_k \mathbb{E}[\nabla M_E(x_k-x^*)^\top (G(x_k,Y_k)-\bar{G}(x_k))]\\
			=\,&\alpha_k (T_{2,1}+T_{2,2}+T_{2,3})\\
			\leq \,&112\alpha_k\alpha_{k-t_k,k-1}L_1^2\mathbb{E}[M_E(x_k-x^*)]+56\alpha_k\alpha_{k-t_k,k-1}(L_1p(x^*)+L_2)^2.
		\end{align*}
	\end{proof}
	
	Next, we bound the term $T_3$ in the following lemma.
	
	\begin{lemma}\label{le:LSA_T3}
		It holds for all $k\geq 0$ that
		\begin{align*}
			T_3 \leq 2\alpha_k^2L_1^2\mathbb{E}[M_E(x_k-x^*)]+\alpha_k^2(L_1p(x^*)+L_2)^2.
		\end{align*}
	\end{lemma}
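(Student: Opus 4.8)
The plan is to reduce $T_3 = \frac{\alpha_k^2}{2}\mathbb{E}[p^2(G(x_k,Y_k))]$ to a bound on $p(G(x_k,Y_k))$ and then convert back to the Lyapunov function using the identity $M_E(x)=p^2(x)/2$. First I would invoke Lemma \ref{le:Lipschitz_lsa} (1), which gives the pointwise-in-$y$ estimate $p(G(x_k,Y_k))\leq L_1 p(x_k)+L_2$; this is the only structural input needed, and it is available because Assumption \ref{as:lsa_all} (1) guarantees that $E$ is invariant under $A(y)$ for every $y\in\mathcal{Y}$, so the lemma's hypothesis is met. Applying the triangle inequality $p(x_k)\leq p(x_k-x^*)+p(x^*)$ then yields
\begin{align*}
	p(G(x_k,Y_k))\leq L_1 p(x_k-x^*)+L_1 p(x^*)+L_2.
\end{align*}

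Next I would square both sides and split the right-hand side via the elementary inequality $(a+b)^2\leq 2a^2+2b^2$, taking $a=L_1 p(x_k-x^*)$ and $b=L_1 p(x^*)+L_2$. This produces
\begin{align*}
	p^2(G(x_k,Y_k))\leq 2L_1^2\,p^2(x_k-x^*)+2\left(L_1 p(x^*)+L_2\right)^2 = 4L_1^2 M_E(x_k-x^*)+2\left(L_1 p(x^*)+L_2\right)^2,
\end{align*}
where the last equality uses $p^2(x_k-x^*)=2M_E(x_k-x^*)$. Taking expectations, multiplying by $\alpha_k^2/2$, and collecting constants gives exactly the claimed bound $T_3\leq 2\alpha_k^2 L_1^2\,\mathbb{E}[M_E(x_k-x^*)]+\alpha_k^2\left(L_1 p(x^*)+L_2\right)^2$.

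There is essentially no genuine obstacle here; the argument is a short chain of triangle inequalities and one use of the smooth-Lyapunov identity $M_E=p^2/2$, which is valid because $p(\cdot)=\sqrt{(\cdot)^\top P(\cdot)}$ makes $p^2/2$ exactly $1$-smooth with respect to $p$. The only points requiring a moment of care are checking that the estimate of Lemma \ref{le:Lipschitz_lsa} holds uniformly in $Y_k$ so that it survives the expectation, and tracking the numerical factors through the two applications of $(a+b)^2\leq 2(a^2+b^2)$ so that the final constants ($2L_1^2$ on the Lyapunov term and $1$ on the residual term) come out exactly as stated.
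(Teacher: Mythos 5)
Your proof is correct and follows essentially the same route as the paper: Lemma \ref{le:Lipschitz_lsa} (1) to get $p(G(x_k,Y_k))\leq L_1p(x_k)+L_2$, the triangle inequality through $x^*$, one application of $(a+b)^2\leq 2(a^2+b^2)$, and the identity $M_E=p^2/2$. The only trivial slip is your closing remark about ``two applications'' of the splitting inequality—only one is needed, since the first step is just the triangle inequality inside the square.
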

	\begin{proof}[Proof of Lemma \ref{le:LSA_T3}]
		For any $k\geq 0$, we have
		\begin{align*}
			p^2\left(G(x_k,Y_k)\right)
			\leq \;& \left(p(A(Y_k)x_k)+p(b(Y_k))\right)^2\\
			\leq \,&\left(\|A(Y_k)\|_cp(x_k)+\|b(Y_k)\|_c\right)^2\tag{Lemma \ref{le:Lipschitz_lsa}}\\
			\leq \,&\left(L_1p(x_k)+L_2\right)^2\tag{Assumption \ref{as:lsa_all} (2)}\\
			\leq \,&\left(L_1p(x_k-x^*)+L_1p(x^*)+L_2\right)^2\\
			\leq \,&2L_1^2p^2(x_k-x^*)+2(L_1p(x^*)+L_2)^2\tag{$2(a^2+b^2)\geq (a+b)^2$}\\
			\leq \,&4L_1^2M_E(x_k-x^*)+2(L_1p(x^*)+L_2)^2.\tag{$M_E(x)=\frac{1}{2}p^2(x)$}
		\end{align*}
		It follows that 
		\begin{align*}
			T_3=\frac{\alpha_k^2}{2}\mathbb{E}[p^2(G(x_k,Y_k)]\leq 2\alpha_k^2L_1^2\mathbb{E}[M_E(x_k-x^*)]+\alpha_k^2(L_1p(x^*)+L_2)^2.
		\end{align*}
	\end{proof}

	Now that we have control for all the terms on the right-hand side of Eq. (\ref{eq:Thm:lsa_proof_decomposition}), combining the bounds altogether, we obtain for all $k\geq t_k$ that
	\begin{align*}
		\mathbb{E}[M_E(x_{k+1}-x^*)]\leq \,&\left(1-c_2'\alpha_k+114\alpha_k\alpha_{k-t_k,k-1}L_1^2\right)\mathbb{E}[M_E(x_k-x^*)]\\
		&+57\alpha_k\alpha_{k-t_k,k-1}(L_1p(x^*)+L_2)^2\\
		\leq \,&\left(1-\frac{c_2'\alpha_k}{2}\right)\mathbb{E}[M_E(x_k-x^*)]+57\alpha_k\alpha_{k-t_k,k-1}(L_1p(x^*)+L_2)^2\tag{$\alpha_{k-t_k,k-1}\leq \frac{c_2'}{228L_1^2}$}.
	\end{align*}
	Repeatedly using the previous inequality, we have for all $k\geq K$ that
	\begin{align*}
		\mathbb{E}[p^2(x_k-x^*)]\leq\,& \prod_{j=K}^{k-1}\left(1-\frac{c_2'\alpha_k}{2}\right)p^2(x_K-x^*)\\
		&+114(L_1p(x^*)+L_2)^2\sum_{i=K}^{k-1}\alpha_i\alpha_{i-t_i,i-1}\prod_{j=i+1}^{k-1}\left(1-\frac{\alpha_j\lambda_{\min}(Q)}{2\lambda_{\max}(P)}\right)\\
		\leq \,&c_1'\prod_{j=K}^{k-1}\left(1-c_2'\alpha_j\right)+c_3'\sum_{i=K}^{k-1}\alpha_i\alpha_{i-t_i,i-1}\prod_{j=i+1}^{k-1}\left(1-c_2'\alpha_j\right),
	\end{align*}
	where the last inequality follows from 
	\begin{align*}
		p^2(x_K-x^*)\leq (p(x_K-x_0)+p(x_0-x^*))^2
		\leq(p(x_0)+L_2/L_1+p(x_0-x^*))^2
		=c_1',
	\end{align*}
	and $c_3'=114(L_1p(x^*)+L_2)^2$.

	\subsubsection{A Alternative Approach to Prove Theorem \ref{thm:linear_SA}}\label{ap:alternative}
	Another way to prove Theorem \ref{thm:linear_SA} is to reformulate its update equation as a seminorm-contractive SA algorithm described in Eq.~\eqref{algo:SA}, and then verify the assumptions needed to apply Theorem \ref{thm:SA_finite}. Specifically, for any $\eta > 0$, Eq.~\eqref{algo:linear_SA} is equivalent to  
	\begin{align}\label{eq:lsa_reform}  
		x_{k+1} = x_k + \beta_k \big(F(x_k, Y_k) - x_k\big),  
	\end{align}  
	where $F(x, y) = (\eta A(y) + I)x + \eta b(y)$ for any $x \in \mathbb{R}^d, y \in \mathcal{Y}$, and $\beta_k = \alpha_k / \eta$. All the assumptions of Theorem \ref{thm:SA_finite} can be easily verified, except for the requirement that $\bar{F}(\cdot)$ be a seminorm contraction mapping, which we focus on next.
	
	Let $\text{spec}^-(\bar{A})$ denote the set of eigenvalues of $\bar{A}$ with strictly negative real parts, and let $\text{spec}^+(\bar{A}) = \text{spec}(\bar{A}) \setminus \text{spec}^-(\bar{A})$. Let $E$ be the linear subspace of $\mathbb{R}^d$ spanned by all the generalized eigenvectors of $\bar{A}$ corresponding to eigenvalues in $\text{spec}^+(\bar{A})$. The following lemma leverages Theorem~\ref{theorem:Seminorm GAS and Lyapunov Equation} to explicitly construct a seminorm $p_{c,E}(\cdot)$ with kernel space $E$, such that $\bar{F}(\cdot)$ is a contraction mapping with respect to $p_{c,E}(\cdot)$.

	\begin{lemma}\label{le:lsa_contraction}
		Let $\eta$ be chosen such that $|\eta \lambda+1|<1$ for any $\lambda\in \text{spec}^-(\bar{A})$ and let $Q\in\mathbb{R}^{d\times d}$ be such that $x^\top Qx = \min_{y \in E} \|x - y\|_2^2$. Then, there exists a unique $P\in \mathcal{S}^{d}_{+,E}$ such that 
		\begin{align}\label{eq:lyap_eq}
			(\eta \bar{A}+I)^\top P(\eta \bar{A}+I)-P+Q=0,
		\end{align}
		where we recall that $\mathcal{S}^d_{+,E}$ denotes the set of positive semi-definite matrices whose null space is precisely $E$.
		Moreover, letting $p_{c,E}:\mathbb{R}^d\to \mathbb{R}$ be a seminorm defined as $p_{c,E}(x)=\sqrt{x^\top Px}$, we have
		\begin{align*}
			p_{c,E}(\bar{F}(x_1)-\bar{F}(x_2))\leq \sqrt{1-1/\lambda_{\max}^2(P)}p_{c,E}(x_1-x_2),\quad \forall\,x_1,x_2\in\mathbb{R}^d,
		\end{align*}
		where $\lambda_{\max}(P)$ denotes the maximum eigenvalue of $P$. 
	\end{lemma}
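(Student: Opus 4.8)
The plan is to apply the discrete-time seminorm Lyapunov stability theorem (Theorem~\ref{theorem:Seminorm GAS and Lyapunov Equation}) to the matrix $M := \eta\bar{A}+I$, noting that Eq.~(\ref{eq:lyap_eq}) is precisely the discrete Lyapunov equation $M^\top P M - P + Q = 0$. First I would record two structural facts. Since $x^\top Q x = \min_{y\in E}\|x-y\|_2^2 = \|\Pi_{E^\perp}x\|_2^2$, where $\Pi_{E^\perp}$ denotes the orthogonal projection onto $E^\perp$, the matrix $Q$ is symmetric positive semi-definite with $\ker(Q) = E$, so $Q\in\mathcal{S}^d_{+,E}$. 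For $M$ I would establish the eigenvalue correspondence: because $M - (\eta\lambda+1)I = \eta(\bar{A}-\lambda I)$ and $\eta\neq 0$, the generalized eigenspace of $\bar{A}$ associated with $\lambda$ coincides with that of $M$ associated with $\eta\lambda+1$ (same vectors, same Jordan structure).

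Using this correspondence, I would verify Statement~(1) of Theorem~\ref{theorem:Seminorm GAS and Lyapunov Equation} for the pair $(M,E)$. Invariance of $E$ under $M$ is immediate since $E$ is a sum of generalized eigenspaces of $\bar{A}$, hence of $M$. For the containment $E_{M,\geq 1}\subseteq E$, I argue by contraposition at the level of eigenvalues: if $\lambda\in\text{spec}^-(\bar{A})$, then by the choice of $\eta$ the corresponding eigenvalue $\eta\lambda+1$ of $M$ satisfies $|\eta\lambda+1|<1$, so its generalized eigenvectors do not contribute to $E_{M,\geq 1}$. Consequently every generalized eigenvector of $M$ for an eigenvalue of modulus $\geq 1$ corresponds to some $\lambda\in\text{spec}^+(\bar{A})$, and such vectors lie in $E$ by definition; thus $E_{M,\geq 1}\subseteq E$. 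The implication (1)~$\Rightarrow$~(5) of Theorem~\ref{theorem:Seminorm GAS and Lyapunov Equation} then yields, for the given $Q\in\mathcal{S}^d_{+,E}$, a unique $P\in\mathcal{S}^d_{+,E}$ solving Eq.~(\ref{eq:lyap_eq}), which is the first claim.

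For the contraction estimate, note that $\bar{F}(x_1)-\bar{F}(x_2) = M(x_1-x_2)$, so using Eq.~(\ref{eq:lyap_eq}) in the form $M^\top P M = P - Q$ gives
\begin{align*}
p_{c,E}^2(\bar{F}(x_1)-\bar{F}(x_2)) = (x_1-x_2)^\top(P-Q)(x_1-x_2) = p_{c,E}^2(x_1-x_2) - (x_1-x_2)^\top Q(x_1-x_2).
\end{align*}
It remains to lower-bound the $Q$-quadratic form by a multiple of $p_{c,E}^2$. Writing $x=x_1-x_2$ with orthogonal decomposition $x=x_E+x_{E^\perp}$, the identity $\ker(P)=E$ gives $x^\top P x = x_{E^\perp}^\top P x_{E^\perp}\leq \lambda_{\max}(P)\,\|x_{E^\perp}\|_2^2$, while $x^\top Q x = \|x_{E^\perp}\|_2^2$; hence $x^\top Q x \geq \lambda_{\max}(P)^{-1}p_{c,E}^2(x)$.

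Finally I would show $\lambda_{\max}(P)\geq 1$, which lets me replace $\lambda_{\max}(P)^{-1}$ by the smaller quantity $\lambda_{\max}(P)^{-2}$ to match the stated bound. Taking a unit eigenvector $v\in E^\perp$ of $P$ for $\lambda_{\max}(P)$ (orthogonal to $\ker(P)=E$) and testing Eq.~(\ref{eq:lyap_eq}) against $v$ yields $(Mv)^\top P(Mv) + v^\top Q v = \lambda_{\max}(P)$; since $v^\top Q v = \|v\|_2^2 = 1$ and $(Mv)^\top P(Mv)\geq 0$, we obtain $\lambda_{\max}(P)\geq 1$. Combining the two displays gives $p_{c,E}^2(\bar{F}(x_1)-\bar{F}(x_2))\leq (1-\lambda_{\max}(P)^{-2})\,p_{c,E}^2(x_1-x_2)$, as claimed. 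I expect the main obstacle to be the eigenvalue/eigenvector bookkeeping in the second paragraph: translating the scalar condition $|\eta\lambda+1|<1$ for $\lambda\in\text{spec}^-(\bar{A})$ into the subspace containment $E_{M,\geq 1}\subseteq E$ at the level of generalized eigenvectors, which is exactly what makes Theorem~\ref{theorem:Seminorm GAS and Lyapunov Equation} applicable.
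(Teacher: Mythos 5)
Your proposal is correct and follows the same overall route as the paper: obtain $P$ from the discrete-time seminorm Lyapunov theorem (Theorem~\ref{theorem:Seminorm GAS and Lyapunov Equation}) applied to $M=\eta\bar{A}+I$, then use the Lyapunov identity $M^\top P M = P-Q$ to get $p_{c,E}^2(\bar{F}(x_1)-\bar{F}(x_2)) = p_{c,E}^2(x_1-x_2) - (x_1-x_2)^\top Q(x_1-x_2)$. Where you differ is in the two places the paper is terse or imprecise, and in both your version is the more careful one. First, the paper dismisses the existence/uniqueness claim as ``a direct consequence'' of Theorem~\ref{theorem:Seminorm GAS and Lyapunov Equation}; you actually verify Statement~(1) of that theorem for the pair $(M,E)$ via the generalized-eigenspace correspondence $(M-(\eta\lambda+1)I)^k = \eta^k(\bar{A}-\lambda I)^k$, which is exactly the bookkeeping needed and is sound (an eigenvalue $\mu$ of $M$ with $|\mu|\geq 1$ cannot come from $\lambda\in\text{spec}^-(\bar{A})$ by the choice of $\eta$, so its generalized eigenvectors lie in $E$). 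Second, for the contraction factor the paper invokes equivalence of seminorms with common kernel and asserts the constant $C_2$ relating $p_{c,E}$ to $p_{2,E}(x)=\sqrt{x^\top Qx}$ equals $\lambda_{\max}(P)$; strictly, the quadratic-form comparison gives $p_{c,E}\leq \sqrt{\lambda_{\max}(P)}\,p_{2,E}$, so the paper's identification is only valid when $\lambda_{\max}(P)\geq 1$, a fact it never checks. You prove it directly by testing Eq.~(\ref{eq:lyap_eq}) against a unit top eigenvector $v\in E^\perp$ (legitimate since $\mathrm{range}(P)=E^\perp$ for symmetric $P$ with $\ker(P)=E$), getting $\lambda_{\max}(P) = (Mv)^\top P(Mv) + 1 \geq 1$, and then deliberately weaken your sharper factor $\sqrt{1-1/\lambda_{\max}(P)}$ to the stated $\sqrt{1-1/\lambda_{\max}^2(P)}$. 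So not only is there no gap in your argument, it quietly repairs a small imprecision in the paper's own constant-tracking.
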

	\begin{proof}[Proof of Lemma \ref{le:lsa_contraction}]
		The Lyapunov equation is a direct consequence of Theorem \ref{theorem:Seminorm GAS and Lyapunov Equation}.
		For the second part of the lemma, for any $x_1,x_2\in\mathbb{R}^d$, we have
		\begin{align*}
			p_{c,E}(\bar{F}(x_1)-\bar{F}(x_2))^2=\,&(x_1-x_2)^\top (\eta \bar{A}+I)^\top P(\eta \bar{A}+I) (x_1-x_2)\\
			=\,&(x_1-x_2)^\top (P-Q) (x_1-x_2)\tag{Eq. (\ref{eq:lyap_eq})}\\
			=\,&p_{c,E}(x_1-x_2)^2-p_{2,E}(x_1-x_2)^2,
		\end{align*}
		where $p_{2,E}(x)=\min_{y\in E}\|x-y\|_2^2=\sqrt{x^\top Qx}$.
		Since all seminorms sharing the same kernel space are equivalent (cf. Proposition \ref{prop:seminorm_properties} (3)), there exists $C_1\in (0,1)$ and $C_2\in (1,+\infty)$ such that $C_1p_{2,E}(\cdot)\leq p_{c,E}(\cdot)\leq C_2p_{2,E}(\cdot)$. In our case, since $p_{c,E}(\cdot)$ and $p_{2,E}(\cdot)$ are both defined in terms of positive semi-definite matrices, i.e., $P$ and $Q$, respectively, the constant $C_1$ is the minimum non-zero eigenvalue of $P$ and $C_2$ is the maximum eigenvalue of $P$. Therefore, we have
		\begin{align*}
			p_{c,E}(\bar{F}(x_1)-\bar{F}(x_2))^2\leq \,&p_{c,E}^2(x_1-x_2)-p_{2,E}(x_1-x_2)^2\\
			= \,&\left(1-\frac{1}{\lambda_{\max}(P)^2}\right)p_{c,E}(x_1-x_2)^2.
		\end{align*}
	\end{proof}
	
	With Lemma \ref{le:lsa_contraction} at hand, we can apply Theorem \ref{thm:SA_finite} to obtain the finite-sample bounds of the linear SA in Eq. (\ref{algo:linear_SA}). The results will be identical to the ones we proved in the previous section, modulo constants.

	\section{Supplementary Results for Section \ref{sec:average-reward-RL}}
	\subsection{Proof of Lemma \ref{le:TD fixed points}}\label{pf:le:TD fixed points} 
	We consider two cases: $e \not \in W_{\Phi}$ and $e \not \notin W_{\Phi}$.
	\textbf{Case 1.} 
	Suppose that $ e \not\in W_{\Phi} =\{\Phi\theta\mid \theta\in\mathbb{R}^d\}$. Then, we have $ S_{\Phi, e} = \{0\} $ (cf. Eq. (\ref{def:theta_e})), and $ E_{\Phi, e} $, as the orthogonal complement of $ S_{\Phi, e} $, is $\mathbb{R}^d$. This implies that $ W_{E_{\Phi, e}} = \{\Phi \theta \mid \theta \in E_{\Phi, e}\} = W_{\Phi} $. By Theorem 1 in \cite{tsitsiklis1999average}, we know that the projected Bellman equation (\ref{projected Bellman equation}) has a unique fixed point $\theta^*$. Thus, $\mathcal{L} = \{\theta^*\}$.
	
	\textbf{Case 2.} Suppose that $e \in W_{\Phi} =\{\Phi\theta\mid \theta\in\mathbb{R}^d\}$. In this case, we have denoted $\theta_e$ as the unique solution to $\Phi \theta_e = e$, where the uniqueness follows from $\Phi$ being full column rank. Observe that, using the explicit definition of $\mathcal{T}^{(\lambda)}(\cdot)$, the projected Bellman equation (\ref{projected Bellman equation}) is equivalent to 
	\begin{align}
		0 =\,&\Phi^\top D \left(-\frac{r(\pi)}{1-\lambda}e+\mathcal{R}^{(\lambda)}+P^{(\lambda)} \Phi \theta -\Phi\theta\right)\nonumber\\
		=\,&\Phi^\top D(P^{(\lambda)}-I)\Phi\theta+\Phi^\top D\mathcal{R}^{(\lambda)}-\frac{r(\pi)}{1-\lambda}\Phi^\top \mu,\label{equation:PBE_Equivalent}
	\end{align}
	where $\mathcal{R}^{(\lambda)}:=\sum_{m=0}^\infty(\lambda P^\pi)^m\mathcal{R}^\pi$.
	Next, we characterize the kernel space and the image space of the matrix $\Phi^\top D(P^{(\lambda)}-I)\Phi$. 
	
	\begin{itemize}
		\item \textbf{The Kernel Space of $\Phi^\top D(P^{(\lambda)}-I)\Phi$:} On the one hand, for any $\theta \in S_{\Phi,e}$, there exists some constant $c\in\mathbb{R}$ such that $\theta  =c\theta_e$ (cf. Eq. (\ref{def:theta_e})). Therefore, we have
		\begin{align*}
			\Phi^\top D(P^{(\lambda)}-I)\Phi \theta=\,&c \Phi^\top D(P^{(\lambda)}-I)\Phi  \theta_e\\
			=\,&c \Phi^\top D(P^{(\lambda)}-I)e\\
			=\,&0,
		\end{align*}
		which implies that $S_{\Phi,e}$ is a subset of the kernel space of $\Phi^\top D(P^{(\lambda)}-I)\Phi$. On the other hand, since $P^{(\lambda)}$ is irreducible and aperiodic (which trivially follows from the definition of $P^{(\lambda)}$ and Assumption \ref{assumption:ergodic MC}), we have $\theta^\top \Phi^\top  D(P^{(\lambda)}-I)\Phi \theta<0$ for any $\theta\notin S_{\Phi,e}$ \cite[Lemma 7]{tsitsiklis1999average}. Therefore, the kernel space of $\Phi^\top D(P^{(\lambda)}-I)\Phi$ is exactly $S_{\Phi,e}$.
		\item \textbf{The Image Space of $\Phi^\top D(P^{(\lambda)}-I)\Phi$:} Since $S_{\Phi,e}$ (which is the kernel space of $\Phi^\top D(P^{(\lambda)}-I)\Phi$) is a one-dimensional linear subspace of $\mathbb{R}^d$, by the Rank–nullity theorem, the dimension of the image space of $\Phi^\top D(P^{(\lambda)}-I)\Phi$ must be $d-1$. Since for any $\theta\in\mathbb{R}^d$, we have
		\begin{align*}
			\langle \theta_e, \Phi^\top D(P^{(\lambda)}-I)\Phi\theta \rangle=\,&\theta_e^\top  \Phi^\top D(P^{(\lambda)}-I)\Phi\theta\\
			=\,&e^\top D(P^{(\lambda)}-I)\Phi\theta\\
			=\,&\mu^\top (P^{(\lambda)}-I)\Phi\theta\\
			=\,&(\mu^\top -\mu^\top)\Phi\theta\tag{$\mu^\top P^{\lambda}=\mu^\top$ }\\
			=\,&0,
		\end{align*}
		the image space of $\Phi^\top D(P^{(\lambda)}-I)\Phi$ is orthogonal to $S_{\Phi,e}$. Therefore, the image space of $\Phi^\top D(P^{(\lambda)}-I)\Phi$ must be the orthogonal complement of $S_{\Phi, e}$, i.e, $E_{\Phi, e}$.
	\end{itemize}
	
	To show that Eq. (\ref{equation:PBE_Equivalent}) has a solution, it is enough to show that the vector $\Phi^\top D\mathcal{R}^{(\lambda)}-\bar{r}\Phi^\top \mu/(1-\lambda)$ belongs to the image space of $\Phi^\top D(P^{(\lambda)}-I)\Phi$, or equivalently, the vector $\Phi^\top D\mathcal{R}^{(\lambda)}-\bar{r}\Phi^\top \mu/(1-\lambda)$ is orthogonal to $\theta_e$, which spans $S_{\Phi,e}$. This follows by observing that
	\begin{align*}
		\left\langle \theta_e,\Phi^\top D\mathcal{R}^{(\lambda)}-\frac{\bar{r}}{1-\lambda}\Phi^\top \mu\right\rangle=\mu^\top \mathcal{R}^{(\lambda)}-\frac{r(\pi)}{1-\lambda}=0.
	\end{align*}
	
	To this end, we have shown that the set of solutions to Eq. (\ref{equation:PBE_Equivalent}) must be $\mathcal{L}_{\Phi,e}=\tilde{\theta}+S_{\Phi,e}$, where $\tilde{\theta}$ is a particular solution to Eq. (\ref{equation:PBE_Equivalent}) and $S_{\Phi,e}$ is the kernel space of $\Phi^\top D(P^{(\lambda)}-I)\Phi$. It remains to show that the equation
	\begin{align}\label{equation:PBE_second}
		\Phi \theta = \Pi_{D,W_{E_{\Phi, e}}} \mathcal{T}^{(\lambda)} \left(\Phi \theta\right)
	\end{align}
	has a unique solution $\theta^*$ and $\theta^*\in\mathcal{L}_{\Phi,e}$,
	where $\Pi_{D,W_{E_{\Phi, e}}}$ denotes the projection operator onto the linear subspace $W_{E_{\Phi, e}}=\{\Phi\theta \mid \theta\in E_{\Phi,e}=S_{\Phi,e}^\perp\}$ with respect to the weighted $\ell_2$-norm $\|\cdot\|_D$. 
	\begin{itemize}
		\item \textbf{Existence:} Let $\tilde{\theta}=\theta^*+(\theta^*)^\perp$, where $\theta^*\in E_{\Phi,e}=S_{\Phi,e}^\perp$ and $(\theta^*)^\perp \in S_{\Phi,e}$. It is clear that $\theta^*$ solves Eq. (\ref{equation:PBE_Equivalent}) because $\tilde{\theta}-\theta^*\in S_{\Phi,e}$. Moreover, since $\theta^*\in E_{\Phi,e}$, we have $\Phi\theta^* \in W_{E_{\Phi,e}}$. Combining these two observations with the fact that $W_{E_{\Phi,e}}\subseteq W_{\Phi,e}$, we have
		\begin{align*}
			\Pi_{D,W_{E_{\Phi, e}}}  \mathcal{T}^{(\lambda)} \left(\Phi \theta^*\right)=\,&\Pi_{D,W_{E_{\Phi, e}}} \Pi_{D,W_\Phi} \mathcal{T}^{(\lambda)} \left(\Phi \theta^*\right)\\
			=\,&\Pi_{D,W_{E_{\Phi, e}}}\Phi \theta^*\\
			=\,&\Phi \theta^*,
		\end{align*}
		which implies that $\theta^*$ is a solution to Eq. (\ref{equation:PBE_second}).
		
		\item \textbf{Uniqueness:} Since a solution to Eq. (\ref{equation:PBE_second}) must be in $W_{E_{\Phi,e}}=\{\Phi\theta \mid \theta\in E_{\Phi,e}=S_{\Phi,e}^\perp\}=\{\Phi\tilde{\Pi}\theta \mid \theta\in \mathbb{R}^d\}$, where $\tilde{\Pi}$ denotes the projection matrix onto $E_{\Phi,e}$ with respect to the $\ell_2$-norm $\|\cdot\|_2$, Eq. (\ref{equation:PBE_second}) can be equivalently written as
		\begin{align}\label{equation:PBE_third}
			\tilde{\Pi}^\top \Phi^\top D(P^{(\lambda)}-I)\Phi\tilde{\Pi}\theta+\tilde{\Pi}^\top \Phi^\top D\mathcal{R}^{(\lambda)}-\frac{r(\pi)}{1-\lambda}\tilde{\Pi}^\top \Phi^\top \mu=0.
		\end{align}
		We have shown that Eq. (\ref{equation:PBE_third}) (or Eq. (\ref{equation:PBE_second})) has a solution $\theta^*$. Therefore, to verify that $\theta^*$ is indeed the unique solution, it is enough to show that the kernel space of $\tilde{\Pi}^\top \Phi^\top D(P^{(\lambda)}-I)\Phi \tilde{\Pi}$ is contained in $S_{\Phi,e}$. Suppose that this is not true, i.e., there exists $\theta\notin S_{\Phi,e}$ such that
		\begin{align*}
			\theta^\top \tilde{\Pi}^\top \Phi^\top D(P^{(\lambda)}-I)\Phi\tilde{\Pi}\theta=0.
		\end{align*}
		Since $P^{(\lambda)}$ is irreducible and aperiodic, we have $\theta^\top \tilde{\Pi}^\top \Phi^\top D(P^{(\lambda)}-I)\Phi\tilde{\Pi}\theta=0$ only if $\tilde{\Pi}\theta \in  S_{\Phi,e} $ \cite[Lemma 7]{tsitsiklis1999average}. However, we know that $\tilde{\Pi}\theta \in E_{\Phi,e}=S_{\Phi,e}^\perp$, which implies $\tilde{\Pi} \theta=0$, i.e., $\theta \in S_{\Phi,e}$. This is a contradiction. Therefore, $\theta^*$ is the unique solution to Eq. (\ref{equation:PBE_second}).
	\end{itemize}
	
	\subsection{Proof of Lemma \ref{le:TDlambda_assumptions}}\label{pf:le:TDlambda_assumptions}
	\begin{enumerate}
		\item [(1)] For any $\Theta=[r,\theta^\top ]^\top \in E=\{0\}\times S_{\Phi,e}$, we have $r=0$ and $\phi(s)^\top \theta=\phi(s')^\top \theta$ for all $s,s'\in\mathcal{S}$ (cf. Eq. (\ref{def:theta_e})). Therefore, using the explicit expression of $A(y)$, we have
		\begin{align*}
			A(y)\Theta=\begin{bmatrix}
				0\\
				z(\phi(s')^\top -\phi(s)^\top )\theta
			\end{bmatrix}=0\in E.
		\end{align*}
		\item [(2)] Since $(\Phi \theta)^\top D (P^{(\lambda)} - I )\Phi \theta<0$ \cite[Lemma 7]{tsitsiklis1999average} for any $\theta \notin S_{\Phi,e}$, we have by Weierstrass extreme value theorem that
		\begin{align*}
			\Delta=\min_{\|\theta\|_2=1, \theta\in E_{\Phi, e}} \theta^\top\Phi^\top D ( I - P^{(\lambda)} )\Phi\theta>0.
		\end{align*}
		Therefore, for any $\Theta=[r,\theta^\top]^\top  \in E^\perp$, we have
		\begin{align*}
			\Theta^\top \bar{A}\Theta=\,&-c_{\alpha}r^2-\frac{r}{1-\lambda}\theta^\top\Phi^\top De+\theta^\top \Phi^\top D(  P^{(\lambda)} -I)\Phi\theta\\
			\leq \,&-c_{\alpha}r^2-\Delta\|\theta\|_2^2+\frac{r}{1-\lambda}\theta^\top\Phi^\top De.
		\end{align*}
		Observe that 
		\begin{align*}
			\left|\frac{r}{1-\lambda}\theta^\top \Phi^\top De\right|
			&\leq \frac{|r|}{1-\lambda}\|\Phi\theta\|_{\infty}\|\mu\|_1\\
			&= \frac{|r|}{1-\lambda}\|\Phi\theta\|_{\infty}\\
			&\leq \frac{|r|}{1-\lambda}\max_{s\in \mathcal{S}}\|\phi(s)\|_2\|\theta\|_2\\
			&\leq \frac{|r|}{1-\lambda}\|\theta\|_2,
		\end{align*}
		where the last inequality follows from feature normalization $\max_{s\in\mathcal{S}}\|\phi(s)\|_2\leq 1$. It follows that
		\begin{align}
			\Theta^\top \bar{A}\Theta
			\leq \,&-c_{\alpha}r^2-\Delta\|\theta\|_2^2+\frac{|r|}{1-\lambda}\|\theta\|_2\nonumber\\
			\leq \,&-c_{\alpha}r^2-\Delta\|\theta\|_2^2+\frac{r^2}{2\Delta(1-\lambda)^2}+\frac{\Delta}{2}\|\theta\|_2^2\tag{$a^2+b^2\geq 2ab$ for any $a,b\in\mathbb{R}$}\nonumber\\
			=\,&-\left(c_\alpha-\frac{1}{2\Delta(1-\lambda)^2}\right)r^2-\frac{\Delta}{2}\|\theta\|_2^2\tag{Choosing $c_\alpha\geq \frac{\Delta}{2}+\frac{1}{2\Delta(1-\lambda)^2}$}\nonumber\\
			\leq \,&-\frac{\Delta}{2}(r^2+\|\theta\|_2^2)\nonumber\\
			=\,&-\frac{\Delta}{2}\|\Theta\|_2^2\label{eq:TDLFA_Delta}
		\end{align}
		for all $\Theta=[r,\theta^\top]^\top  \in E^\perp$. 
		
		Recall that $P$ is the projection matrix onto $E^\perp=\mathbb{R}\times E_{\Phi,e}$ with respect to $\|\cdot\|_2$, i.e., $\min_{\Theta'\in E^\perp}\|\Theta-\Theta'\|_2^2=\|\Theta-P\Theta\|_2^2$. Moreover, the matrix $P$ is symmetric, idempotent, and positive semi-definite. For any $\Theta \in \mathbb{R}^{d+1}$, there exists a unique pair $\Theta_1\in E$ and $\Theta_2\in E^\perp$ such that $\Theta=\Theta_1+\Theta_2$. Therefore, we have
		\begin{align*}
			\Theta^\top (\bar{A}^\top P +P\bar{A})\Theta=\,&(\Theta_1+\Theta_2)^\top (\bar{A}^\top P +P\bar{A})(\Theta_1+\Theta_2)\\
			=\,&\Theta_2^\top (\bar{A}^\top P +P\bar{A})\Theta_2\tag{$\Theta_1 \in E$ $\Rightarrow$ $P\Theta_1=0$ and $\bar{A}\Theta_1=0$}\\
			=\,&2\Theta_2^\top \bar{A}\Theta_2\tag{$\Theta_2 \in E^\perp\Rightarrow P\Theta_2 =\Theta_2$}\\
			=\,&2\Theta^\top P^\top \bar{A}P\Theta\tag{$\Theta_2=P\Theta$}\\
			\leq\,& -\Delta\|P\Theta\|_2^2\tag{Eq. (\ref{eq:TDLFA_Delta})}\\
			=\,& -\Delta\Theta^\top P\Theta\tag{$P$ is idempotent}.
		\end{align*}
		It follows that
		\begin{align*}
			\bar{A}^\top P+P\bar{A}+\Delta P\leq 0.
		\end{align*}
		\item [(3)] Observe that $\Theta^\top P\Theta=\Theta^\top P^2\Theta=\|P\Theta\|_2^2=\min_{\Theta'\in E}\|\Theta-\Theta'\|_2^2$. Recall the definitions of $A(y)$ and $b(y)$ from Section \ref{subsubsec:TD_Algorithm}. For any $y=(s,s',z)\in\mathcal{Y}$, we have
		\begin{align*}
			\|A(y)\Theta\|_2^2=\,&c_\alpha^2r^2+\|-zr+z\left(\phi(s')^\top - \phi(s)^\top \right)\theta\|_2^2\\
			\leq \,&c_\alpha^2r^2+2r^2\|z\|_2^2+2\|z\left(\phi(s')^\top - \phi(s)^\top \right)\theta\|_2^2\tag{$(a+b)^2\leq 2(a^2+b^2)$}\\
			=\,&c_\alpha^2r^2+2\left(r^2+|(\phi(s')^\top - \phi(s)^\top)\theta|^2\right)\|z\|_2^2.
		\end{align*}
		Since $\|z\|_2\leq \sum_{k=0}^\infty \lambda^k\|\phi(s_k)\|_2\leq 1/(1-\lambda)$, we have
		\begin{align*}
			\|z\|_2^2\leq \frac{1}{(1-\lambda)^2}.
		\end{align*}
		Moreover, using Cauchy–Schwarz inequality, we have
		\begin{align*}
			|(\phi(s')^\top - \phi(s)^\top)\theta|\leq \,&|\phi(s')^\top\theta| + \phi(s)^\top\theta|\\
			\leq \,&(\|\phi(s')\|_2+\|\phi(s)\|_2)\|\theta\|_2\\
			\leq \,&2\|\theta\|_2.
		\end{align*}
		It follows that
		\begin{align*}
			\|A(y)\Theta\|_2^2\leq \,&c_\alpha^2r^2+2\left(r^2+4\|\theta\|_2^2\right)\frac{1}{(1-\lambda)^2}\\
			=\,&\left(c_\alpha^2+\frac{2}{(1-\lambda)^2}\right)r^2+\frac{4}{(1-\lambda)^2}\|\theta\|_2^2\\
			\leq \,&4c_\alpha^2\|\Theta\|_2^2,
		\end{align*}
		where the last inequality follows from $c_\alpha \geq \Delta/2+1/(2\Delta(1-\lambda)^2)\geq 1/(1-\lambda)$ and $\|\Theta\|_2^2=r^2+\|\theta\|_2^2$. The previous inequality implies $\|A(y)\Theta\|_2\leq 2c_\alpha$. Similarly, we have
		\begin{align*}
			\|b(y)\|_2^2=\,&(\mathcal{R}^\pi(s))^2(c_\alpha^2+\|z\|_2^2)\\
			\leq \,&c_\alpha^2+\frac{1}{(1-\lambda)^2}\\
			\leq \,&2c_\alpha^2,
		\end{align*}
		which implies $\|b(y)\|_2\leq 2c_\alpha$.
		\item [(4)] The proof essentially follows from Lemma 6.7 in \cite{bertsekas1996neuro} and therefore is omitted.
	\end{enumerate}

\end{document}